\newcommand{\Y}{{\boldsymbol Y}}
\def\ddefloop#1{\ifx\ddefloop#1\else\ddef{#1}\expandafter\ddefloop\fi}
\def\ddef#1{\expandafter\def\csname c#1\endcsname{\ensuremath{\mathcal{#1}}}}
\def\ddef#1{\expandafter\def\csname s#1\endcsname{\ensuremath{\mathsf{#1}}}}
\def\ddef#1{\expandafter\def\csname b#1\endcsname{\ensuremath{\mathbf{#1}}}}
\def\ddef#1{\expandafter\def\csname b#1\endcsname{\ensuremath{\mathbf{#1}}}}
\newcommand{\E}{\mathop\mathbb{E}}
\newcommand{\R}{\mathbb{R}}
\newtheorem{theorem}{Theorem}
\newtheorem{assumption}{Assumption}
\title{Beyond Linear Diffusions: Improved Representations for Rare Conditional Generative Modeling}
\author{%
  Kulunu Dharmakeerthi\thanks{This work was completed during an internship at JPMorgan Chase with the Quantitative Research Markets Capital Group.} \\
  University of Chicago\\
  Chicago, IL \\
  \texttt{kulunud@uchicago.edu} 
  \And
  Yousef El-Laham \\
  J.P. Morgan AI Research \\ 
  New York, NY \\
  \texttt{yousef.el-laham@jpmchase.com}
  \AND
  Henry H. Wong\\
  J.P. Morgan Quantitative Research \\
  New York, NY \\
  \texttt{henry.h.wong@jpmorgan.com} 
  \And
  Vamsi K. Potluru \\
  J.P. Morgan AI Research \\ 
  New York, NY \\
  \texttt{vamsi.k.potluru@jpmchase.com}
  \And
  Changhong He\\
  J.P. Morgan Quantitative Research \\
  New York, NY \\
  \texttt{changhong.he@jpmorgan.com} 
  \And
  Taosong He \\
  J.P. Morgan Quantitative Research \\
  New York, NY \\
  \texttt{taosong.he@jpmorgan.com} 
}
\begin{document}

\maketitle

\begin{abstract}
Diffusion models have emerged as powerful generative frameworks with widespread applications across machine learning and artificial intelligence systems. While current research has predominantly focused on linear diffusions, these approaches can face significant challenges when modeling a conditional distribution, $P(Y|X=x)$, when $P(X=x)$ is small. In these regions, few samples, if any, are available for training, thus modeling the corresponding conditional density may be difficult. Recognizing this, we show it is possible to adapt the data representation and forward scheme so that the sample complexity of learning a score-based generative model is small in low probability regions of the conditioning space. Drawing inspiration from conditional extreme value theory we characterize this method precisely in the special case in the tail regions of the conditioning variable, $X$. We show how diffusion with a data-driven choice of nonlinear drift term is best suited to model tail events under an appropriate representation of the data. Through empirical validation on two synthetic datasets and a real-world financial dataset, we demonstrate that our tail-adaptive approach significantly outperforms standard diffusion models in accurately capturing response distributions at the extreme tail conditions.
\end{abstract}

\section{Introduction}
In recent years, diffusion models have emerged as among the most powerful generative modeling techniques for synthesizing data across a diverse set of modalities. From image generation to audio synthesis and time series modeling, these models have demonstrated superior capabilities for capturing intricate data distributions as compared to other generative frameworks. 
The work \cite{ho2020denoising} introduced denoising diffusion probabilistic models (DDPMs), which frame the generative process by defining a forward process that gradually transforms data into noise, followed by a learned reverse process that reconstructs data from noise. This approach has since been extended to a continuous-time formulation using Langevin diffusions, providing a mathematically elegant framework that connects stochastic processes with generative modeling. The continuous-time formulation views this as a stochastic differential equation (SDE), where the forward process follows a Langevin diffusion that converges to a standard multivariate Gaussian distribution. This perspective has enabled significant theoretical advances while maintaining state-of-the-art empirical performance across applications.

Conditional diffusion models extend this framework by additionally incorporating conditioning information to guide the generation process. 
However, a fundamental challenge emerges when dealing with extreme values in the sample space of the conditioning, where data is inherently sparse. Traditional diffusion models struggle to accurately capture conditional distributions in these tail regions, particularly when the underlying distributions deviate significantly from Gaussian assumptions. This limitation becomes especially problematic in domains where rare but consequential events drive critical decisions, such as financial risk assessment and climate modeling. To effectively sample from a conditional distribution $P(Y|X=x)$ using score-based diffusion models, we need to estimate a sequence of score functions, $\{\nabla \log p_{\mu_t(.|x)}\}_{t=0}^T$. Here, $ p_{\mu_t(.|x)}$ refers to the marginal density of the conditional distribution $t$ steps into a (discretized) Langevin diffusion. The bottleneck is estimating these conditional score functions at low-probability, or rare, conditions. When $P(X=x)$ is small, it is unlikely that we see enough samples in our training data to estimate $\nabla \log p_{\mu_t(.|x)}$ accurately. If the score functions in these low probability regions have high sample complexity, sampling from tail conditions seems an improbable task. 
 


We present a data-adaptive methodology for score-based diffusions that addresses this challenge through two key insights: (i) conditional diffusion requires learning complex functions with few samples, and (ii) function complexity can be controlled through the diffusion scheme and data transformation. Our method ensures the conditional denoising functions maintain low sample complexity where $P(X=x)$ is small, using data transformation and a data-driven nonlinear diffusion process. We demonstrate this approach in detail under mild extreme value assumptions. Specifically, our work explores nonlinear conditional diffusion modeling with tail-adaptive drift schemes. We examine the method where data follows extreme value assumptions \cite{hefftawn2004, heffernan2007limit, KEEF2013396}. Our contributions include:


\begin{enumerate}
    \item We identify current limitations of standard linear diffusion models with Gaussian equilibrium for conditional generation under extreme tail conditions with limited samples, based on recent neural network sample complexity results.
    \item We propose a novel score-based diffusion method that addresses the aforementioned sample complexity issue by utilizing well-designed data transformation and nonlinear Langevin diffusions. We explore this method in detail assuming the data follows some mild extreme value conditions (CEVT); although, we emphasize that our broader modeling philosophy is agnostic to any data assumptions. 
    \item We validate our method on synthetic and real financial datasets, demonstrating superior conditional distribution modeling at tail extremes compared to standard diffusion variants.
\end{enumerate}

\section{Background}

\subsection{The Difficulty of Conditional Diffusion}
Conditional diffusion models frame sampling as the time reversal of a noising process governed by a diffusion SDE. A forward diffusion process $\{ Y_t\}_{t = 0}^T$ is indexed by a continuous time variable $t \in [0, T]$, such that $Y_0 \sim \mu_0(\cdot|X = x)$ is our sampling target, and $Y_T \sim \mu_T \approx \pi$, admits a tractable form to generate samples efficiently. A continuous time evolution, an Ito SDE, governs the forward process $\mu_0 \to \mu_T$. We limit ourselves to Langevin processes. The forward Langevin diffusion process is a stochastic differential equation of the form,
\begin{align}
	\label{eqn:langevin}
	\dd Y_t = -\nabla f(Y_t) \dd t + \sqrt{2\beta^{-1}} \dd B_t, \quad Y_0 \sim \mu_0(\cdot|X=x) \;.
\end{align}
where the conditional probability measure of $Y_t$ is denoted $\mu_t(\cdot|x)$ with density $p_{\mu_t(\cdot|x)}$, $\{B_t\}_{t\geq 0}$ denotes a Brownian motion, and $\beta>0$ is a scale parameter that the determines the noise level of the diffusion. Under mild conditions on $f$, this evolution admits $e^{-f}$ as equilibrium density as $t \to \infty$.
Backward denoising uses the reverse-time SDE \cite{ANDERSON1982313}:
\begin{align}
	\label{eqn:rev-langevin}
	\dd Y_t^{\leftarrow} = -(\nabla f(Y_t^{\leftarrow}) +  2\beta^{-1}\nabla \log p_{\mu_t(\cdot|X)}(Y_t^{\leftarrow}) )\dd t + \sqrt{2\beta^{-1}} \dd \bar{B}_t, \quad Y_T^{\leftarrow} \sim \mu_T,
\end{align}
where we use $Y_t^{\leftarrow}$ to denote the time-reversal and $\{\bar{B}_t\}_{t\geq 0}$ denotes another Brownian motion.

\subsubsection{Denoising Complexity}
\label{sss: denoising}
Implementing backward denoising process via \eqref{eqn:rev-langevin} 
requires learning the conditional score function $\nabla \log p_{\mu_t(\cdot|x)}$. We instead target the estimation of the function, $\nabla f + \beta^{-1} \nabla \log p_{\mu_t(\cdot|x)}$. The complexity of these functions determines the sample size required for accurately estimation. For neural network predictors, non-asymptotic bounds have been established in \cite{farrell2021deep} that relate target smoothness to estimation accuracy (see Theorem \ref{thm: smoothness_nn_rate} in Appendix \ref{app: theory}).


Sampling from $P(Y|X=x)$, requires accurately learning the sequence of maps: 
$$\{\bB_t(y; x) \}_{t=0}^T = \{\nabla f(y) + \beta^{-1} \nabla \log p_{\mu_t(.|x)}(y)\}_{t=0}^T, \ \forall x \in \cX$$
Since few training examples exists for rare events where $P(X=x)$ is small, accurate estimation of the denoising maps in these ``rare regions" is futile,  preventing effective denoising. More rigorously, we can adapt theoretical results from \cite{song2021maximum} to show that the accuracy of denoising is directly tied to how well the denoising maps are learned. Let $\mu_\theta$ denote the estimated density resulting from \eqref{eqn:rev-langevin} after appropriately estimating the score function sequence,  $s_\theta(y;t,x) \approx \bB_t(y;x)$. If $\mu_0$ refers to the target, then the Kullback-Leibler (KL) divergence between the two can be upper bounded by the integrated error of score estimation (see Appendix \ref{app: theory} for a proof for completeness): 
\begin{align*}
    KL(\mu_0(\cdot|x)||\mu_\theta(\cdot|x)) &\lesssim  \int_0^T \E_{p_{\mu_t(\cdot|x)}(y)}[\|(\nabla f(y) + \beta^{-1}\nabla \log p_{\mu_t(\cdot|x)}(y))  - s_\theta(y; t,x)\|^2]dt \\
    &=\int_0^T \E_{p_{\mu_t(\cdot|x)}(y)}[\|(\bB_t(y;x) - s_\theta(y; t,x)\|^2]dt.
\end{align*}

\paragraph{Linear Gaussian Dynamics:} Standard score-based diffusion models employ the Ornstein-Uhlenbeck process (with $f(x) = \tfrac{1}{2}x^2$):
\begin{align*}
    \dd Y_t = -Y_t \dd t + \sqrt{2\beta^{-1}} \dd B_t ,\quad Y_0 \sim \mu_0(\cdot|x),
\end{align*}
which yields at Gaussian stationary distribution. For this case, the denoising sequence in \eqref{eqn:rev-langevin} becomes:
\begin{align*}
    \{\bB_t(y;x)\}_{t=0}^T = \{y + \beta^{-1} \nabla \log p_{\mu_t(\cdot|x)}(y)\}_{t=0}^T
\end{align*}
As previously mentioned, when $P(X=x)$ is small and $\{\bB_t(y;x)\}_{t=0}^T$ complex, this standard paradigm faces sample complexity challenges.


\subsection{Extreme Value Theory}
Extreme value theory characterizes the tail behavior of random variables. Classical work examines limiting behavior like $P(Y=y|Y>u) \to G(y)$ as $u \to \infty$. \cite{hefftawn2004} extends this to conditional distributions $P(Y|X=x)$ for large $x$, contrasting with traditional multivariate theory where all variables grow simultaneously.
The Heffernan-Tawn model \cite{KEEF2013396} is a flexible approach to model the conditional distribution $P(Y|X=x)$ when $x$ is large. For a broad class of dependency structures between $X$ and $Y$, this work establishes a semi-parametric relationship that allows one to model a broad range of asymptotic independence/dependence structures at the tail of the condition (see Appendix A for more details).

\begin{assumption}[CEVT \cite{hefftawn2004, KEEF2013396}] 
\label{assumption: cevt}
Suppose the marginals of $X$ and $Y$ are standard Laplace. Then,  as $X= x \to \infty$, we assume $X, Y$ admit the asymptotic dependency,
\begin{align}
\label{eqn:CEVT}
    \lim_{x \to \infty} P\bigg(\tfrac{Y - a(X)}{b(X)} < z |X = x\bigg) = G(z)
\end{align}
where $G$ is some distribution independent of $X$. In other words, for tail values, $X = x \to \infty$: 
    \begin{align}
    Y = a(X) + b(X) \cdot Z, \quad Z \sim G.
\end{align}
\end{assumption}
\section{Proposed Methodology}
In this section, we propose a general methodology that aims to ensure that denoising maps discussed in Section \ref{sss: denoising} maintain low sample complexity for rare conditions:
\begin{align}
\label{eqn:simple-phen}
    \{\bB_t(y;x)\}_{t=0}^T \ \text{is easy to estimate when  } P(X=x) \text{ is small.}
\end{align}
While we demonstrate this approach under CEVT assumptions for explicit characterization, the framework applies broadly—any transformation yielding favorable tail behavior suffices. The general procedure consists of three steps:
\begin{enumerate}
    \item Transform $(X,Y) \stackrel{T}{\to} (X^\star, Z)$ such that $P(Z|X^\star=x) \approx e^{-g}$ for rare $x$
    \item Design forward diffusion with $e^{-g}$ as the stationary density and train using the score-matching objective
    \item Sample $Z \sim P(Z|X^\star=x)$ and apply inverse transformation to recover $Y$.
\end{enumerate}
In the following, we describe an implementation of each of the above steps in the context of CEVT. 




\subsection{Step 1 -- Data Transformation}
When CEVT holds, we can obtain explicit transformations $(X,Y) \to (X^\star,Z)$ ensuring \eqref{eqn:simple-phen} for large $X^\star$. Consider the following chain of transformations applied to $X$ and $Y$:
\begin{align}
    \left(X, Y\right) \quad \stackrel{\text{Laplace Marginals}}{\to} \quad \left(X^\star, Y^\star\right) \quad \stackrel{\text{Normalize}}{\to} \quad \left(X^\star, Z\right), 
\end{align}
where $Z=b(X^\star)^{-1}\left(Y^\star-a(X^\star)\right)$. In the first part of the transformation, we transform $(X, Y)$ to $(X^\star, Y^\star)$ such that marginal distributions of both $X^\star$ and $Y^\star$ are standard Laplace distributions. Since $X^\star$ and $Y^\star$ are both standard Laplace, we can further apply a normalization based on the Heffernan-Tawn model to transform $Y^\star\rightarrow Z$, where
\begin{equation*}
    Z = \frac{Y^\star-a(X^\star)}{b(X^\star)}
\end{equation*}
To apply this normalization, we learn the functions $a(x)$ and $b(x)$ (which often take simple parametric form) using maximum likelihood estimation with samples from the tail of $X^\star$.  Despite the small amount of samples available after partitioning the samples of $X^\star$ based on the tail, learning is plausible due to the simple structure of $a(x)$ and $b(x)$ (see Appendix \ref{app: normalizing_fuctions}) for details. After applying these sequence of transformation, we have a set of data of the random variables $(X_i, Z_i)$ that satisfy $P(Z|X^\star=x) \approx G$ for large values of $x$.  

\subsection{Step 2 -- Learning the Conditional of $Z$}
We learn the conditional distribution, $P(Z|X^\star)$, via score-based diffusion models. We provide pseudocode of our training procedure in Algorithm \ref{alg:cevt_diffusion_training} in Appendix \ref{app: algorithms}. In the following, we describe the design of the forward process of our conditional diffusion as well as our approach to score matching. We also provide an intuitive argument as to why the denoising maps for this diffusion model have low sample complexity at the tails of the condition.

\paragraph{Designing the Forward Process}  We implement the forward process via a simple Langevin diffusion, but choose the drift term, $\nabla g$, based on extreme value behavior in our observed data. In particular, by Assumption \ref{assumption: cevt}, for tail values in the condition, \{$X > x$, $x$ large\}, we model, 
\begin{align*}
    Z \sim G, \quad \text{and} \quad Z\perp X.
\end{align*}
However, in practice, the distribution $G$ is unknown. To approximate $G$, we train a lightweight density estimator on tail samples, $\{(X_i, Z_i): \ X_i > x\}$, to gauge the density of $G$, $e^{-g}$. We do so by comparing the smooth estimate to common parametric forms. For example, a wide range of easy-to-sample distributions admit an exponential form, $e^{-g}$, with convex $g$, such as Gaussian, Laplace, and Gumbel.

Consider the following Langevin diffusion:
\begin{align*}
    dZ_t = -\nabla g(Z_t) dt +\sqrt{2}\dd B_t
\end{align*}
The Langevin diffusion above, for arbitrary convex $g$ does not admit path trajectories that can be expressed in closed form. In practice, we resort to a simple discretization,
\begin{align*}
    Z_{t+1} = Z_t- \eta \cdot \nabla g(Z_t) +\sqrt{2\eta}\cdot \cN(0, 1).
\end{align*}
We remark that the convergence of the discretized process, which amounts to unadjusted Langevin dynamics, is sensitive to the curvature of $g$. We elaborate in Appendix \ref{app: smoothness_f} how we can modify $g$ so that it is appropriately smooth, while still accurately capturing a stationary distribution close to $e^{-g}$. We also remark that with a nonlinear drift term $\nabla g$, we also lose the ancestral sampling property that score-based diffusions exploit for efficient training. That is, if $\nabla g$ is linear, then the t-step ahead distribution $P(Z_t|Z_0, X)$ is readily available. This is not possible for general $g$. Instead, in Appendix \ref{app: taylor_forward} we show how Taylor-approximations can enable faster sampling in the general case, similar in line to \cite{singhal2024s}.

\paragraph{Score Estimation}
Unlike standard diffusion models, rather than tracking the conditional score function, $\nabla \log p_{\mu_t(.|x)}$, we instead target $(\nabla g + \nabla \log p_{\mu_t(.|x)})$. We train a time-dependent conditional score model $s_\theta(z; x, t)$ based on a slightly modified learning objective.
\begin{align}
    \label{eq: score_matching_loss}
    \cL(\theta):= \E_t\bigg\{ \lambda(t) \E_{X, Z_0} \E_{Z_t|Z_0}[\| s_\theta(z; x, t) - (\nabla \log p_{\mu_{0t}(.|Z_0, X)}(Z_t)) + \nabla g(Z_t)) \|_2^2]\bigg\},
\end{align}
where $\lambda(t)$ is a weighting function that adjusts the importance of different time steps for the score-matching loss. Recent works explore how to learn the conditional score functions $s_\theta(Z; x, t)$ efficiently. 
For simplicity, we train based on the standard formulation based on Tweedie's formula. 

\paragraph{Why this Works?} 
Since $Z$ is constructed based on Assumption \ref{assumption: cevt}, at an event $\{ X^\star = x\}$, with $x$ large, the initial density of $\mu_0(Z|X^\star = x)$ will already be (approximately) at equilibria: 
\begin{align*}
    p_{\mu_0(\cdot|X^\star = x)} \approx e^{-g}
\end{align*}
Thus, at these extreme values of $X^\star$, the sequence of maps $ \{\nabla g +  \nabla \log p_{\bar{\mu}_t(.|x^\star)}\}_{t=0}^T \approx 0$, making them much easier to estimate. We display an example of this in Figure \ref{fig:cevtsec3}.

\begin{figure}
    \centering
    \includegraphics[width=\linewidth]{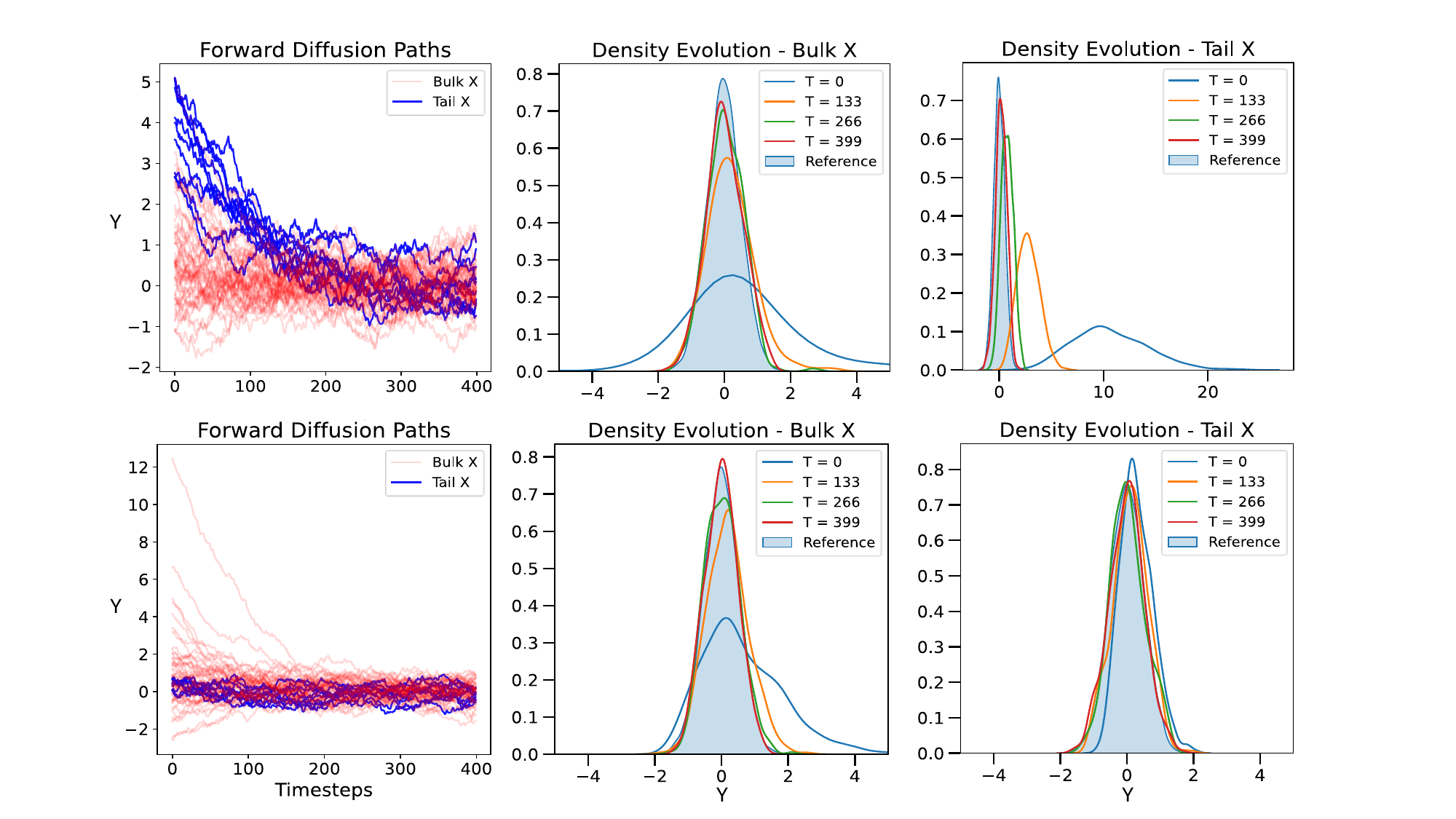}
    \caption{We visualize a forward diffusion before and after the transformation outlined in Section 3.2.  Before transformation, the Langevin diffusion induces quite dramatic changes in the conditional density  at tail events ($\{X=x\}, \ x$ very large). This can be seen by looking at the blue particle paths (top left) or the evolving density, $p_{\mu_t(\cdot|x)}(y)$, visualized in the top right plot. After taking the steps outlined in Section 3.2, the tail conditional density does not change dramatically in the forward diffusion. Compare the new particle paths in blue (bottom left plot), or the new conditional densities at time $t$ (bottom right plot). For tail, low-probability conditions, after transformation, the conditional density is already (nearly) at stationarity. Details can be found in Appendix A.2 }
    \label{fig:cevtsec3}
\end{figure}

\subsection{Step 3 -- Sampling}
For a desired value of $X$, we prompt the learned diffusion model to retrieve a sample $Z$ from $P(Z|X=x)$. Sampling is implemented via time-reversal as in \eqref{eqn:rev-langevin}  and substituting in the learned estimator, $s_\theta(z;x, t)$. 
\begin{align}
\label{eqn:score-reverse}
    &\dd \bar{Z}_t = -(2 s_\theta(Z_t;x, t) - \nabla f(\bar{Z_t})) \dd t + \sqrt{2\beta^{-1}}\dd \bar{B}_t 
\end{align}
In practice, we use a simple Euler-Maryama discretization of the above to sample. Once we have a sample $Z \sim \hat{P}(Z|X)$, we convert it to a sample from our desired distribution by inverting the sequence of transformations,
\begin{align*}
     &Y^\star = a(X^\star) + b(X^\star) \cdot Z \\
     &Y = \hat{F}_Y^{-1} (F_{Lap}(Y^\star))
\end{align*}
Algorithm \ref{alg:reverse_diffusion_detail} in Appendix \ref{app: algorithms} provides pseudocode for our sampling procedure. 

\subsection{Generalization} 
As previously mentioned, there is potential to adapt {\bf Step 1} of the process to more general circumstances (e.g., if the CEVT assumption is not appropriate for the data of interest). 
The challenge of adopting the  methodology is the finding the appropriate transformation of the data using a data-driven approach, perhaps using an approach similar to \cite{hu2023complexity}. We leave this for future work.

\section{Experiments}
In this section, we evaluate our proposed approach on two synthetic data examples and a real data example.
For baselines, we consider two schemes for denoising. In the standard scheme, we sample $Y_T \sim \cN(0, 1)$, and provided a condition $X = x$, we apply the maps, 
\begin{align*}
    \{\bB^{Gauss}_t(y;x)\}_{t=0}^T
\end{align*}
In the new scheme, we first transform our data, $Y \stackrel{T}{\to} Z$. Sample $Z_T \sim e^{-g}$ and apply the maps, 
\begin{align*}
    \{\bB^{g}_t(z;x)\}_{t=0}^T
\end{align*}
Finally, invert the transform, $Z_0 \stackrel{T^{-1}}{\to} Y_0$. To fairly compare our new scheme to the standard scheme we make the following considerations. 
\paragraph{Neural Net Parametrization:} Fundamentally, we want to track how well $\bB^{Gauss}_t, \bB^g_t$ are learned. As a proxy we will look at sample quality. To enable a fair comparison, we deploy the same neural network architectures to learn each score network, which are standard feedforward neural networks. 

\paragraph{Forward Chain Length:} It is important to recognize that although the sequence of standard denoising maps, $\{\bB^{Gauss}_t(y;x)\}_{t=0}^T$, may have high sample complexity, due to the fast convergence of the forward OU process, the number of noise-steps necessary, $T$, may be smaller. This, in turn, may be beneficial for learning. For example, if one were to train a separate network for each noise-scale $t \in [T]$. We broach this gap by considering smoothed versions of $\nabla g$ for the generic scheme. This directly impacts speed of forward convergence, and is detailed in Appendix C. By choosing the smoothing parameters and step-size $\eta$ appropriately, we are able to use the same number of noise steps for each model. This compromise, between complexity of $\bB_t^g(y;x)$, $\eta$ and size of $T$ needs to be explored more rigorously. We leave this to future work. 


\subsection{Synthetic Data Examples}
We consider two synthetic data experiments: a mean-shifted Laplace distribution; and correlated Gaussian distribution. We provide detailed plots with additional results for both synthetic examples can be found in Appendix \ref{app: additional_details_syn}.

\paragraph{Mean-Shifted Laplace Target.}
We consider the following data generating process:
\begin{align}
    X \sim \text{Pareto}(1), \quad Y \sim \frac{10}{X} + \text{Laplace}(0, 1)
\end{align}
Without appealing to CEVT, we see that as $X \to \infty$, $Y \sim \text{Laplace}(0, 1)$. This suggests we target standard Laplace as the equilibrium distribution of the forward process, without applying any transformation to the data. We run,
\begin{align*}
    Y_{t+\eta} = Y_t - \eta \cdot \nabla g_{Lap*}(Y_t) + \sqrt{2\eta}\cdot \cN(0, 1).
\end{align*}
Refer to Appendix C.1 to see form and justification for $\nabla g_{Lap*}$. We plot a comparison of the new method and standard Gaussian diffusion in Figure \ref{fig:synthex}a. The results of the figure demonstrate that in 90\% percentile, a standard diffusion model with Gaussian base distribution does not estimate the target distribution well, while the proposed approach without the CEVT transformation and an appropriately chosen Laplace base distribution more accurately capture the target. 

\paragraph{Correlated Gaussian Target.}
We consider the following data generating process:
\begin{align}
\begin{bmatrix}X\\Y\end{bmatrix} \sim \cN\left(\begin{bmatrix} 0\\0 \end{bmatrix}, \begin{bmatrix} 1 & \rho\\  \rho & 1 \end{bmatrix}\right),
\end{align}
where we set $\rho=0.4$. First we transform $(X,Y) \to (X^\star, Z)$ as per Algorithm 2. As detailed in the Appendix A.2, we know after this transform, $G \sim \cN(0, 2\rho^2(1-\rho^2))$. However, to mimic the data-driven procedure in practice, we instead gauge a form for $e^{-g}$ using tail samples. Based on this, we suggest targeting $\text{Gumbel}(0, 0.4)$ and run, 
\begin{align*}
    Z_{t+\eta} = Z_t - \eta \cdot \nabla g_{Gumb*}(Y_t) + \sqrt{2\eta}\cdot \cN(0, 1).
\end{align*}
Refer to Appendix C.1 to see form and justification for $\nabla g_{Gumb*}$. Once we sample from $P(Z|X^\star = x)$ via the new score-based diffusion, we transform back to the appropriate distribution via inverse CDF. We compare these samples to a traditional (linear) diffusion model that targets sampling from $P(Y|X=x)$. We plot this comparison in Figure \ref{fig:synthex}. From the figure, we can observe that the standard diffusion model fails to capture the target distribution at the tail of the condition, while the proposed method with the Gumbel base distribution almost perfectly captures it.

\begin{figure}[htbp]
    \centering
    \begin{subfigure}[b]{0.48\textwidth}
        \includegraphics[width=\textwidth]{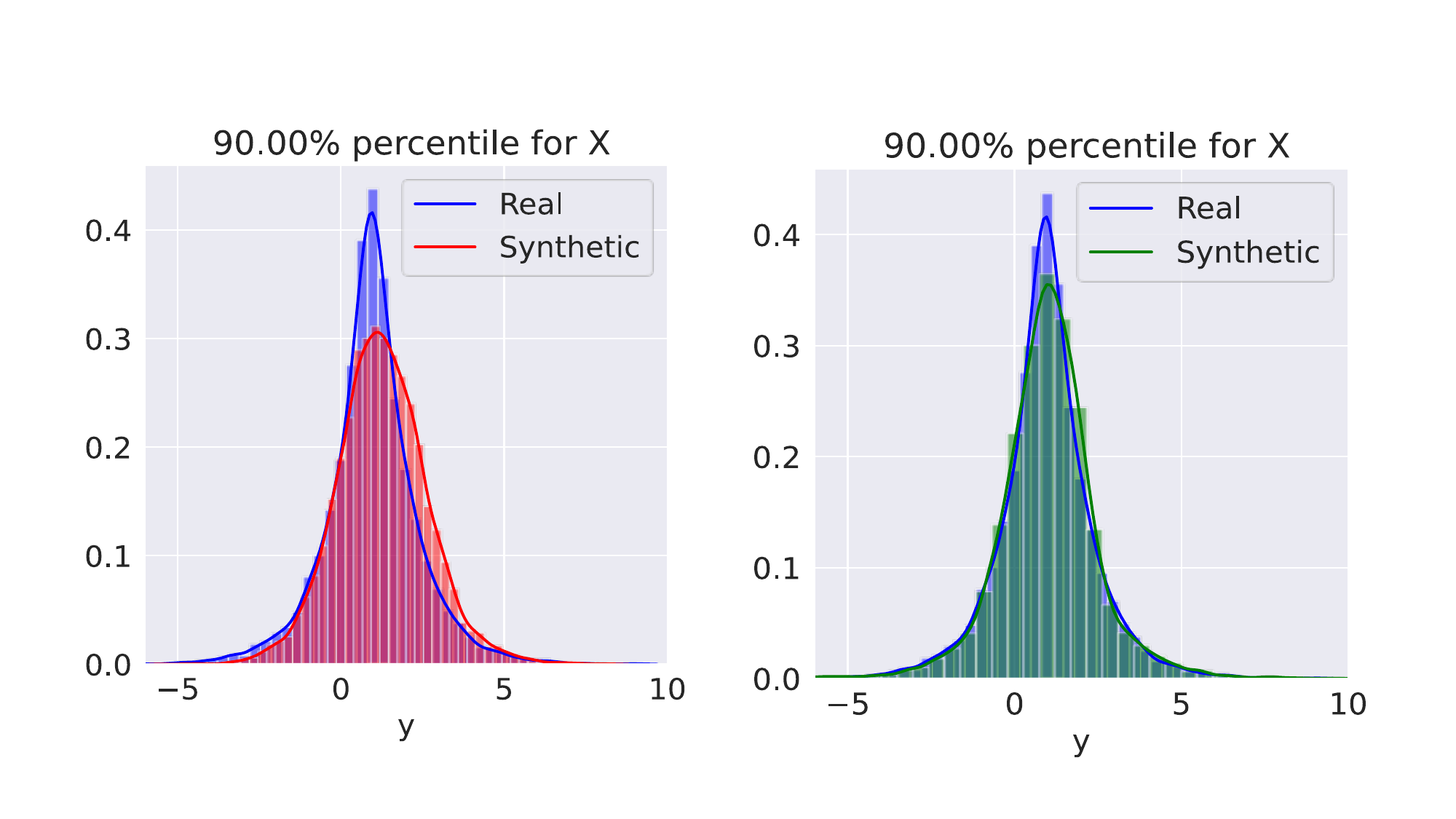}
        \caption{Mean-Shifted Laplace Target.}
    \end{subfigure}
    \hfill
    \begin{subfigure}[b]{0.48\textwidth}
        \includegraphics[width=\textwidth]{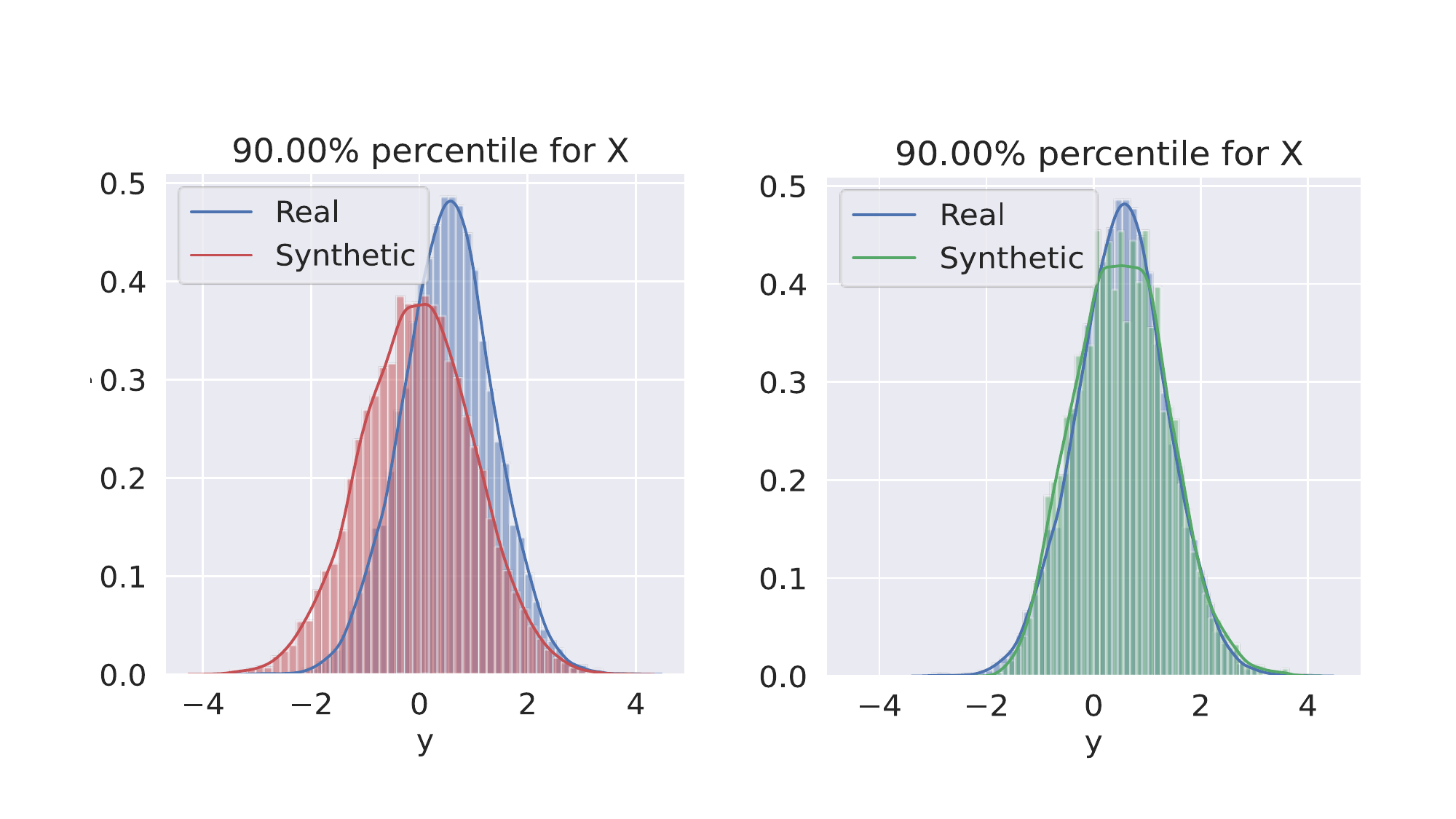}
        \caption{Correlated Gaussian Target.}
    \end{subfigure}
    \caption{In each subfigure, the left plot shows the standard diffusion with Gaussian base distribution, and the right plot shows our proposed method with a standard Laplace base distribution for the mean-shift example (no transformation) and a Gumbel base distribution for the multivariate Gaussian example (with learned CEVT transformation).}
    \label{fig:synthex}
\end{figure}
\subsection{Stock Returns Conditioned on Volatility Index}
The VIX Index
is a time-series that measures market expectations of near-term volatility conveyed by S\&P 500 stock index option prices. A high VIX index typically signals a period of financial stress, as observed during major economic disruptions such as the Global Financial Crisis (GFC) in 2008 and the COVID-19 pandemic in 2020, when the VIX reached elevated levels. In this study, we apply our methodology to real-world data to model the returns of selected financial assets during periods of heightened market volatility. Our objective is to evaluate the proposed method by modeling the returns of financial assets conditioned on a measure of market risk. Specifically, we assess the performance of our approach in generating the marginal returns of a mix of technology and financial stocks during stressed market regimes, using the volatility index VIX as a conditioning factor. The stocks analyzed include AAPL, MSFT, GOOGL, NVDA, AMZN, JPM, WFC, and GS. We focus on two significant periods: the 2008 Global Financial Crisis and the 2020 COVID-19 pandemic. For each period, we establish distinct training and testing phases to evaluate generative performance::
\begin{itemize}
    \item {\bf GFC}: we use training data from 01/01/2005-12/31/2007 and evaluate on the testing data from 01/01/2008-12/31/2009. 
    \item {\bf COVID}: we use training data from 01/01/2017-12/31/2019 and evaluate on the testing data from 01/01/2020-12/31/2021. 
\end{itemize}
For baselines, we compare a standard linear diffusion (Gaussian base) and our proposed methodology with CEVT-based transformation and a Laplace base distribution. We provide more information on the VIX and plots of it during both periods for both the training and test data in the Appendix \ref{app: additional_details_stocks}, which demonstrate the prevalence of more extreme conditions in the testing dataset for both periods. 

\begin{figure}[htbp]
    \centering
    \begin{subfigure}[b]{0.24\textwidth}
        \includegraphics[width=\textwidth, trim=0cm 13cm 28cm 0cm, clip]{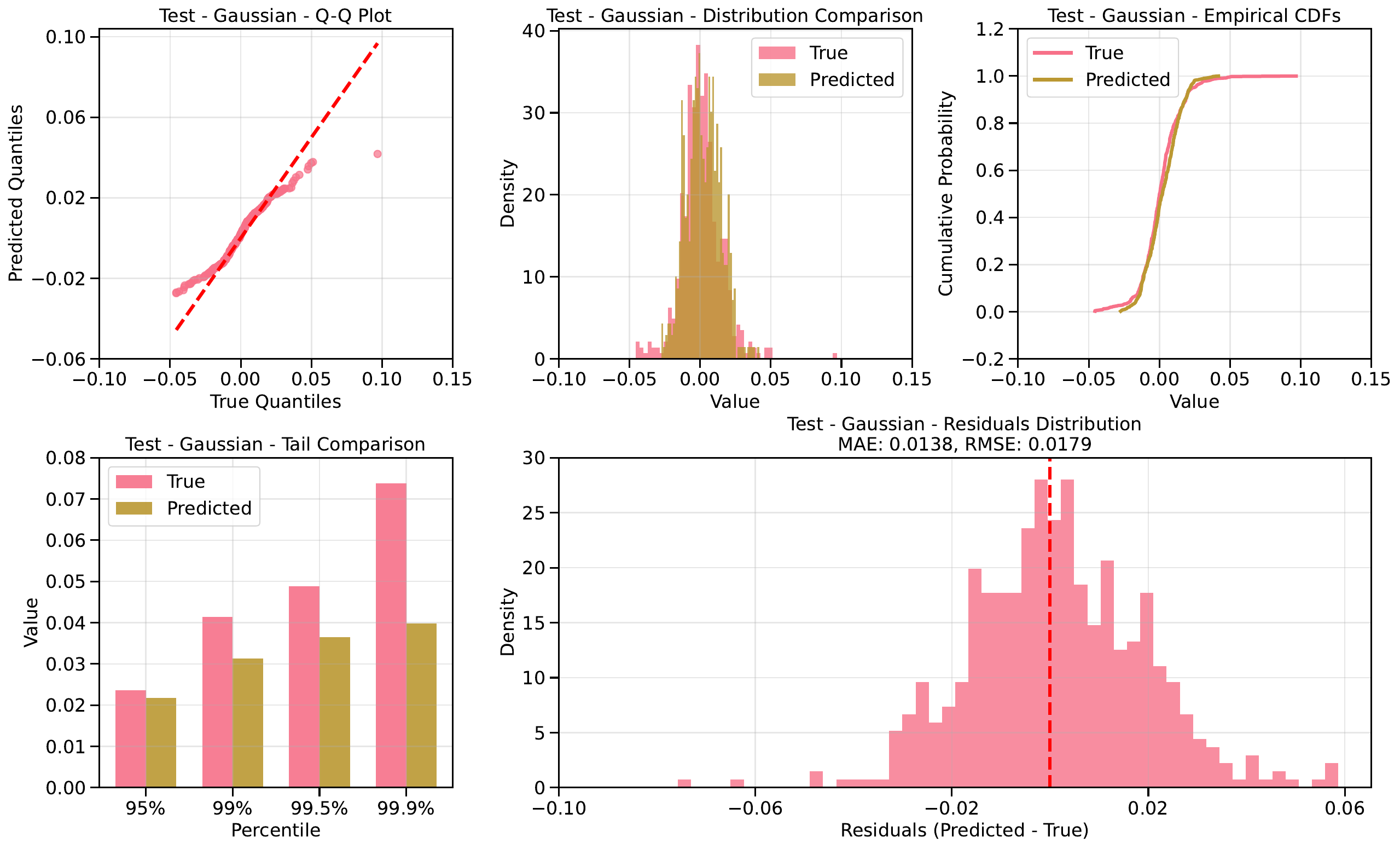}
        \caption{AAPL - Gaussian}
    \end{subfigure}
    \hfill
    \begin{subfigure}[b]{0.24\textwidth}
        \includegraphics[width=\textwidth, trim=0cm 13cm 28cm 0cm, clip]{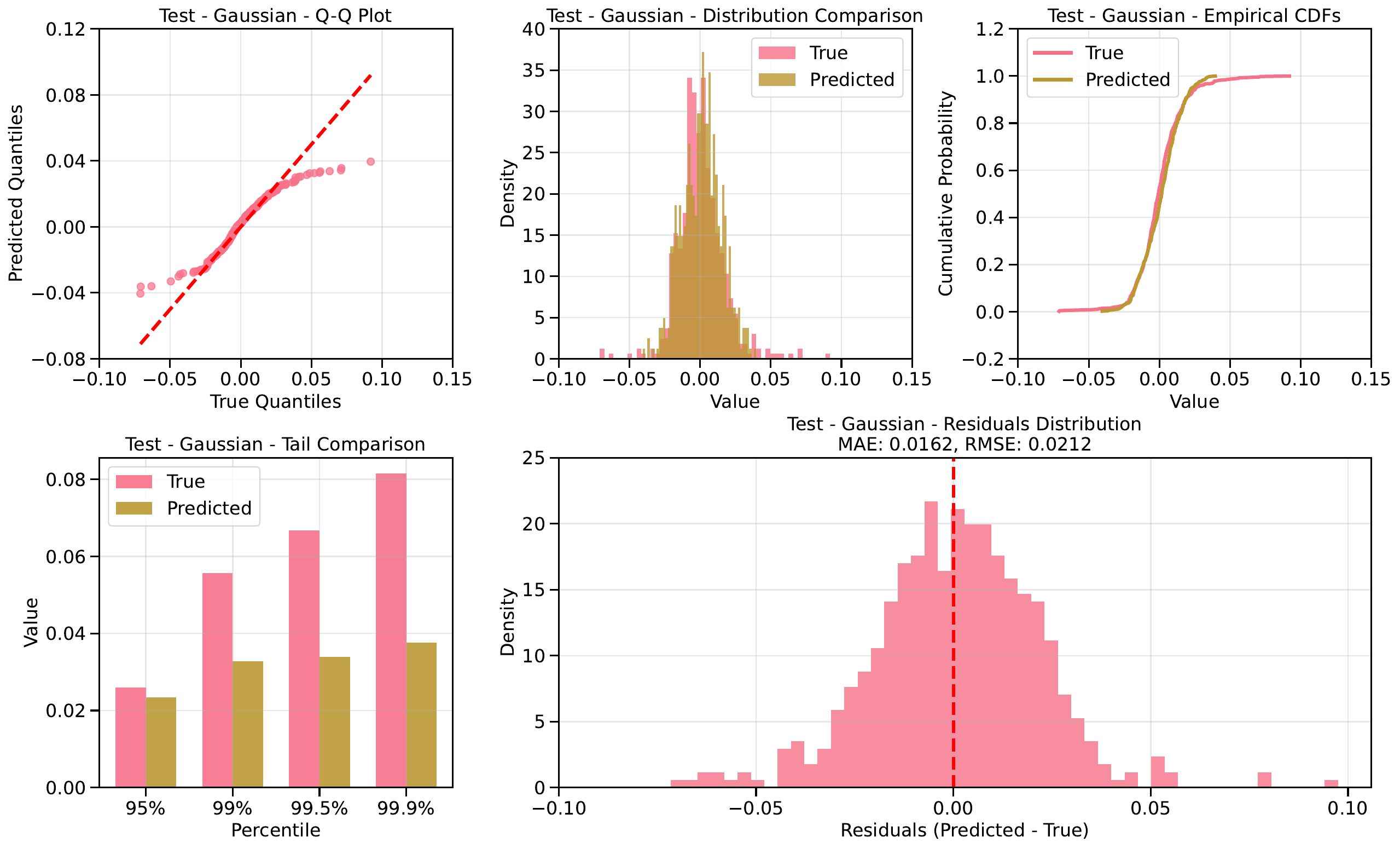}
        \caption{AMZN - Gaussian}
    \end{subfigure}
    \hfill
    \begin{subfigure}[b]{0.24\textwidth}
        \includegraphics[width=\textwidth, trim=0cm 13cm 28cm 0cm, clip]{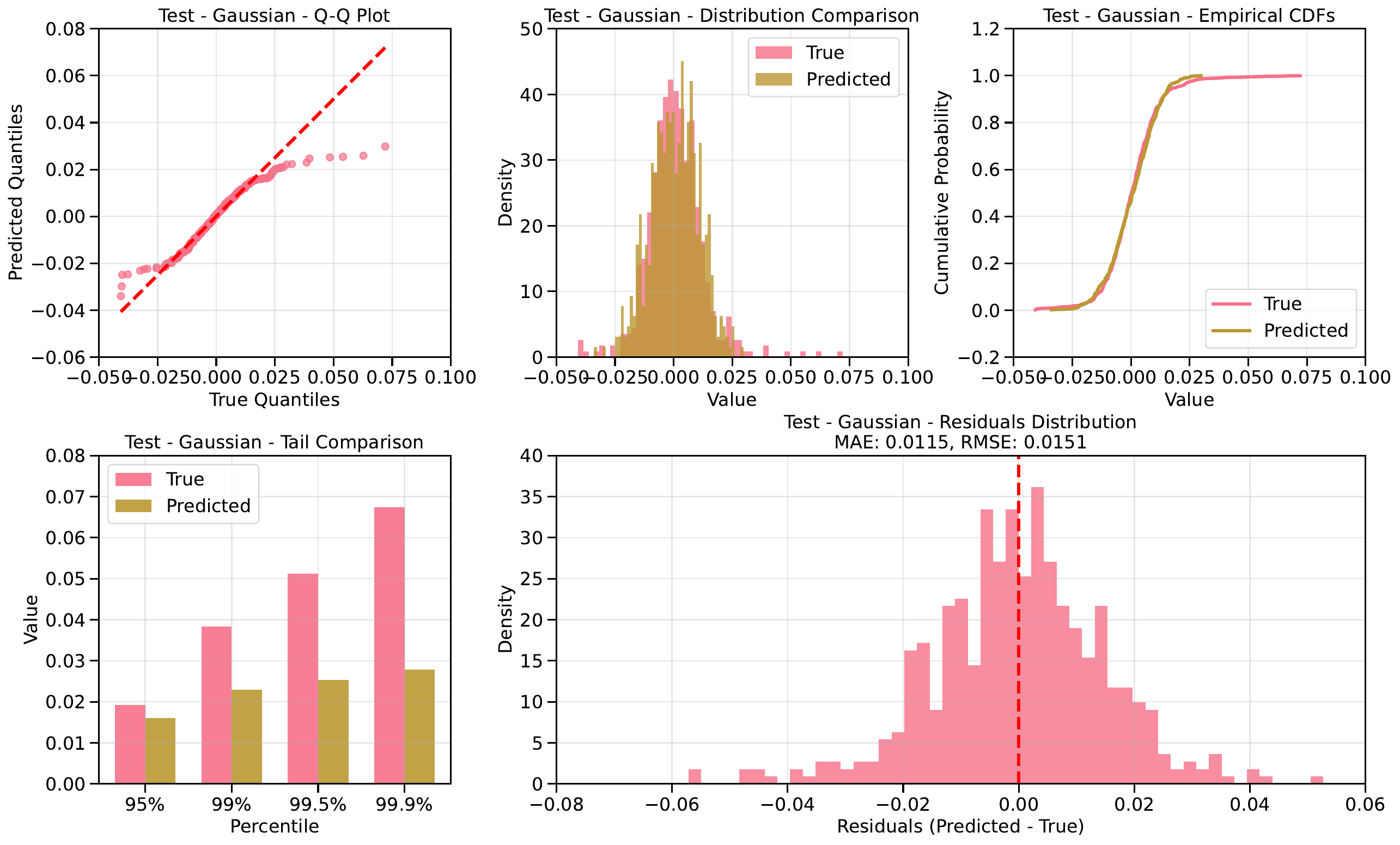}
        \caption{GOOGL - Gaussian.}
    \end{subfigure}
    \hfill
    \begin{subfigure}[b]{0.24\textwidth}
        \includegraphics[width=\textwidth, trim=0cm 13cm 28cm 0cm, clip]{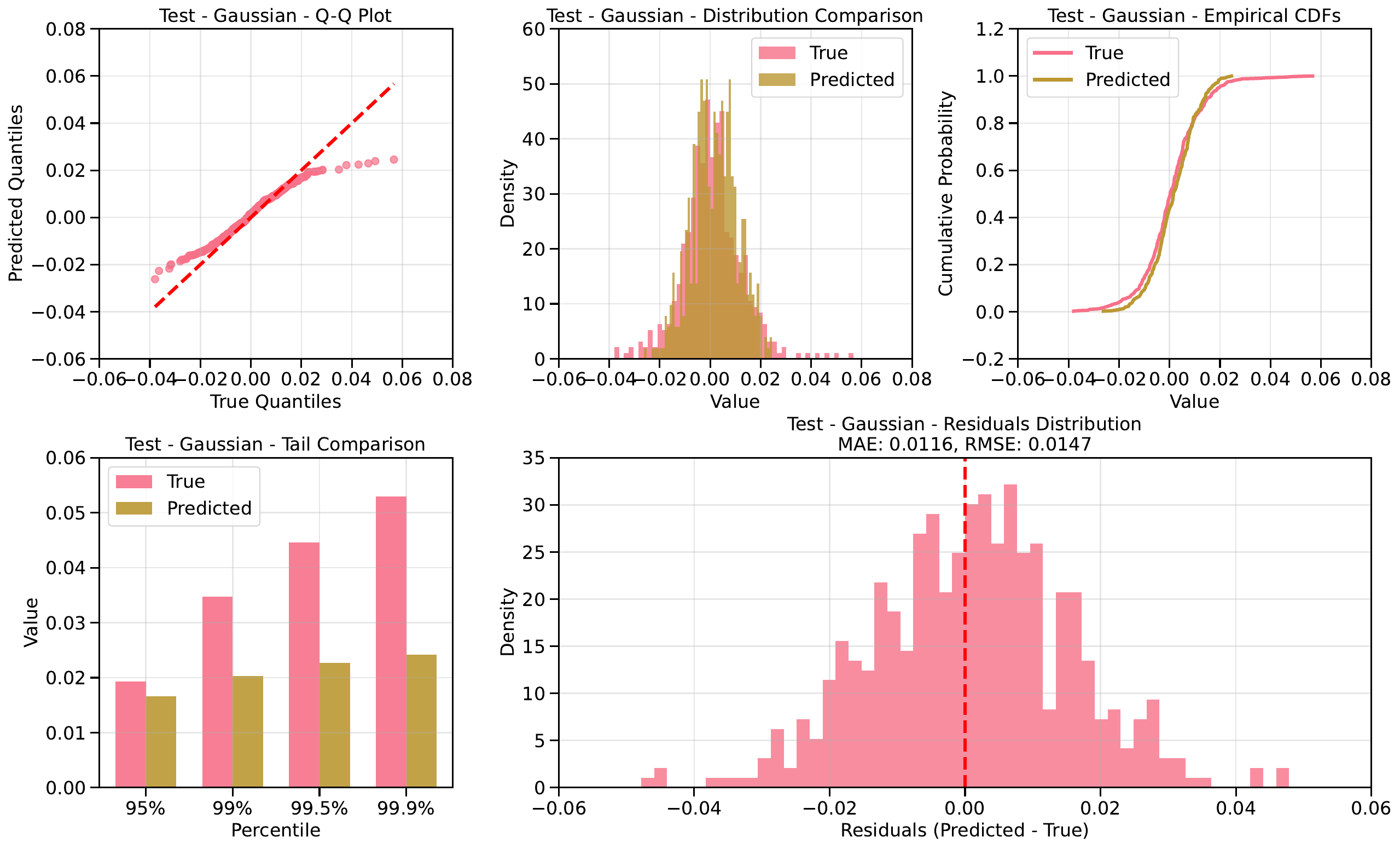}
        \caption{MSFT - Gaussian}
    \end{subfigure}
    
    \vspace{0.5em}
    
    \begin{subfigure}[b]{0.24\textwidth}
        \includegraphics[width=\textwidth, trim=0cm 13cm 28cm 0cm, clip]{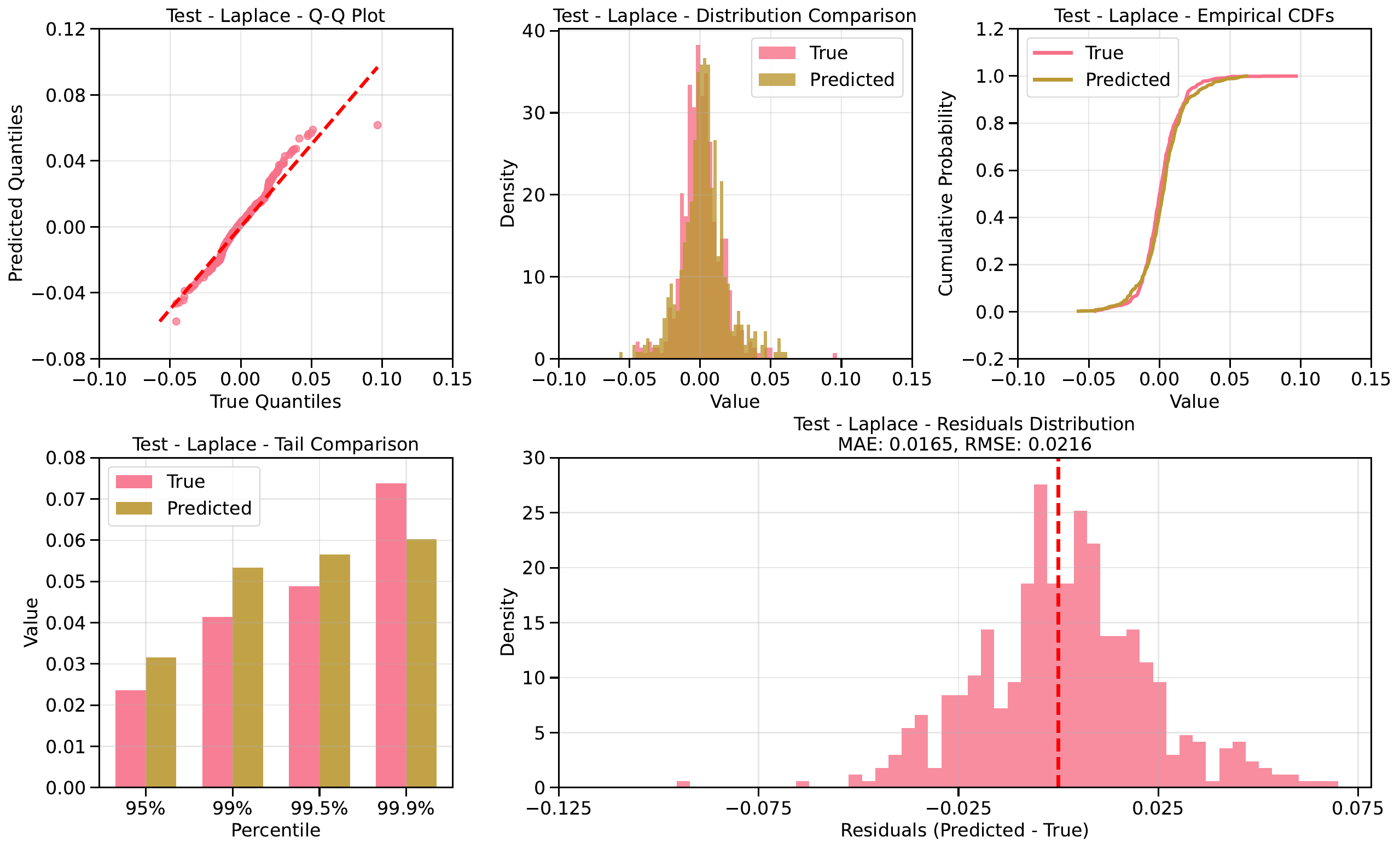}
        \caption{AAPL - Laplace}
    \end{subfigure}
    \hfill
    \begin{subfigure}[b]{0.24\textwidth}
        \includegraphics[width=\textwidth, trim=0cm 13cm 28cm 0cm, clip]{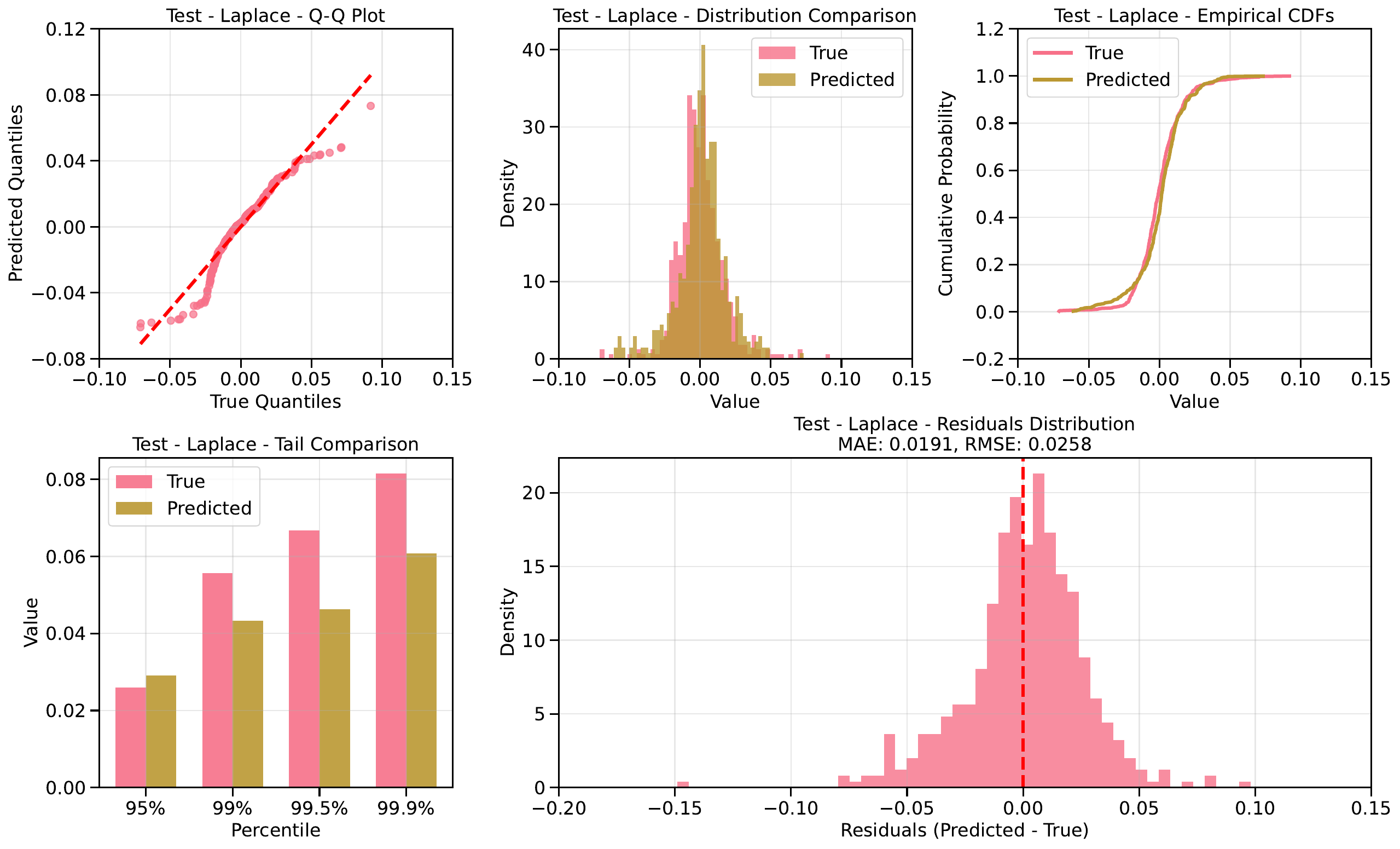}
        \caption{AMZN - Laplace}
    \end{subfigure}
    \hfill
    \begin{subfigure}[b]{0.24\textwidth}
        \includegraphics[width=\textwidth, trim=0cm 13cm 28cm 0cm, clip]{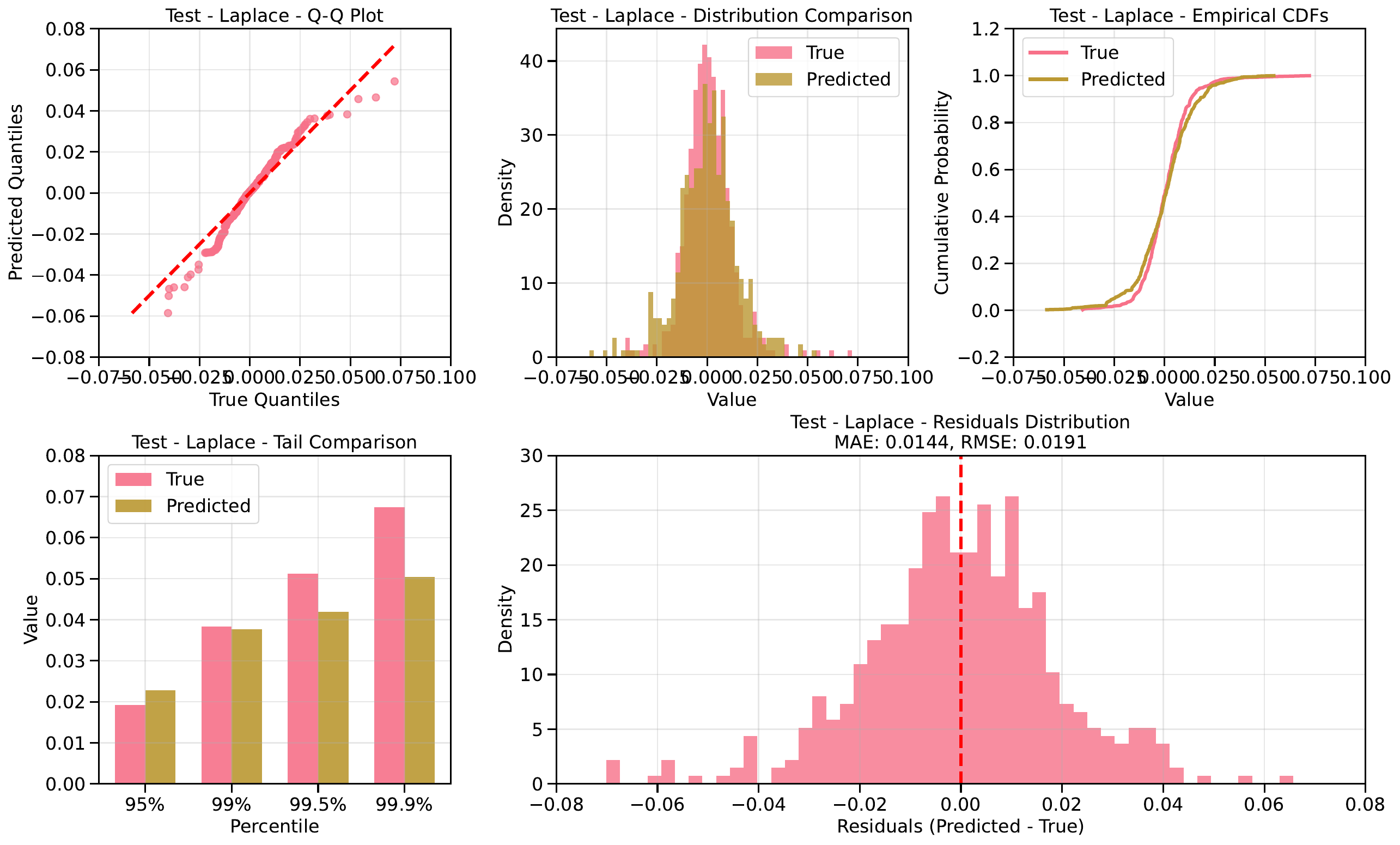}
        \caption{GOOGL - Laplace}
    \end{subfigure}
    \hfill
    \begin{subfigure}[b]{0.24\textwidth}
        \includegraphics[width=\textwidth, trim=0cm 13cm 28cm 0cm, clip]{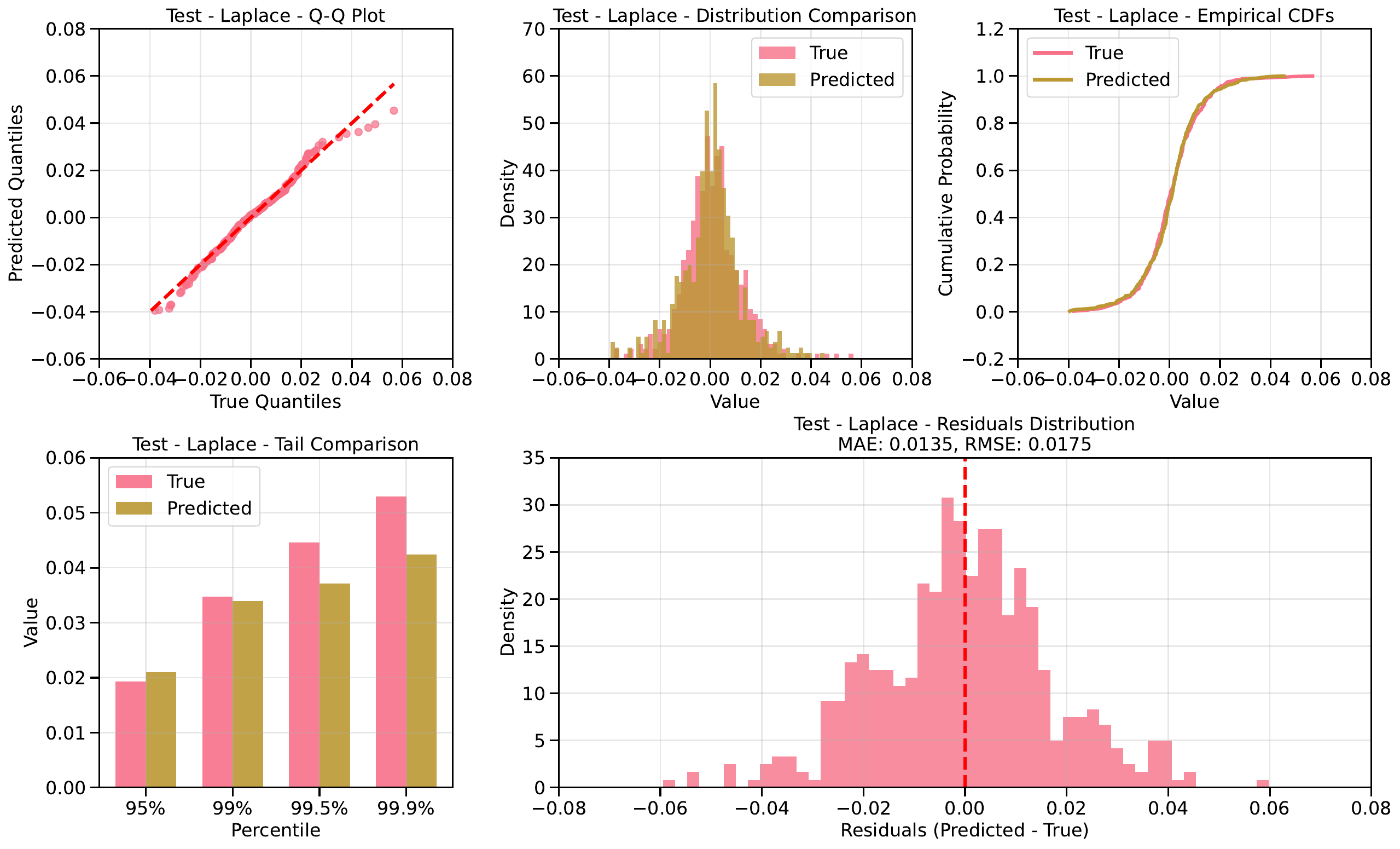}
        \caption{MSFT - Laplace}
    \end{subfigure} 
    \caption{QQ plots on test datasets for COVID period for various technology stocks.}
    \label{fig: qq_plots_covid_period}
\end{figure}

\begin{figure}[htbp]
    \centering
    \begin{subfigure}[b]{0.48\textwidth}
        \includegraphics[width=\linewidth]{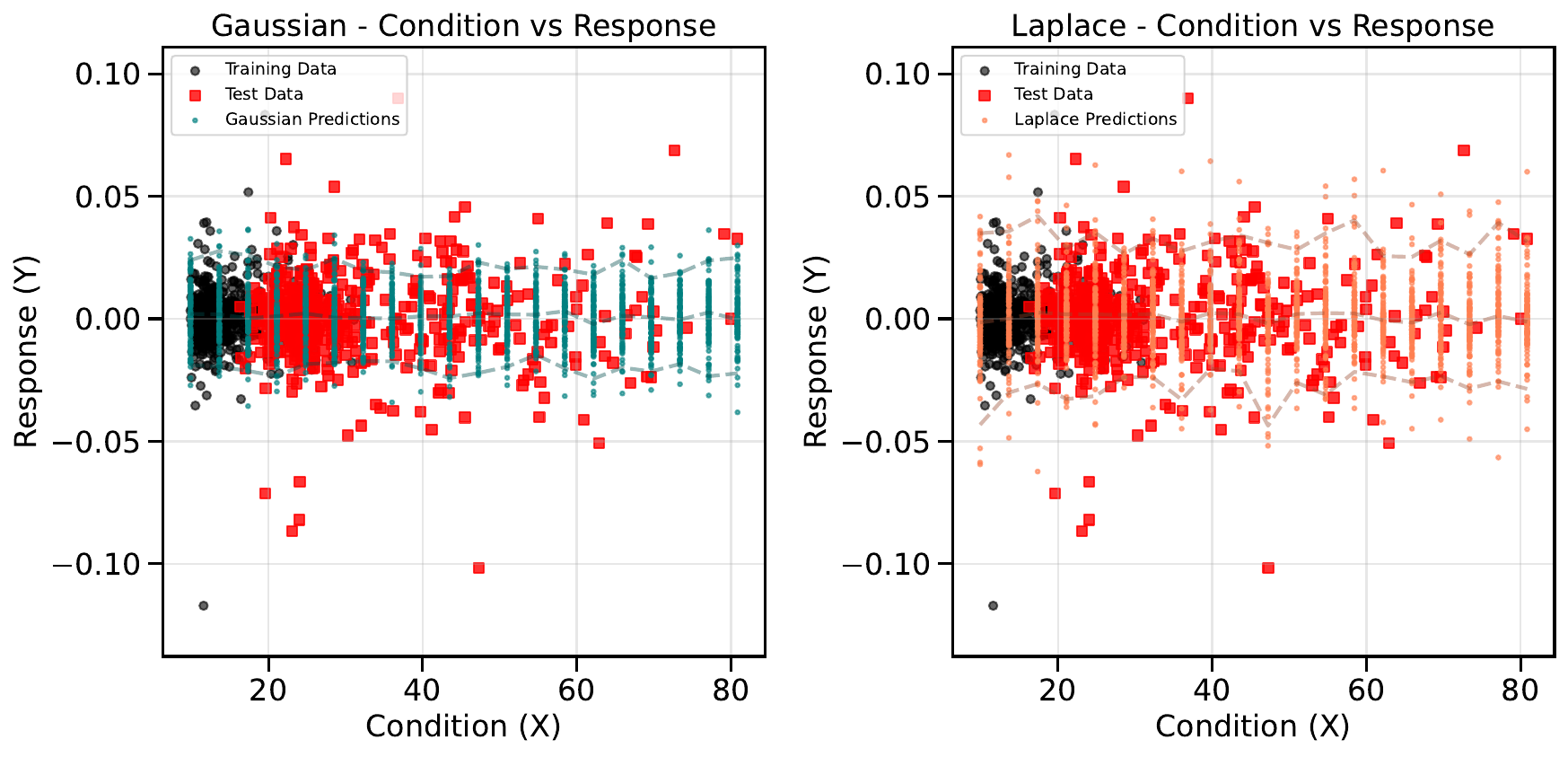}
        \caption{MSFT}
    \end{subfigure}
    \hfill
    \begin{subfigure}[b]{0.48\textwidth}
        \includegraphics[width=\linewidth]{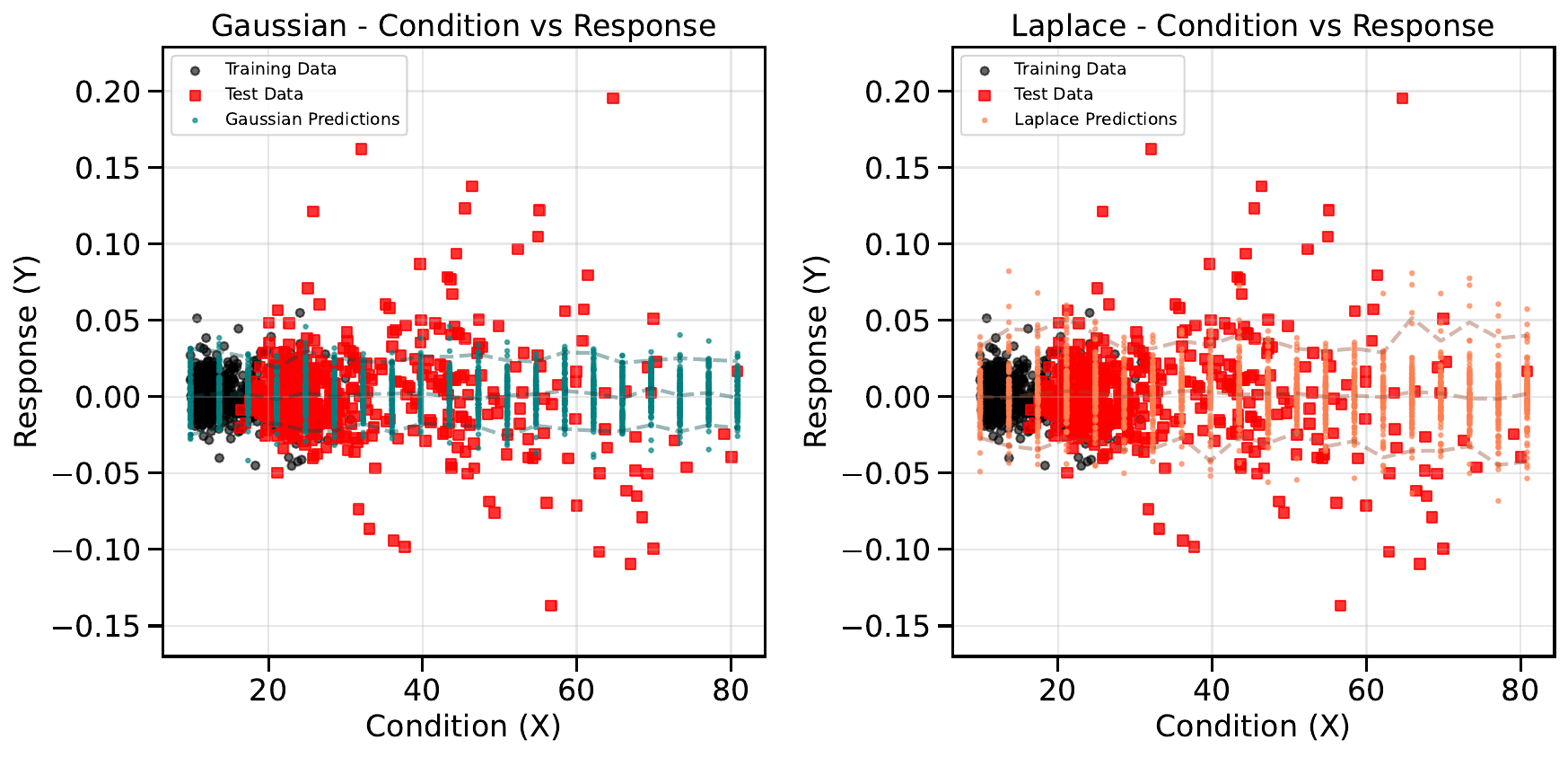}
        \caption{GS}
    \end{subfigure}
    \caption{Performance comparison of Gaussian versus Laplace base distributions based on different values of VIX level for the GFC regime. }
    \label{fig: conditional_performance_MSFT_gauss_vs_laplace}
\end{figure}

Our results demonstrate that, in this example, employing a nonlinear diffusion model offers a clear advantage in capturing the unconditional heavy-tailed behavior of stock returns, while also enhancing the modeling of conditionals for high VIX levels. For instance, as illustrated in the QQ plots in Figure \ref{fig: qq_plots_covid_period}, we observe that when capturing the marginal distribution of returns for various technology stocks during the COVID period, utilizing a Laplace base distribution outperforms its Gaussian counterpart in the tails, while maintaining good calibration in the bulk of the distribution. Regarding performance on the conditionals, we observe that during the GFC period, selecting a Laplace base distribution more effectively captures tail behavior as VIX values increase, despite these high VIX levels not being present during training. We offer more detailed plots analyzing the results for each stock across both periods in the Appendix \ref{app: additional_details_stocks}.

\section{Conclusions and Future Work}
In this work, we propose a methodology for improving rare event sampling in conditional generative modeling based on nonlinear score-based diffusion models. Motivated by conditional extreme value theory, we show that under some transformation of the data, we can choose the equilibrium distribution of the Langevin diffusion that is more advantageous from a sample complexity perspective for our learning problem. We provide numerical simulations on two toy experiments and a practical application of risk modeling for financial returns and demonstrate we can better capture the response distribution for extreme tails of the condition variable. From a practical perspective, challenges pertaining to our work include incorporating data-driven learning of the feature transformation process, extension to high-dimensional conditional variables, and a comprehensive performance comparison across multiple generative models on a larger pool of datasets. 

\begin{ack}
\textbf{Disclaimer:} This paper was prepared for informational purposes by the Artificial Intelligence Research group and the Quantitative Research Markets Capital Group of JPMorgan Chase \& Co. and its affiliates ("JP Morgan'') and is not a product of the Research Department of JP Morgan. JP Morgan makes no representation and warranty whatsoever and disclaims all liability, for the completeness, accuracy or reliability of the information contained herein. This document is not intended as investment research or investment advice, or a recommendation, offer or solicitation for the purchase or sale of any security, financial instrument, financial product or service, or to be used in any way for evaluating the merits of participating in any transaction, and shall not constitute a solicitation under any jurisdiction or to any person, if such solicitation under such jurisdiction or to such person would be unlawful.
\end{ack}

\bibliographystyle{plain}
\bibliography{reference.bib}

\appendix
\section{CEVT details}
\label{app: cevt_details}
We restate the CEVT modeling assumption for convenience.
\begin{assumption}[CEVT \cite{hefftawn2004, KEEF2013396}] Suppose the marginals of $X$ and $Y$ are standard Laplace. Then,  as $X= x \to \infty$, we assume $X, Y$ admit the \textbf{asymptotic} dependency,

\begin{align*}
    \lim_{x \to \infty} P\bigg(\tfrac{Y - a(X)}{b(X)} < z |X = x\bigg) = G(z)
\end{align*}
where $G$ is some distribution independent of $X$. In other words, for tail values, $X = x \to \infty$, we model, 
    \begin{align*}
    Y = a(X) + b(X) \cdot Z, \quad Z \sim G, 
\end{align*}
\end{assumption}
In a slightly different formulation, \cite{heffernan2007limit} establish that, so long as the conditioning variable $X$ belongs to the domain of attraction of an extreme value distribution, such an assumption about asymptotic behavior is reasonable. More recently, \cite{resnick2014transition} directly related the Heffernan Tawn model to the more general formulation of \cite{heffernan2007limit} and found parsimony under some mild conditions. We emphasize that this modeling assumption is theoretically grounded. A growing body of applied statistical methods successfully apply this model, further strengthening its relevance in practice. 
  
  Importantly, \cite{hefftawn2004, KEEF2013396} found that for all standard copula forms of dependence outlined in \cite{joe1997multivariate, nelsen2006introduction}, the functions $a(X), b(X)$ admit simple parametric forms, thus, the limiting form $G$ can be assessed with a relatively small amount of samples. This insight motivates an approach to extrapolating to the tail in conditional score-based diffusion models.

\subsection{Normalizing Functions}
\label{app: normalizing_fuctions}
 For a variety of relationships between $X$ and $Y$, $G$ has a log-concave density and the normalizing functions $a$ and $b$ admit simple forms \cite{hefftawn2004, KEEF2013396}. 

 Suppose $X$ and $Y$ are marginally Laplace. Then for some suitably high threshold, $x \in \R$ the conditional relationship at the tail values,  $X > x$, approximately satisfy, 
\begin{align*}
    Y = a\cdot X + {X}^b \cdot Z, \quad Z \sim G, \quad a \in [-1, 1], \ b \in (-\infty, 1).
\end{align*}
For a detailed examination of this relationship and clear delineation of when this simple form arises a reader should refer to the original work \cite{hefftawn2004} or the follow-up \cite{KEEF2013396}. In particular, Table 1 in \cite{hefftawn2004}. For failure cases a reader can refer to \cite{drees2017conditional}. We assume for our examples that $a(x)$ and $b(x)$ admit this simple structure. 

In practice, the scalars $a$ and $b$ are estimated. It is possible to learn these parameters via constrained optimization. The simplest approach , which we implemented, is to assume $Z \sim \cN(0, 1)$ and implement maximum likelihood with tail data $\{(X_i,Y_i): X_i>x \}$.

\subsection{Toy Example}
As an example, suppose, 
\begin{align*}
    (X, Y) \sim \cN\left(\begin{bmatrix}
        0 \\ 0
    \end{bmatrix}, \begin{bmatrix}
        1 & \rho\\
        \rho & 1
    \end{bmatrix}\right)
\end{align*}
First, transform the variables to have Laplace marginals, $(X,Y) \to (X^\star, Y^\star)$ (e.g., Inverse CDF Transform). For this example, the normalizing functions admit an explicit form, 
\begin{align*}
    Z = \frac{Y^\star - a(X^\star)}{b(X^\star)}, \quad a(x) = \text{sign}(\rho)\cdot \rho^2 \cdot x, \ b(x) = x^{1/2}
\end{align*}
In this regime, it is well understood \cite{KEEF2013396} that, 
\begin{align*}
    \bP(Z|X^\star = x^\star) \to \cN(0, 2\rho^2(1-\rho^2)), \quad \text{as } x^\star \to \infty.
\end{align*}
So, setting $g(x)= \tfrac{1}{2}x^2, \ \beta = (2\rho^2(1-\rho^2))^{-1/2} $, our new forward diffusion is a scaled OU process that admits $ G = \cN(0, 2\rho^2(1-\rho^2))$ as equilibrium. 

We visualize the diffusion process, before and after transformation, in Figure \ref{fig:cevtappendix}. Comparing the plots in the left column,it is clear that the path evolution of particles $Y_t$ that correspond to large, tail values in $X$ (bottom right, depicted in blue) are much more regular after the transformation. We also plot the conditional densities $\bar{\mu}_t(.|x)$,  for a collection of timesteps and both bulk and tail events $\{X=x\}$. Before the transformation, $\bar{\mu}_t(y|x)$ changes quite drastically across the forward chain. However, after transformation (see bottom right figure), $\bar{\mu}_t(z|x^\star) \propto G$ for tail values $X^\star = x^\star$. Indeed, we see that at the tail values of the condition, $x^\star \to \infty$, the forward process is already at stationarity. In other words,  
\begin{align*}
    \nabla g(y) + \beta^{-1} \nabla \log \rho_{\mu_t(.|x^\star)}(y) = y - y = 0, \quad \forall t, \qquad \text{(easy to learn)}
\end{align*}
 And so, where we have few samples, we have a sequence of functions that may be estimated with few samples.

 \begin{figure}[H]
     \centering
     \includegraphics[width=0.98\linewidth]{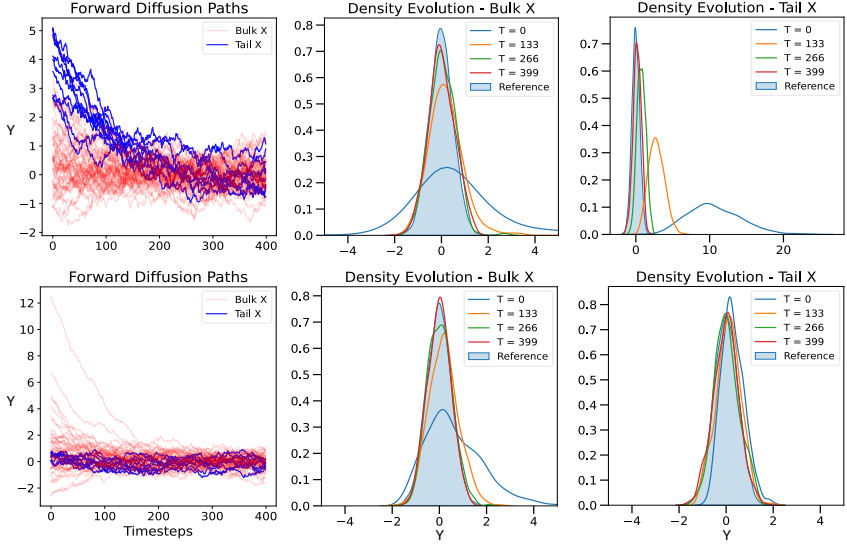}
     \caption{Top row: Before transformation. Bottom row: After transformation.}
     \label{fig:cevtappendix}
 \end{figure}

\section{Theory}
\label{app: theory}
A simple change to Theorem 1 in \cite{song2021maximum} will reflect our change in target for estimation. For completeness we include the theorem below and detail the small modification to the proof. We state the result for unconditional densities, but the result follows for conditional densities without loss of generality. 

\begin{theorem} Denote by $p(y)$ the target density. Let $\{Y_t\}_{t \in [0,T]}$ be the stochastic process defined by the SDE in \ref{eqn:langevin}, where $Y_0 \sim p$ and $Y_t \sim p_t$. Suppose $\pi(y)$ is the stationary density of this SDE as $T \to \infty$. Let $\hat Y^\leftarrow_0 \sim p_{\theta}(y)$ be the result of the approximate reverse-time SDE where we substitute our score model, $s_\theta(y, t)$. 
\begin{align}
    \dd {\hat Y}^\leftarrow_t = -(2 s_\theta( \hat Y^\leftarrow_t, t) - \nabla f(\hat Y^\leftarrow_t)) \dd t + \sqrt{2\beta^{-1}}\dd \bar{B}_t, \quad \hat Y^\leftarrow_T \sim \pi
\end{align}

 Under some regularity conditions (see Appendix A \cite{song2021maximum}), 
\begin{align*}
    KL(p||p_\theta) \leq  \int_0^T \E_{p_{t}(y)}[\|\left(\nabla f(y) + \beta^{-1} \nabla \log p_{t}(y)\right) - s_\theta(y, t)\|^2]dt + KL(p_T||\pi).
\end{align*}
    
\end{theorem}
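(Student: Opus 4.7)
The plan is to mirror the proof of Theorem 1 in \cite{song2021maximum}, with the only substantive change being that the Girsanov drift difference is rewritten in terms of the denoising map $\bB_t(y) := \nabla f(y) + \beta^{-1}\nabla\log p_t(y)$ rather than the raw score $\nabla\log p_t$. At a high level, the argument uses the data-processing inequality to lift the marginal KL to a path-space KL, peels off the terminal distribution mismatch by the KL chain rule, and then invokes Girsanov's theorem to evaluate the remaining path-space divergence as a time-integrated squared drift mismatch.

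First, I would let $P$ denote the path measure on $C([0,T];\R^d)$ induced by the forward diffusion \eqref{eqn:langevin} started from $Y_0\sim p$, and $P_\theta$ denote the path measure of the approximate reverse SDE with terminal condition $\hat Y^\leftarrow_T\sim\pi$. The true time-reversal of $P$ satisfies \eqref{eqn:rev-langevin} with drift $-(\nabla f + 2\beta^{-1}\nabla\log p_t)$, so the two path measures share the same diffusion coefficient $\sqrt{2\beta^{-1}}$ and differ only in drift. Since the time-$0$ marginals are $p$ and $p_\theta$, the data-processing inequality gives $KL(p\|p_\theta)\le KL(P\|P_\theta)$.

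Second, apply the chain rule for KL divergence to split off the terminal mismatch:
\[
KL(P\|P_\theta) = KL(p_T\|\pi) + \E_{y\sim p_T}\bigl[KL\bigl(P_{[0,T)\mid Y_T=y}\,\|\,P_{\theta,[0,T)\mid\hat Y^\leftarrow_T=y}\bigr)\bigr].
\]
The first summand is exactly the $KL(p_T\|\pi)$ in the statement. For the second, both conditional laws are driven by Brownian motion with coefficient $\sqrt{2\beta^{-1}}$, and their drift difference is
\[
-(\nabla f + 2\beta^{-1}\nabla\log p_t) - \bigl[-(2s_\theta - \nabla f)\bigr] = -2\bigl(\bB_t - s_\theta\bigr).
\]
Girsanov's theorem then yields a contribution proportional to $\int_0^T \E_{p_t}\bigl[\|\bB_t(y) - s_\theta(y,t)\|^2\bigr]\,dt$, with a multiplicative constant $\beta$ coming from $\|\sigma^{-1}(\cdot)\|^2$ that is absorbed into the $\le$ of the statement (consistent with the $\lesssim$ version displayed in the body). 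Combining the two summands produces the asserted inequality. The only algebraic novelty relative to \cite{song2021maximum} is the substitution $s_\theta \approx \bB_t$ in the reverse drift parameterization $-(2s_\theta-\nabla f)$; when re-expanded, it reproduces the standard $-\nabla f - 2\beta^{-1}\nabla\log p_t$ and so the Girsanov computation proceeds in parallel.

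The main obstacle, as in the original result, is not the algebra but the regularity needed to legitimately invoke Girsanov's theorem and the Haussmann--Pardoux time-reversal formula: one must verify Novikov's condition (or apply a localization argument) so that the Radon--Nikodym derivative between the two path measures is a genuine martingale rather than merely a local one, and one must ensure that $\nabla f$, $\nabla\log p_t$, and $s_\theta$ are regular enough for the reversed SDE to be well-posed. These hypotheses are imported verbatim from Appendix A of \cite{song2021maximum}; replacing $\nabla\log p_t$ by $\bB_t = \nabla f + \beta^{-1}\nabla\log p_t$ in the learning target does not affect their validity, since the modification only relabels the drift parameterization and leaves the underlying path-measure comparison unchanged.
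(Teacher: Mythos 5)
Your proposal follows the paper's proof essentially step for step: data-processing inequality to lift to path space, KL chain rule to peel off the terminal marginal mismatch, time-reversal (Haussmann--Pardoux) to realize the forward path measure as a reverse-time SDE, and Girsanov's theorem to evaluate the residual path-space KL as an integrated squared drift mismatch. Your explicit note that the drift difference is $-2(\bB_t - s_\theta)$ and that Girsanov produces a multiplicative constant $\beta$ (from $\|\sigma^{-1}\cdot\|^2$ with $\sigma=\sqrt{2\beta^{-1}}$) which must be absorbed into the inequality is in fact slightly more careful than the paper's own write-up, which states the bound without that constant.
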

\begin{proof}
Denote the path measure of $\{ Y_t\}_{t\in [0,T]}$ and  $\{ \hat Y^\leftarrow_t\}_{t \in [0,T]}$ by $\mu$ and $\nu$. Recall $Y_0 \sim p$ and $Y_T \sim p_T$, whereas $\hat Y^\leftarrow_0 \sim p_\theta$ and $\hat Y^\leftarrow_T \sim \pi$. Following the line of argumentation in \cite{song2021maximum}, we establish by data-processing inequality (1), and chain rule (2),
\begin{align*}
    KL(p||p_\theta) \stackrel{1}{\leq} KL(\mu||\nu) \stackrel{2}{\leq} KL(p_T||\pi) + \E_{z \sim p_T}[KL(\mu(\cdot|Y_t =z) \|| \nu(\cdot|\hat Y^\leftarrow_T = z))].
\end{align*}
What remains is to tackle the second term on RHS. Due to time-reversal, the path measure $\{ Y_t\}_{t\in [0,T]}$ can be equivalently seen as generated by the reverse time SDE, 
   \begin{align}
    \dd { Y}^\leftarrow_t = -(\nabla f( Y^\leftarrow_t) + 2 \beta^{-1} \nabla \log p_t(Y^\leftarrow_t)) \dd t + \sqrt{2\beta^{-1}}\dd \bar{B}_t, \quad  Y^\leftarrow_T \sim \pi
\end{align}
Then, $KL(\mu(\cdot|Y_t =z) \|| \nu(\cdot|\hat Y^\leftarrow_T = z))$ can be calculated by comparing the following reverse-time SDEs initialized at the same point:
   \begin{align}
       &dY^\leftarrow_t = -(\nabla f(Y^\leftarrow_t) + 2\beta^{-1} \nabla \log p_{\mu_t(\cdot|x)}(Y^\leftarrow_t))dt + \sqrt{2\beta^{-1}}d B_t, \quad Y^\leftarrow_T = z\\
       &d\hat Y^\leftarrow_t = -(2s_\theta(\hat Y^\leftarrow_t; x, t) - \nabla f(\hat Y^\leftarrow_t)) dt + \sqrt{2\beta^{-1}}d B_t, \quad \hat Y^\leftarrow_T = z
   \end{align}
   Since these SDES share the same diffusion coefficient and starting point, we can appeal to Girsanov's theorem \cite{oksendal2013stochastic} to see, 
   \begin{align*}
       KL(\mu(\cdot|Y_t =z) \|| \nu(\cdot|\hat Y^\leftarrow_T = z)) \leq \int_0^T \E_{p_{t}(y)}[\|\left(\nabla f(y) + \beta^{-1}\nabla \log p_{t}(y)\right) - s_\theta(y, t)\|^2]dt
   \end{align*}
\end{proof}

We adopt the following non-asymptotic bound from \cite{farrell2021deep} with regard to the sample complexity of minimizing the squared error in a multi-layer perceptron neural network. 

\begin{theorem} 
\label{thm: smoothness_nn_rate}
Let $\widehat{f}_{\rm MLP}$ denote a standard multi-layer perceptron. Under the assumption that the target function $f_\star=\nabla f + \beta^{-1} \nabla \log p_{\mu_t(\cdot|x)}$ lies in the Sobolev ball ${\cal W}^{S,\infty}([-1, 1]^d)$ with smoothness parameter $S\in\mathbb{N}^+$, then with probability at least $1-\delta$ where $\delta=\exp(-n^{\frac{d}{S+d}}\log^8n)$, for large enough $n$:
    \begin{equation}
        \|\widehat{f}_{\rm MLP}-f_\star\|_{L_2(x)}^2\leq C\left(n^{-\frac{S}{S+d}}\log^8n + \frac{\log\log n}{n}\right)
    \end{equation}
    Intuitively, the "rougher" the function (the smaller the value of $S$) and the higher the input dimension $d$, a larger number of samples are needed to estimate the target function $f_\star$.
\end{theorem}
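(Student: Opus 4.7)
The plan is a bias--variance decomposition for nonparametric regression by neural networks: write
\begin{equation*}
\|\widehat{f}_{\rm MLP}-f_\star\|_{L_2}^2 \;\lesssim\; \inf_{f\in\mathcal{F}_M}\|f-f_\star\|_\infty^2 \;+\; \bigl\|\widehat{f}_{\rm MLP}-f_M^\star\bigr\|_{L_2}^2,
\end{equation*}
where $\mathcal{F}_M$ denotes the MLP class with $M$ effective parameters and $f_M^\star$ is its best $L_2$ approximant to $f_\star$. The two terms pull $M$ in opposite directions, and optimizing over $M$ selects both the architecture and the resulting rate.

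For the approximation term I would invoke a Yarotsky-style construction: partition $[-1,1]^d$ into $k^d$ sub-cubes, implement a local Taylor polynomial of degree $S-1$ on each cube via a dedicated ReLU sub-network, and glue the pieces using ReLU-realizable partitions of unity. This yields a network with $M \asymp k^d$ parameters and uniform error $O(k^{-S}) = O(M^{-S/d})$ on $\mathcal{W}^{S,\infty}([-1,1]^d)$. For the estimation term I would combine empirical-process theory with the pseudo-dimension bound $\mathrm{Pdim}(\mathcal{F}_M) = \tilde{O}(M)$ of Bartlett--Harvey--Liaw--Mehrabian; standard uniform-convergence arguments then give an estimation error of order $M\log M / n$. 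Balancing $M^{-2S/d}$ against $M/n$ yields the optimal width $M \asymp n^{d/(S+d)}$ and the rate $n^{-S/(S+d)}$ up to logarithmic factors, recovering the polynomial part of the theorem.

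For the explicit high-probability tail $\exp(-n^{d/(S+d)}\log^8 n)$ I would apply a Talagrand-type concentration inequality to the empirical risk together with a peeling/localization argument, so that the effective variance of the chained supremum over $\mathcal{F}_M$ is tightly controlled, and couple this with a truncation step that handles potentially unbounded responses. The hard part will be matching the precise $\log^8 n$ factor and the exact exponent $n^{d/(S+d)}$ in the tail: these emerge from the delicate interplay of the truncation thresholds, chaining over multiple resolution scales of the network class, and the choice of confidence parameter in the pseudo-dimension bound. Getting the polynomial rate by standard methods is essentially routine; what is technical, and the reason the authors quote \cite{farrell2021deep} rather than reproduce the argument, is nailing down these logarithmic exponents and exponential-tail constants simultaneously.
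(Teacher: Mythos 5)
The paper does not prove this statement at all: Theorem~\ref{thm: smoothness_nn_rate} is imported verbatim from Farrell, Liang, and Misra (2021), and the surrounding text says explicitly ``We adopt the following non-asymptotic bound from \cite{farrell2021deep}.'' There is therefore no in-paper proof to compare against. Your sketch is a reasonable reconstruction of the broad strategy used in that reference: a bias--variance (approximation--estimation) decomposition, a Yarotsky-type ReLU approximation bound of order $M^{-S/d}$ for functions in a Sobolev ball, the Bartlett--Harvey--Liaw--Mehrabian pseudo-dimension bound for the MLP class, and a localization argument to balance the two error terms at $M \asymp n^{d/(S+d)}$. You also correctly identify that extracting the exact $\log^8 n$ factor and the exponential tail $\delta=\exp(-n^{d/(S+d)}\log^8 n)$ is where the technical work is concentrated; in Farrell et al.\ these emerge from a carefully instantiated localized complexity bound (in the Koltchinskii / Bartlett--Bousquet--Mendelson tradition) rather than from generic Talagrand chaining plus truncation, so your account of the estimation side is somewhat heuristic, but the overall architecture matches.

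One caveat that neither you nor the paper spells out: the Farrell et al.\ result is a regression theorem (squared-error loss, a scalar conditional mean as target), whereas here the target $f_\star=\nabla f + \beta^{-1}\nabla\log p_{\mu_t(\cdot|x)}$ is a drift-corrected conditional score learned through the denoising score-matching objective in~\eqref{eq: score_matching_loss}. Applying the theorem therefore requires (i) identifying, via Tweedie's formula, the score-matching population minimizer with $f_\star$ so that the loss is genuinely squared-error regression onto the correct target, and (ii) taking the Sobolev smoothness assumption on $f_\star$ at face value — which the paper does by assumption rather than by verification. Your proposal treats the problem as vanilla nonparametric regression and so inherits, but does not resolve, these gaps; since the paper does the same, this is a mismatch between the theorem's hypotheses and its intended application, not an error in your sketch.
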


\section{Methodology Details}

\subsection{Smoothness of $f$} 
\label{app: smoothness_f}

We implement the Euler-Maryama discretization of the forward diffusion, \ref{eqn:langevin}. This amounts to Unadjusted Langvevin Algorithm (ULA). It is well established that the convergence speed of ULA depends on the gradient of our drift term, $\nabla^2 f$ (developed in a sequence of works \cite{dalalyan2012sparse, dalalyan2017further, durmus2017nonasymptotic}). We present a result condensed in \cite{chewi2023log}, and for simplicity, specialized to dimension, $d = 1$.
\begin{theorem}[Convergence of ULA \cite{chewi2023log}] Suppose that $\pi \propto e^{-f}$ is the target distribution and $f$ satisfies $\alpha \leq \nabla^2 f \leq \beta$. Define $\kappa = \beta/\alpha$ as the condition number and $\mu_{t\cdot\eta}$ as the $t-$th measure in the sequence. Then, for any $\epsilon \in [0, 1]$, with step size $\eta \asymp \epsilon^2/\beta \kappa$, we obtain that after, 
\begin{align*}
    T = O\left(\frac{\kappa^2}{\epsilon^2}\log \frac{\alpha W_2^2(\mu_0, \pi)}{\epsilon^2}\right) \quad \text{iterations,}
\end{align*}
\begin{align*}
   \alpha W_2^2(\mu_{T\cdot \eta}, \pi)\leq \epsilon^2
\end{align*}
\end{theorem}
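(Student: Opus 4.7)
The plan is to run a synchronous Brownian coupling between the ULA iterates and a sample from $\pi$ transported by the continuous Langevin diffusion, then combine $\alpha$-strong-convexity (for contraction) with $\beta$-smoothness (for the discretization error). Let $X_k$ denote the $k$-th ULA iterate and interpret one step as the Euler--Maruyama SDE $dY_t = -\nabla f(X_k)\,dt + \sqrt{2}\,dB_t$ on $t\in[k\eta,(k+1)\eta]$ with $Y_{k\eta}=X_k$, so that $Y_{(k+1)\eta}=X_{k+1}$. Couple this against the stationary Langevin trajectory $dY^*_t = -\nabla f(Y^*_t)\,dt + \sqrt{2}\,dB_t$ driven by the same Brownian motion, with $Y^*_{k\eta}$ initialized via the optimal $W_2$-coupling to $X_k$; then $Y^*_t\sim\pi$ for all $t$ and $\|Y^*_{k\eta}-X_k\|_{L^2}=W_2(\mu_{k\eta},\pi)$.

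The first step is to derive a one-step recursion. Since the Brownian parts cancel, $\tfrac{d}{dt}|Y_t-Y^*_t|^2$ reduces to the drift contribution $-2(Y_t-Y^*_t)(\nabla f(X_k)-\nabla f(Y^*_t))$. Adding and subtracting $\nabla f(Y_t)$ splits this into a favorable term, bounded by $-2\alpha(Y_t-Y^*_t)^2$ via $\alpha$-strong-convexity, and a discretization term, bounded by $2\beta|Y_t-Y^*_t|\,|Y_t-X_k|$ via $\beta$-smoothness. Differentiating $\|Y_t-Y^*_t\|_{L^2}$ and applying a Gr\"onwall-type estimate yields
\begin{align*}
W_2(\mu_{(k+1)\eta},\pi) \;\leq\; e^{-\alpha\eta}\,W_2(\mu_{k\eta},\pi) \;+\; \beta\!\int_{k\eta}^{(k+1)\eta}\! e^{-\alpha((k+1)\eta-s)}\,\|Y_s-X_k\|_{L^2}\,ds.
\end{align*}

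The next step is to control the inner $L^2$ norm. Writing $Y_s-X_k=-\nabla f(X_k)(s-k\eta)+\sqrt{2}(B_s-B_{k\eta})$ gives $\|Y_s-X_k\|_{L^2}^2 \lesssim \eta^2\,\E[\nabla f(X_k)^2] + \eta$. Combining $\beta$-smoothness with an inductive Lyapunov bound (using $\mathrm{Var}_\pi \leq 1/\alpha$, which follows from $\alpha$-strong-convexity via Brascamp--Lieb) yields $\E[\nabla f(X_k)^2]=O(\beta^2/\alpha)$ uniformly in $k$. In the target regime $\eta\lesssim 1/(\beta\kappa)$ this collapses to $\|Y_s-X_k\|_{L^2}\lesssim\sqrt{\eta}$, so each step costs at most $C\beta\eta^{3/2}$. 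Iterating the recursion and summing the geometric series then gives
\begin{align*}
W_2(\mu_{T\eta},\pi) \;\leq\; e^{-\alpha T\eta}\,W_2(\mu_0,\pi) \;+\; \frac{C\,\beta\,\eta^{1/2}}{\alpha}.
\end{align*}
Choosing $\eta\asymp\epsilon^2/(\beta\kappa)$ makes the discretization tail of order $\epsilon/\sqrt{\alpha}$, i.e.\ contributes $O(\epsilon^2)$ to $\alpha W_2^2$. Since $\alpha\eta\asymp\epsilon^2/\kappa^2$, the contraction term reaches the same level after $T\asymp(\kappa^2/\epsilon^2)\log(\alpha W_2^2(\mu_0,\pi)/\epsilon^2)$ iterations, which is the claim.

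The main obstacle is extracting the sharp $\beta\eta^{3/2}$ per-step cost: this demands the $L^2$ integral form of the recursion (rather than a crude supremum over the step), the uniform moment bound $\E[\nabla f(X_k)^2]=O(\beta^2/\alpha)$ via the Lyapunov induction, and the exponential weighting $e^{-\alpha(\cdot)}$ preserved inside Gr\"onwall. Any looser handling (e.g., bounding the step increment in $L^\infty$, or dropping the exponential when summing) degrades the iteration complexity to $O(\kappa/\epsilon^4)$ or worse, losing the optimal $\kappa^2/\epsilon^2$ scaling that the theorem claims.
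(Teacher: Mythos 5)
The paper does not prove this theorem: it is quoted verbatim from \cite{chewi2023log} as a supporting tool (and even flagged as ``a result condensed in \cite{chewi2023log}, \ldots specialized to dimension $d=1$''). So there is no in-paper proof to compare against. That said, your reconstruction follows what is indeed the standard route that reference takes: synchronous Brownian coupling of the ULA interpolation against a stationary Langevin trajectory, a Gr\"onwall estimate exploiting $\alpha$-strong convexity for the contraction and $\beta$-smoothness for the drift mismatch, a uniform second-moment (hence $\E\|\nabla f(X_k)\|^2 = O(\beta^2/\alpha)$) Lyapunov bound to get $\|Y_s-X_k\|_{L^2}\lesssim\sqrt\eta$ in the regime $\eta\lesssim 1/(\beta\kappa)$, and a geometric sum giving $W_2(\mu_{T\eta},\pi)\le e^{-\alpha T\eta}W_2(\mu_0,\pi)+C\beta\sqrt\eta/\alpha$. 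Plugging in $\eta\asymp\epsilon^2/(\beta\kappa)$ and solving the exponential for $T$ reproduces the claimed $T=O\!\big((\kappa^2/\epsilon^2)\log(\alpha W_2^2(\mu_0,\pi)/\epsilon^2)\big)$ bound, and the arithmetic (e.g.\ $\beta^2\eta/\alpha\asymp\epsilon^2$, $\alpha\eta\asymp\epsilon^2/\kappa^2$) checks out. Two small points worth tightening if you write this out fully: (i) the moment bound $\E\|X_k-x^\star\|^2=O(1/\alpha)$ uniformly in $k$ requires a short induction showing $1-2\alpha\eta+\beta^2\eta^2\le 1-\alpha\eta$ under the assumed step size, and one should either absorb the initial $\E\|X_0-x^\star\|^2$ into the contraction or assume a warm start; and (ii) the Brascamp--Lieb invocation is overkill --- the Poincar\'e inequality from $\alpha$-strong log-concavity already gives $\mathrm{Var}_\pi\le 1/\alpha$, and for the iterates you anyway need the discrete Lyapunov recursion rather than a stationary-measure fact. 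Neither affects correctness.
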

In our methodology, we propose choosing a convex $f$ to target a specific distribution, $e^{-f}$, that reflects the tail characteristics of our target conditional distribution, $P(Y|X=x)$. However, choosing $f$ with poor curvature directly impacts speed of forward process. This in turn impacts how many noising steps, $[T]$, are necessary to diffuse-then-denoise and can detriment computational efficiency. This is particularly relevant when part of our argument concerns out-performing Gaussian diffusions. However, when $e^{-f} \propto e^{-x^2/2}$, $\kappa = 1$ and convergence is fast. 

To overcome this we use appropriately smoothed versions of the new target density, $e^{-f^\star}$. We smooth in such a way that $\kappa$ is bounded, $f^\star$ is continuously differentiable, but $e^{-f^\star} \approx e^{-f}$. In the backward process, we still initiate samples by drawing from $e^{-f}$. We emphasize that this does not impact the quality of the method. 
\begin{itemize}
    \item We show below that by appropriately choosing smoothing parameters, the forward process converges to a distribution very similar to the target, $e^{-f}$.
    \item Small perturbations between the end of the forward process ($e^{-f^\star}$) and start of the backward process ($e^{-f}$) is theoretically negligible \cite{liang2024denoising}. Even with standard schemes, owing to the finite time steps $T < \infty$, the end of the forward proccess will not be exactly Gaussian. 
\end{itemize}

Below are examples relevant to this paper.

\paragraph{Gaussian}
The standard scheme is to target $e^{-f} \propto e^{-x^2/2}$, standard Gaussian density. In this case, $\kappa = 1$. 
\begin{figure}[H]
    \centering
    \begin{subfigure}[t]{0.32\textwidth}
        \centering
        \includegraphics[width=0.98\linewidth]{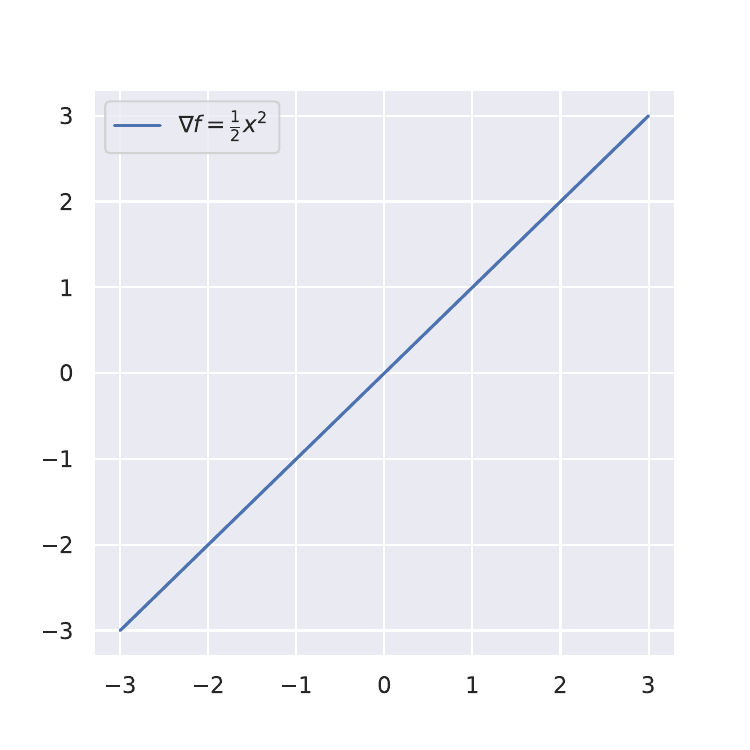}
        \caption{}
    \end{subfigure}
    \begin{subfigure}[t]{0.32\textwidth}
        \centering
        \includegraphics[width=0.98\linewidth]{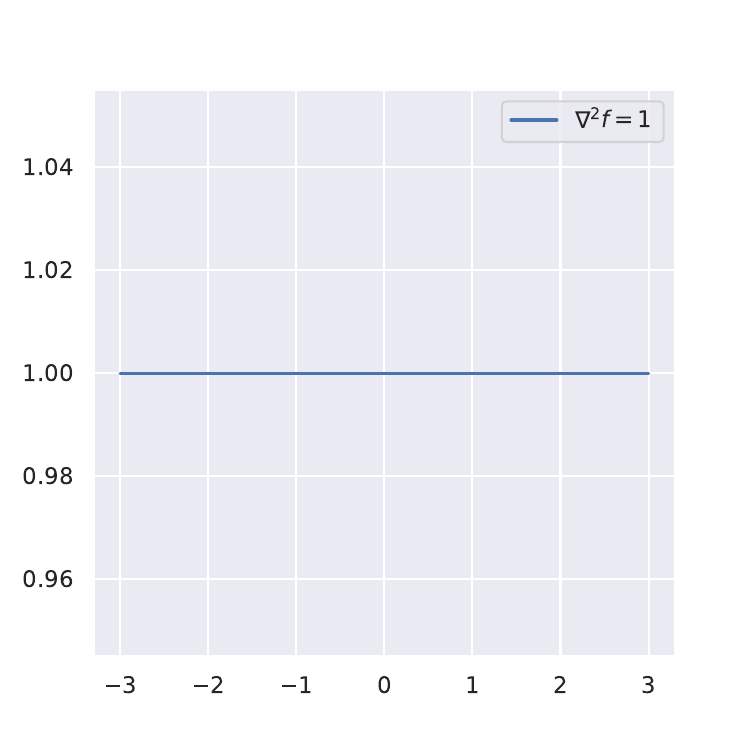}
        \caption{}
    \end{subfigure}
    \caption{(a): Plot of $\nabla f$. (b): Plot of $\nabla^2 f$.}
    \label{fig:placeholder}
\end{figure}

\paragraph{Laplace}
Suppose we want to target $e^{-f} \propto e^{-|x|}$. Then, $\nabla^2 f = 0$ and $f$ is not continuously differentiable (at 0). Convergence theorem for ULA suggests potential problems. Instead, we consider a smooth approximation, 
\begin{align*}
    \nabla f^\star(x, b , c) = \begin{cases}
        \tfrac{1}{b}\cdot x + c\cdot x, \quad \text{if } x \in (-b, b),\\
        \text{sign}(x) + c\cdot x, \quad \text{otherwise.}
    \end{cases}
\end{align*}
Here, $b, \ c \geq 0$ are user specified constants. If $b, c = 0$, then we arrive at $\nabla f$. 
This is simply the gradient of the Huber function with a linear perturbation by $c\cdot x$. With this smoothing, $\kappa =1 + \frac{1}{bc} $.
\begin{figure}[H]
    \centering
    \begin{subfigure}[t]{0.32\textwidth}
        \centering
        \includegraphics[width=0.98\linewidth]{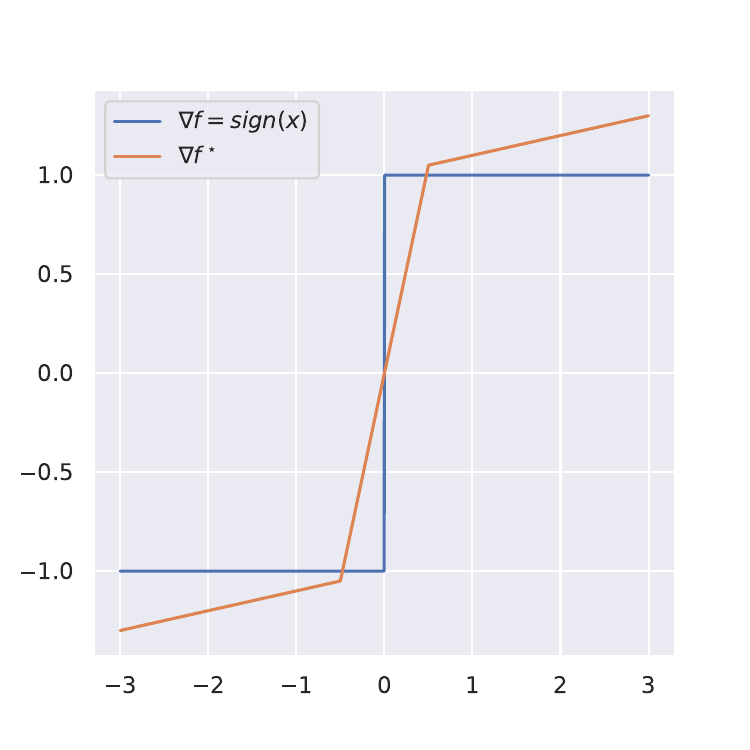}
        \caption{}
    \end{subfigure}
    \begin{subfigure}[t]{0.32\textwidth}
        \centering
        \includegraphics[width=0.98\linewidth]{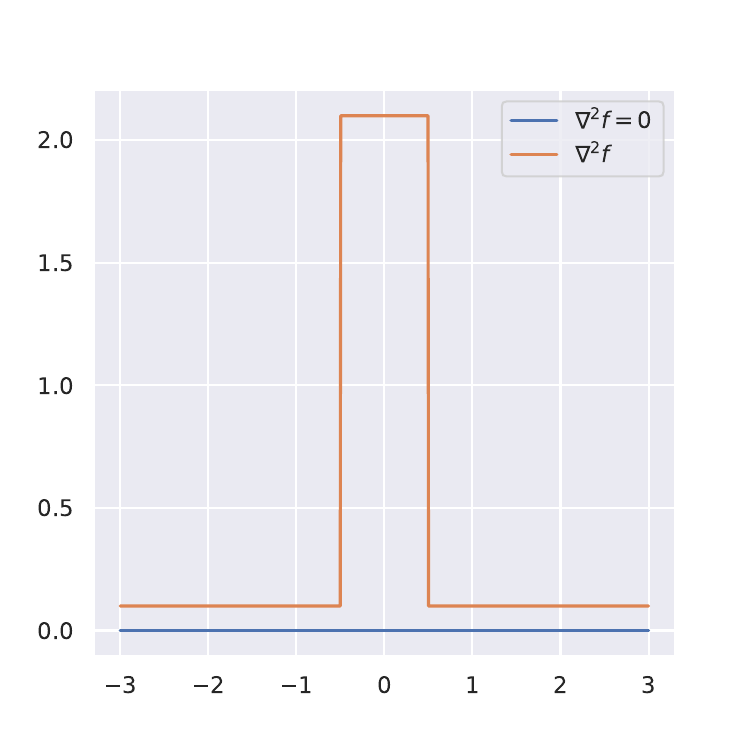}
        \caption{}
    \end{subfigure}
    \begin{subfigure}[t]{0.32\textwidth}
        \centering
        \includegraphics[width=0.98\linewidth]{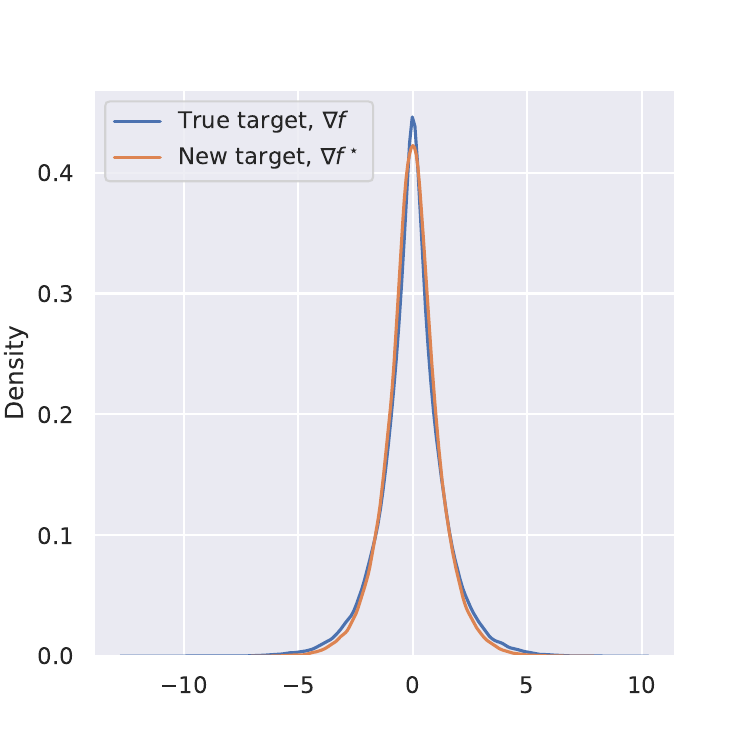}
        \caption{}
    \end{subfigure}
    \caption{Set $b = 0.5, \ c = 0.1$. (a): Plot comparing $\nabla f$ and $\nabla f^\star$. (b): Plot comparing $\nabla^2 f$ and $\nabla^2 f^\star$. (c): Comparing densities after running ULA ($\eta = 0.01, \ T = 1000$ with $\nabla f$ and $\nabla f^\star$.}
    \label{fig:placeholder}
\end{figure}

\paragraph{Huber}
Suppose we want to target $e^{-f} \propto e^{-(x+e^{-x})}$. Then, $\nabla^2 f = e^{-x}$ which is not bounded above, and approaches $\to 0$ as $x \to \infty$. We consider a smooth approximation, 
\begin{align*}
    \nabla f^\star(x, b , c) = \begin{cases}
        e^{b}, \quad \text{if } x \leq -b,\\
        e^{-x}, \quad \text{if } -b<x<c ,\\
        e^{-c}, \quad \text{if } x \geq c.
    \end{cases}
\end{align*}
Here, $b, \ c \geq 0$ are user specified constants. If $b, c = 0$, then we arrive at $\nabla f$. With this smoothing, $\kappa = e^{b+c} $.

\begin{figure}[H]
    \centering
    \begin{subfigure}[t]{0.32\textwidth}
        \centering
        \includegraphics[width=0.98\linewidth]{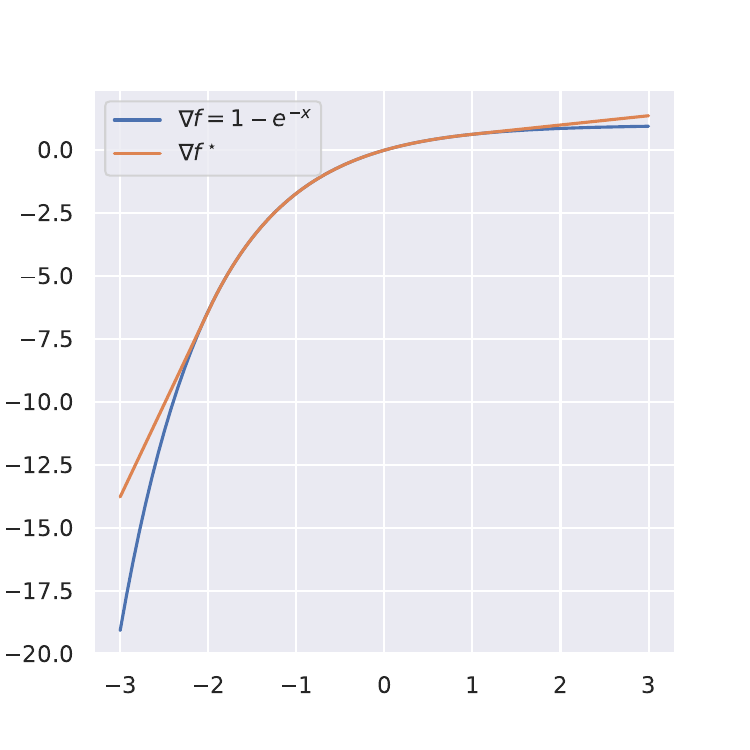}
        \caption{}
    \end{subfigure}
    \begin{subfigure}[t]{0.32\textwidth}
        \centering
        \includegraphics[width=0.98\linewidth]{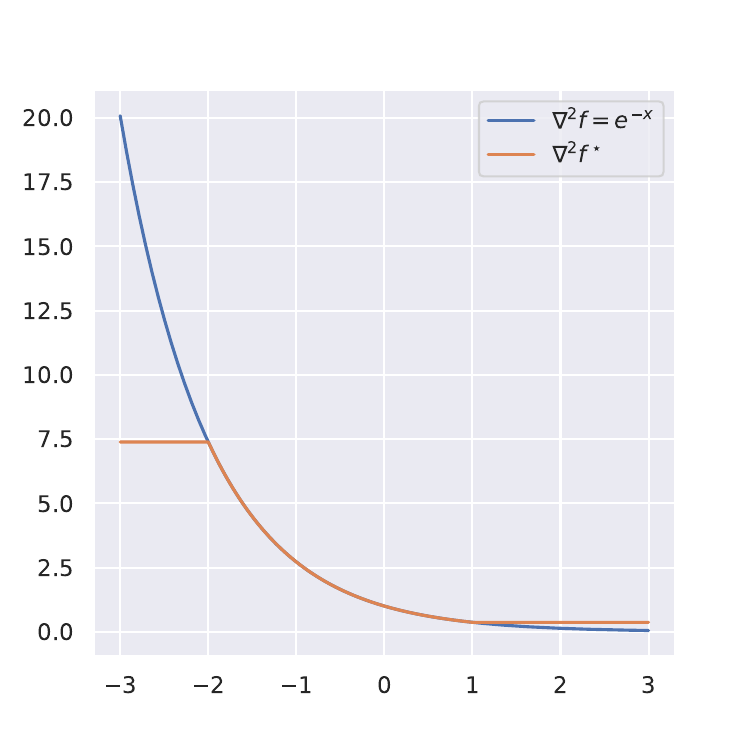}
        \caption{}
    \end{subfigure}
    \begin{subfigure}[t]{0.32\textwidth}
        \centering
        \includegraphics[width=0.98\linewidth]{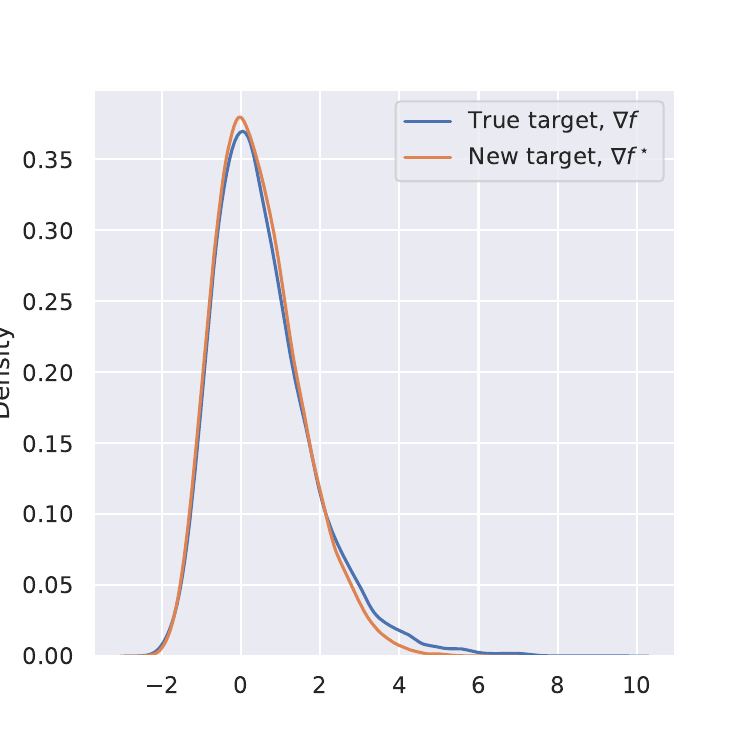}
        \caption{}
    \end{subfigure}
    \caption{Set $b = 2, \ c = 1$. (a): Plot comparing $\nabla f$ and $\nabla f^\star$. (b): Plot comparing $\nabla^2 f$ and $\nabla^2 f^\star$. (c): Comparing densities after running ULA ($\eta = 0.01, \ T = 1000$ with $\nabla f$ and $\nabla f^\star$.}
    \label{fig:placeholder}
\end{figure}




\subsection{Taylor Accelerated Forward Diffusion}
\label{app: taylor_forward}
An important practical consideration for our training algorithm is the efficiency in the estimation of the score function. In our work, we utilize the Euler-Maruyama approximation in order to sample $Z_t$ given $Z_0$. In practice, this can be inefficient, since it requires ${\cal O}(t)$ sampling steps to sample. For a given time $t_\star\in \{1, \ldots, T\}$, direct score estimation based on Euler–Maruyama is given by: 
\begin{align}
    \label{eq: Euler-Maruyama_forward}
    &Z_0 \sim {\cal D} \\
    &Z_t = Z_{t-1}-\eta\nabla f (Z_{t-1}) +\sqrt{2\eta}\cdot {\cal N}(0, 1), \quad t=1,\ldots,t_\star 
\end{align}
A linear SDE can be solved more easily and allows for ancestral sampling, where $Z_t|Z_0$ can be sampled in a single step. As an example, consider the Ornstein–Uhlenbeck (OU) process:
\begin{equation}
    \label{eq: ou_process}
    dZ_t = \theta(\mu-Z_t)dt +\sigma dW_t
\end{equation}
and its Euler-Maruyama discretized counterpart:
\begin{equation}
    \label{eq: euler_discretization_ou_process}
     Z_t = Z_{t-1} + \theta(\mu-Z_{t-1}) + \sigma \epsilon_t,\quad  \epsilon_t\sim \mathcal{N}(0, 1).
\end{equation}
The discretized process can alternatively be parameterized as:
\begin{equation}
    \label{eq: euler_discrete_another_parameterization_ou}
    Z_t=(1-\theta)Z_{t-1} + \theta\mu + \sigma \epsilon_t
\end{equation}
which allows for straightforward derivation of the conditional $p(Z_t|Z_0)$:
\begin{equation}
    \label{eq: ancestral_sampling_ou_process_euler_approximation}
    p(Z_t|Z_0) = {\cal N}\left(Z_t ; \alpha^t Z_0 + (1-\alpha^t) \mu, \sigma^2\left(\frac{1-\alpha^{2(t+1)}}{1-\alpha^2}\right) \right)
\end{equation}
As we can see above, sampling from a linear SDE like the OU process is easy and does not require multiple rounds of a solver. One idea to make sampling from a nonlinear SDE more efficiently is to consider a first-order Taylor of the score. Particular to this paper, consider a Langevin diffusion with score function $s(Z)= -\nabla_Z f(Z)$, which we know converges to $p(Z_\star)\propto e^{-f(Z)}$ at equilibrium. Consider the first-order Taylor approximation to the score centered around $\tilde{Z}$:
\begin{equation}
    \label{eq: score_taylor_approx}
    s(Z) \approx s(\tilde{Z}) + \nabla_{\tilde{Z}} s(\tilde{Z})(Z-\tilde{Z}) 
\end{equation}
We can see that under this approximation, $s(Z)$ is approximately a linear function in $Z$. It is straightforward to see that by plugging in this approximation into the Langevin SDE, we can employ ancestor sampling as in \eqref{eq: ancestral_sampling_ou_process_euler_approximation} to accelerate the forward diffusion for nonlinear SDEs. In particular, we can easily see that under this linear approximation, the Langevin SDE will reduce to an OU process with certain parameterization:
\begin{align}
Z_t &= Z_{t-1} - \eta \nabla_{Z_{t-1}} s(Z_{t-1}) + \sqrt{2\eta} \epsilon_t \\
&\approx Z_{t-1} -\eta(s(\tilde{Z}) + \nabla_{\tilde{Z}} s(\tilde{Z}) (Z_{t-1}-\tilde{Z}))+ \sqrt{2\eta} \epsilon_t \\
&= Z_{t-1} -\eta s(\tilde{Z}) - \eta \nabla_{\tilde{Z}} s(\tilde{Z}) (Z_{t-1}-\tilde{Z})+ \sqrt{2\eta} \epsilon_t \\
&= Z_{t-1} -\eta s(\tilde{Z}) -\eta \nabla_{\tilde{Z}} s(\tilde{Z})Z_{t-1}+ \eta \nabla_{\tilde{Z}} s(\tilde{Z})\tilde{Z}+ \sqrt{2\eta} \epsilon_t \\
&= \left(1-\eta \nabla_{\tilde{Z}} s(\tilde{Z}) \right) Z_{t-1} + \eta\nabla_{\tilde{Z}} s(\tilde{Z}) \left(\tilde{Z}-\left(\nabla_{\tilde{Z}} s(\tilde{Z})\right)^{-1}s(\tilde{Z})\right) + \sqrt{2\eta} \epsilon_t
\end{align} 
We can see that this is an OU process with the following parameters:
\begin{align}
    &\theta=\eta \nabla_{\tilde{Z}}s(\tilde{Z}) \\
    &\mu=\tilde{Z}-\left(\nabla_{\tilde{Z}} s(\tilde{Z})\right)^{-1} s(\tilde{Z}) \\
    &\sigma =2\eta
\end{align}
This means that we can apply ancestral sampling to the Taylor approximation of our Langevin diffusion. We refer the reader to the pseudocode in Algorithm \ref{alg:taylor_forward_sampling} for our specific implementation.

\begin{algorithm}[H]
  \caption{Taylor-Accelerated Forward Sampling}
  \label{alg:taylor_forward_sampling}
  \begin{algorithmic}[1]
    \State \textbf{Input:} Initial residual $Z_0$, conditioning variable $X$, target time $t_\star$, step size $\eta$, Taylor steps function $K(t)$
    \State \textbf{Initialize:} Current state $Z_{curr} = Z_0$, current time $t_{curr} = 0$
    \While{$t_{curr} < t_\star$}
        \State \textbf{Determine number of Taylor horizon:} Set $k = \min(K(t_{curr}), t_\star - t_{curr})$
        \State \textbf{Compute Taylor approximation:} 
        \begin{align*}
            s_{curr} &= s_\theta(Z_{curr}; X, t_{curr}) \\
            \nabla s_{curr} &= \nabla_{Z_{curr}} s_\theta(Z_{curr}; X, t_{curr})
        \end{align*}
        \State \textbf{Set OU parameters:}
        \begin{align*}
            \alpha &= 1 - \eta \nabla s_{curr} \\
            \mu_{eff} &= Z_{curr} - \frac{s_{curr}}{\nabla s_{curr}} \\
            \sigma_{eff} &= \sqrt{2\eta}
        \end{align*}
        \State \textbf{Ancestral sampling:} Sample directly at time $t_{curr} + k$:
        $Z_{curr} \sim \mathcal{N}\left(\alpha^k Z_{curr} + (1-\alpha^k) \mu_{eff}, \sigma_{eff}^2 \frac{1-\alpha^{2k}}{1-\alpha^2}\right)$
        \State \textbf{Update time:} $t_{curr} \gets t_{curr} + k$
    \EndWhile
    \State \textbf{Return:} Final residual $Z_{t_\star} = Z_{curr}$
  \end{algorithmic}
\end{algorithm}


\subsection{Pseudocode for Methodology}
\label{app: algorithms}
    

\begin{algorithm}[H]
  \caption{CEVT-based Data Preprocessing}
  \label{alg:cevt_preprocessing}
  \begin{algorithmic}[1]
    \State \textbf{Input:} $\{(X_i, Y_i)\}_{i=1}^n$, threshold quantile $\alpha > 0$
    \State \textbf{Estimate empirical CDFs:} Compute $\hat{F}_X$ and $\hat{F}_Y$ from data
    \State \textbf{Transform to Laplace marginals:} For all samples $i=1,\ldots, n$ {\bf do}
    \begin{align*}
        X^\star_i &\gets -\text{sign}(\hat{F}_X(X_i) - 0.5) \cdot \log(1 - 2|\hat{F}_X(X_i) - 0.5|) \\
        Y^\star_i &\gets -\text{sign}(\hat{F}_Y(Y_i) - 0.5) \cdot \log(1 - 2|\hat{F}_Y(Y_i) - 0.5|)
    \end{align*}
    \State \textbf{Extract tail samples:} Find subset $\{(X^\star_i, Y^\star_i)\}_{i=1}^m$ where $\hat{F}_X(X_i) > 1 - \alpha$
    \State \textbf{Estimate tail parameters:} Compute coefficients $a, b$ using tail samples $\{(X^\star_i, Y^\star_i)\}_{i=1}^m$
    \State \textbf{Compute residuals:} Set $Z_i = \frac{Y^\star_i - a \cdot X^\star_i}{(X^\star_i)^b}$ for $i = 1, \ldots, n$
    \State \textbf{Return:} Preprocessed dataset $\{(X^\star_i, Z_i)\}_{i=1}^n$
  \end{algorithmic}
\end{algorithm}


\begin{algorithm}[H]
  \caption{Diffusion Model Training}
  \label{alg:cevt_diffusion_training}
  \begin{algorithmic}[1]
    \State \textbf{Input:} Preprocessed dataset $\{(X_i^\star, Z_i)\}_{i=1}^n$, learning rate $\eta$, epochs $E$, weighting function $\lambda(t)$
    \State \textbf{Initialize:} Network parameters $\theta$, noise schedule parameters
    \For{epoch $e = 1, \ldots, E$}
        \For{batch $\{(X_j^\star, Z_j)\}_{j \in \text{batch}}$}
            \State \textbf{Sample timestep:} Sample $t$ uniformly over time-horizon.
            \State \textbf{Generate noisy samples:} Sample $Z_t | Z_0 = Z_j$ according to forward process using 
            \Statex \hspace{2.75em} Euler-Maruyama solver of Taylor-accelerated sampling in Algorithm \ref{alg:taylor_forward_sampling}.
            \State \textbf{Compute score matching loss:} Evaluate ${\cal L}(\theta)$ in \eqref{eq: score_matching_loss} for each sample in the batch.
            \State \textbf{Backward pass:} Compute gradients $\nabla_\theta \mathcal{L}(\theta)$
            \State \textbf{Update parameters:} $\theta \gets \theta - \eta \nabla_\theta \mathcal{L}(\theta)$
        \EndFor
    \EndFor
    \State \textbf{Return:} Trained parameters $\theta$
  \end{algorithmic}
\end{algorithm}

\begin{algorithm}[H]
  \caption{Diffusion Model Sampling}
  \label{alg:reverse_diffusion_detail}
  \begin{algorithmic}[1]
    \State \textbf{Input:} Initial noise $Z_T$, conditioning $X^\star$, trained score $s_{\theta_\star}$, time horizon $T$
    \State \textbf{Initialize:} Current state $Z_{curr} = Z_T$, current time $t_{curr} = T$
    \While{$t_{curr} > 0$}
        \State \textbf{Determine step size:} Set $k = \min(K(t_{curr}), t_{curr})$
        \State \textbf{Compute score:} Evaluate $s_{\theta_\star}(Z_{curr}; X^\star, t_{curr})$
        \State \textbf{Reverse step:} Apply reverse SDE or Euler-Maruyama:
        $$Z_{curr} = Z_{curr} + \eta \cdot s_{\theta_\star}(Z_{curr}; X^\star, t_{curr}) + \sqrt{2\eta} \epsilon$$
        where $\epsilon \sim \mathcal{N}(\mathbf{0}, \mathbf{I})$
        \State \textbf{Update time:} $t_{curr} \gets t_{curr} - k$
    \EndWhile
    \State \textbf{Return:} Denoised residual $Z_0 = Z_{curr}$
  \end{algorithmic}
\end{algorithm}

\section{Additional Experimental Results}
Here, we provide additional plots and metrics for the experiments section of our work. 
\subsection{Synthetic Data}
\label{app: additional_details_syn}
\begin{figure}[H]
    \centering
    \begin{subfigure}[t]{0.98\textwidth}
        \centering
        \includegraphics[width=0.98\linewidth, trim=0cm 0cm 0cm 0cm, clip]{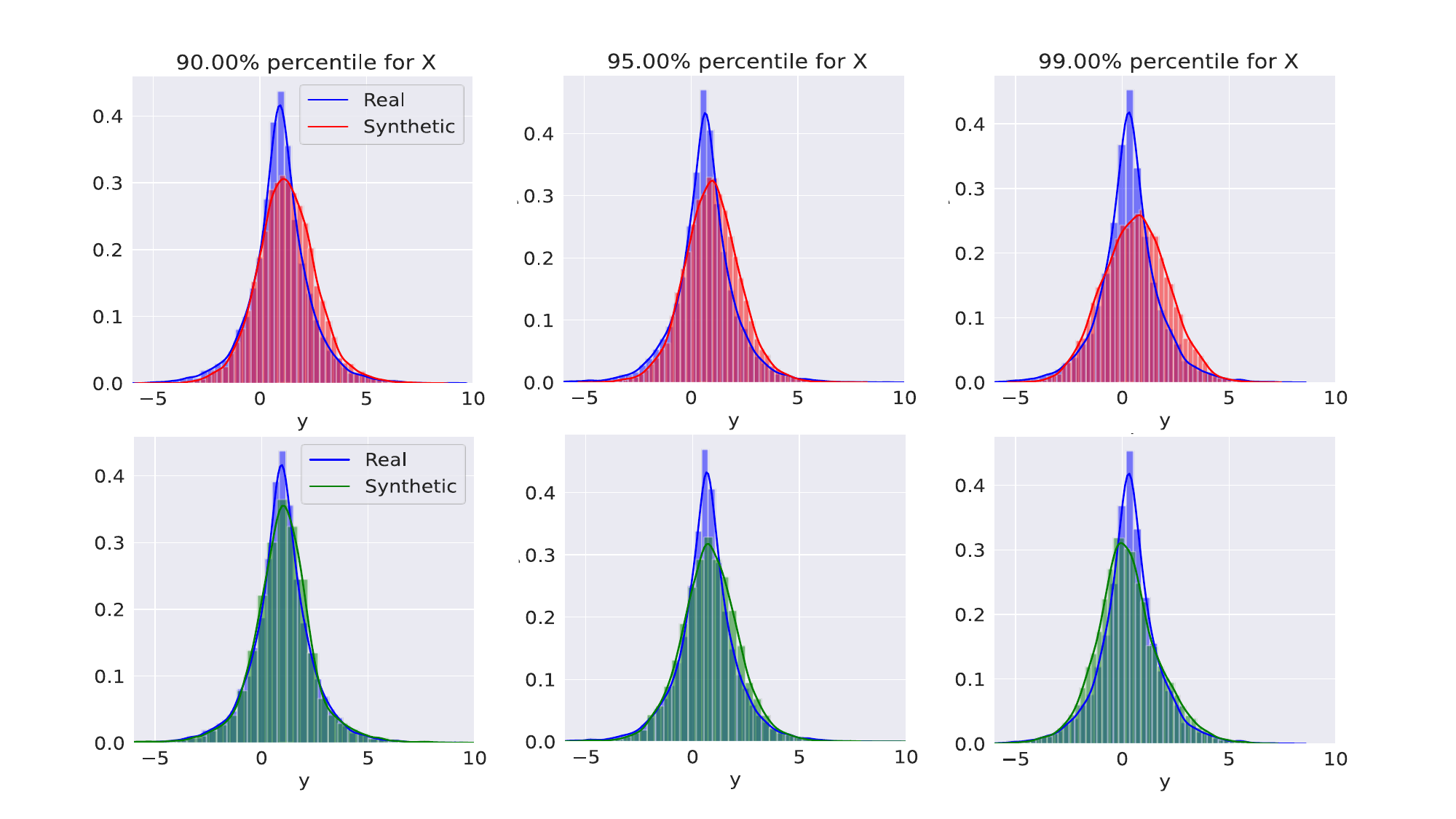}
        \caption{Top Row: Standard method targeting $P(Y|X)$ with linear diffusion. Bottom Row: New method. New method manages to capture the heavy Laplace tails, standard method struggles to do so. }
    \end{subfigure}\\
     \begin{subfigure}[t]{0.98\textwidth}
        \centering
        \includegraphics[width=0.98\linewidth]{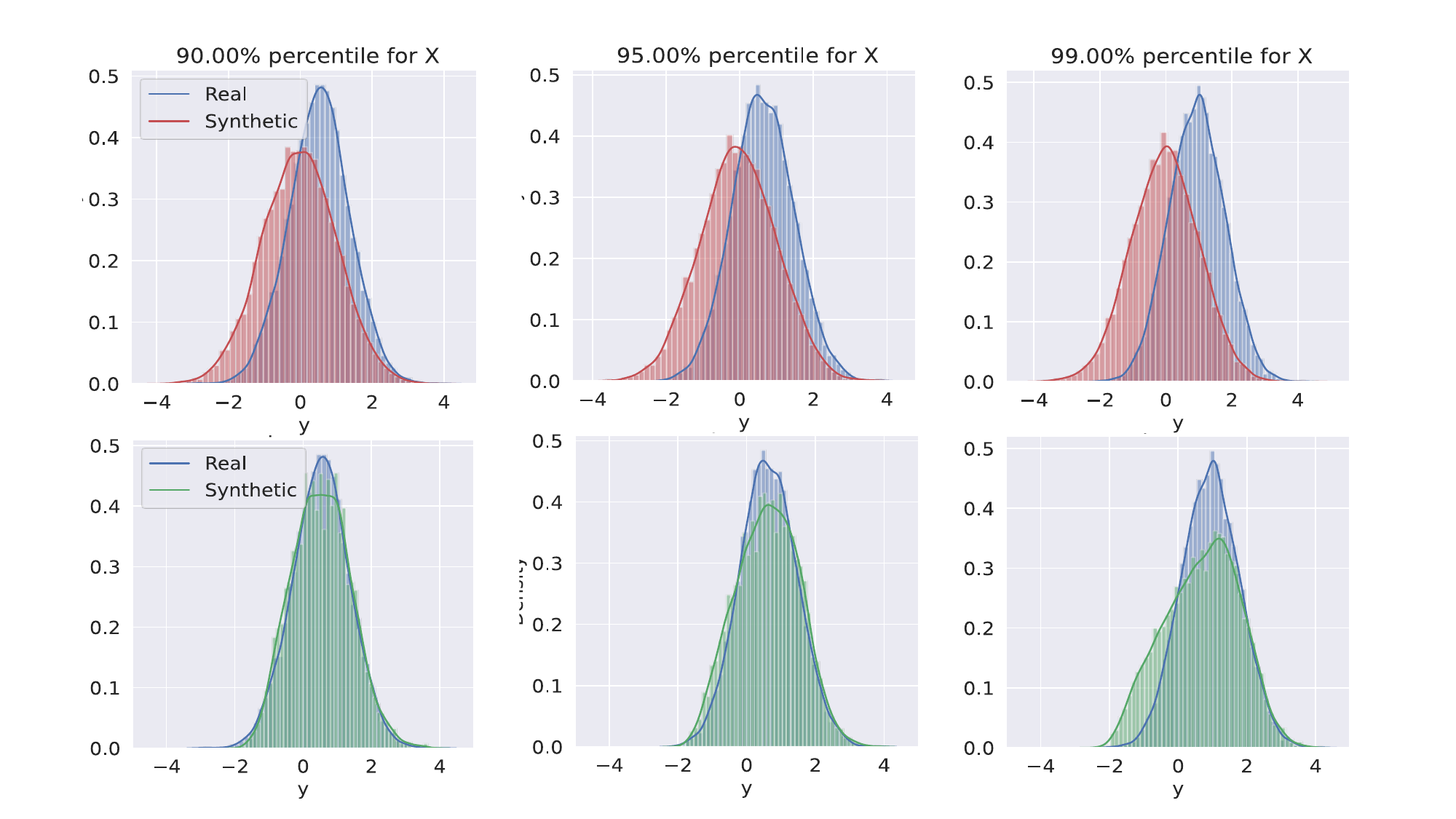}
        \caption{Top Row: Standard method targeting $P(Y|X)$ with linear diffusion. Bottom Row: New method.}
    \end{subfigure}
    \caption{(a) Synthetic Example 1 (b) Synthetic Example 2}
    \label{fig: gaussian_vs_laplace_synthetc_experiment}
\end{figure}
\begin{figure}[H]
    \centering
    \includegraphics[width=\linewidth, trim=0cm 5cm 0cm 5cm, clip]{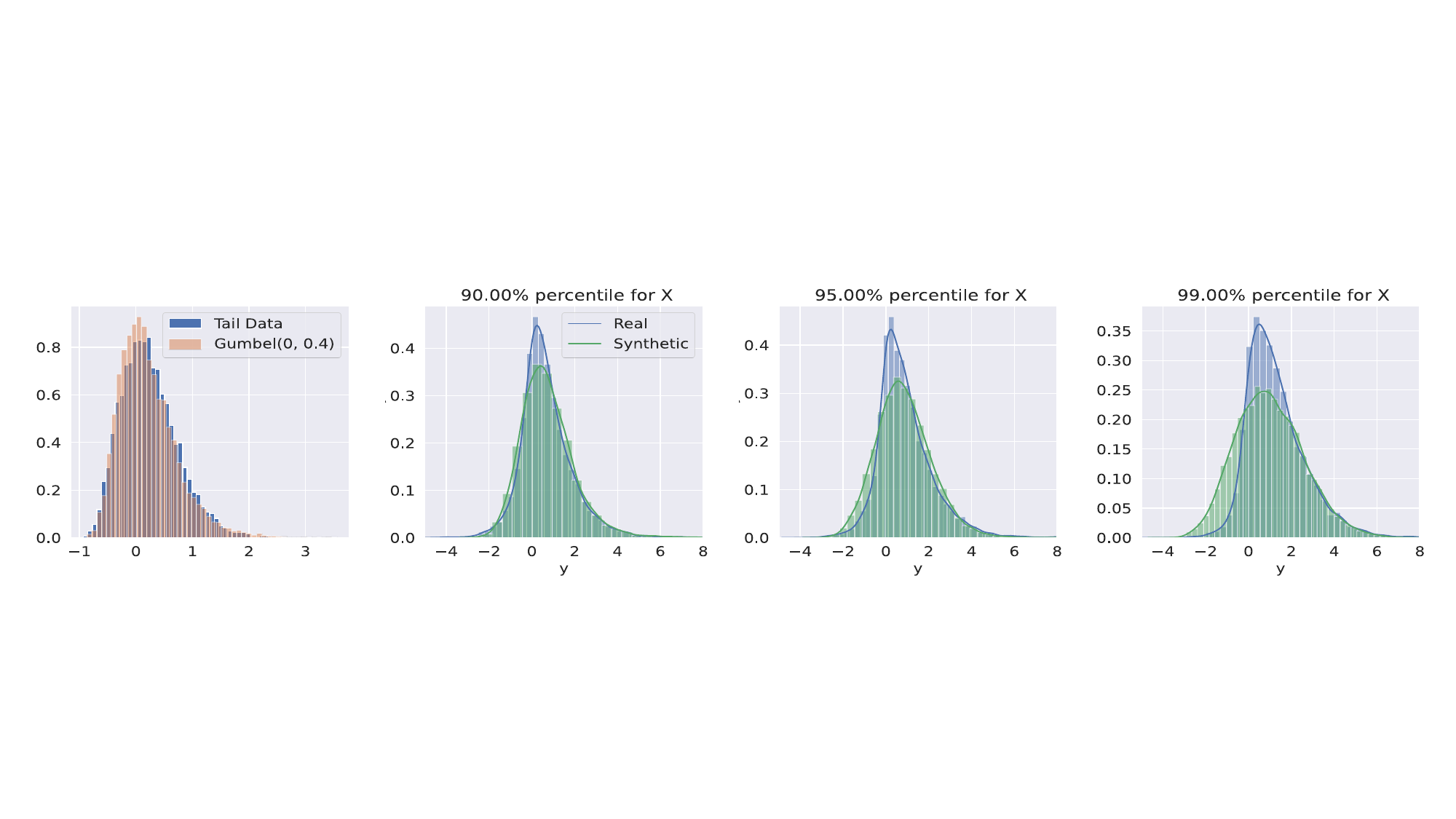}
    \caption{Left Plot: As discussed, we see for extreme (but not infinite) values in the tail, data seem Gumbel distributed. We visualize sampling in the CEVT based representation space ($P(Z|X^\star$) in the subsequent plots. We capture the one-sided tails.}
    \label{fig: choice_of_gumbel}
\end{figure}
\subsection{Financial Returns Conditioned on VIX}
\label{app: additional_details_stocks}
We provide additional and more detailed experimental results for our evaluation on real data. 

\subsubsection{VIX Time Series}
Here, we show a plot of the VIX time series in Figure \ref{fig: vix_time_series}, which serves as the conditional information supplied to the diffusion models for the stock return generation experiment. For both the GFC and COVID periods, the VIX level is relatively lower in the training data (plotted in blue) than the testing data (plotted in orange), indicating that the testing data covers a period of market stress. 

\begin{figure}[htbp]
    \centering
    \begin{subfigure}[b]{0.48\textwidth}
        \includegraphics[width=\linewidth]{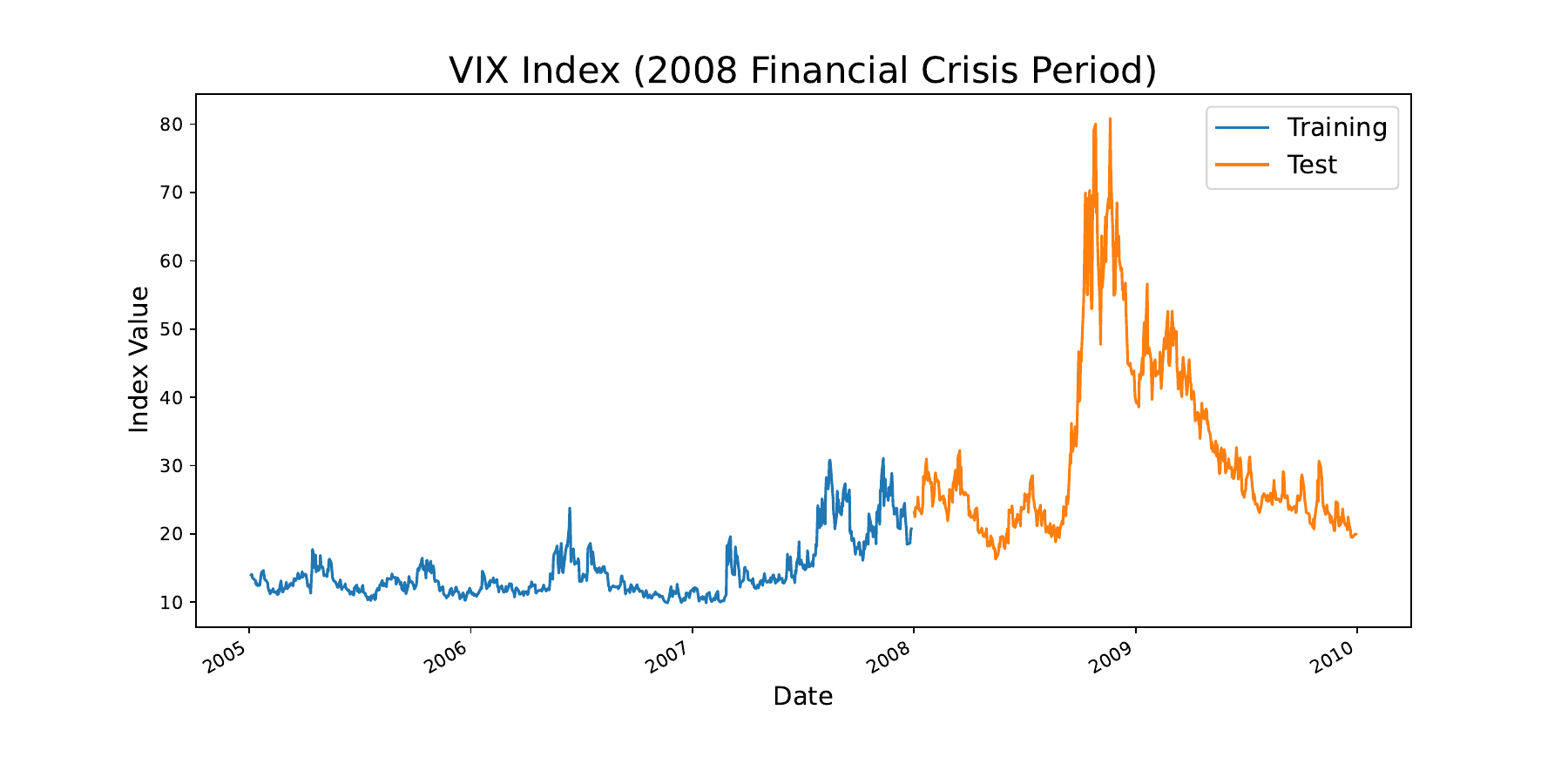}
        \caption{GFC}
    \end{subfigure}
    \hfill
    \begin{subfigure}[b]{0.48\textwidth}
        \includegraphics[width=\linewidth]{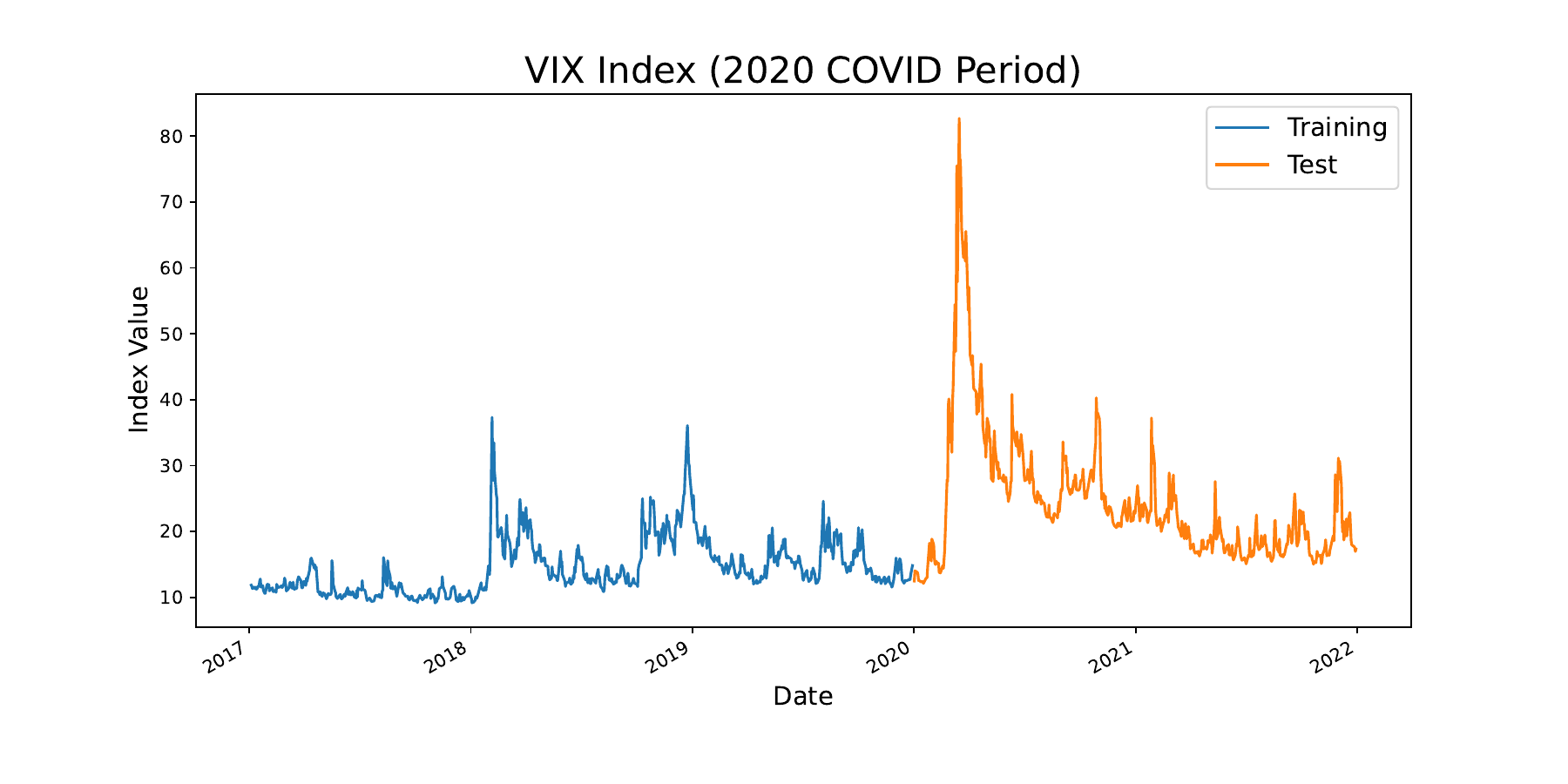}
        \caption{COVID}
    \end{subfigure}
    \caption{VIX level during the analyzed periods of market stress. VIX level in the training datasets (shown in blue) correspond to more stable market periods, while VIX levels in the testing dataset (shown in orange) correspond to a period of market stress.}
    \label{fig: vix_time_series}
\end{figure}

\subsubsection{Evaluation of Calibration via QQ plots (Unconditional Evaluation)}
To evaluate the unconditional generative performance (where we marginalize out the conditions) of the proposed conditional diffusion model, we use QQ plots to check for the calibration of the predicted quantiles versus the true quantiles from the empirical dataset. Figure \ref{fig: qq_plots_gfc_training_base} and \ref{fig: qq_plots_gfc_testing} show the QQ plots for each stock on the training and testing datasets for the GFC period, respectively.  Figure \ref{fig: qq_plots_covid_training_base} and \ref{fig: qq_plots_covid_testing} show the QQ plots for each stock on the training and testing datasets for the GFC period, respectively.  The results indicate that while the use of a Gaussian base distribution generally leads to better calibration in the training dataset and in the bulk of the distribution (10\%-90\% quantiles), the use of a Laplace distribution offers a significant advantage in the tail, specifically for the testing datasets, since the testing dataset considers VIX levels (conditions) much larger than what is seen in the training dataset. This showcases the advantages of considering alternative base distributions in the case of generative modeling for heavy-tailed targets.

\begin{figure}[htbp]
    \centering
    \begin{subfigure}[b]{0.24\textwidth}
        \includegraphics[width=\textwidth, trim=0cm 13cm 28cm 0cm, clip]{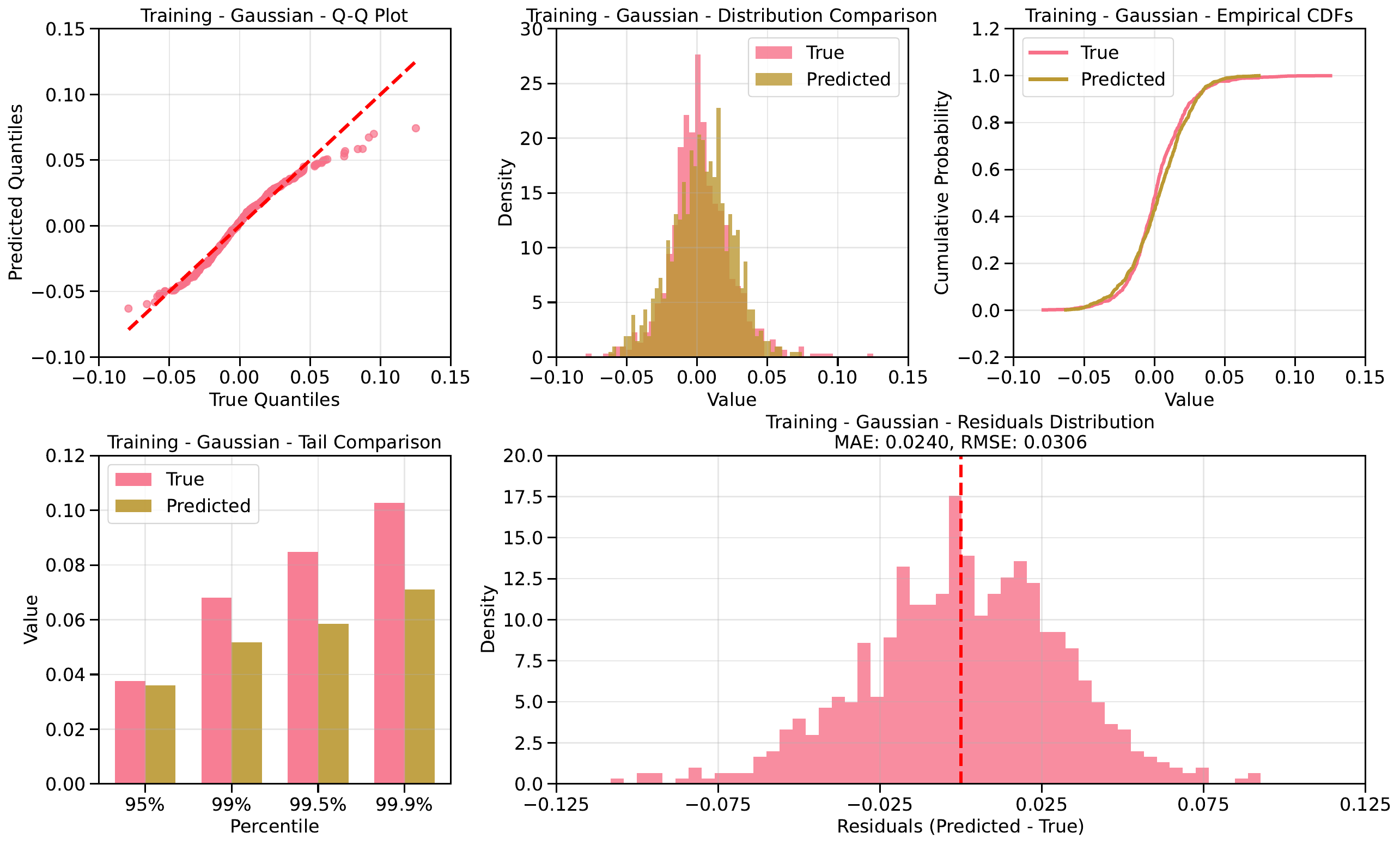}
        \caption{AAPL - Gaussian}
    \end{subfigure}
    \hfill
    \begin{subfigure}[b]{0.24\textwidth}
        \includegraphics[width=\textwidth, trim=0cm 13cm 28cm 0cm, clip]{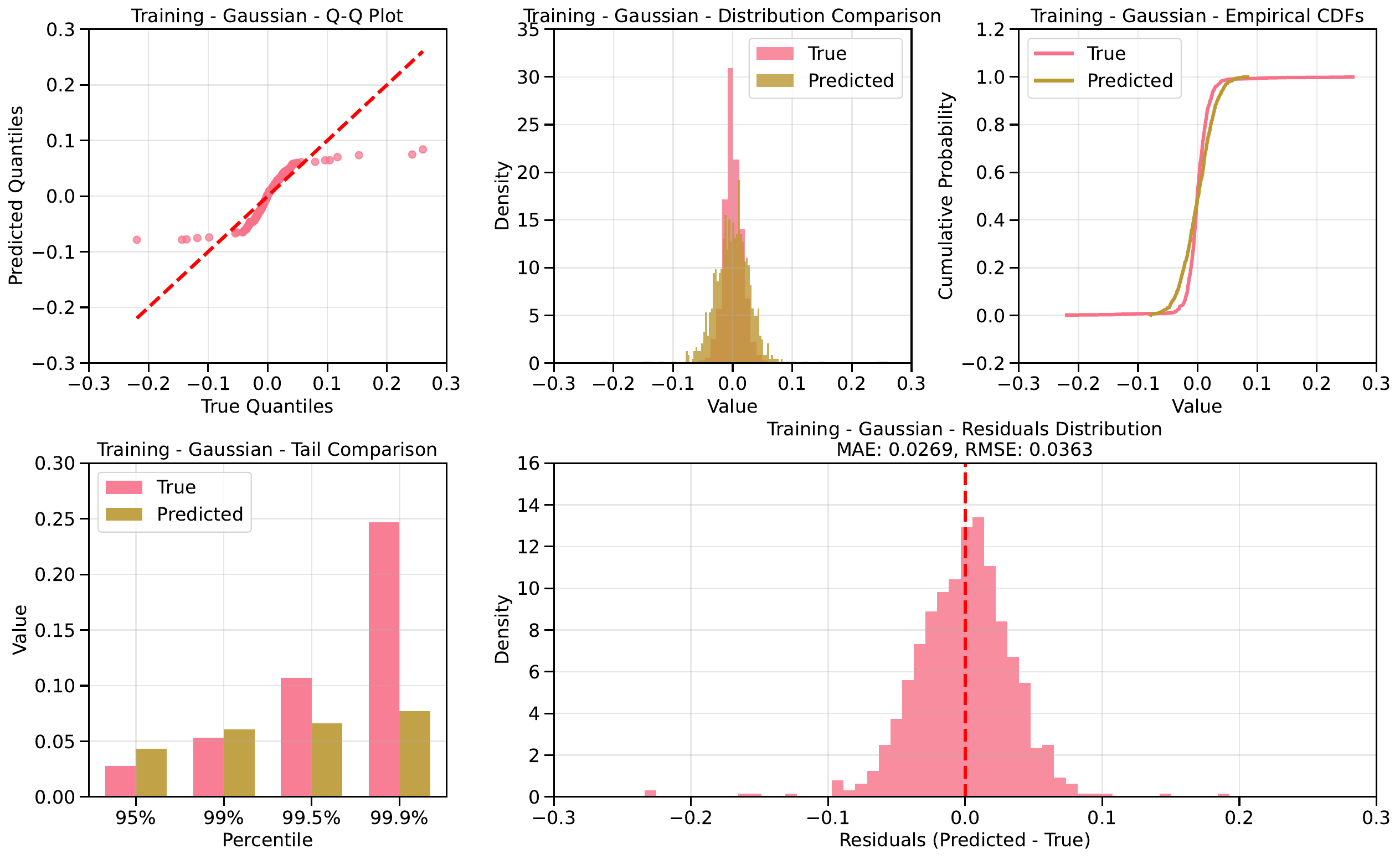}
        \caption{AMZN - Gaussian}
    \end{subfigure}
    \hfill
    \begin{subfigure}[b]{0.24\textwidth}
        \includegraphics[width=\textwidth, trim=0cm 13cm 28cm 0cm, clip]{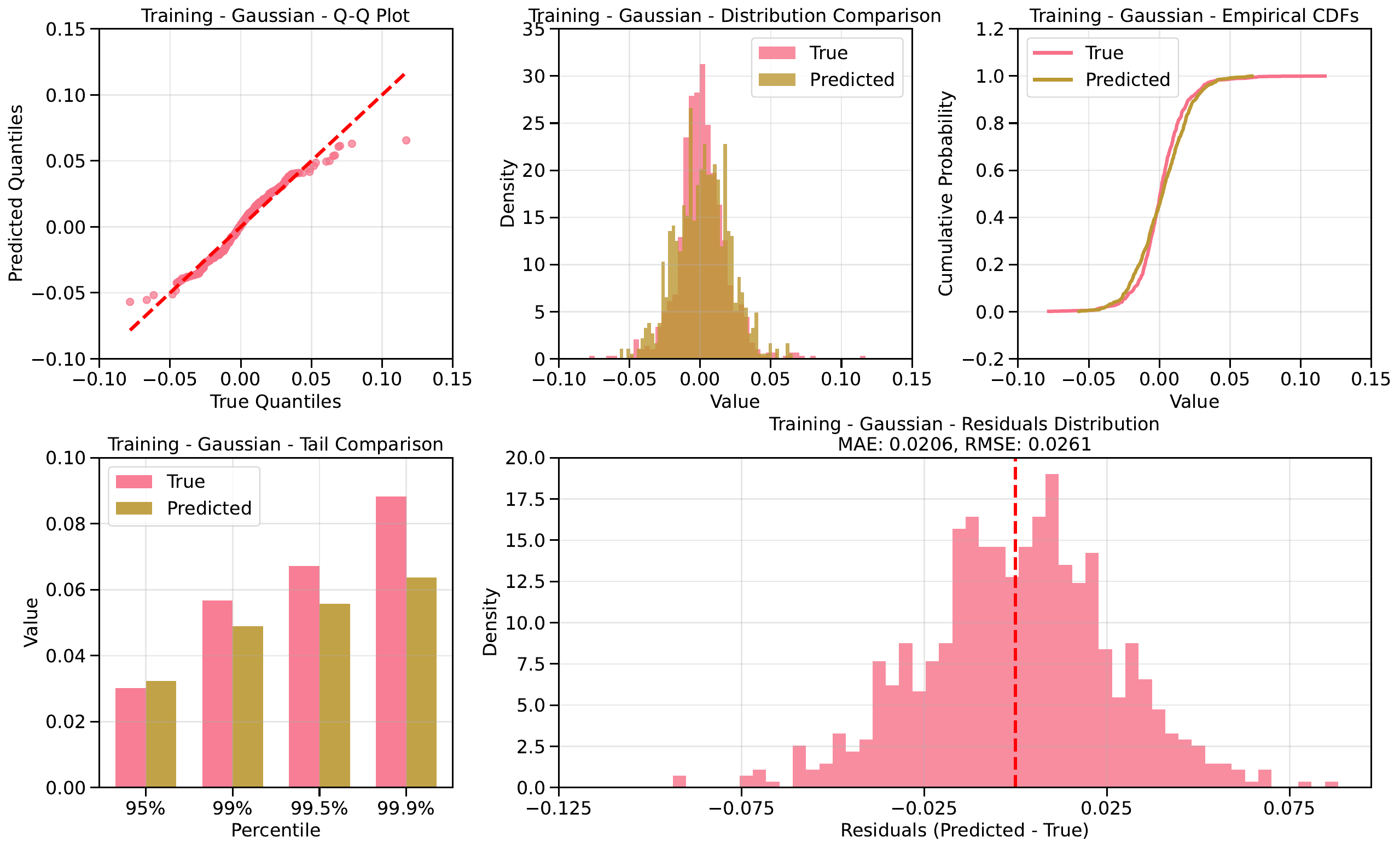}
        \caption{GOOGL - Gaussian.}
    \end{subfigure}
    \hfill
    \begin{subfigure}[b]{0.24\textwidth}
        \includegraphics[width=\textwidth, trim=0cm 13cm 28cm 0cm, clip]{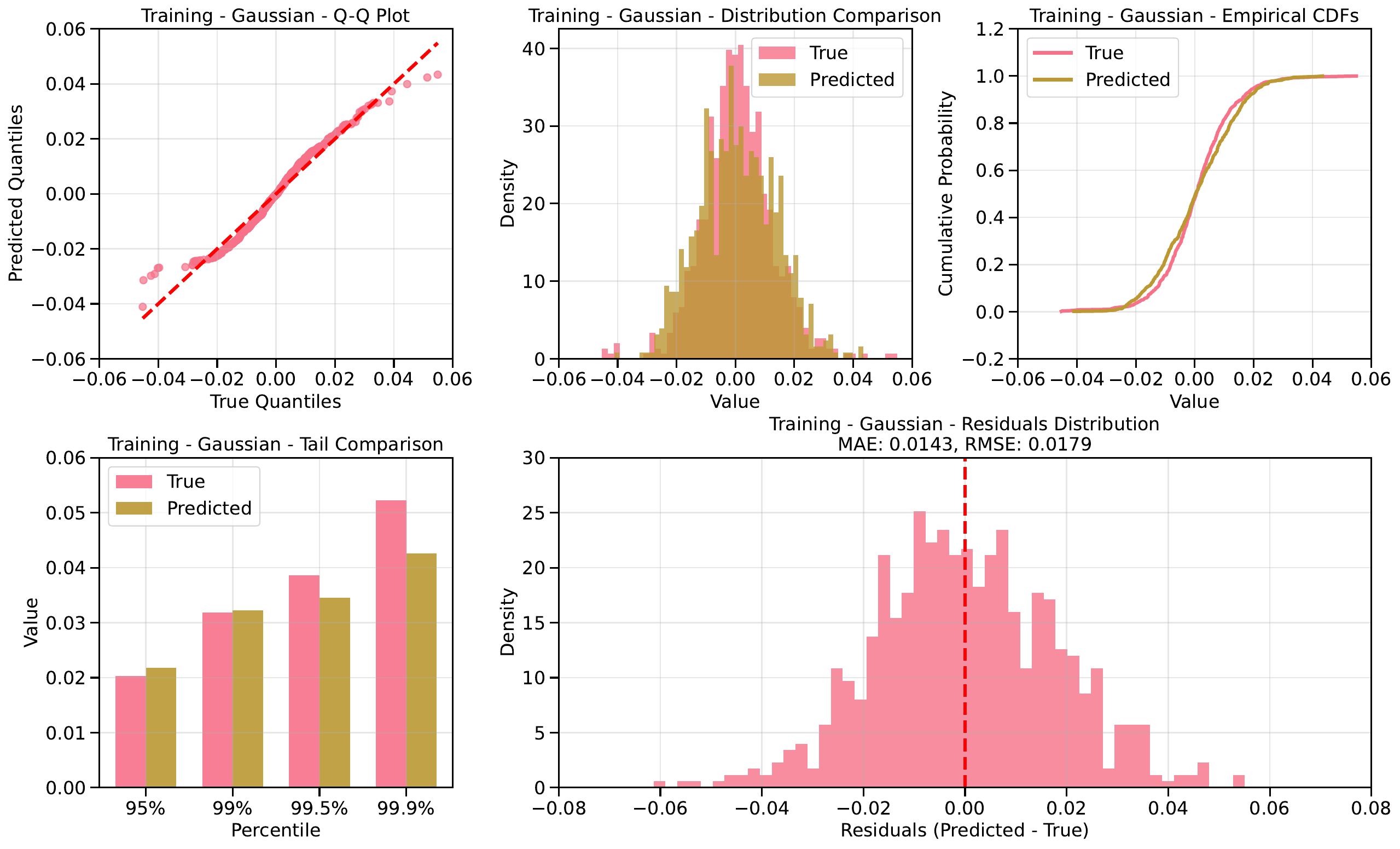}
        \caption{GS - Gaussian}
    \end{subfigure}
    \vspace{0.5em}
    \begin{subfigure}[b]{0.24\textwidth}
        \includegraphics[width=\textwidth, trim=0cm 13cm 28cm 0cm, clip]{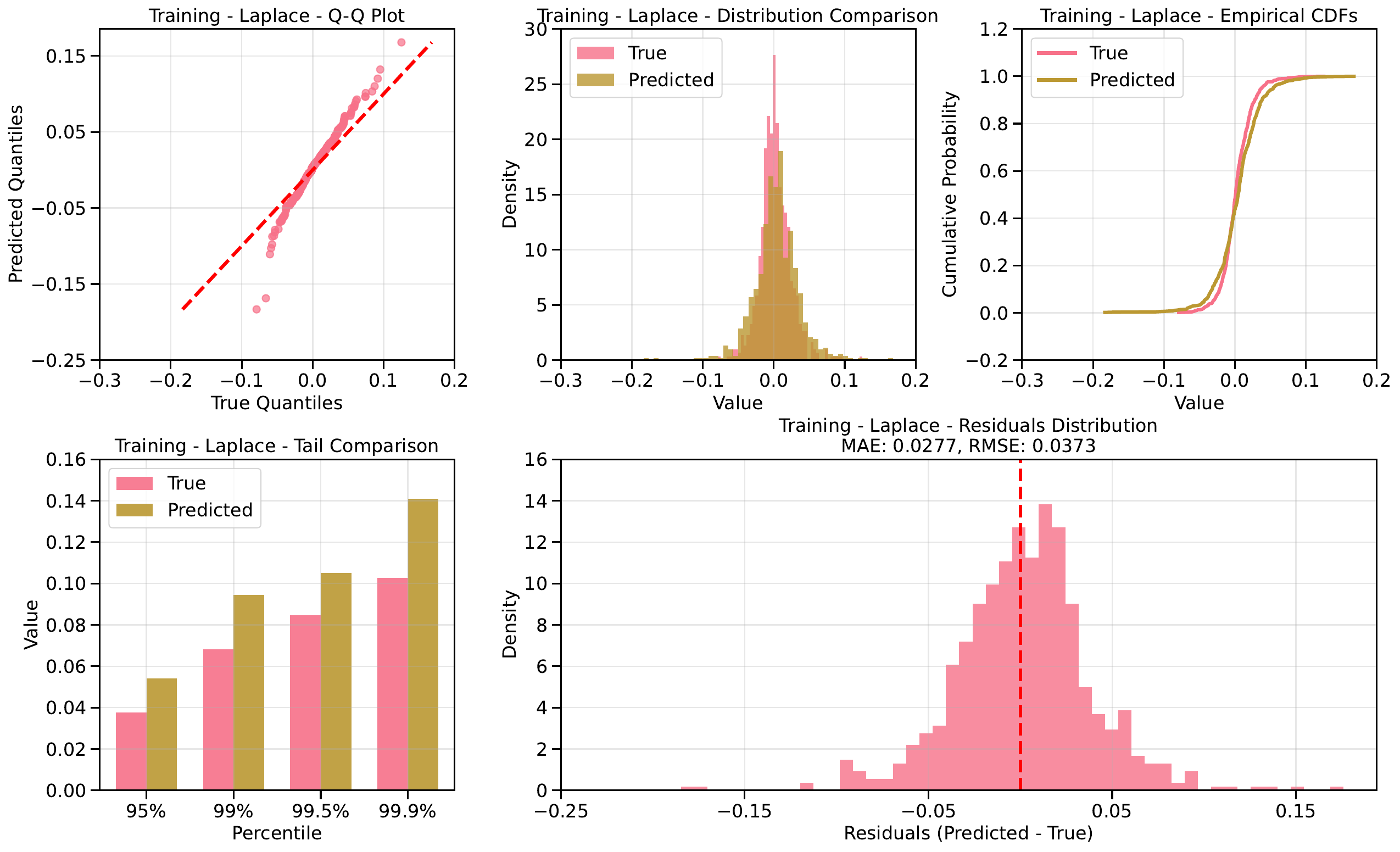}
        \caption{AAPL - Laplace}
    \end{subfigure}
    \hfill
    \begin{subfigure}[b]{0.24\textwidth}
        \includegraphics[width=\textwidth, trim=0cm 13cm 28cm 0cm, clip]{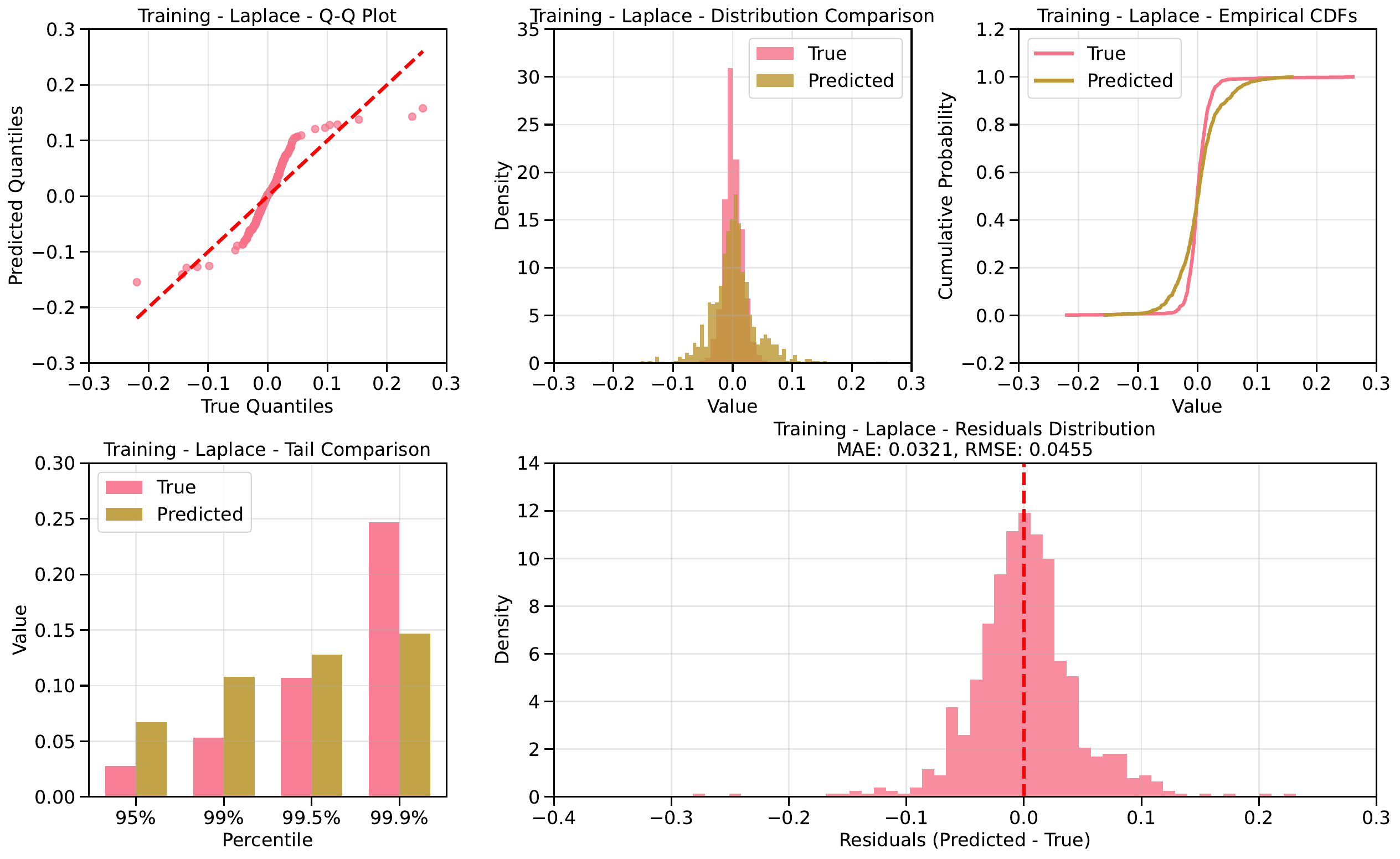}
        \caption{AMZN - Laplace}
    \end{subfigure}
    \hfill
    \begin{subfigure}[b]{0.24\textwidth}
        \includegraphics[width=\textwidth, trim=0cm 13cm 28cm 0cm, clip]{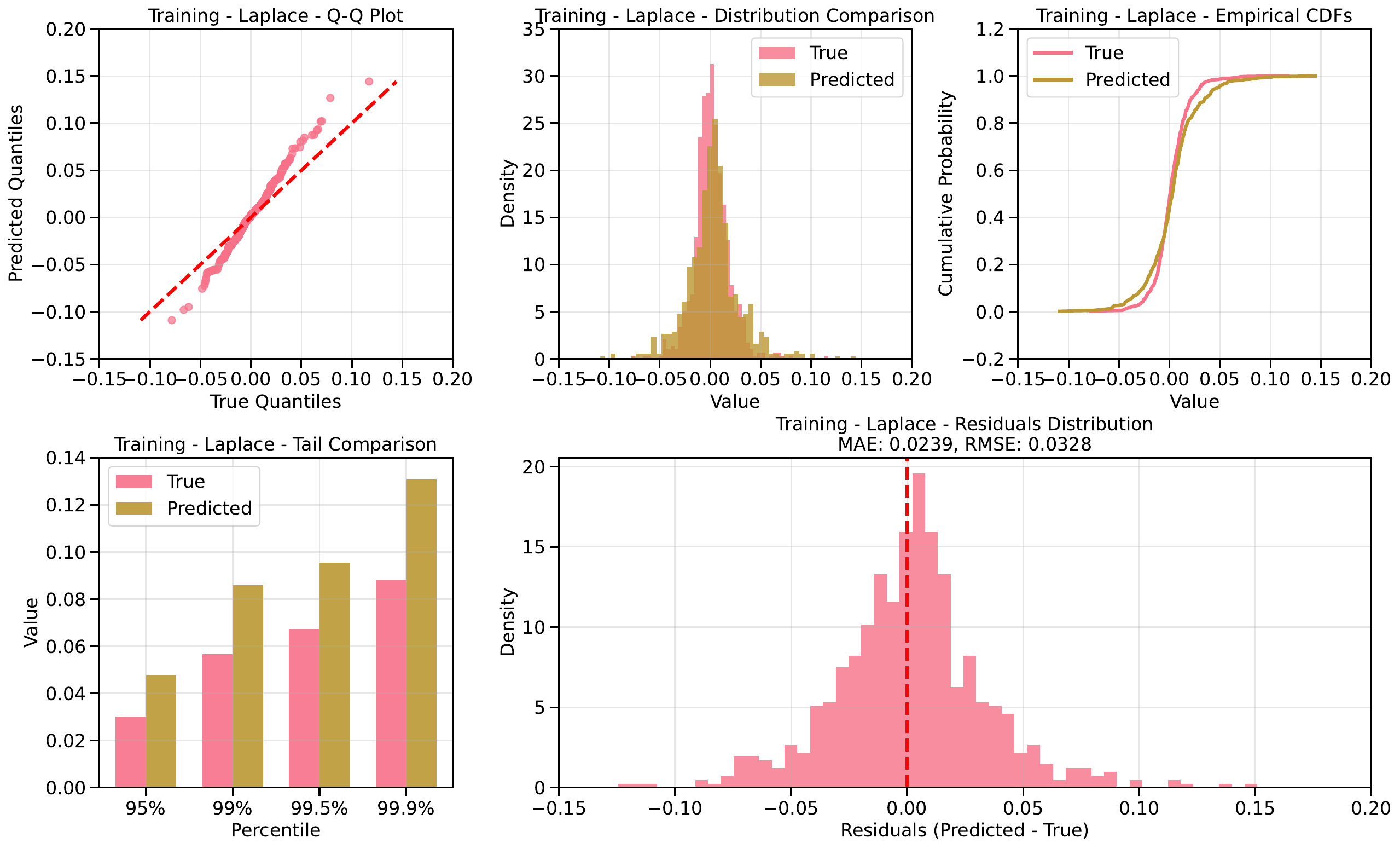}
        \caption{GOOGL - Laplace.}
    \end{subfigure}
    \hfill
    \begin{subfigure}[b]{0.24\textwidth}
        \includegraphics[width=\textwidth, trim=0cm 13cm 28cm 0cm, clip]{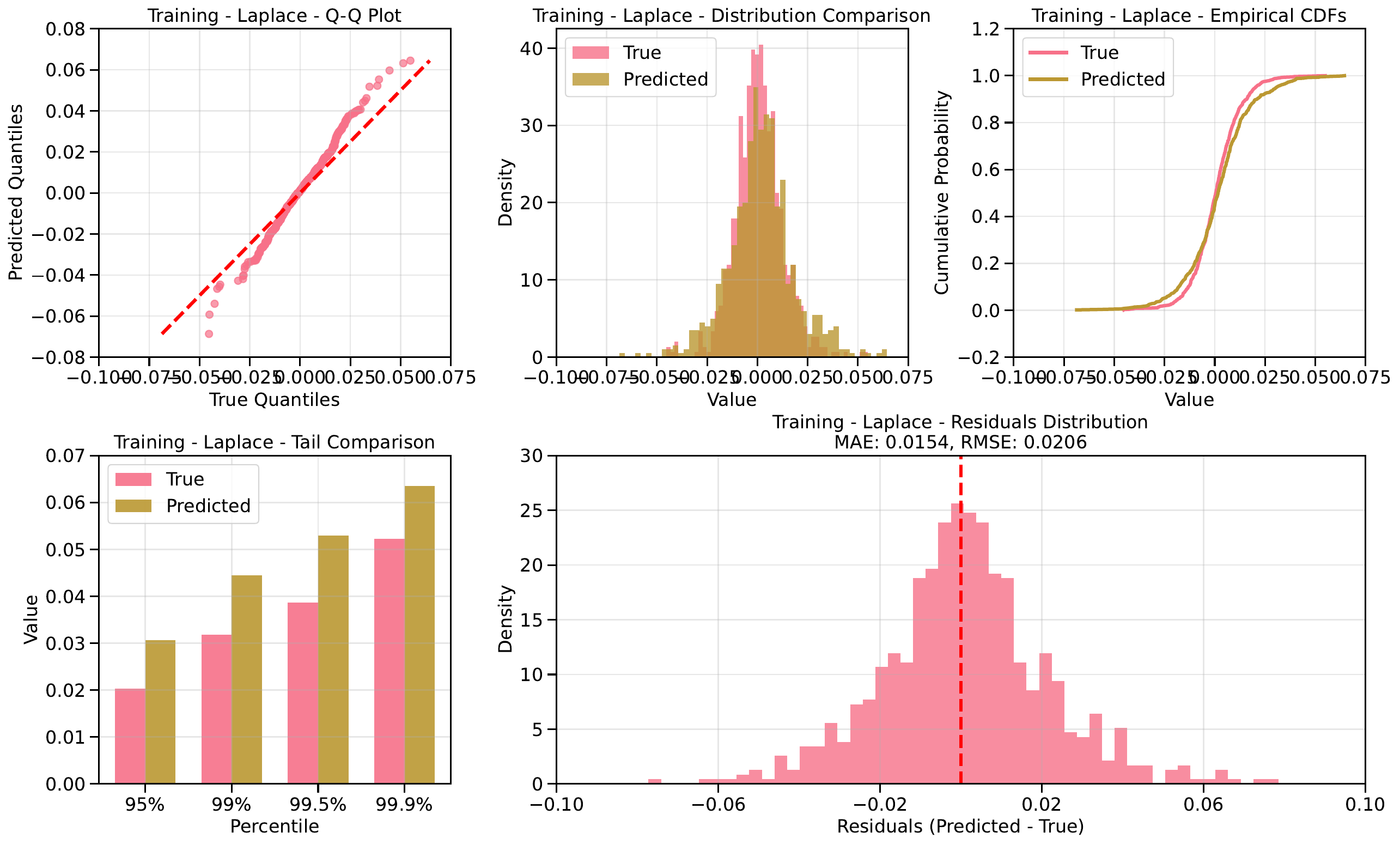}
        \caption{GS - Laplace}
    \end{subfigure}
    \vspace{0.5em}
    
    \begin{subfigure}[b]{0.24\textwidth}
        \includegraphics[width=\textwidth, trim=0cm 13cm 28cm 0cm, clip]{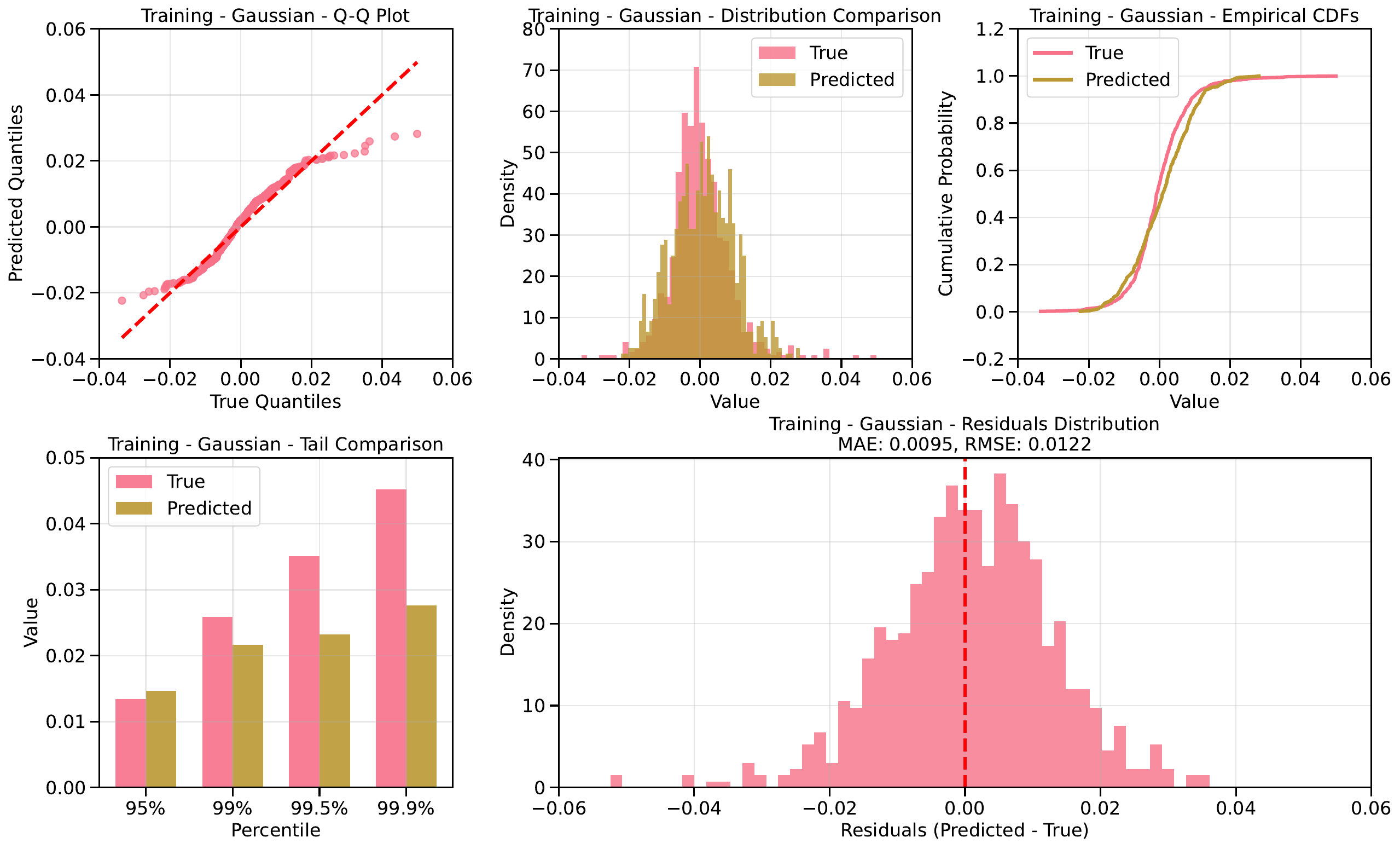}
        \caption{JPM - Gaussian}
    \end{subfigure}
    \hfill
    \begin{subfigure}[b]{0.24\textwidth}
        \includegraphics[width=\textwidth, trim=0cm 13cm 28cm 0cm, clip]{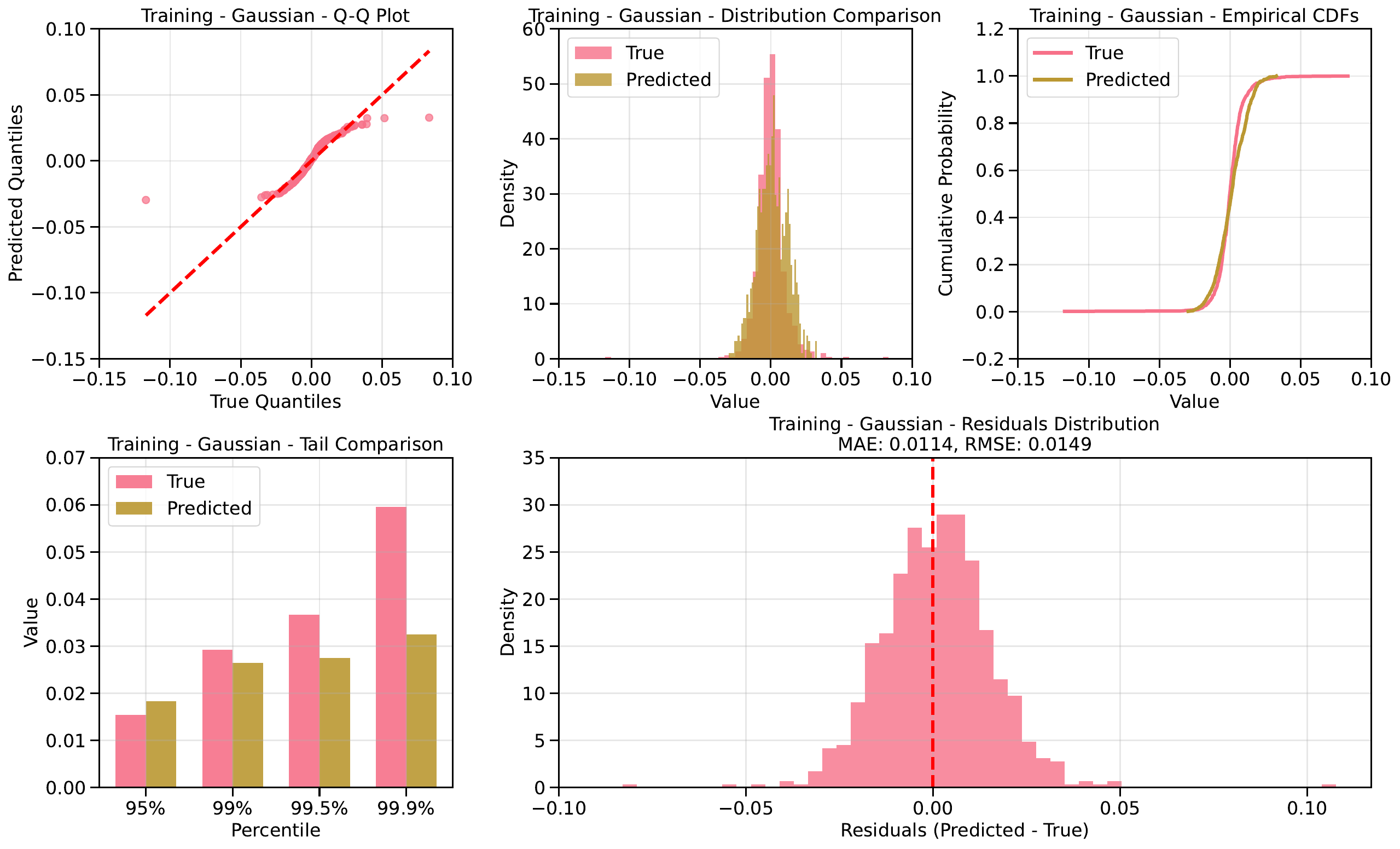}
        \caption{MSFT - Gaussian}
    \end{subfigure}
    \hfill
    \begin{subfigure}[b]{0.24\textwidth}
        \includegraphics[width=\textwidth, trim=0cm 13cm 28cm 0cm, clip]{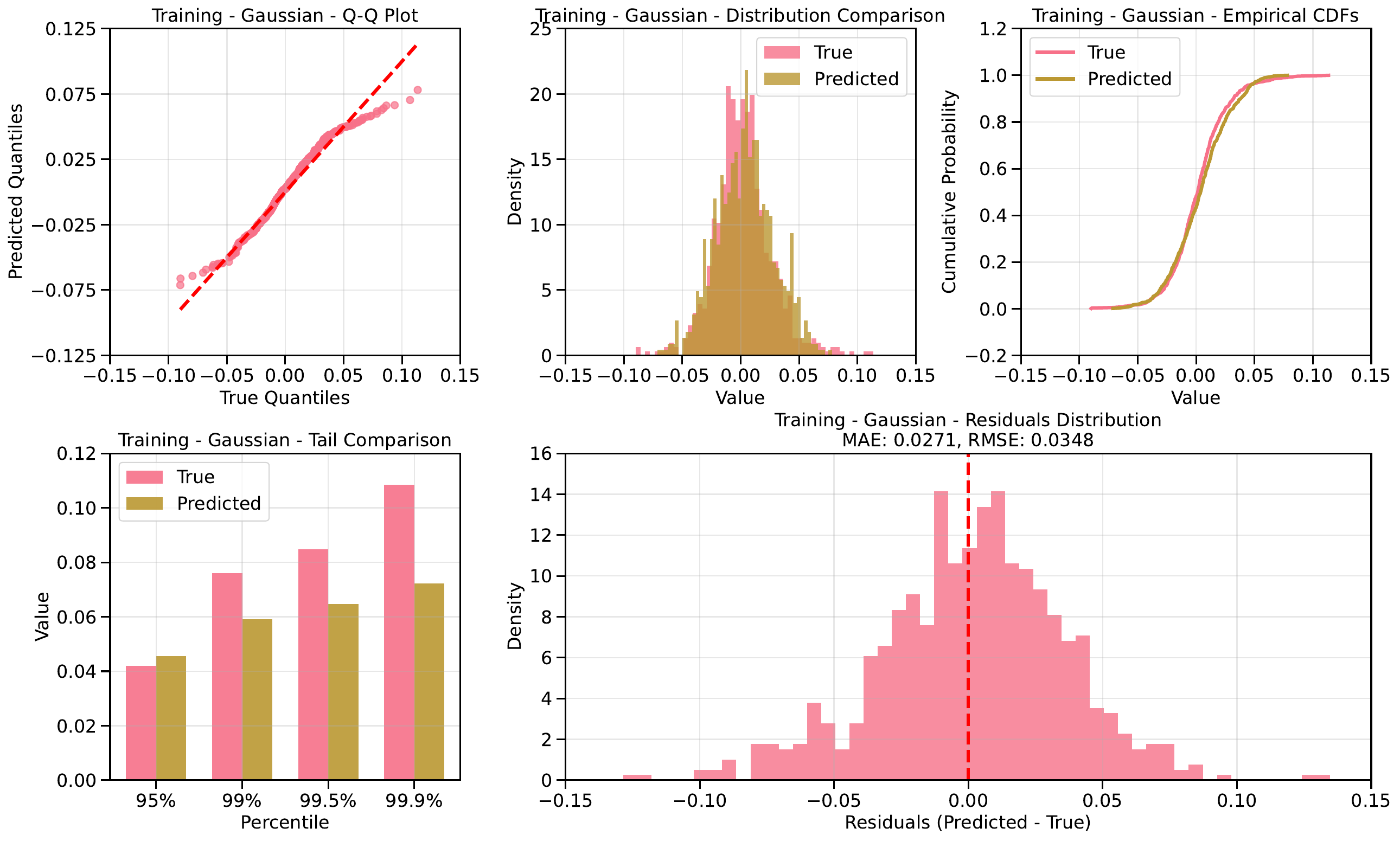}
        \caption{NVDA - Gaussian.}
    \end{subfigure}
    \hfill
    \begin{subfigure}[b]{0.24\textwidth}
        \includegraphics[width=\textwidth, trim=0cm 13cm 28cm 0cm, clip]{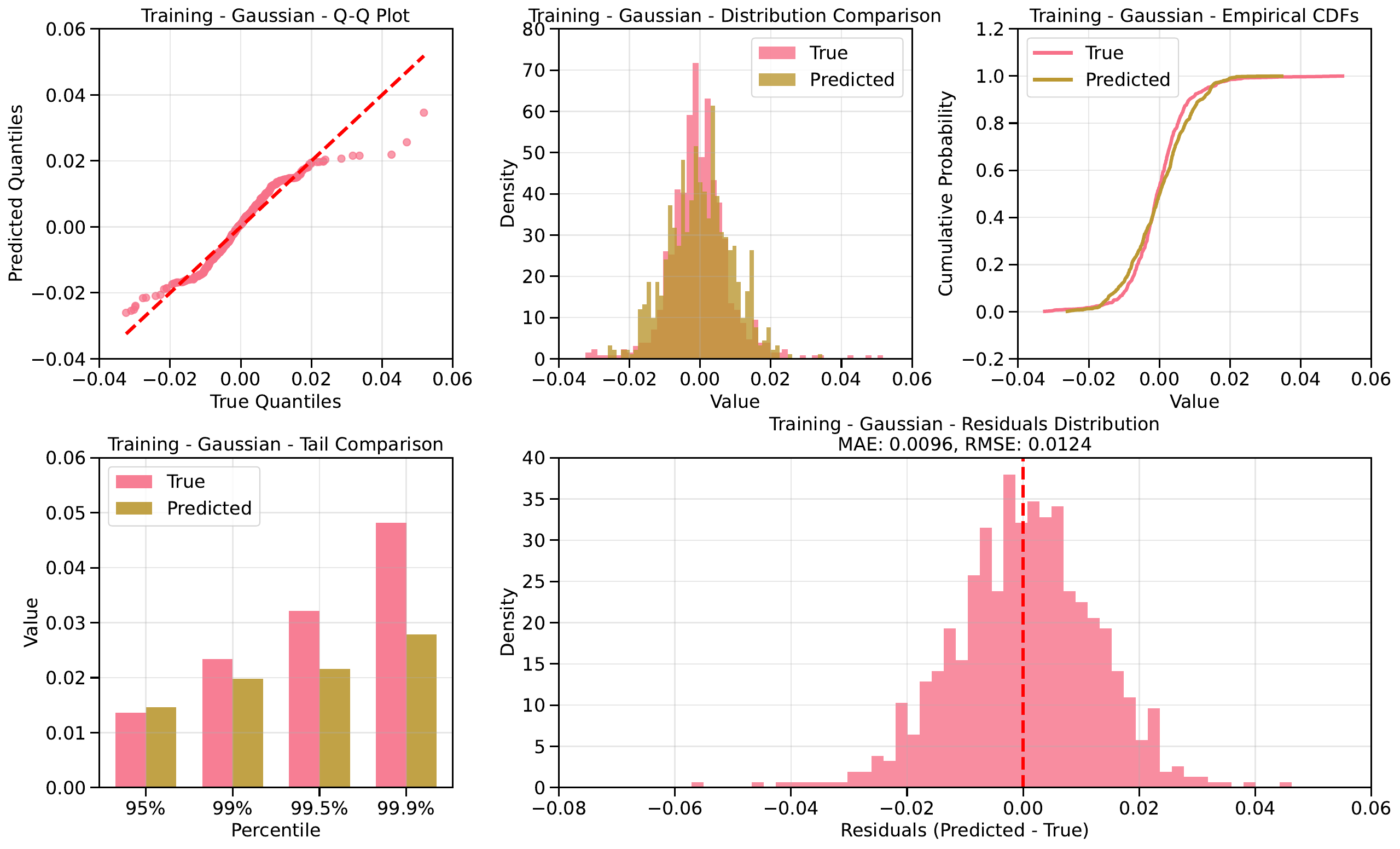}
        \caption{WFC - Gaussian}
    \end{subfigure}
    
    \vspace{0.5em}
    
    \begin{subfigure}[b]{0.24\textwidth}
        \includegraphics[width=\textwidth, trim=0cm 13cm 28cm 0cm, clip]{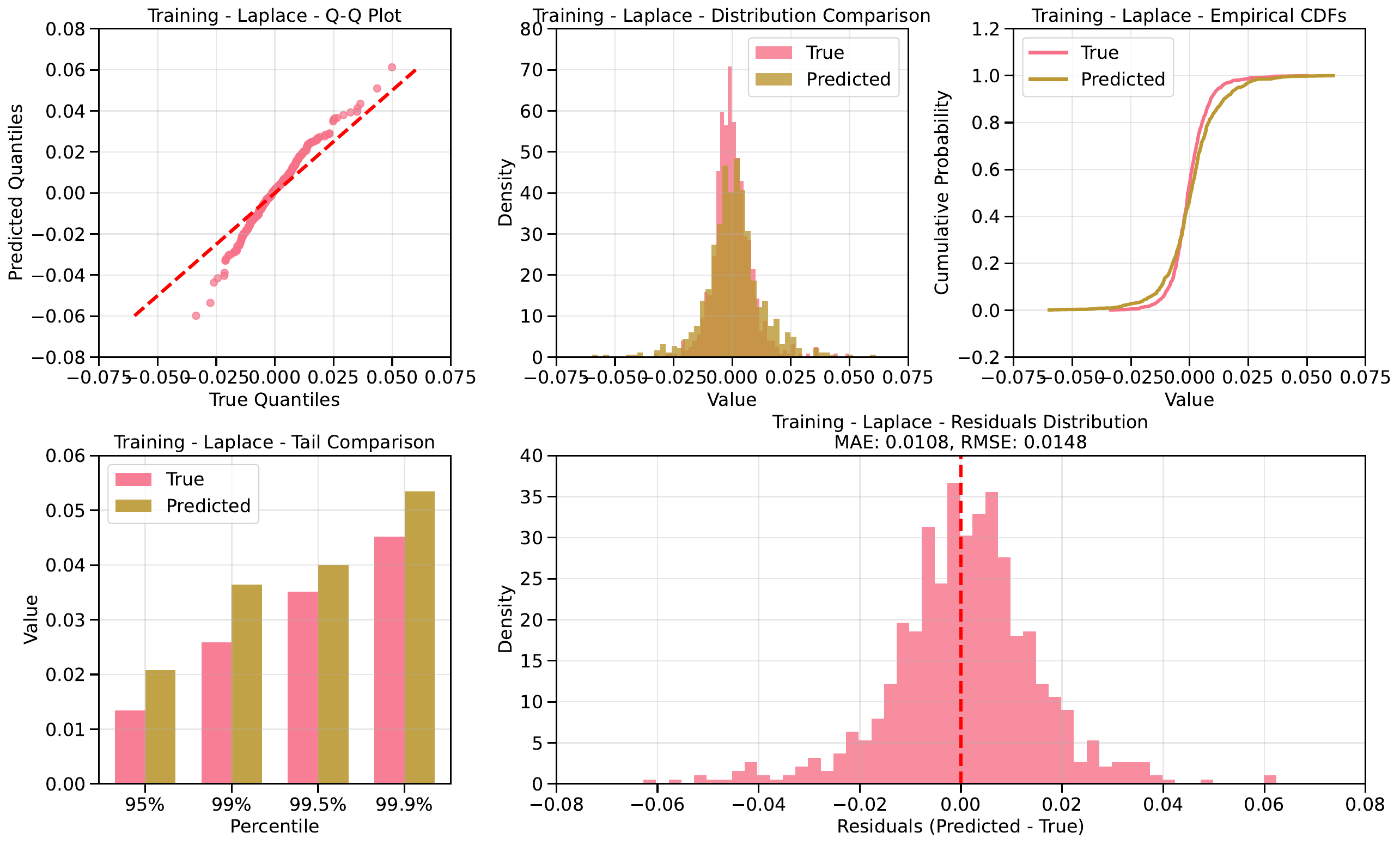}
        \caption{JPM - Laplace}
    \end{subfigure}
    \hfill
    \begin{subfigure}[b]{0.24\textwidth}
        \includegraphics[width=\textwidth, trim=0cm 13cm 28cm 0cm, clip]{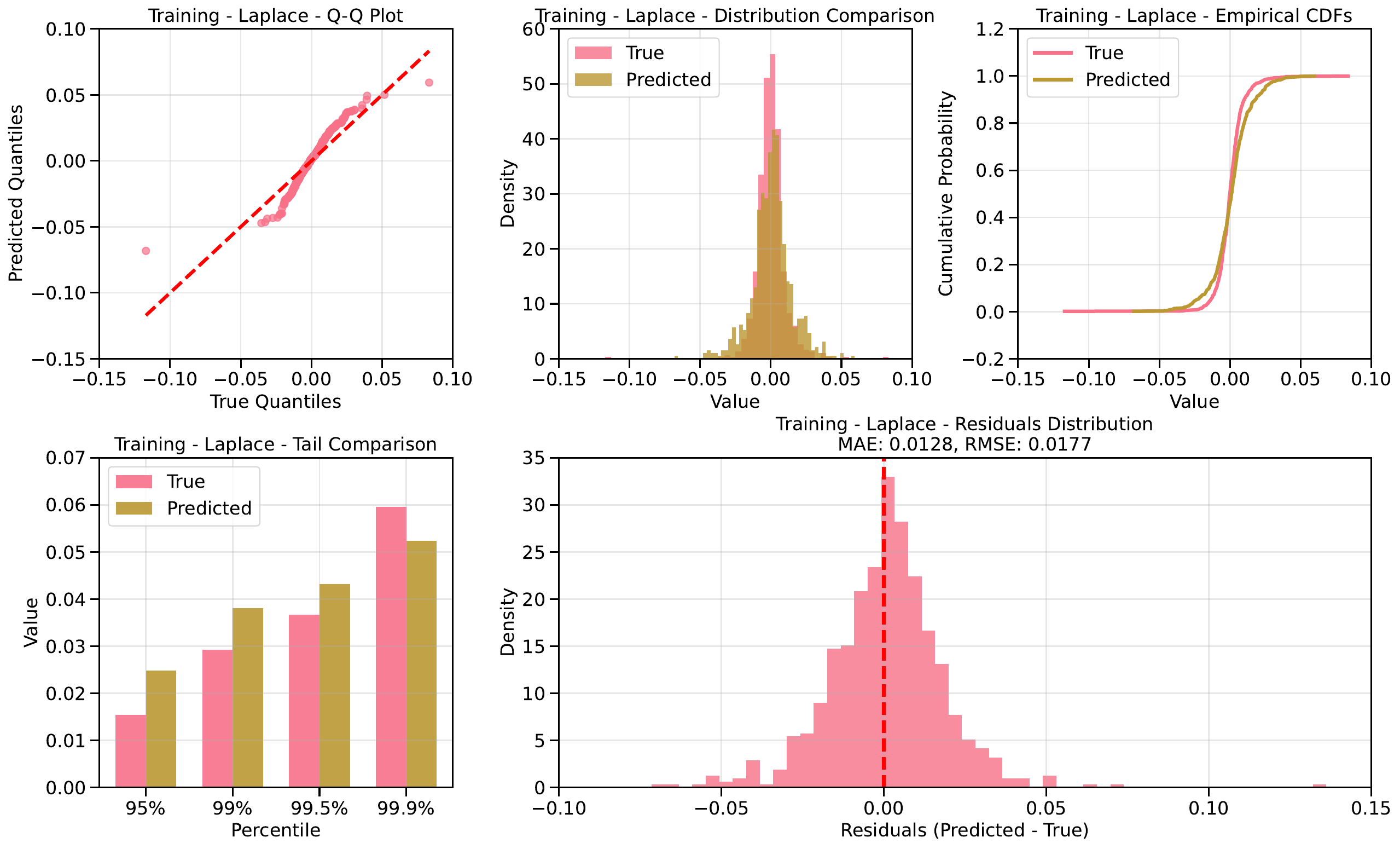}
        \caption{MSFT - Laplace}
    \end{subfigure}
    \hfill
    \begin{subfigure}[b]{0.24\textwidth}
        \includegraphics[width=\textwidth, trim=0cm 13cm 28cm 0cm, clip]{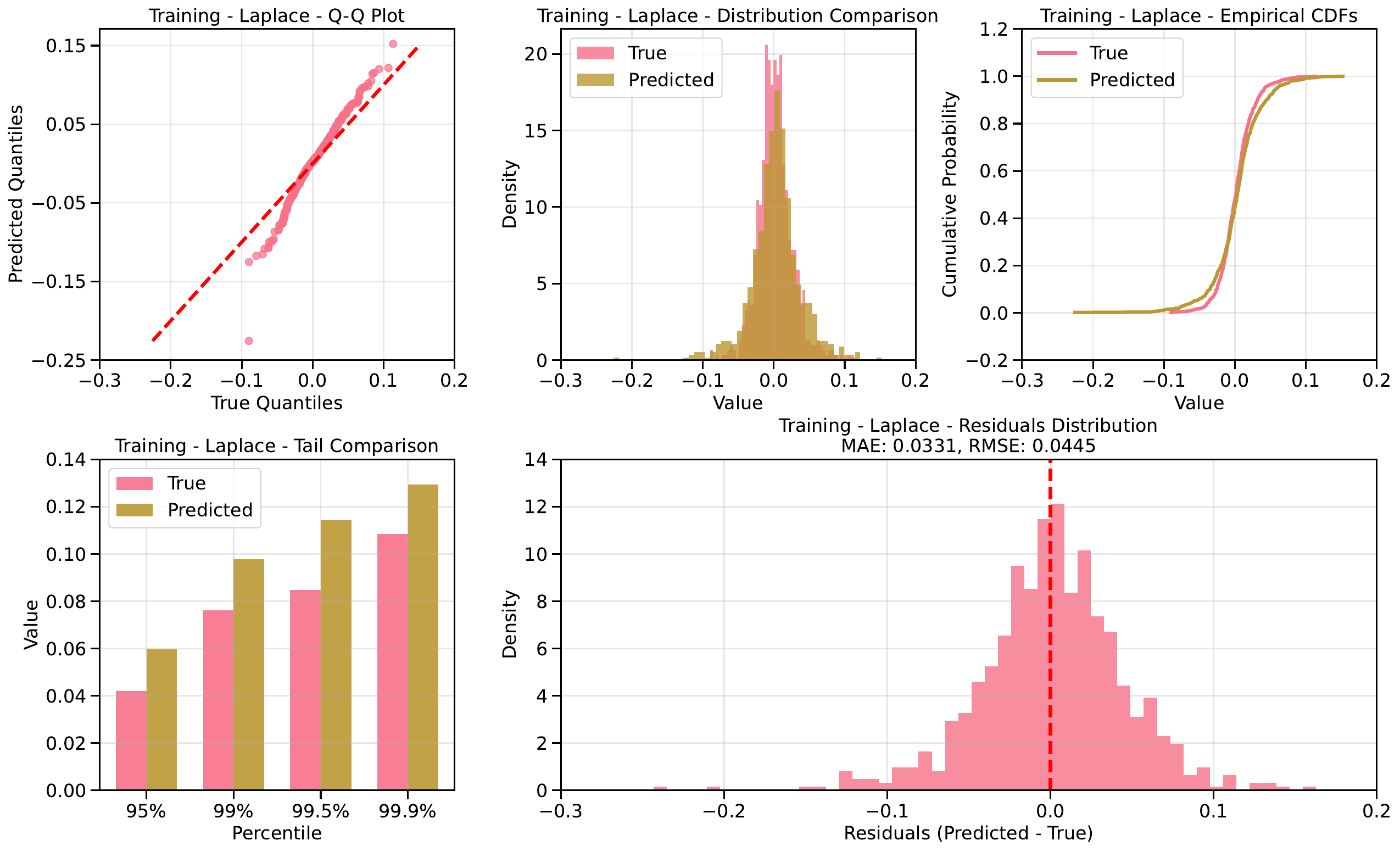}
        \caption{NVDA - Laplace.}
    \end{subfigure}
    \hfill
    \begin{subfigure}[b]{0.24\textwidth}
        \includegraphics[width=\textwidth, trim=0cm 13cm 28cm 0cm, clip]{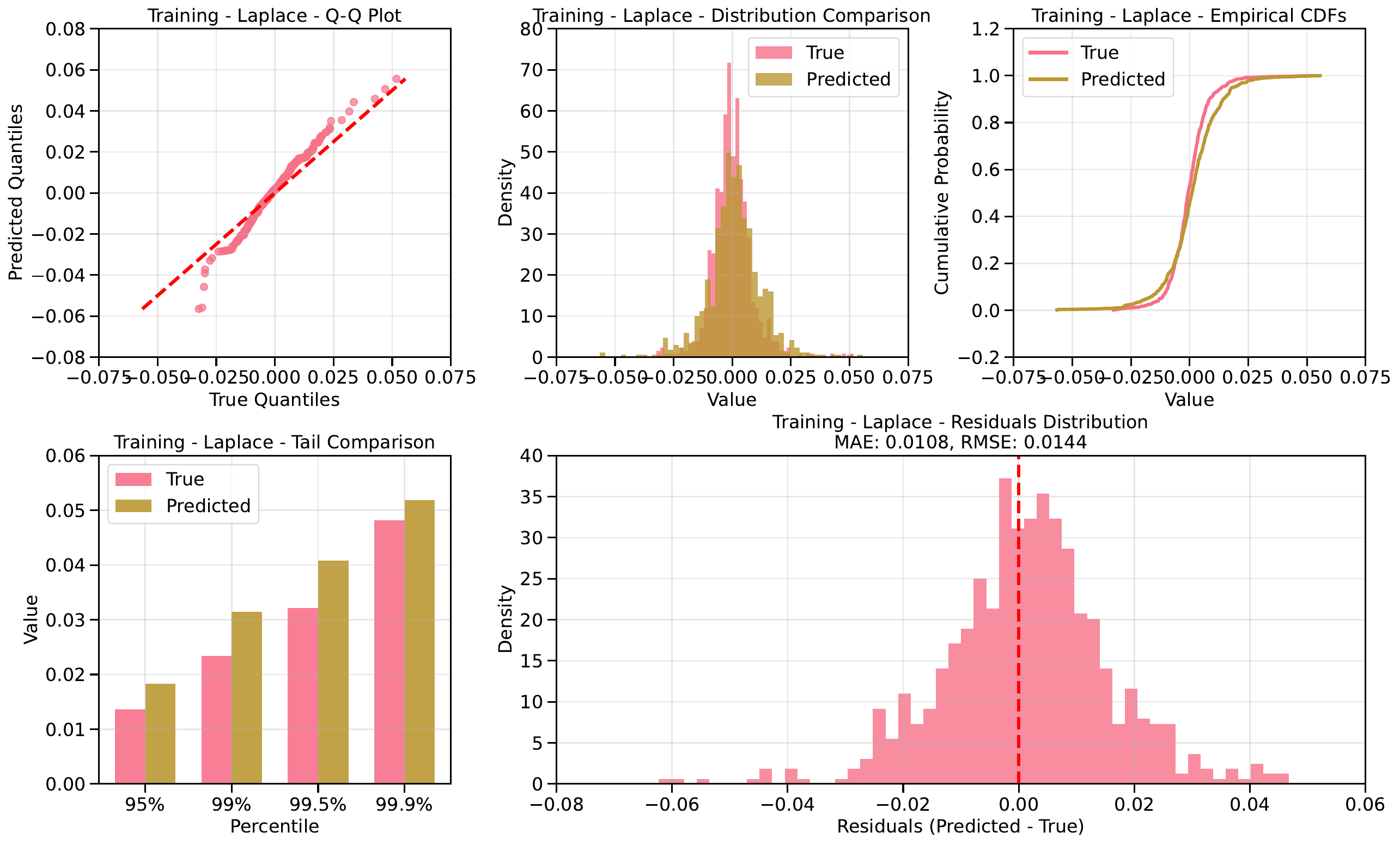}
        \caption{WFC - Laplace}
    \end{subfigure}
    \caption{QQ plots on training datasets for the GFC period across all stocks. When comparing the use of a Gaussian base distribution to a Laplace base distribution, we observe that the Gaussian model exhibits superior calibration, particularly in the central mass of the distribution. We hypothesize that this improved fit in the bulk region is attributable to return distributions more closely approximating Gaussian behavior during this period, which coincides with generally lower VIX (volatility index) levels. Another notable observation is that the Laplace base distribution tends to produce overdispersion in the tails, while the Gaussian base leads to underdispersion. This pattern aligns with theoretical expectations, as the Laplace distribution inherently has heavier tails than the Gaussian distribution, making it prone to overestimating tail probabilities when the true data-generating process is closer to Gaussian in nature. }
    \label{fig: qq_plots_gfc_training_base}
\end{figure}

\begin{figure}[htbp]
    \centering
    \begin{subfigure}[b]{0.24\textwidth}
        \includegraphics[width=\textwidth, trim=0cm 13cm 28cm 0cm, clip]{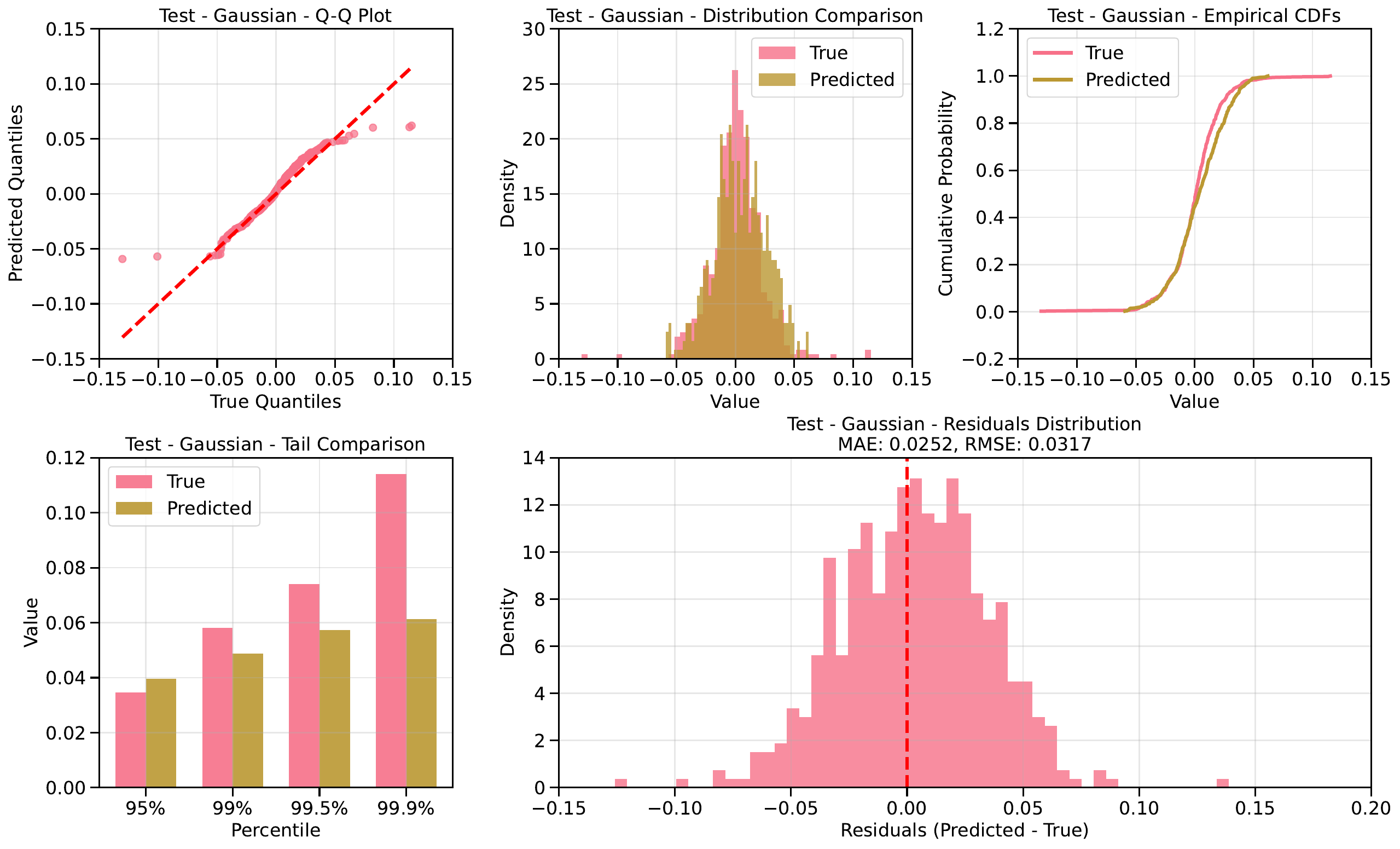}
        \caption{AAPL - Gaussian}
    \end{subfigure}
    \hfill
    \begin{subfigure}[b]{0.24\textwidth}
        \includegraphics[width=\textwidth, trim=0cm 13cm 28cm 0cm, clip]{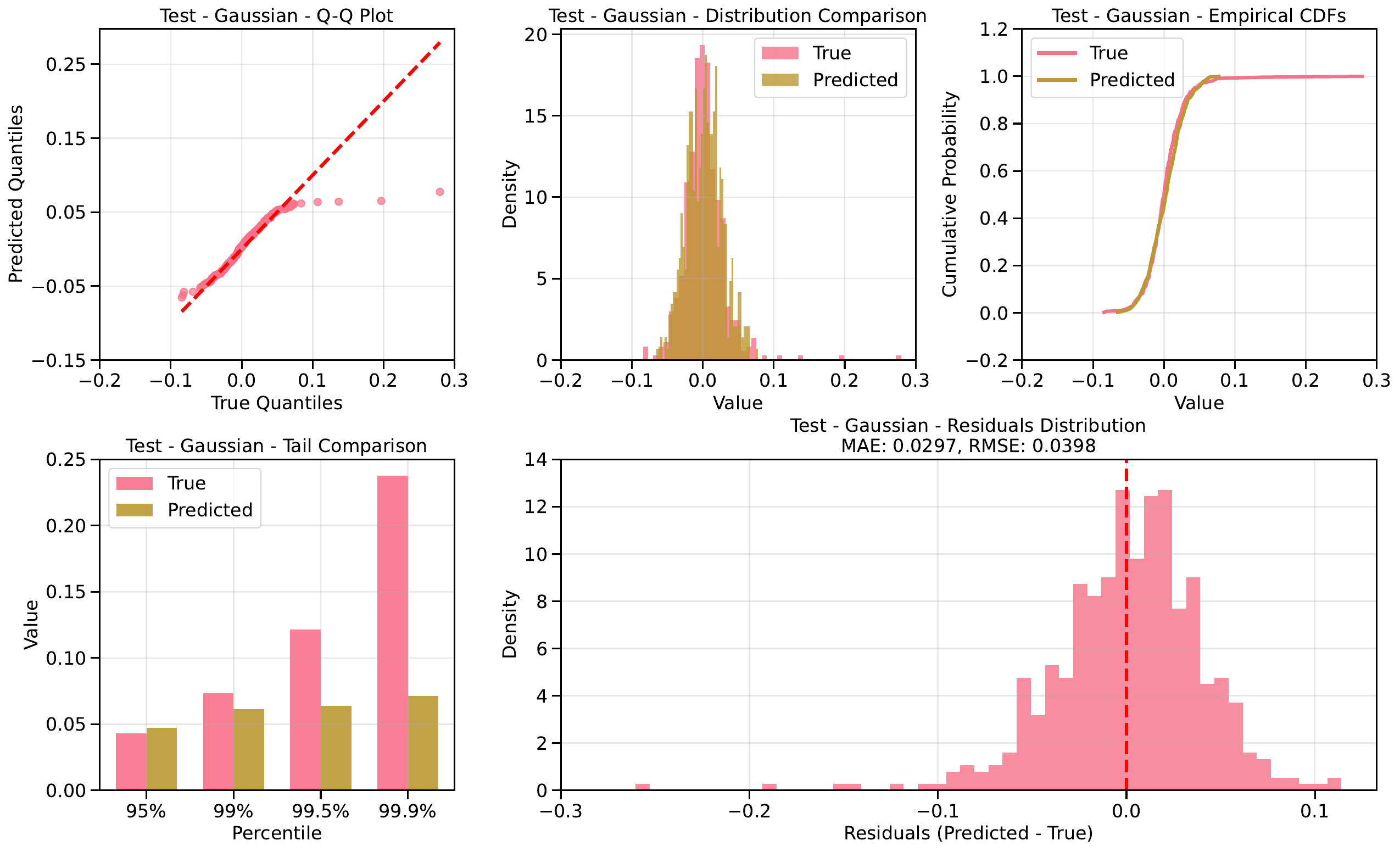}
        \caption{AMZN - Gaussian}
    \end{subfigure}
    \hfill
    \begin{subfigure}[b]{0.24\textwidth}
        \includegraphics[width=\textwidth, trim=0cm 13cm 28cm 0cm, clip]{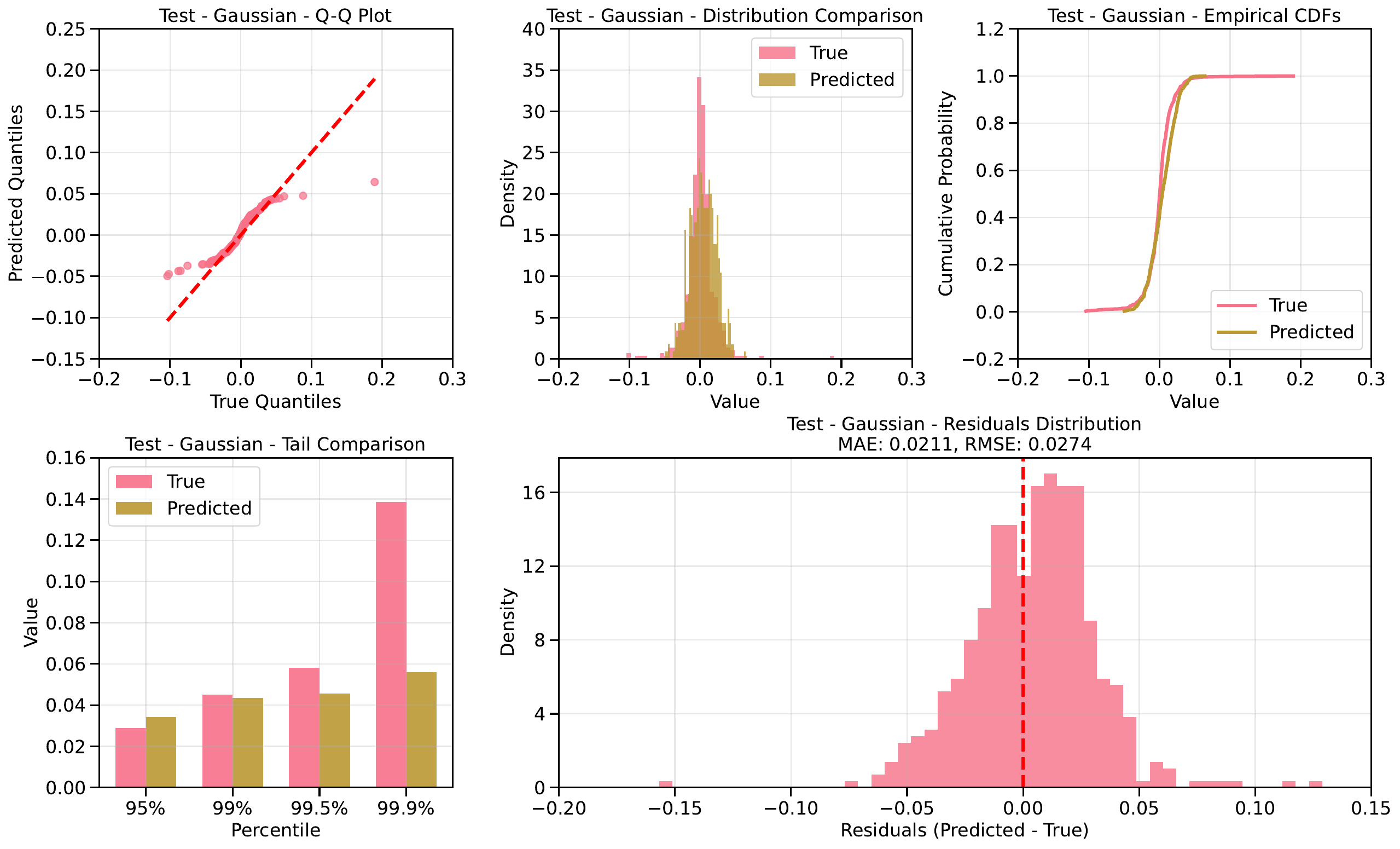}
        \caption{GOOGL - Gaussian.}
    \end{subfigure}
    \hfill
    \begin{subfigure}[b]{0.24\textwidth}
        \includegraphics[width=\textwidth, trim=0cm 13cm 28cm 0cm, clip]{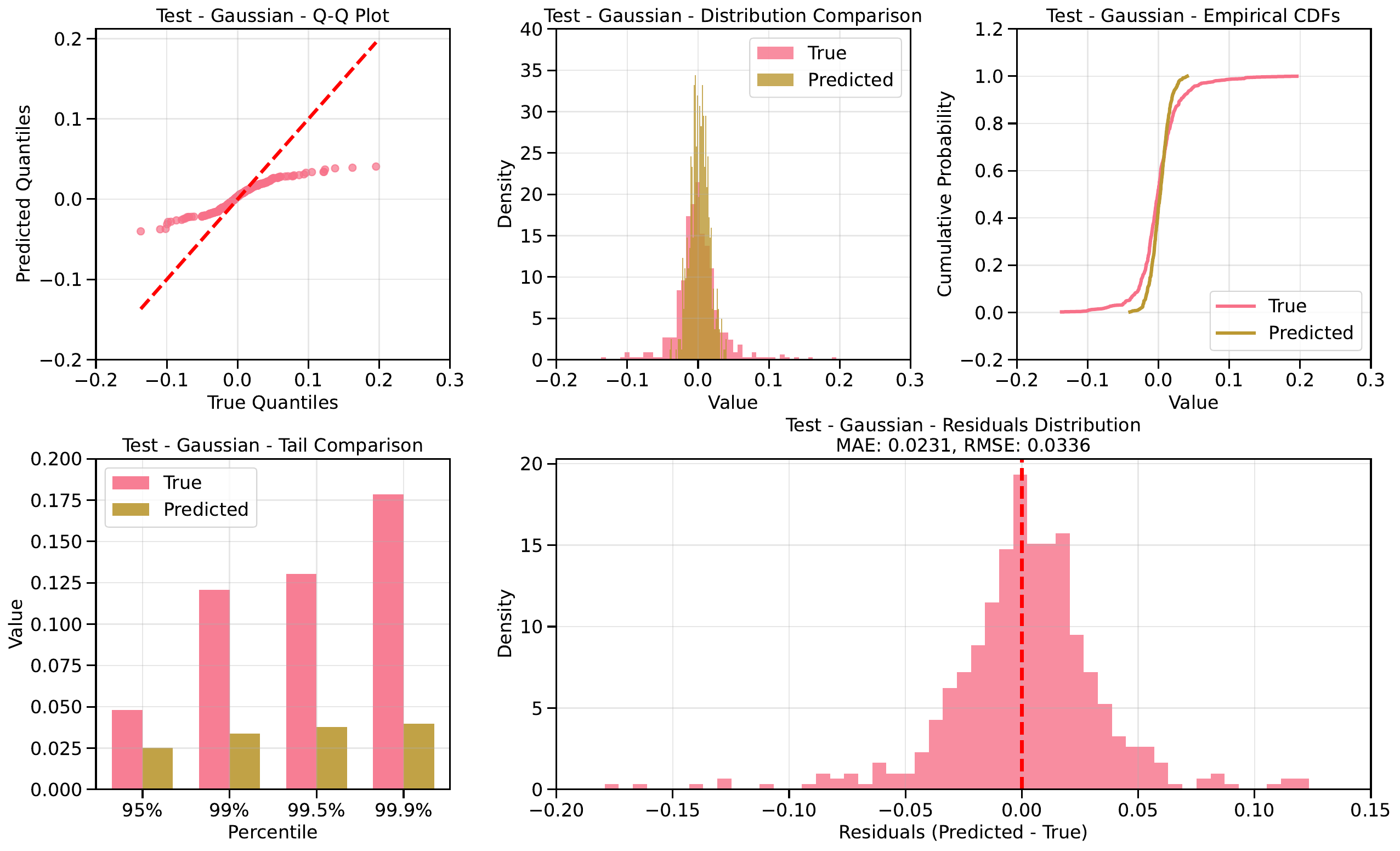}
        \caption{GS - Gaussian}
    \end{subfigure}
    \vspace{0.5em}
    \begin{subfigure}[b]{0.24\textwidth}
        \includegraphics[width=\textwidth, trim=0cm 13cm 28cm 0cm, clip]{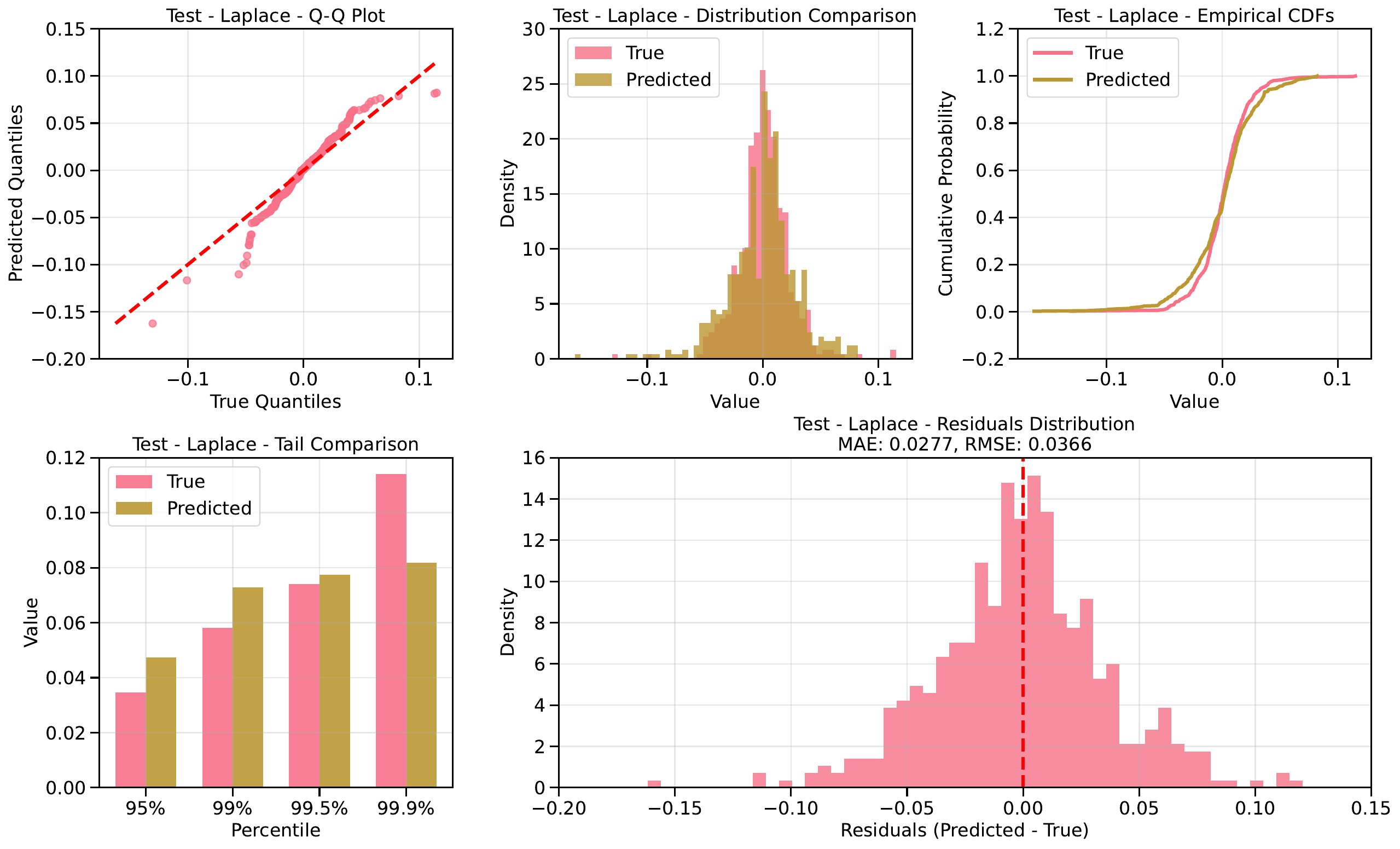}
        \caption{AAPL - Laplace}
    \end{subfigure}
    \hfill
    \begin{subfigure}[b]{0.24\textwidth}
        \includegraphics[width=\textwidth, trim=0cm 13cm 28cm 0cm, clip]{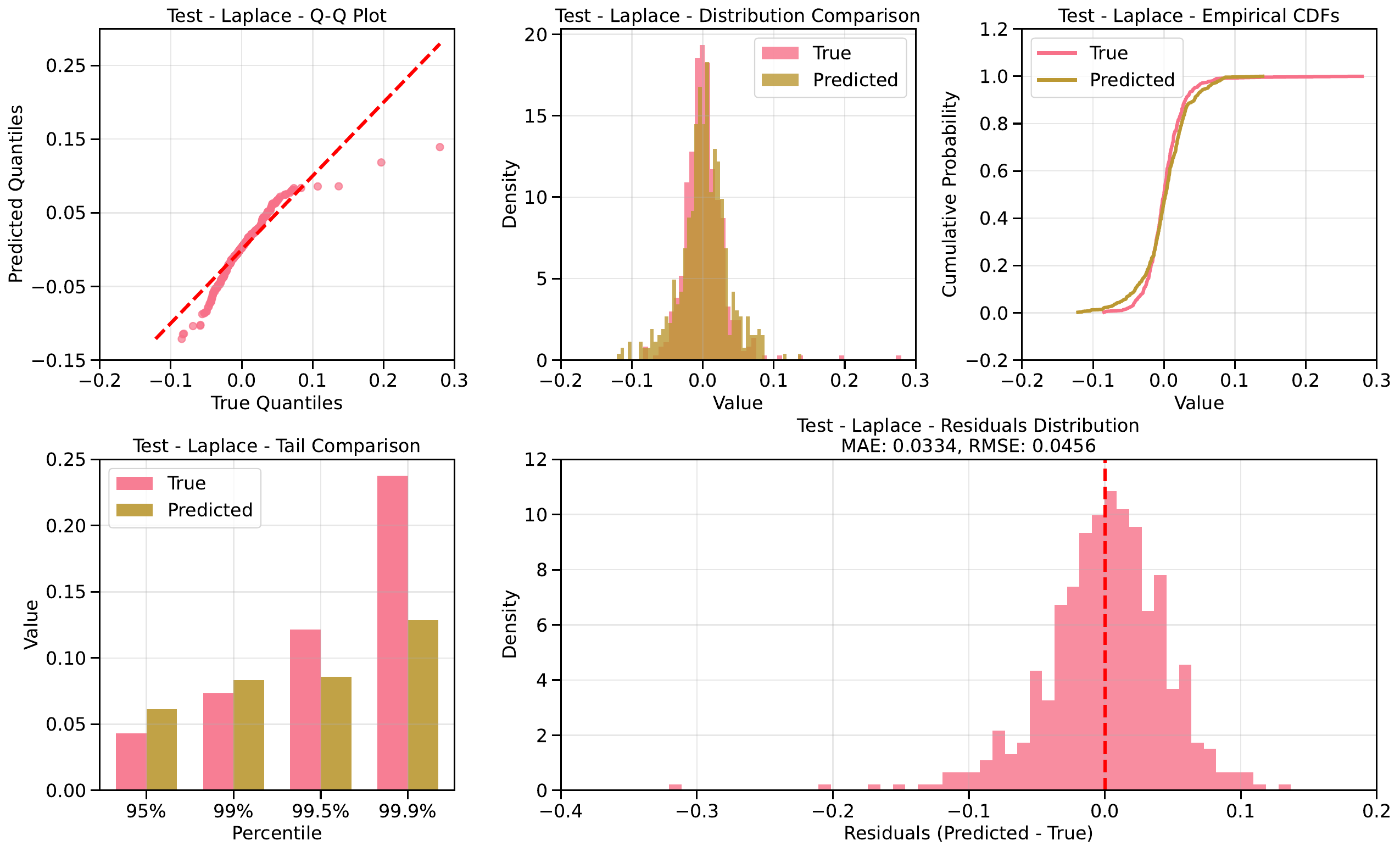}
        \caption{AMZN - Laplace}
    \end{subfigure}
    \hfill
    \begin{subfigure}[b]{0.24\textwidth}
        \includegraphics[width=\textwidth, trim=0cm 13cm 28cm 0cm, clip]{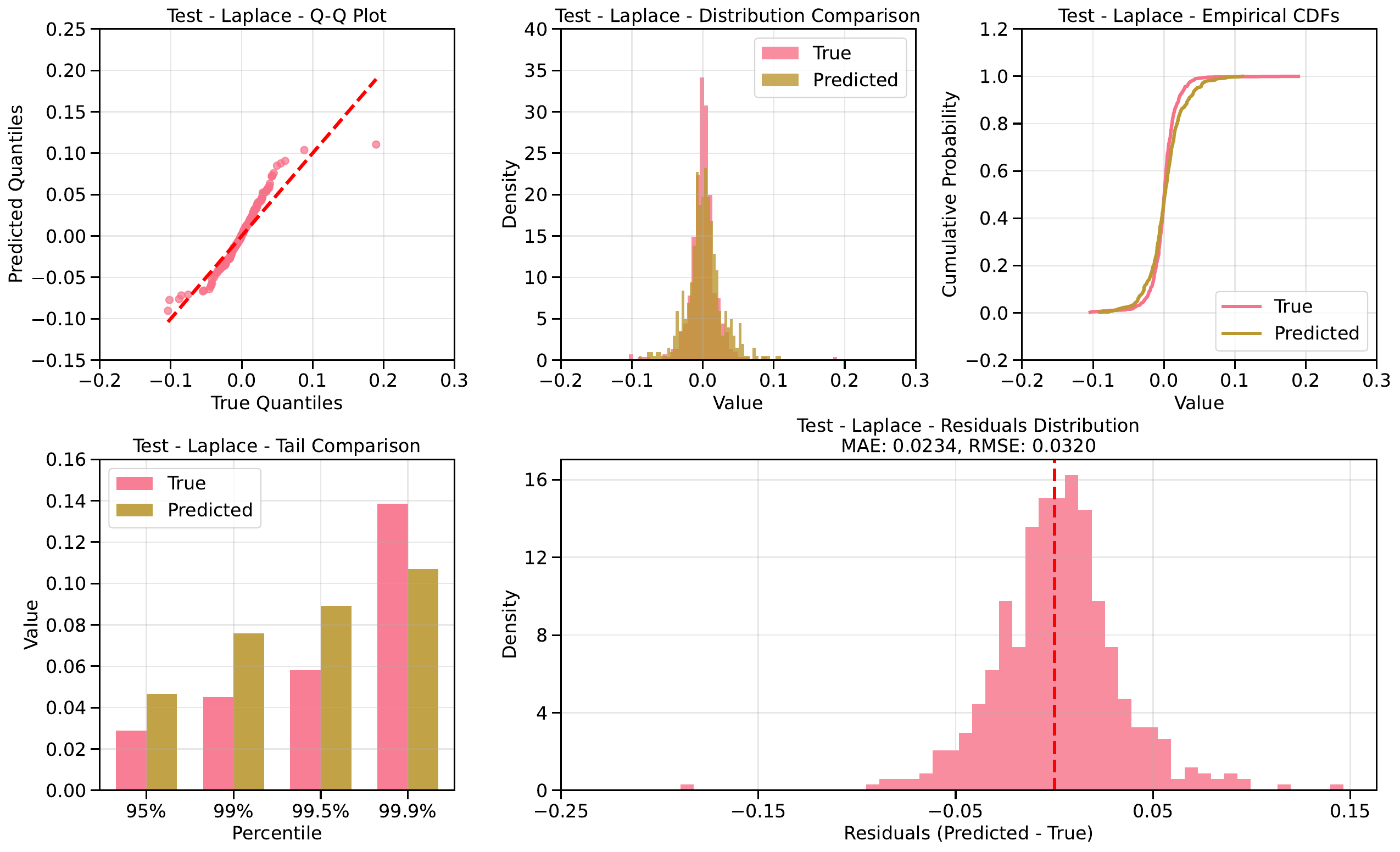}
        \caption{GOOGL - Laplace.}
    \end{subfigure}
    \hfill
    \begin{subfigure}[b]{0.24\textwidth}
        \includegraphics[width=\textwidth, trim=0cm 13cm 28cm 0cm, clip]{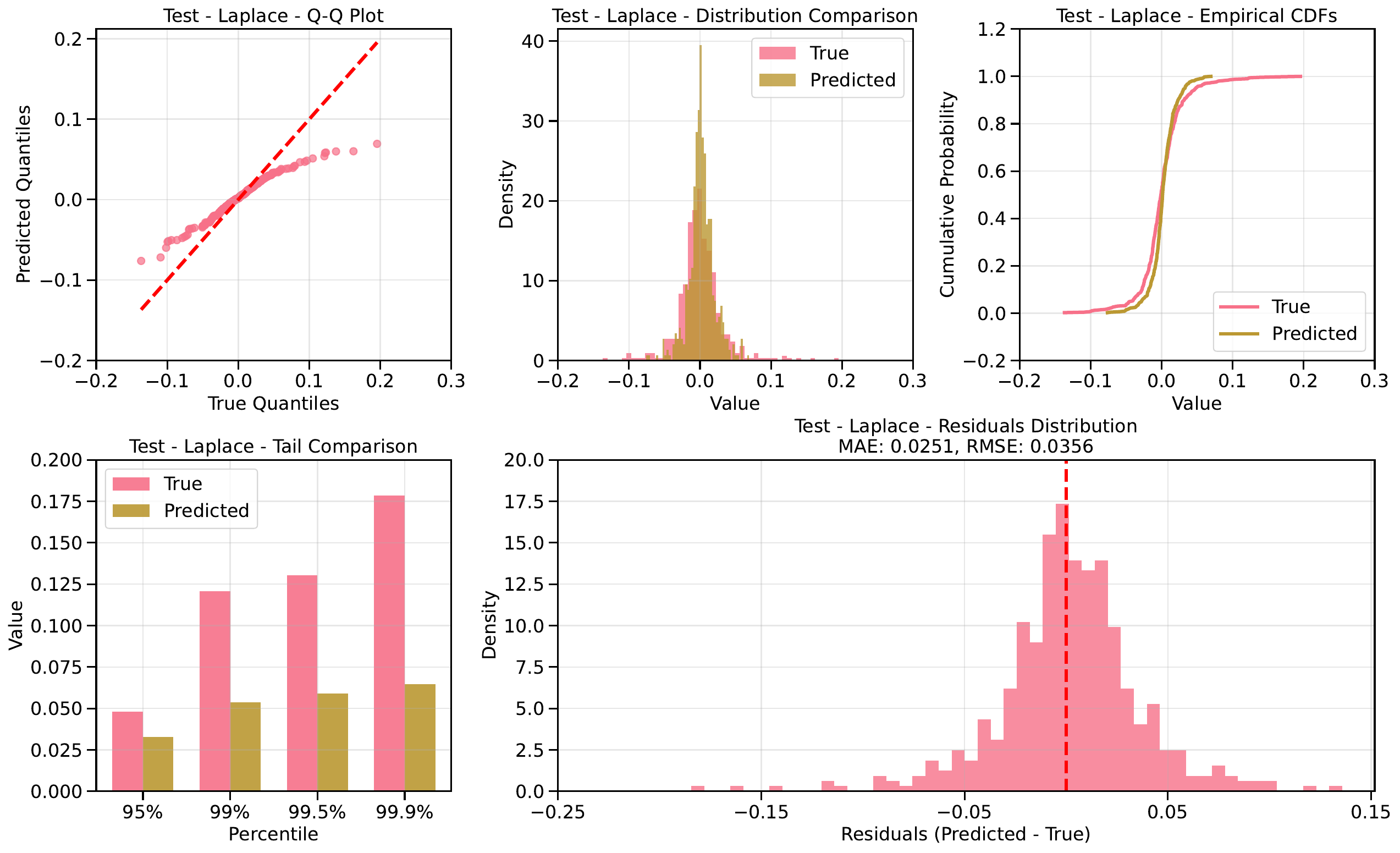}
        \caption{GS - Laplace}
    \end{subfigure}
    \vspace{0.5em}
    
    \begin{subfigure}[b]{0.24\textwidth}
        \includegraphics[width=\textwidth, trim=0cm 13cm 28cm 0cm, clip]{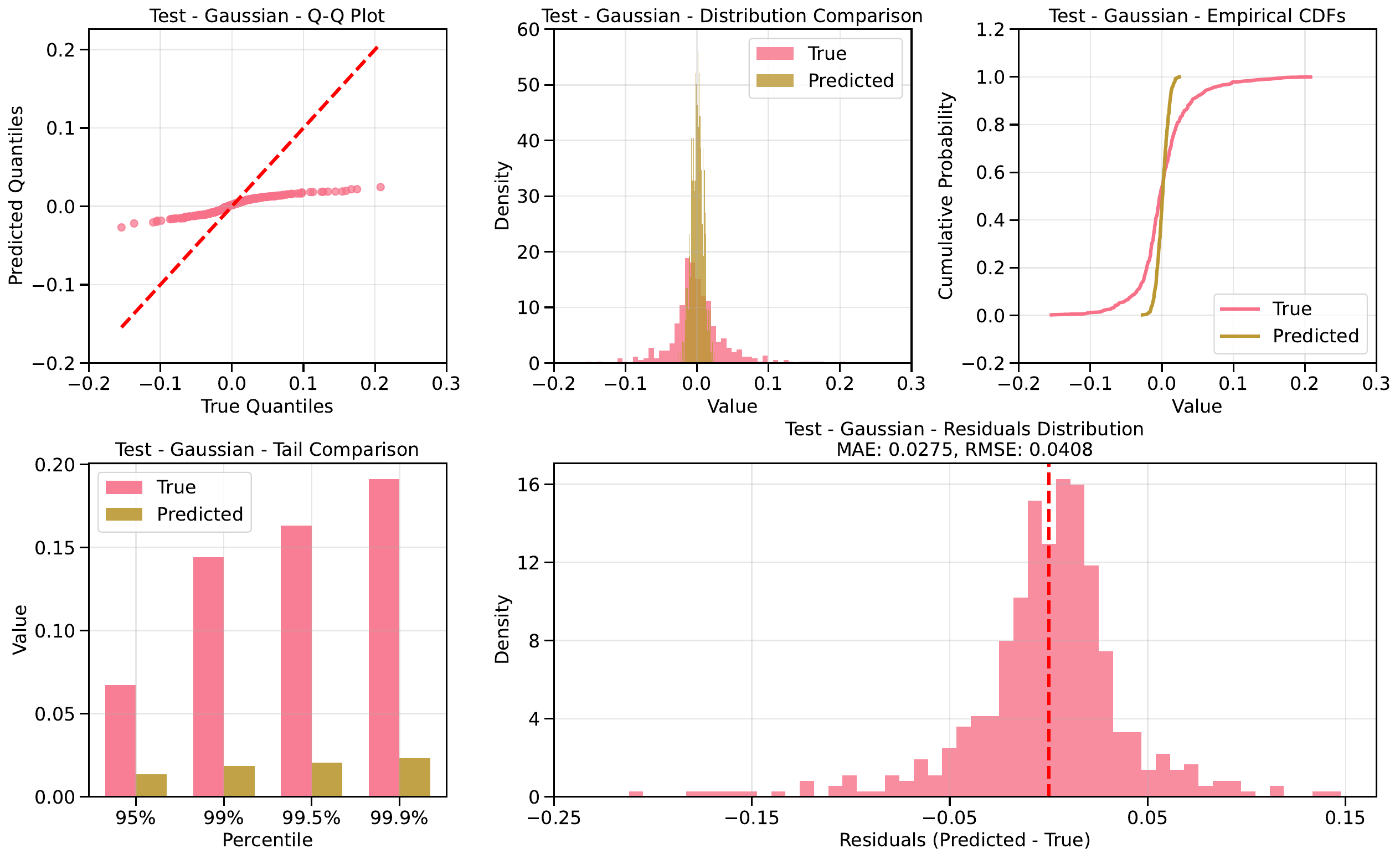}
        \caption{JPM - Gaussian}
    \end{subfigure}
    \hfill
    \begin{subfigure}[b]{0.24\textwidth}
        \includegraphics[width=\textwidth, trim=0cm 13cm 28cm 0cm, clip]{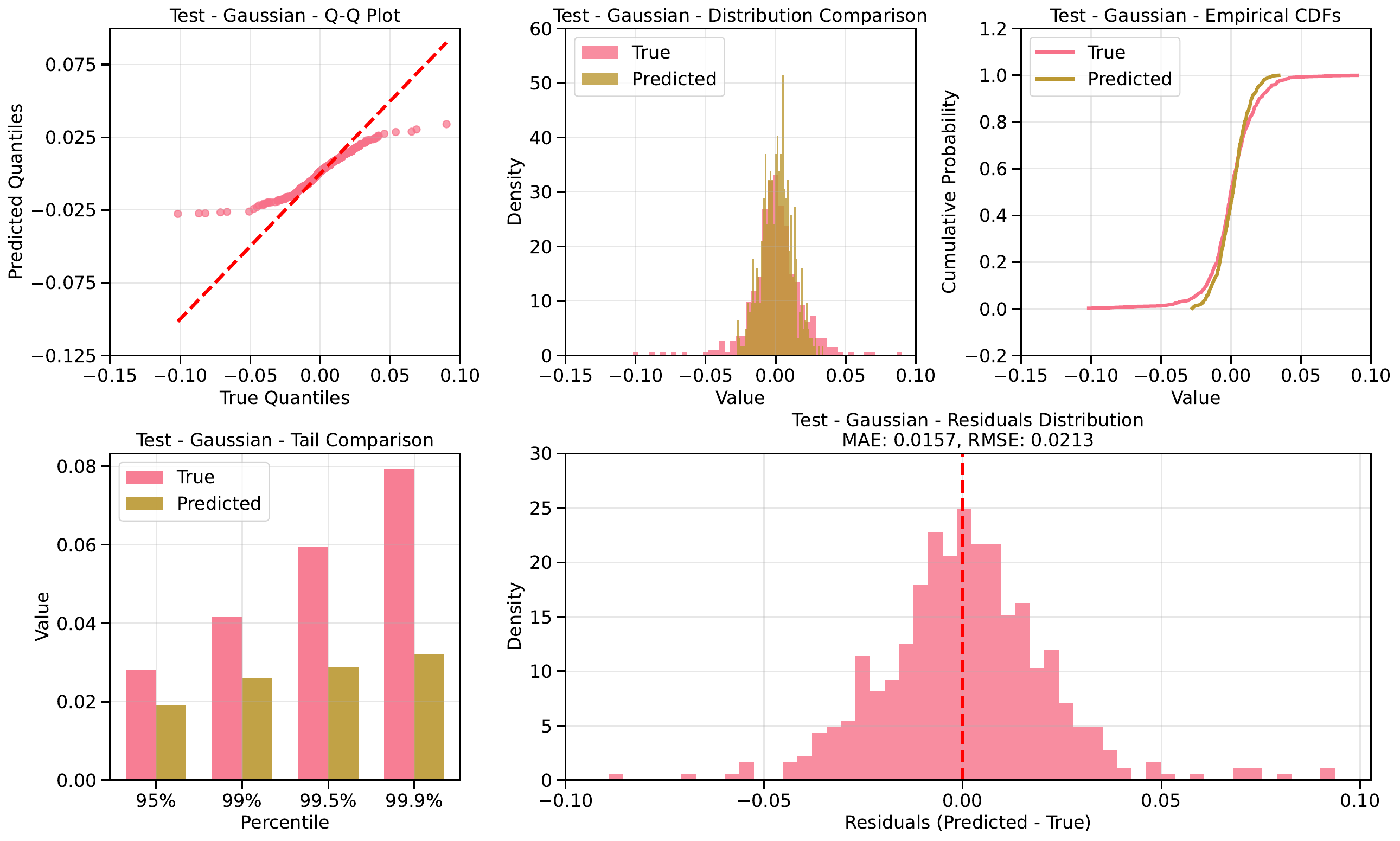}
        \caption{MSFT - Gaussian}
    \end{subfigure}
    \hfill
    \begin{subfigure}[b]{0.24\textwidth}
        \includegraphics[width=\textwidth, trim=0cm 13cm 28cm 0cm, clip]{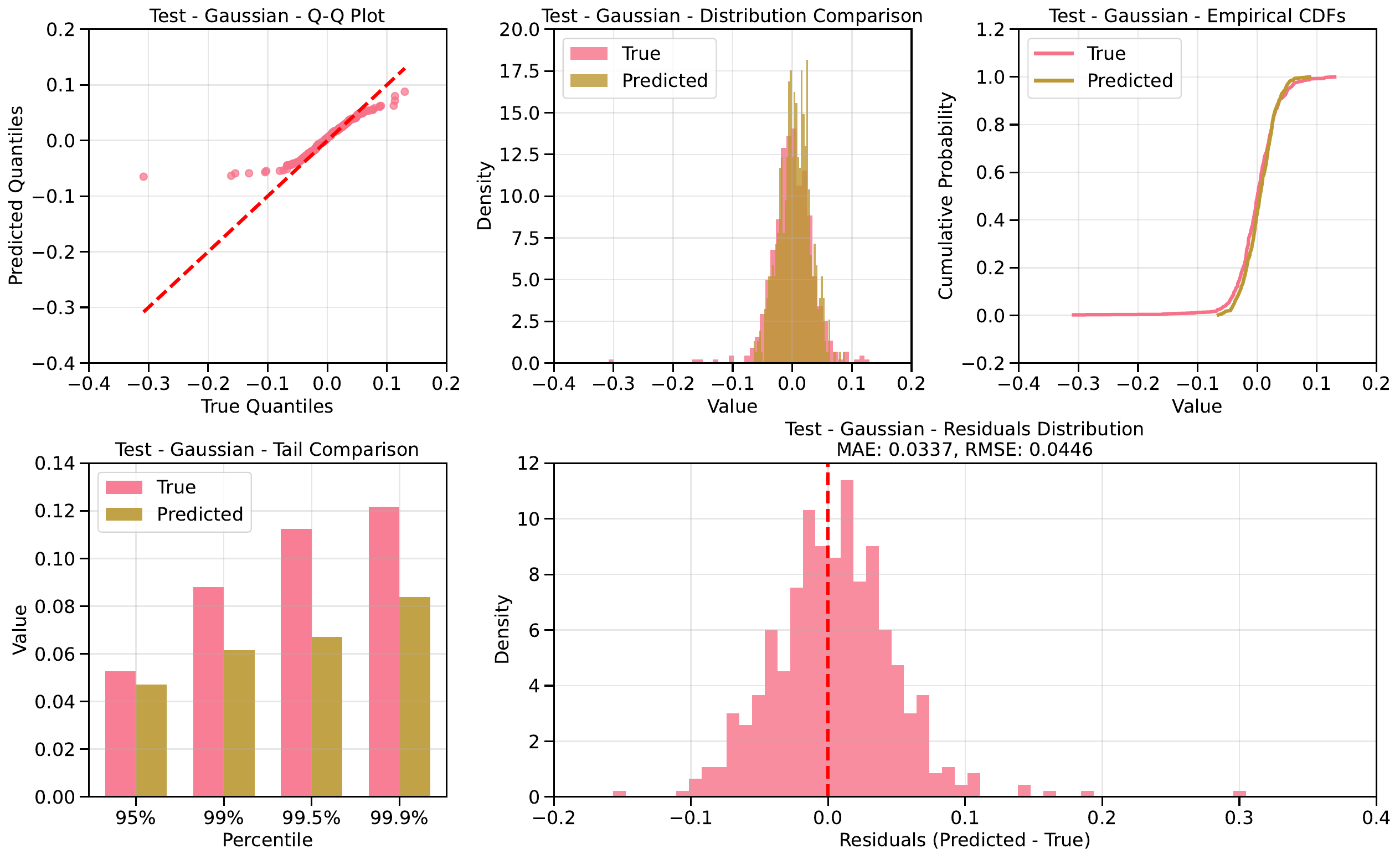}
        \caption{NVDA - Gaussian.}
    \end{subfigure}
    \hfill
    \begin{subfigure}[b]{0.24\textwidth}
        \includegraphics[width=\textwidth, trim=0cm 13cm 28cm 0cm, clip]{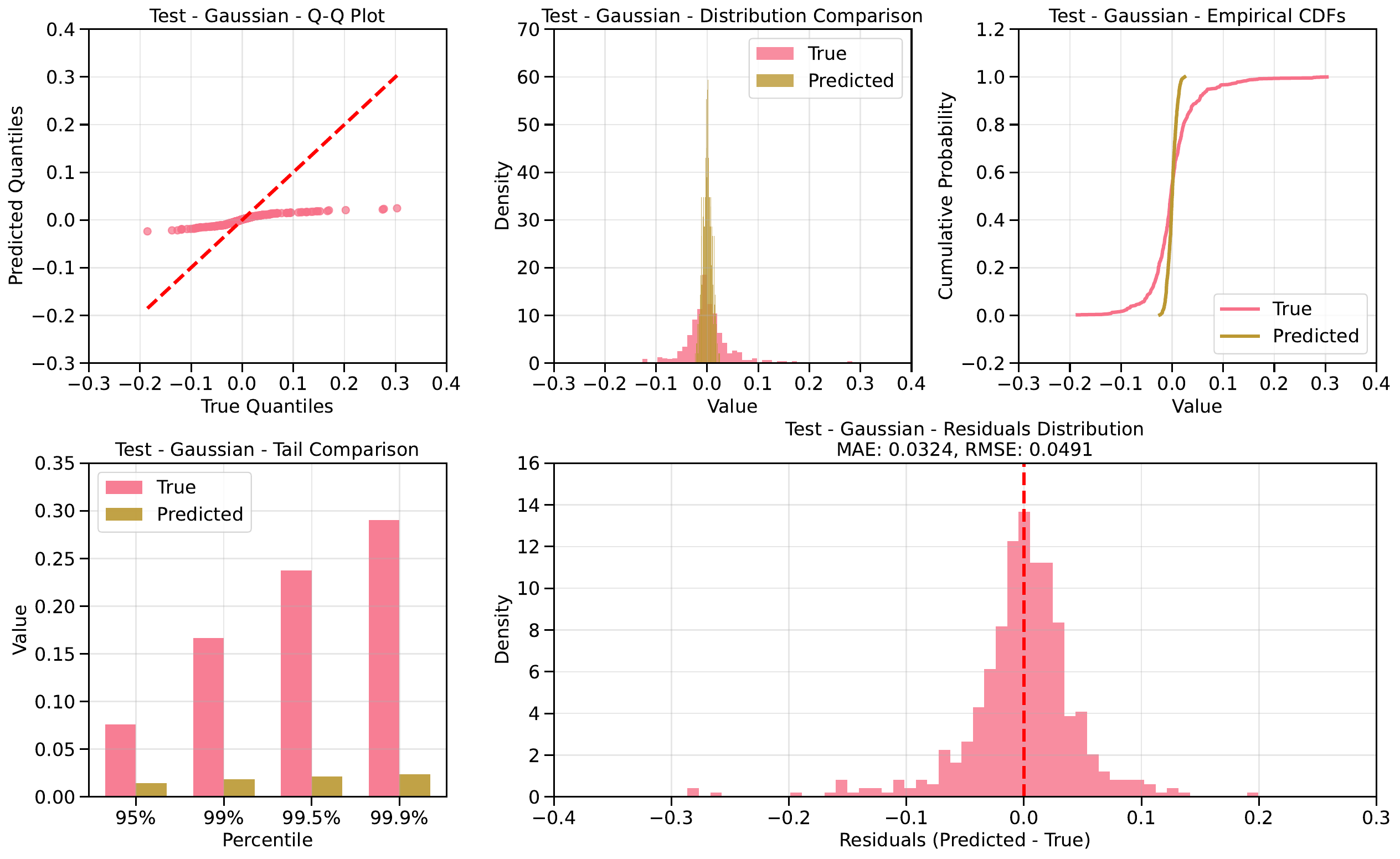}
        \caption{WFC - Gaussian}
    \end{subfigure}
    
    \vspace{0.5em}
    
    \begin{subfigure}[b]{0.24\textwidth}
        \includegraphics[width=\textwidth, trim=0cm 13cm 28cm 0cm, clip]{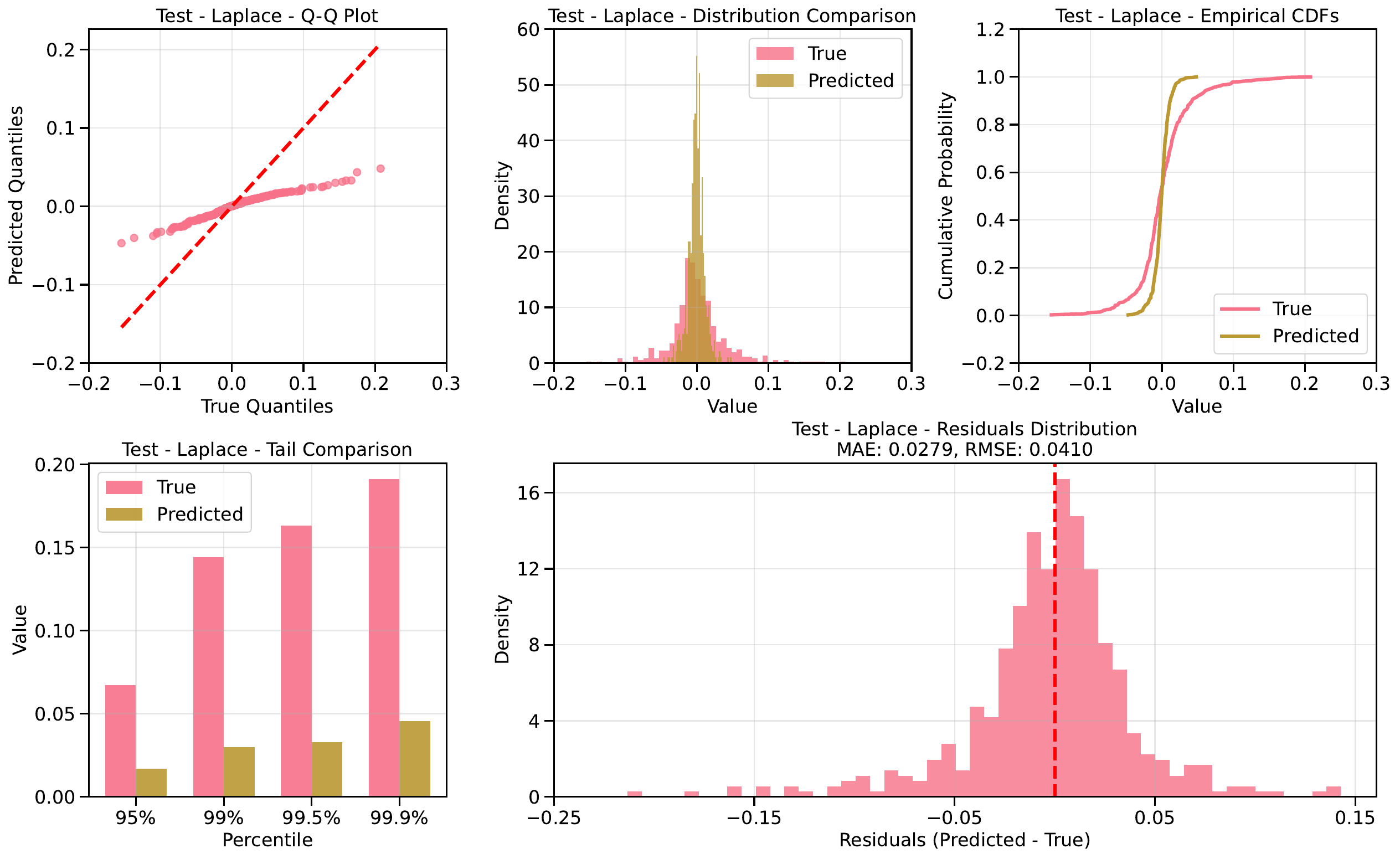}
        \caption{JPM - Laplace}
    \end{subfigure}
    \hfill
    \begin{subfigure}[b]{0.24\textwidth}
        \includegraphics[width=\textwidth, trim=0cm 13cm 28cm 0cm, clip]{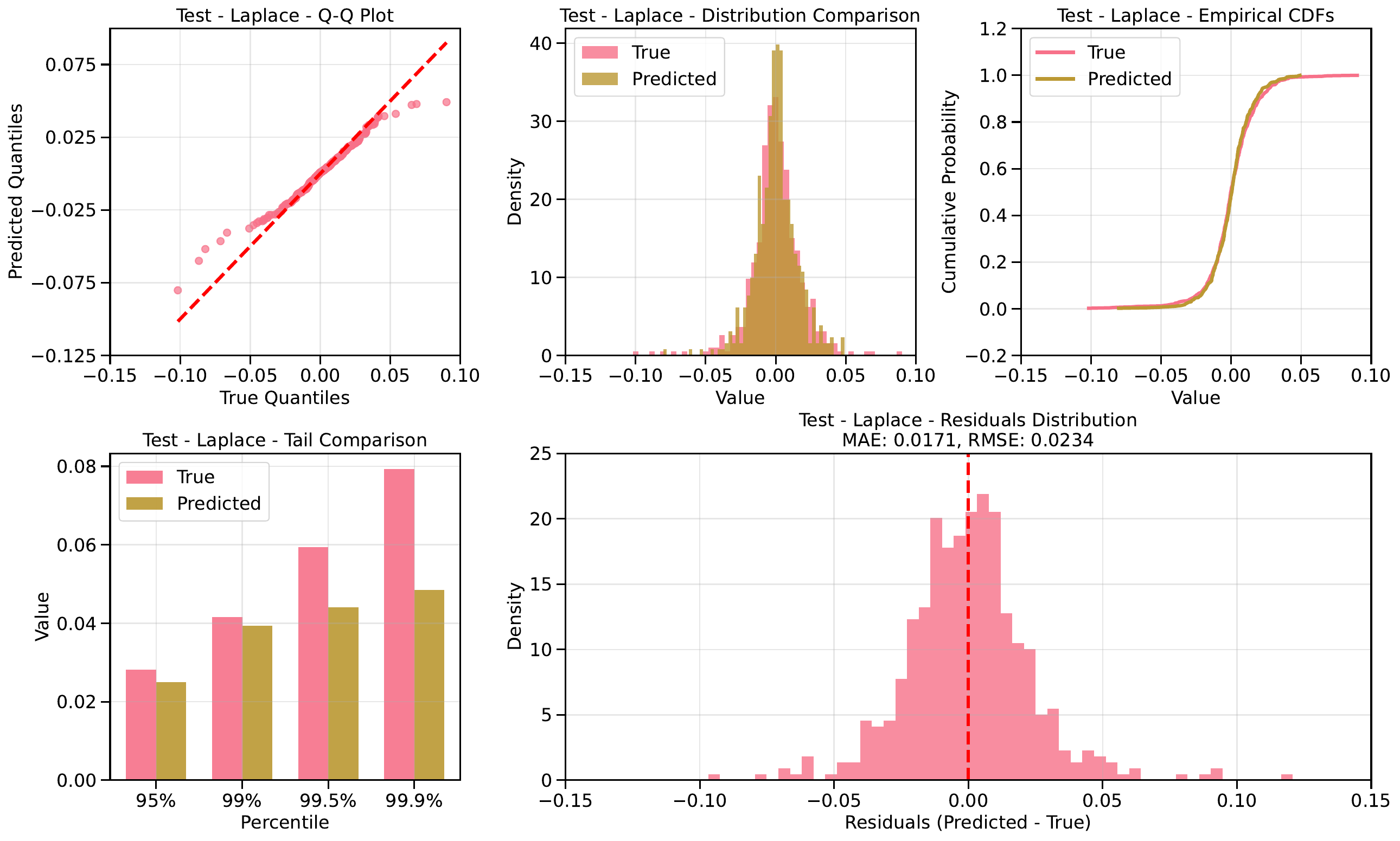}
        \caption{MSFT - Laplace}
    \end{subfigure}
    \hfill
    \begin{subfigure}[b]{0.24\textwidth}
        \includegraphics[width=\textwidth, trim=0cm 13cm 28cm 0cm, clip]{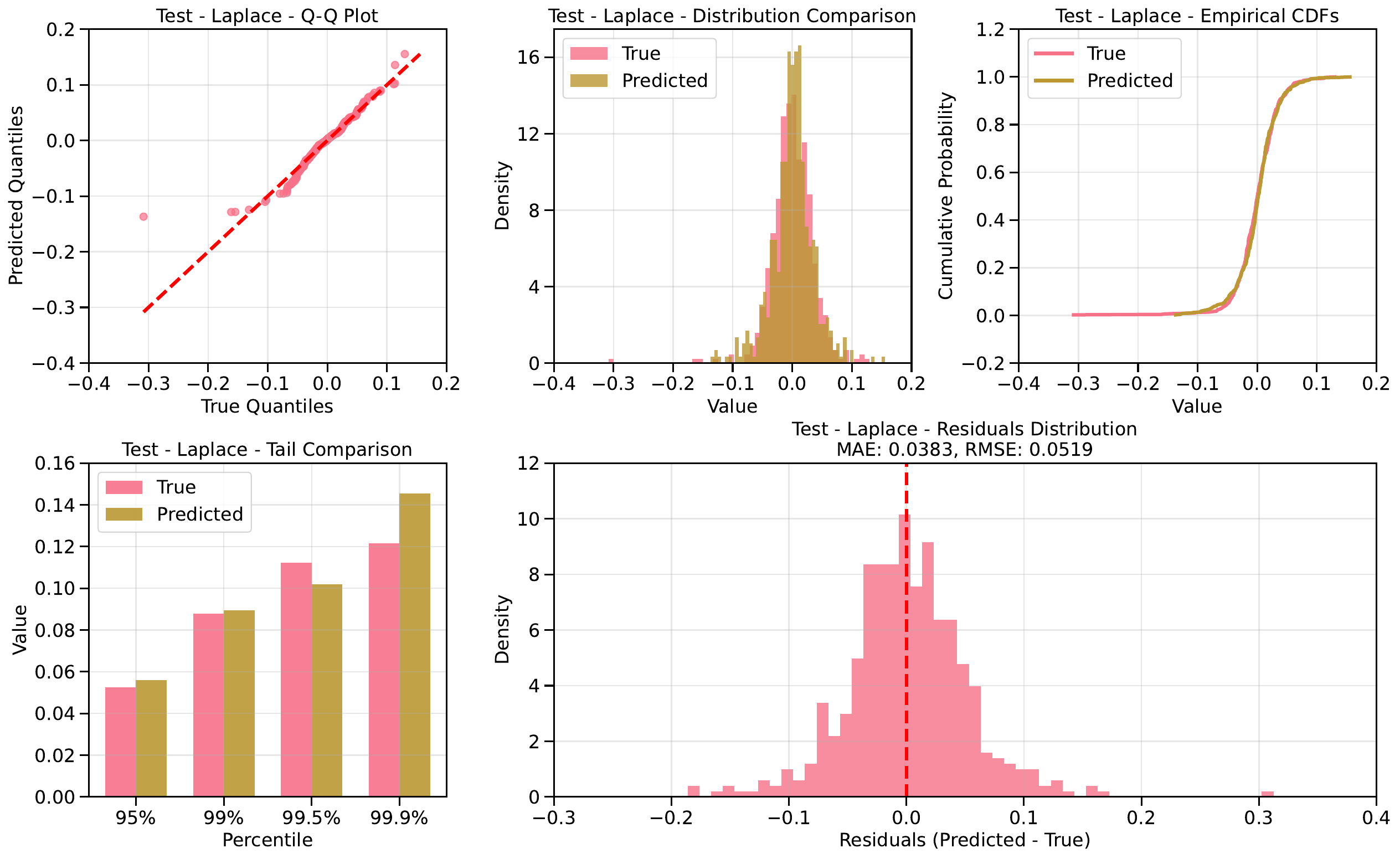}
        \caption{NVDA - Laplace.}
    \end{subfigure}
    \hfill
    \begin{subfigure}[b]{0.24\textwidth}
        \includegraphics[width=\textwidth, trim=0cm 13cm 28cm 0cm, clip]{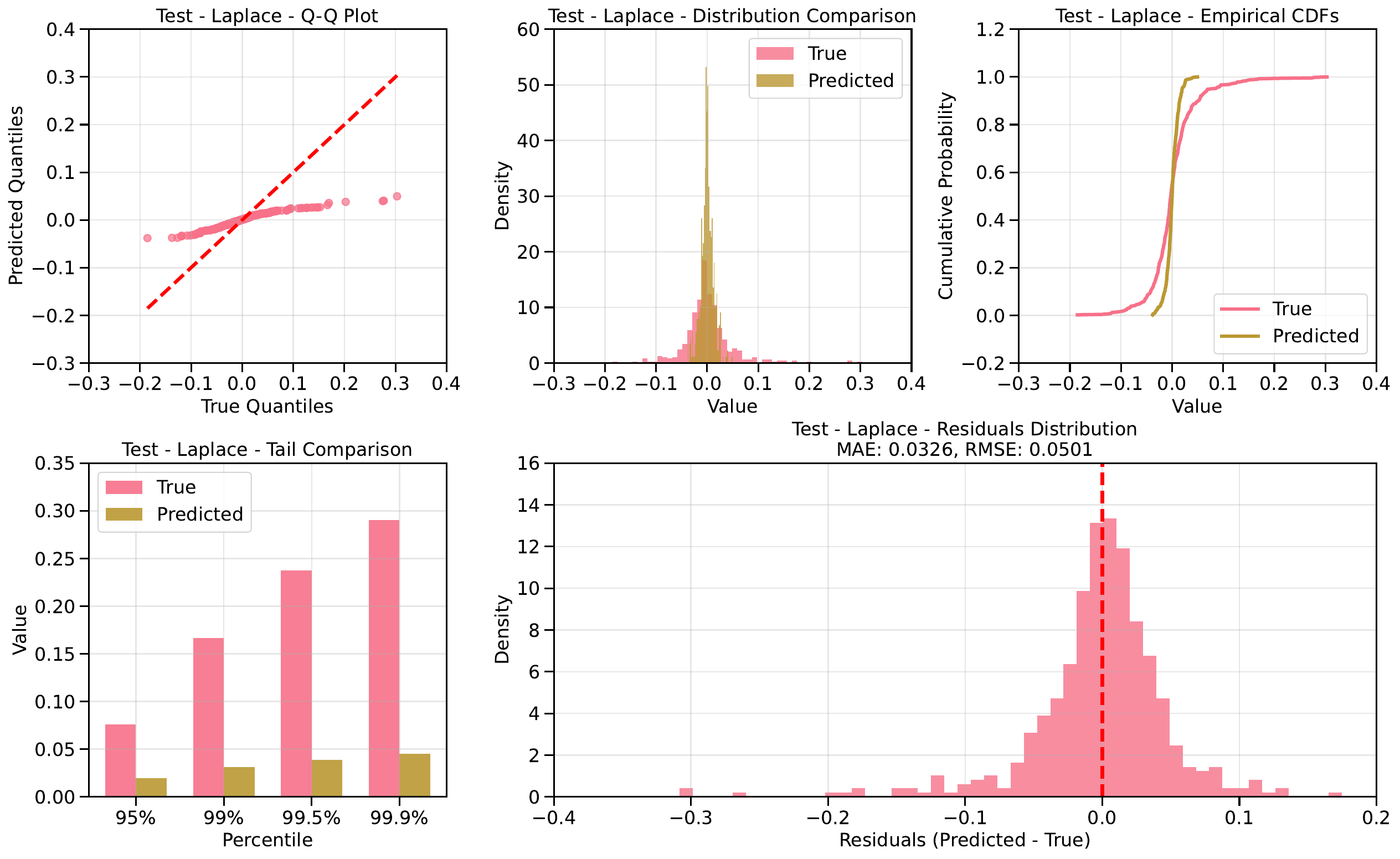}
        \caption{WFC - Laplace}
    \end{subfigure}
    \caption{QQ plots on testing datasets for the GFC period across all stocks. When comparing the use of a Gaussian base distribution to a Laplace base distribution, we observe that the Gaussian model significantly underestimates the tail heaviness of the target distribution (showing extreme underdispersion), while the Laplace distribution provides a much closer approximation to the true tail behavior, particularly in the extreme regions. This pattern is especially pronounced for technology stocks (AAPL, AMZN, GOOGL, NVDA). For financial sector stocks, both distributional models perform inadequately. This can be attributed to the disproportionate impact of the GFC on the financial sector, representing a more comprehensive distribution shift from the training data beyond a covariate shift in market volatility indicators like VIX. Nevertheless, across all stocks, we observe that samples from both base distributions in the conditional generative model exhibit underdispersion relative to the empirical data.}
    \label{fig: qq_plots_gfc_testing}
\end{figure}

\begin{figure}[htbp]
    \centering
    \begin{subfigure}[b]{0.24\textwidth}
        \includegraphics[width=\textwidth, trim=0cm 13cm 28cm 0cm, clip]{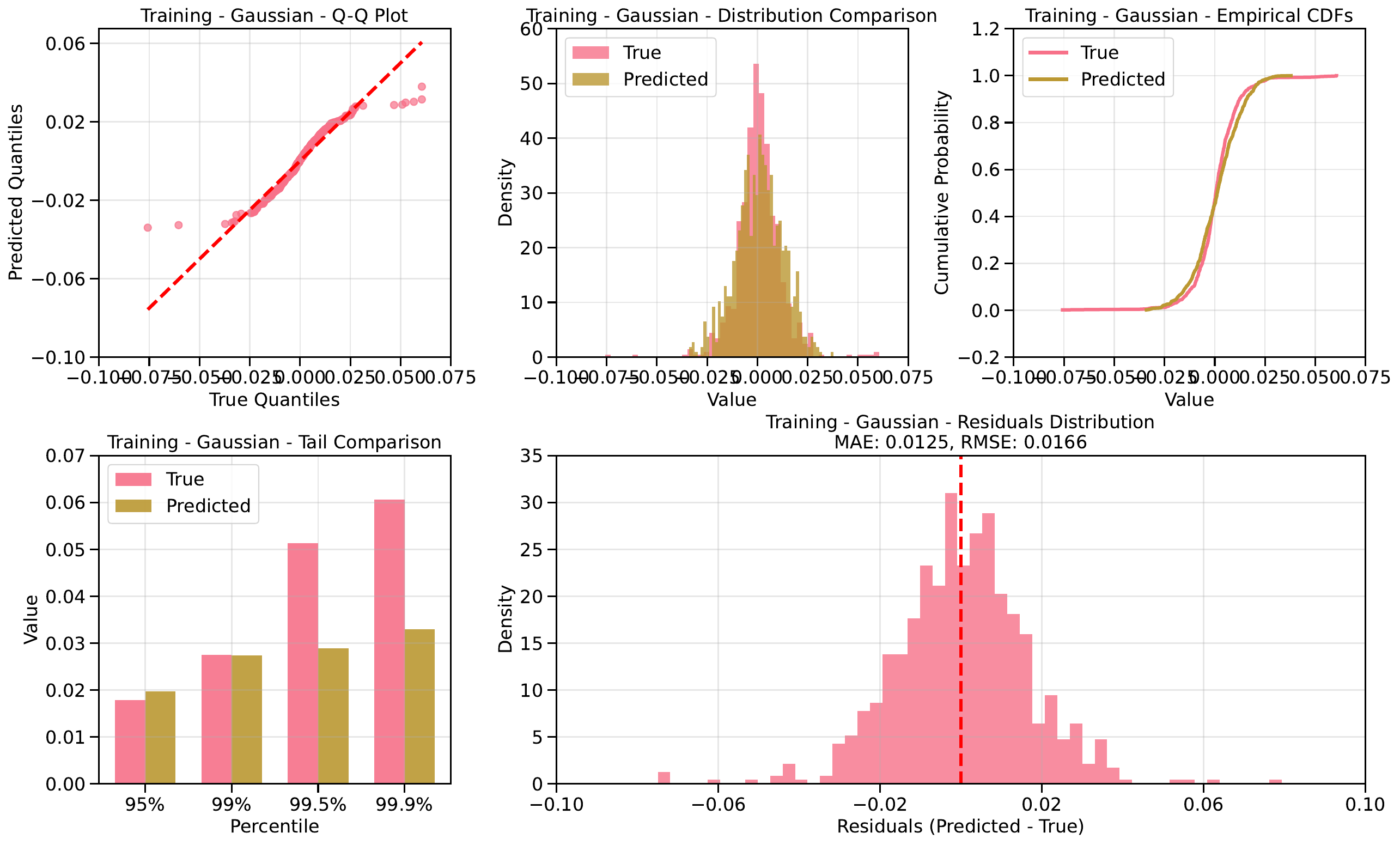}
        \caption{AAPL - Gaussian}
    \end{subfigure}
    \hfill
    \begin{subfigure}[b]{0.24\textwidth}
        \includegraphics[width=\textwidth, trim=0cm 13cm 28cm 0cm, clip]{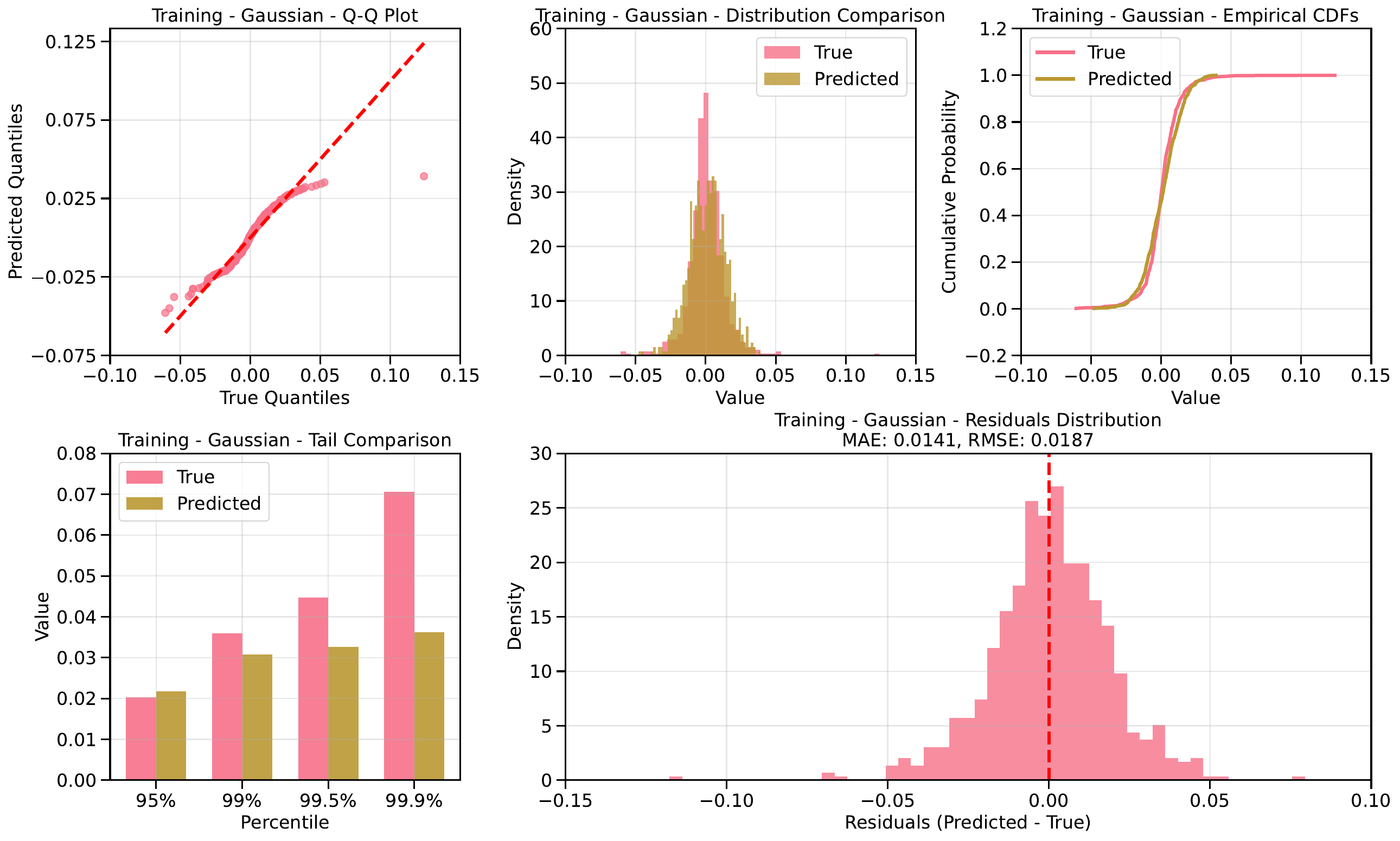}
        \caption{AMZN - Gaussian}
    \end{subfigure}
    \hfill
    \begin{subfigure}[b]{0.24\textwidth}
        \includegraphics[width=\textwidth, trim=0cm 13cm 28cm 0cm, clip]{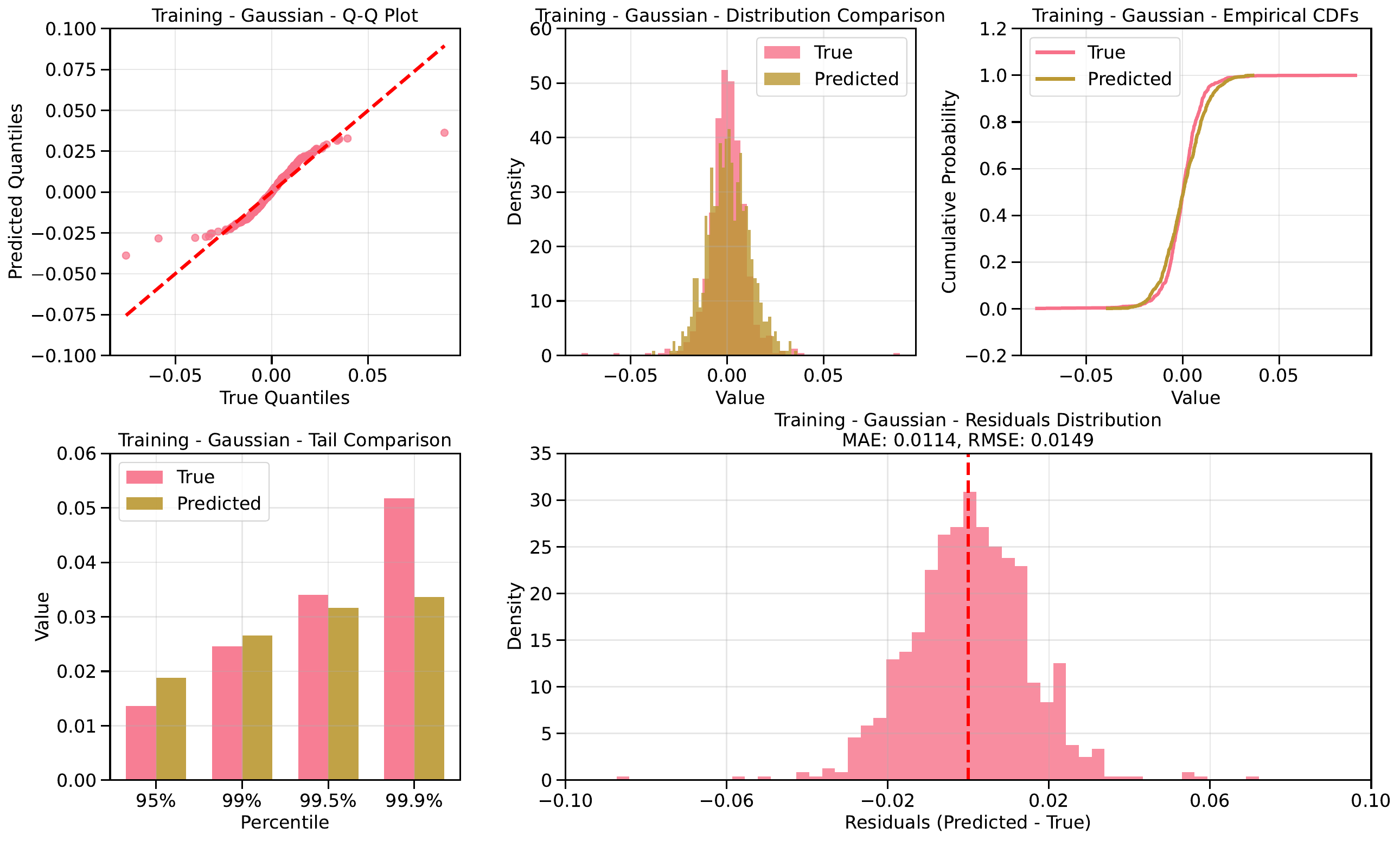}
        \caption{GOOGL - Gaussian.}
    \end{subfigure}
    \hfill
    \begin{subfigure}[b]{0.24\textwidth}
        \includegraphics[width=\textwidth, trim=0cm 13cm 28cm 0cm, clip]{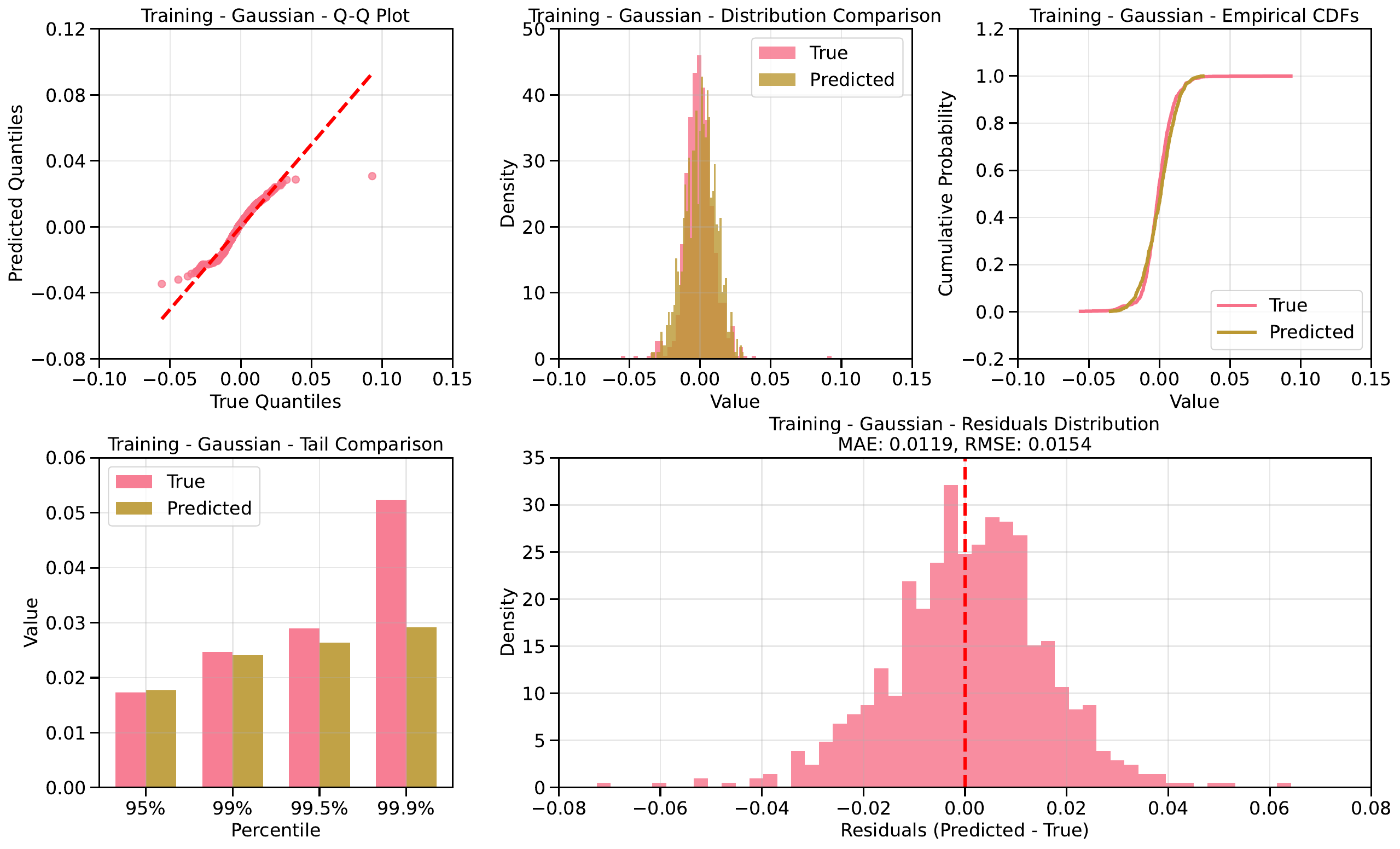}
        \caption{GS - Gaussian}
    \end{subfigure}
    \vspace{0.5em}
    \begin{subfigure}[b]{0.24\textwidth}
        \includegraphics[width=\textwidth, trim=0cm 13cm 28cm 0cm, clip]{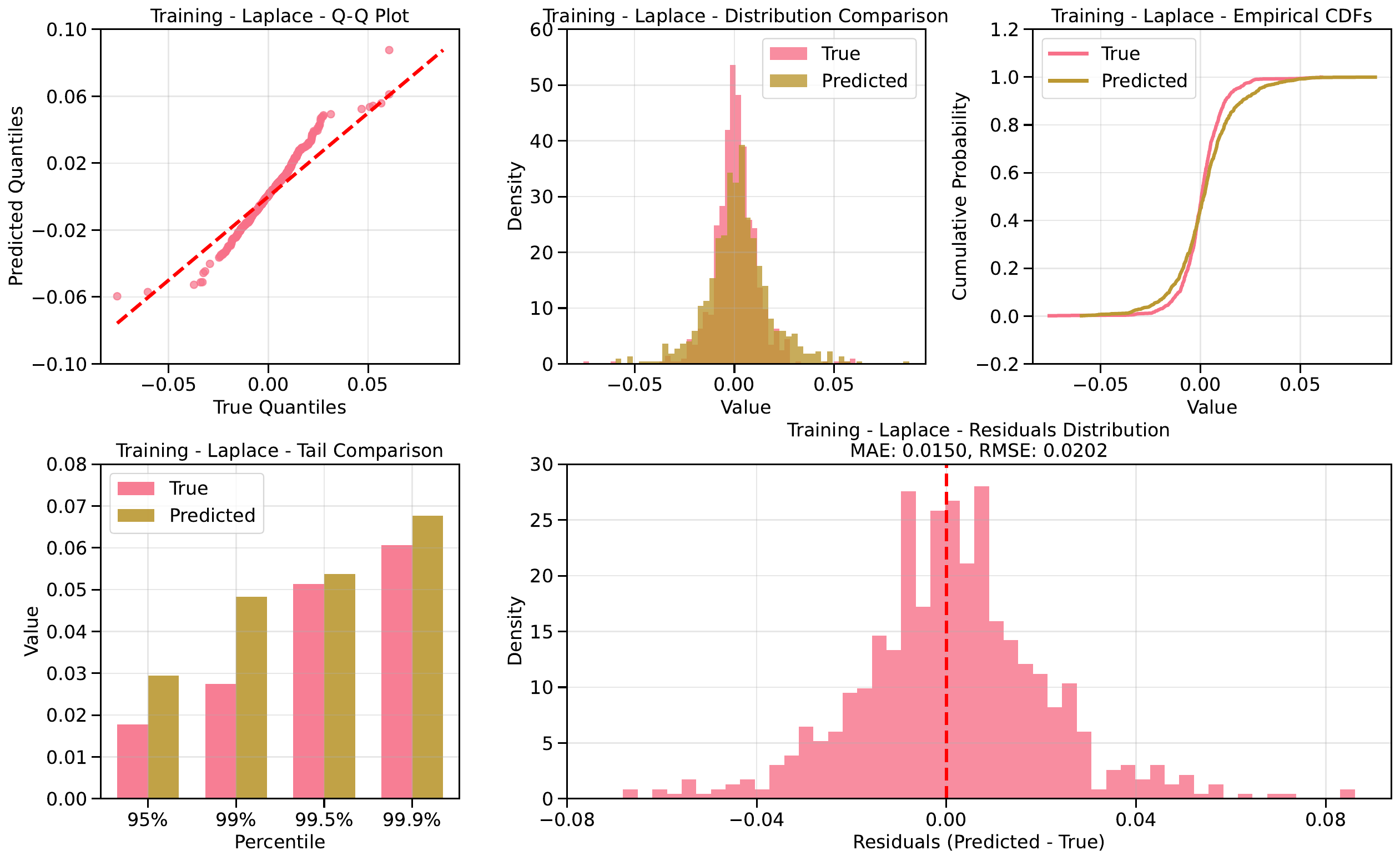}
        \caption{AAPL - Laplace}
    \end{subfigure}
    \hfill
    \begin{subfigure}[b]{0.24\textwidth}
        \includegraphics[width=\textwidth, trim=0cm 13cm 28cm 0cm, clip]{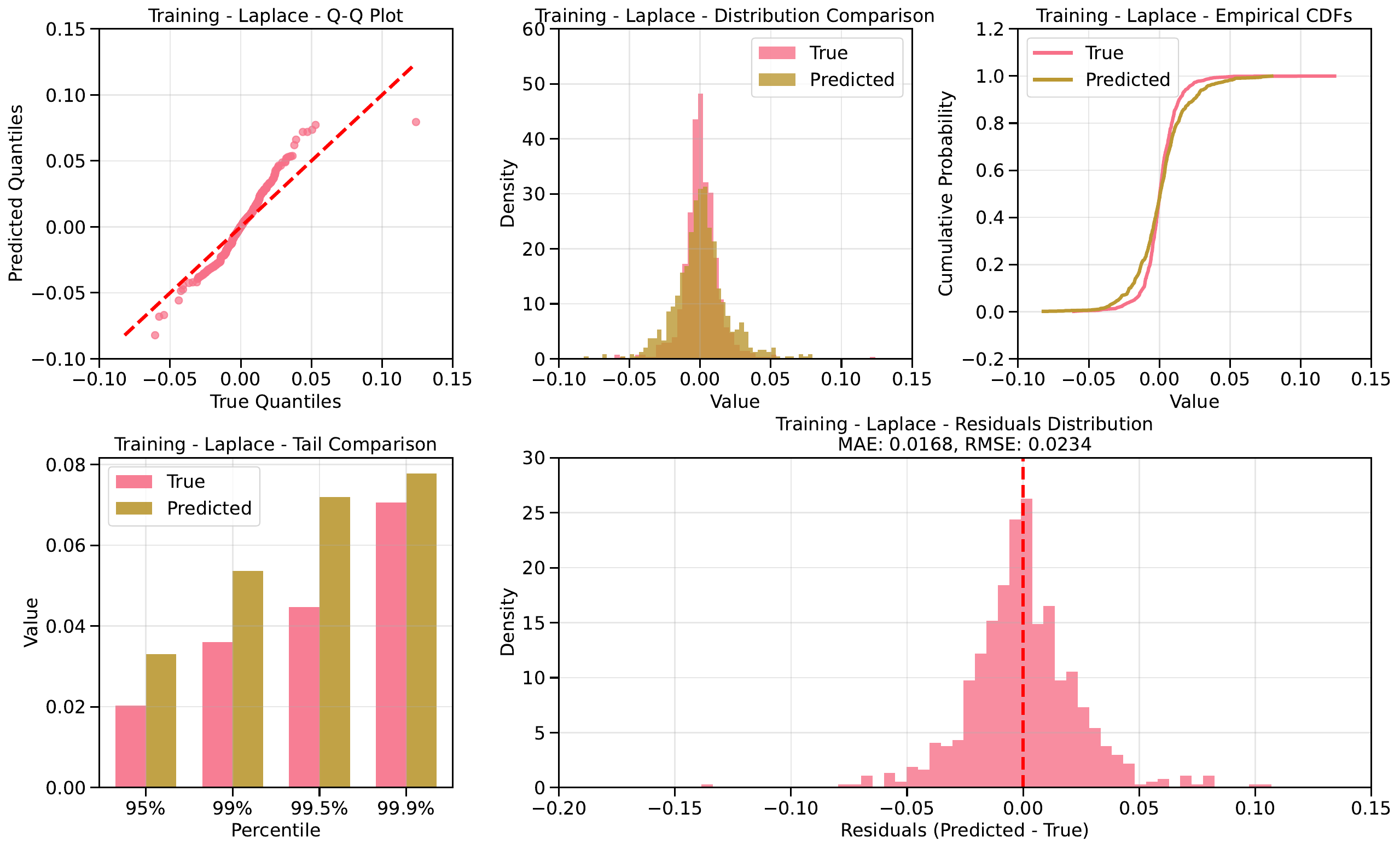}
        \caption{AMZN - Laplace}
    \end{subfigure}
    \hfill
    \begin{subfigure}[b]{0.24\textwidth}
        \includegraphics[width=\textwidth, trim=0cm 13cm 28cm 0cm, clip]{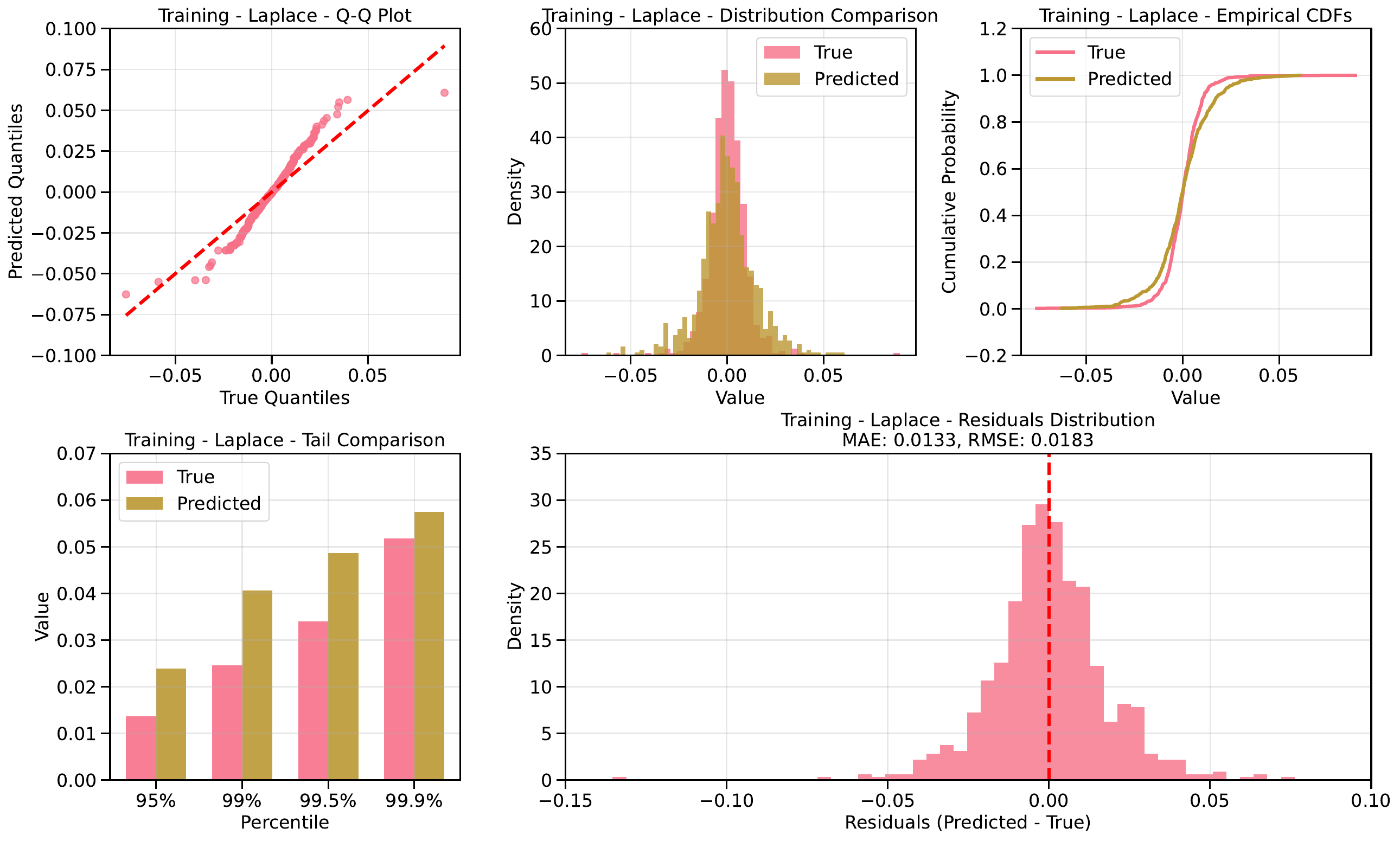}
        \caption{GOOGL - Laplace.}
    \end{subfigure}
    \hfill
    \begin{subfigure}[b]{0.24\textwidth}
        \includegraphics[width=\textwidth, trim=0cm 13cm 28cm 0cm, clip]{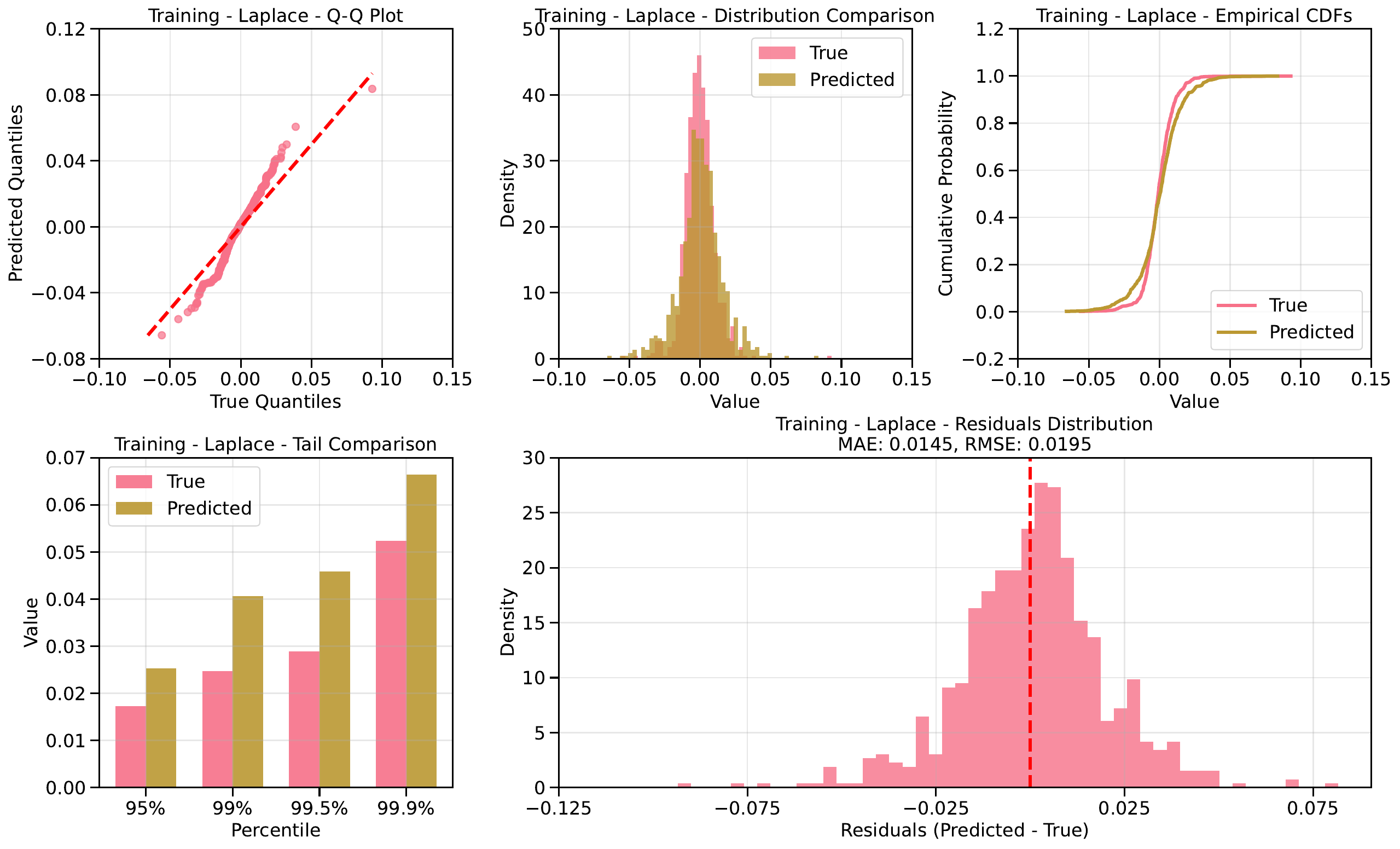}
        \caption{GS - Laplace}
    \end{subfigure}
    \vspace{0.5em}
    
    \begin{subfigure}[b]{0.24\textwidth}
        \includegraphics[width=\textwidth, trim=0cm 13cm 28cm 0cm, clip]{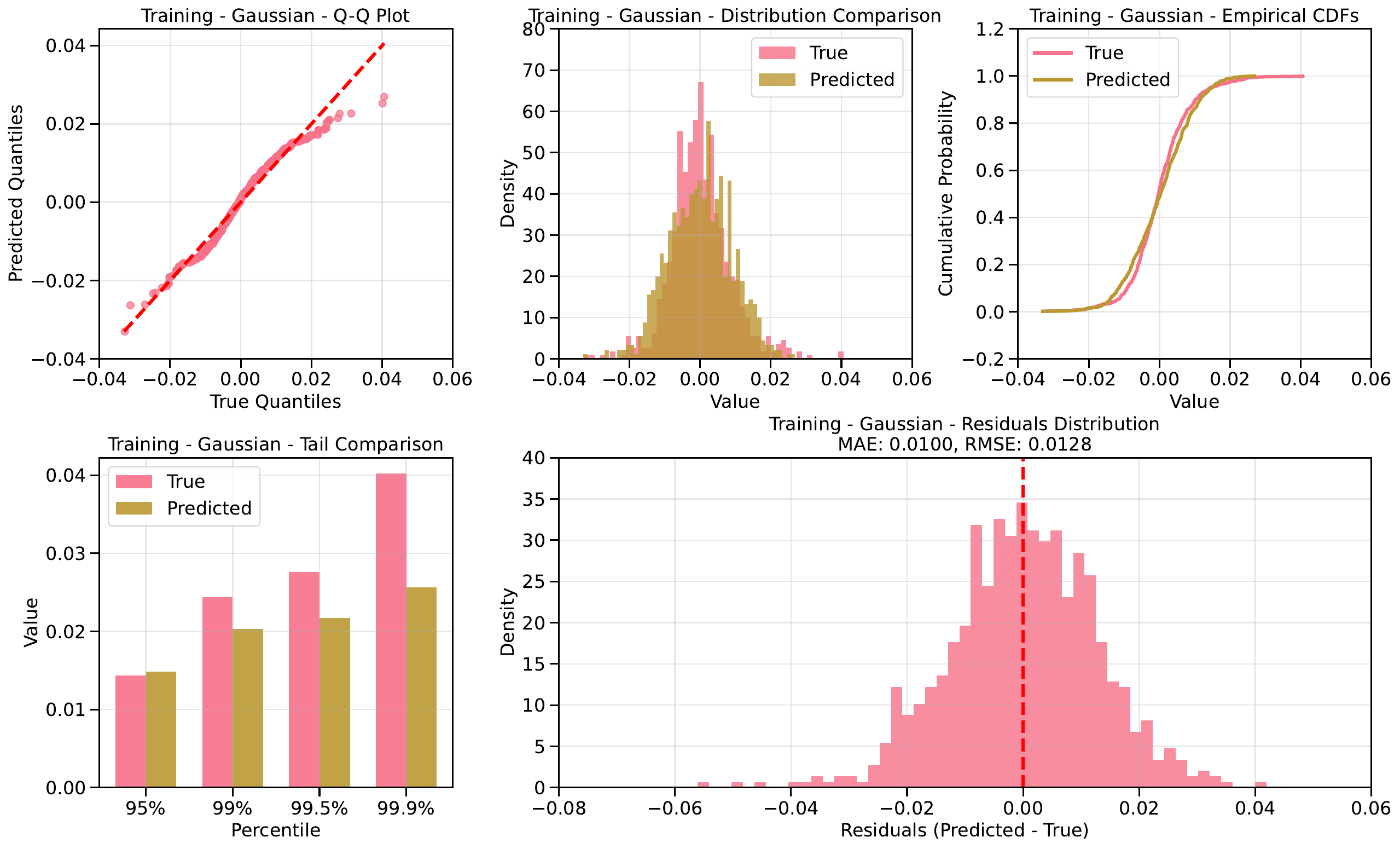}
        \caption{JPM - Gaussian}
    \end{subfigure}
    \hfill
    \begin{subfigure}[b]{0.24\textwidth}
        \includegraphics[width=\textwidth, trim=0cm 13cm 28cm 0cm, clip]{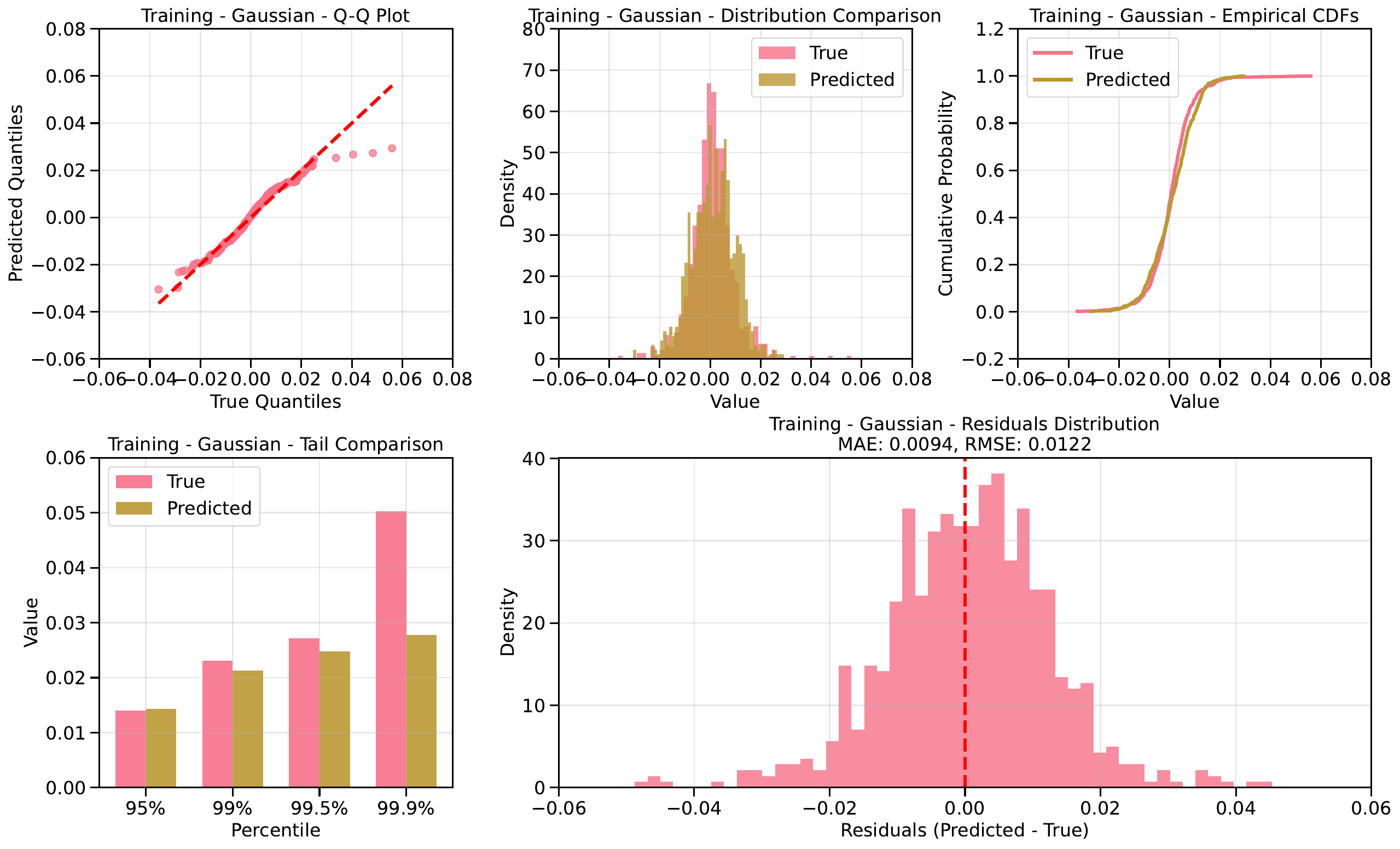}
        \caption{MSFT - Gaussian}
    \end{subfigure}
    \hfill
    \begin{subfigure}[b]{0.24\textwidth}
        \includegraphics[width=\textwidth, trim=0cm 13cm 28cm 0cm, clip]{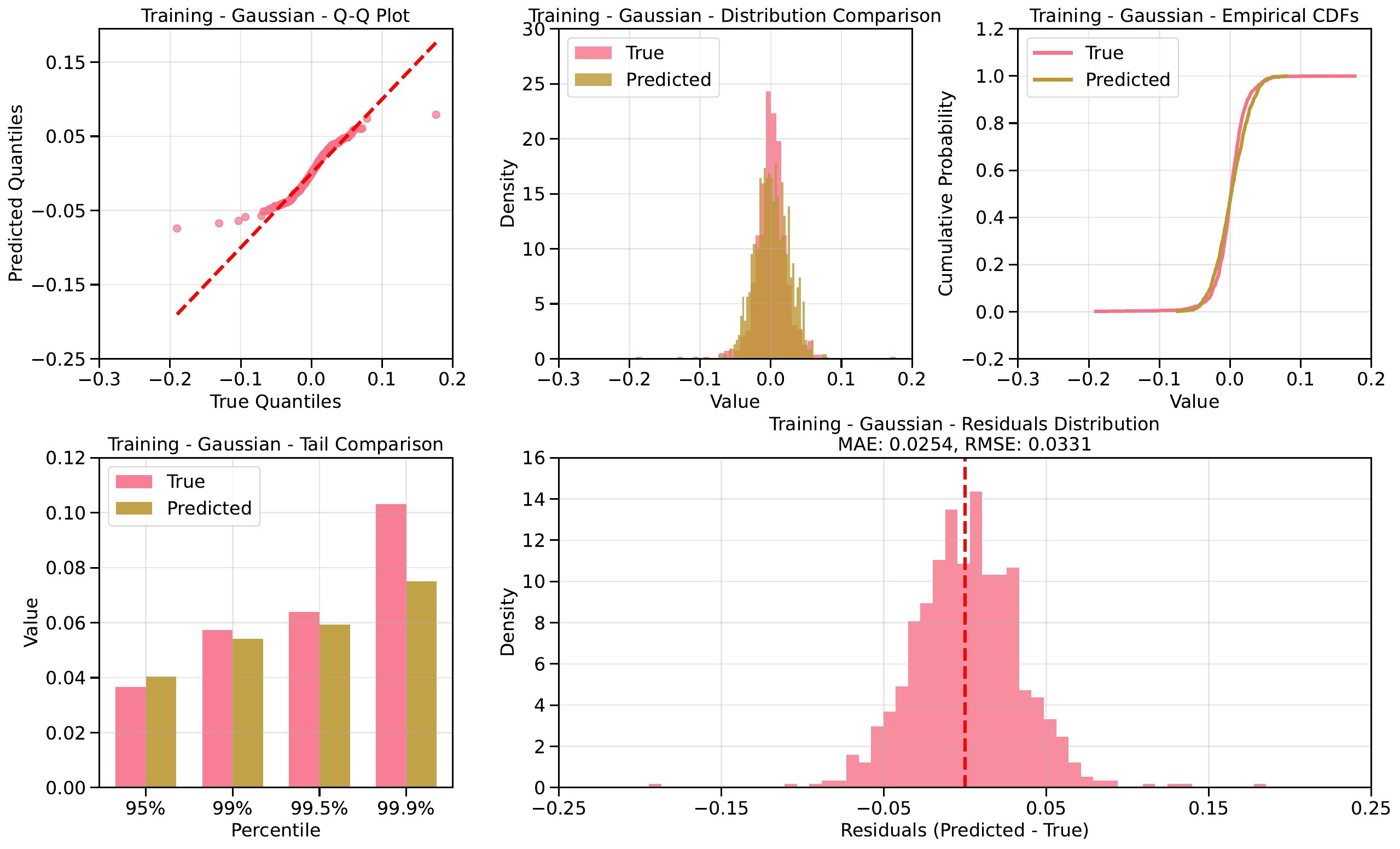}
        \caption{NVDA - Gaussian.}
    \end{subfigure}
    \hfill
    \begin{subfigure}[b]{0.24\textwidth}
        \includegraphics[width=\textwidth, trim=0cm 13cm 28cm 0cm, clip]{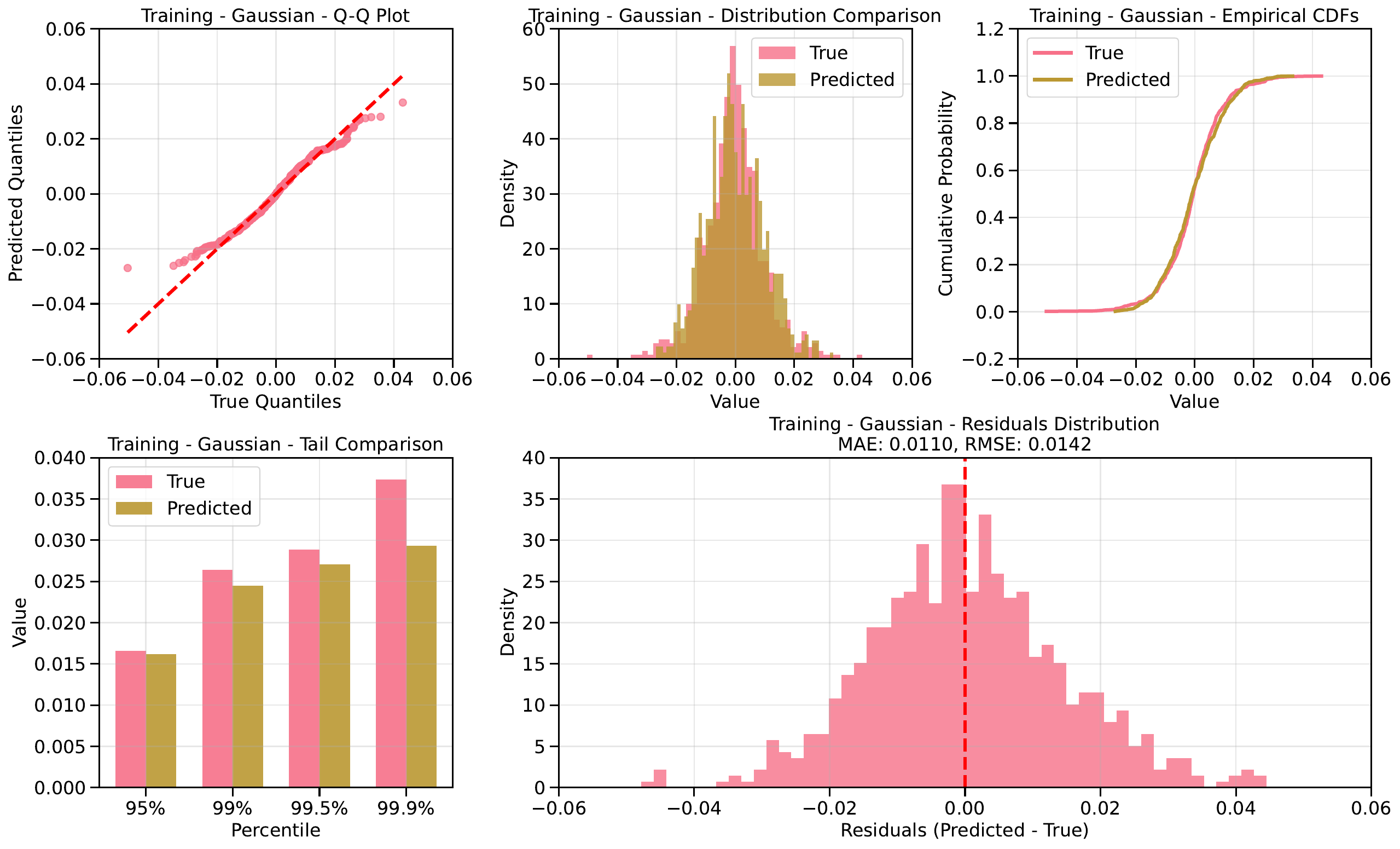}
        \caption{WFC - Gaussian}
    \end{subfigure}
    
    \vspace{0.5em}
    
    \begin{subfigure}[b]{0.24\textwidth}
        \includegraphics[width=\textwidth, trim=0cm 13cm 28cm 0cm, clip]{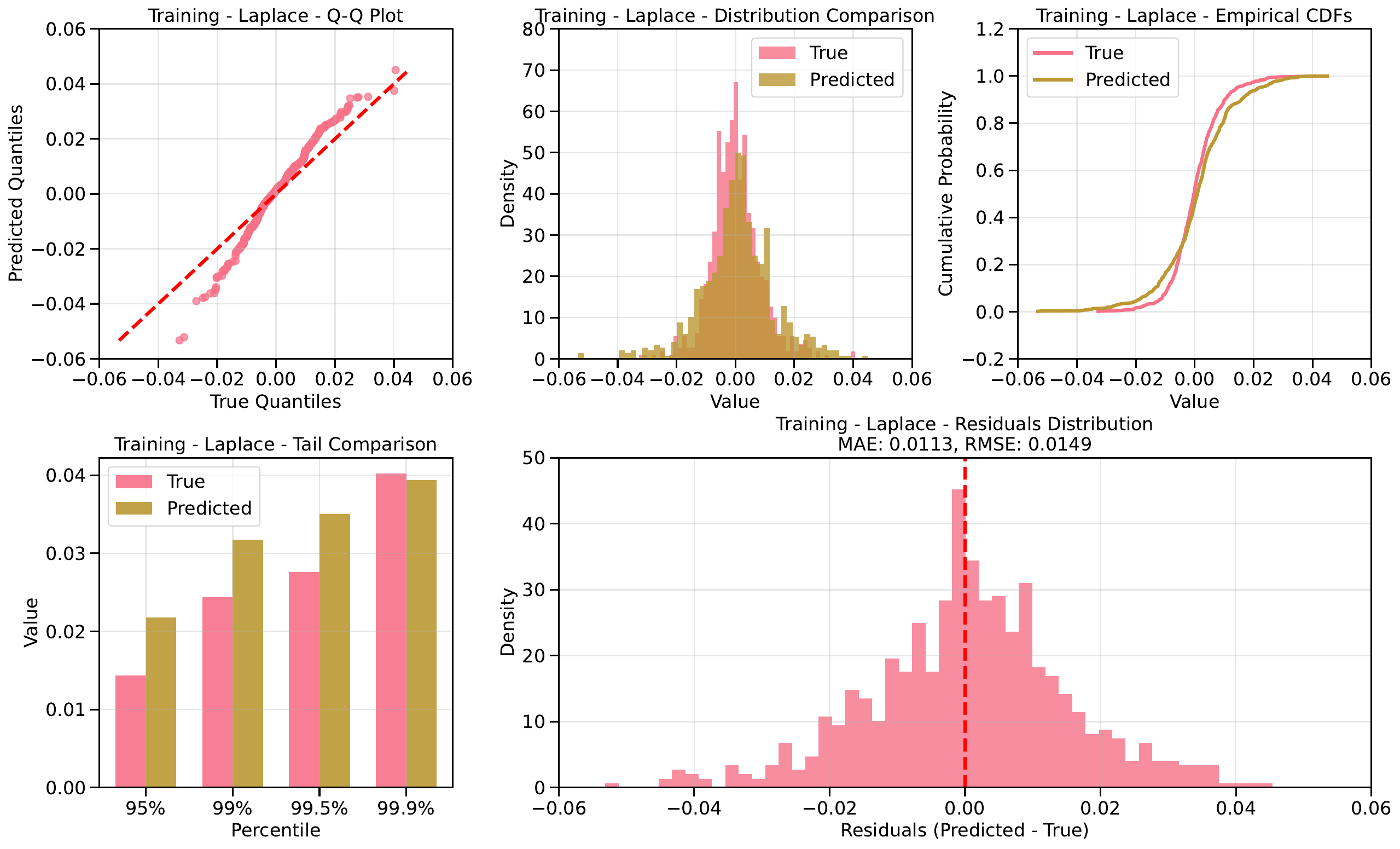}
        \caption{JPM - Laplace}
    \end{subfigure}
    \hfill
    \begin{subfigure}[b]{0.24\textwidth}
        \includegraphics[width=\textwidth, trim=0cm 13cm 28cm 0cm, clip]{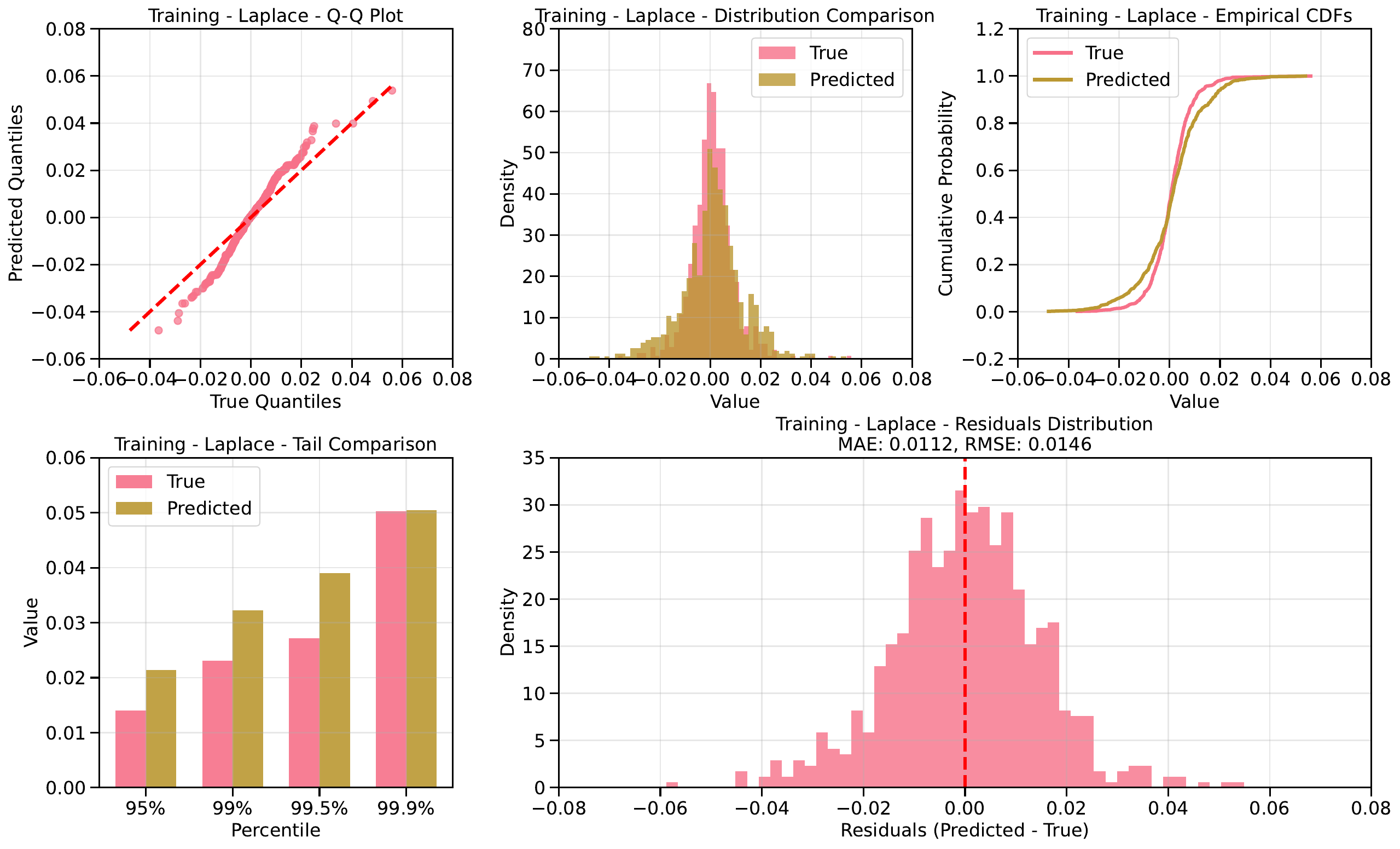}
        \caption{MSFT - Laplace}
    \end{subfigure}
    \hfill
    \begin{subfigure}[b]{0.24\textwidth}
        \includegraphics[width=\textwidth, trim=0cm 13cm 28cm 0cm, clip]{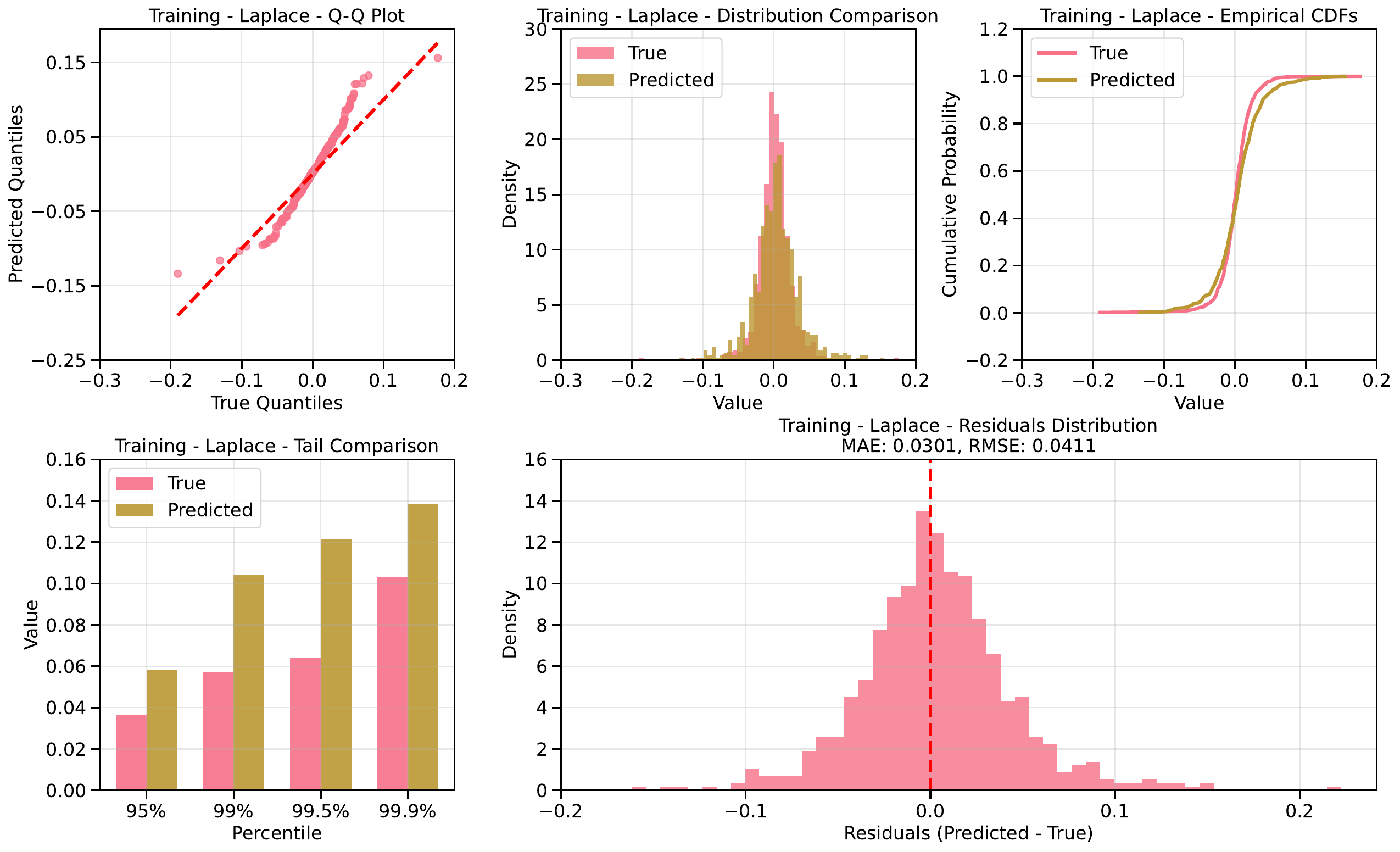}
        \caption{NVDA - Laplace.}
    \end{subfigure}
    \hfill
    \begin{subfigure}[b]{0.24\textwidth}
        \includegraphics[width=\textwidth, trim=0cm 13cm 28cm 0cm, clip]{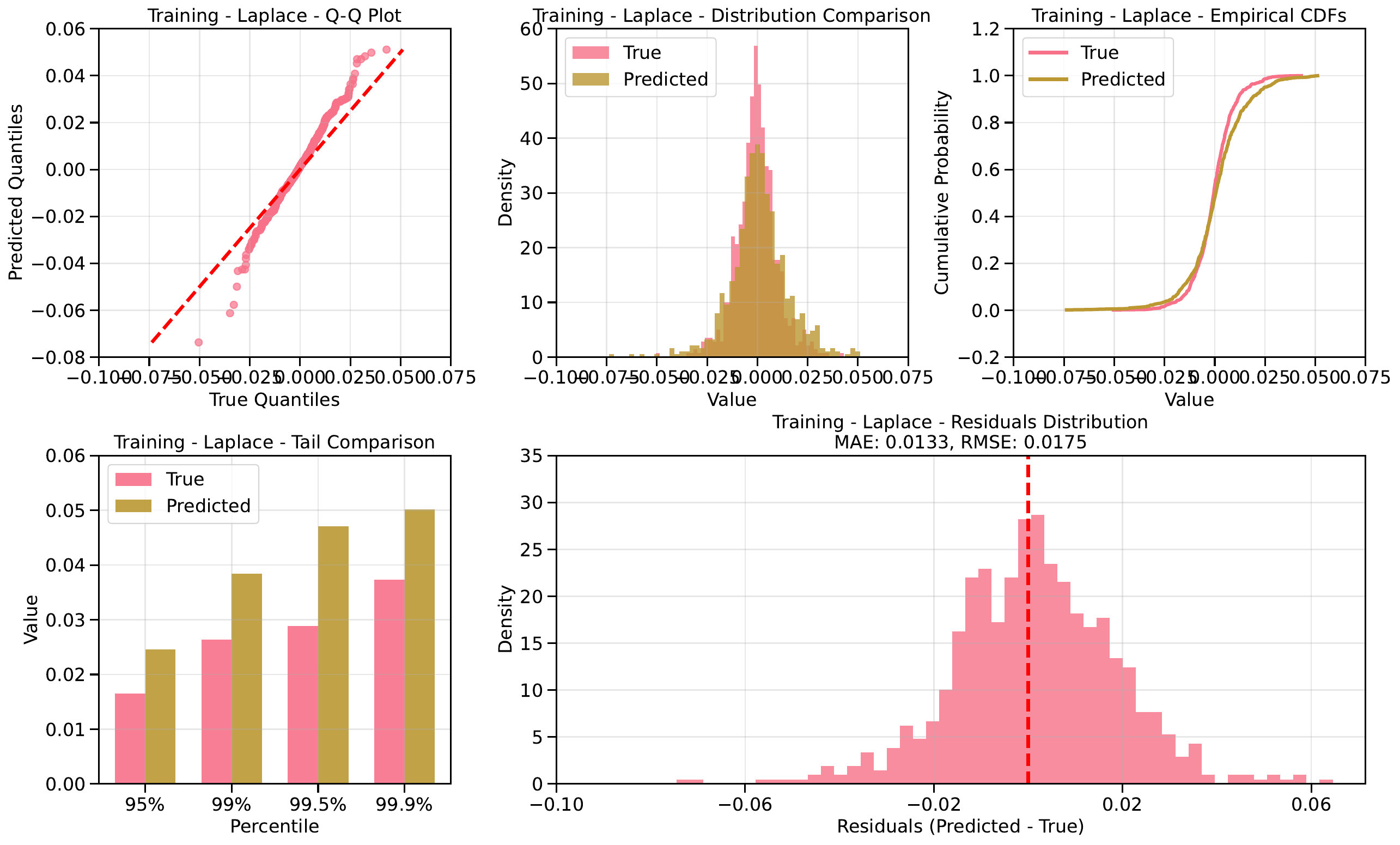}
        \caption{WFC - Laplace}
    \end{subfigure}
    \caption{QQ plots on training datasets for the COVID period across all stocks. When comparing the use of a Gaussian base distribution to a Laplace base distribution, we observe that the Gaussian model exhibits better calibration in the bulk of the distribution for most stocks, though with notable deviations in the extremes. Another notable observation is that the Laplace base distribution consistently produces overdispersion in the tails across multiple stocks, while the Gaussian base leads to underdispersion at the extremes; similar to the observation made for the GFC period.}
    \label{fig: qq_plots_covid_training_base}
\end{figure}

\begin{figure}[htbp]
    \centering
    \begin{subfigure}[b]{0.24\textwidth}
        \includegraphics[width=\textwidth, trim=0cm 13cm 28cm 0cm, clip]{figures/stock_data/covid_AAPL_run_1/gaussian_test_performance.pdf}
        \caption{AAPL - Gaussian}
    \end{subfigure}
    \hfill
    \begin{subfigure}[b]{0.24\textwidth}
        \includegraphics[width=\textwidth, trim=0cm 13cm 28cm 0cm, clip]{figures/stock_data/covid_AMZN_run_1/gaussian_test_performance.pdf}
        \caption{AMZN - Gaussian}
    \end{subfigure}
    \hfill
    \begin{subfigure}[b]{0.24\textwidth}
        \includegraphics[width=\textwidth, trim=0cm 13cm 28cm 0cm, clip]{figures/stock_data/covid_GOOGL_run_1/gaussian_test_performance.pdf}
        \caption{GOOGL - Gaussian.}
    \end{subfigure}
    \hfill
    \begin{subfigure}[b]{0.24\textwidth}
        \includegraphics[width=\textwidth, trim=0cm 13cm 28cm 0cm, clip]{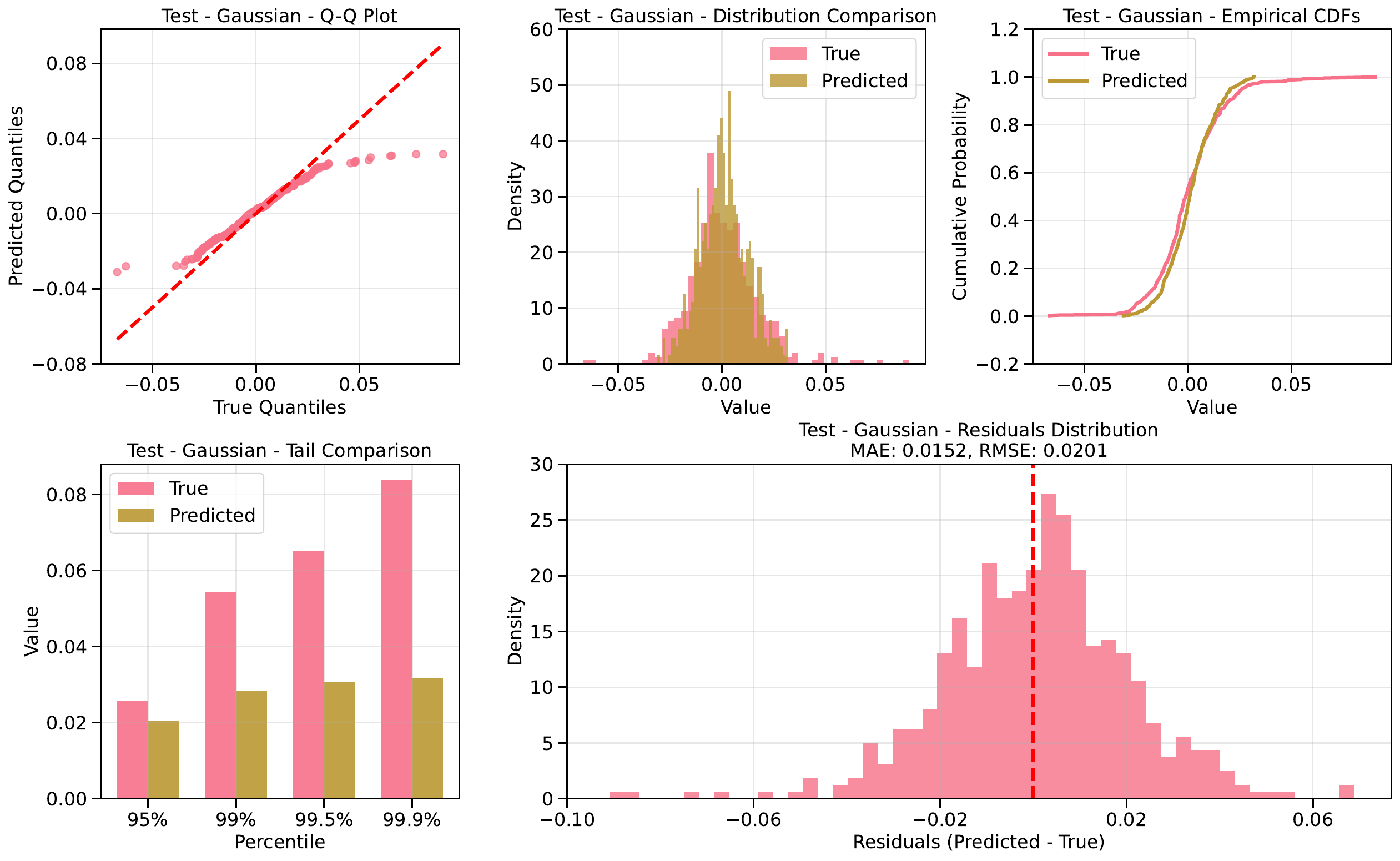}
        \caption{GS - Gaussian}
    \end{subfigure}
    \vspace{0.5em}
    \begin{subfigure}[b]{0.24\textwidth}
        \includegraphics[width=\textwidth, trim=0cm 13cm 28cm 0cm, clip]{figures/stock_data/covid_AAPL_run_1/laplace_test_performance.pdf}
        \caption{AAPL - Laplace}
    \end{subfigure}
    \hfill
    \begin{subfigure}[b]{0.24\textwidth}
        \includegraphics[width=\textwidth, trim=0cm 13cm 28cm 0cm, clip]{figures/stock_data/covid_AMZN_run_1/laplace_test_performance.pdf}
        \caption{AMZN - Laplace}
    \end{subfigure}
    \hfill
    \begin{subfigure}[b]{0.24\textwidth}
        \includegraphics[width=\textwidth, trim=0cm 13cm 28cm 0cm, clip]{figures/stock_data/covid_GOOGL_run_1/laplace_test_performance.pdf}
        \caption{GOOGL - Laplace.}
    \end{subfigure}
    \hfill
    \begin{subfigure}[b]{0.24\textwidth}
        \includegraphics[width=\textwidth, trim=0cm 13cm 28cm 0cm, clip]{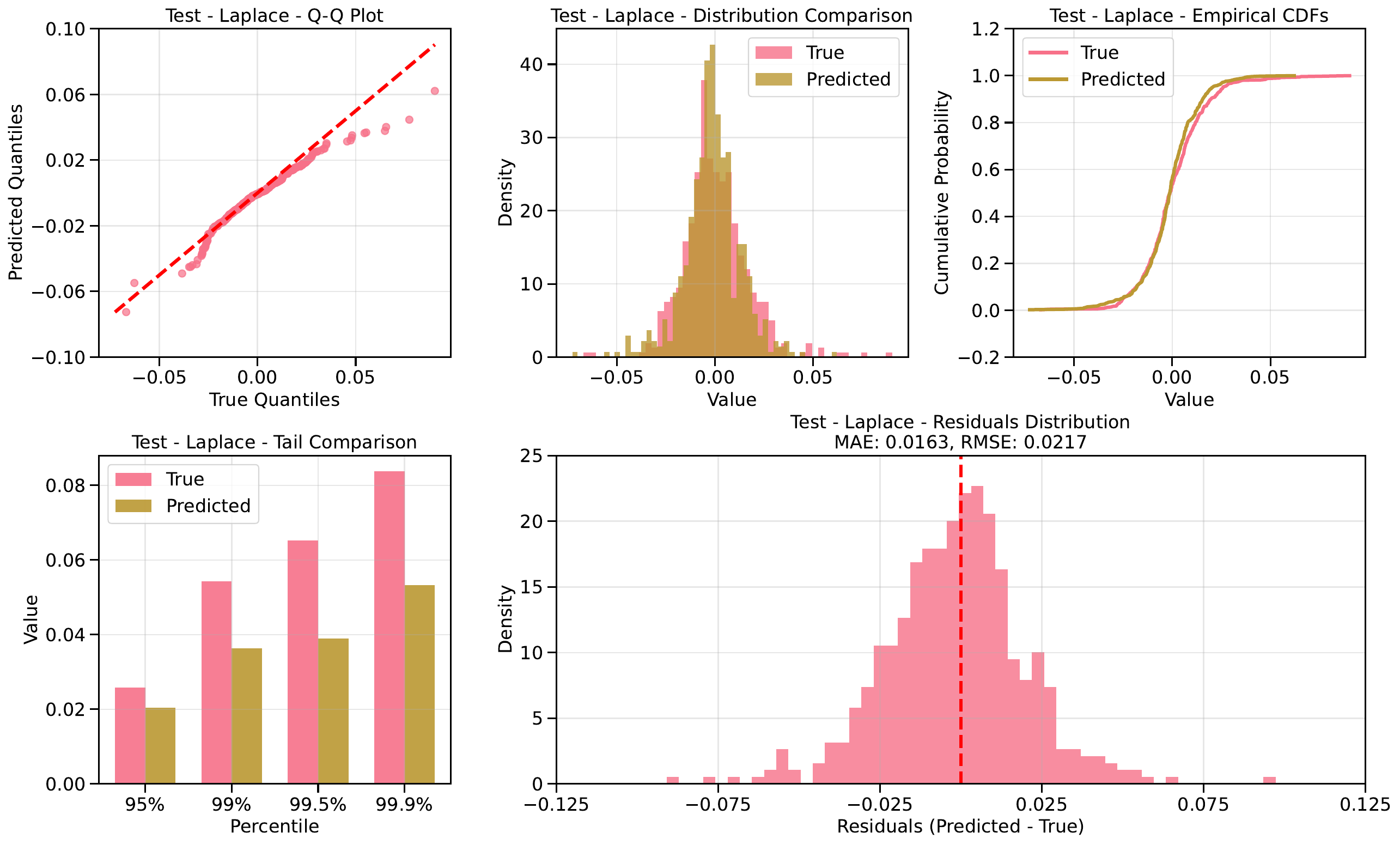}
        \caption{GS - Laplace}
    \end{subfigure}
    \vspace{0.5em}
    
    \begin{subfigure}[b]{0.24\textwidth}
        \includegraphics[width=\textwidth, trim=0cm 13cm 28cm 0cm, clip]{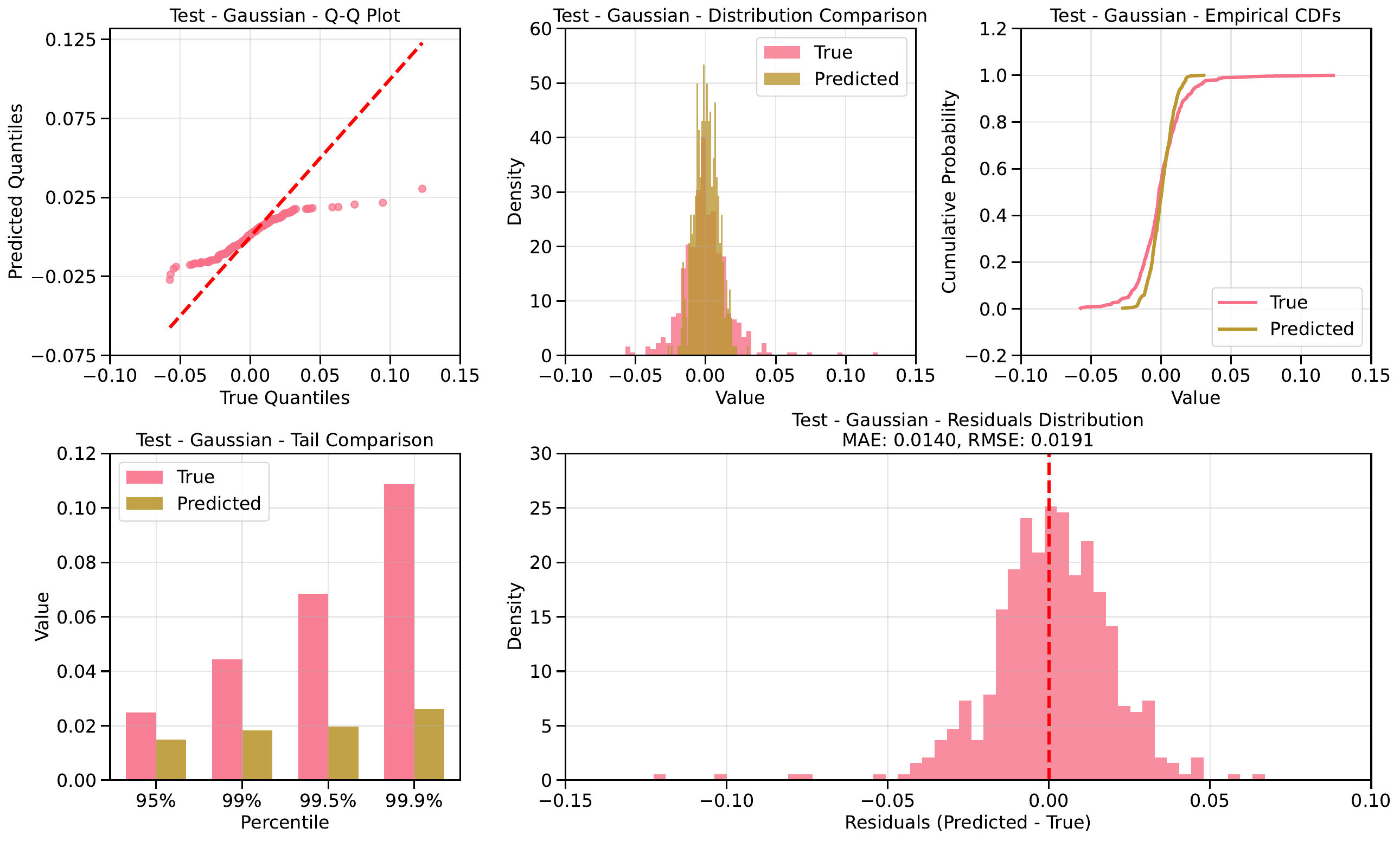}
        \caption{JPM - Gaussian}
    \end{subfigure}
    \hfill
    \begin{subfigure}[b]{0.24\textwidth}
        \includegraphics[width=\textwidth, trim=0cm 13cm 28cm 0cm, clip]{figures/stock_data/covid_MSFT_run_1/gaussian_test_performance.pdf}
        \caption{MSFT - Gaussian}
    \end{subfigure}
    \hfill
    \begin{subfigure}[b]{0.24\textwidth}
        \includegraphics[width=\textwidth, trim=0cm 13cm 28cm 0cm, clip]{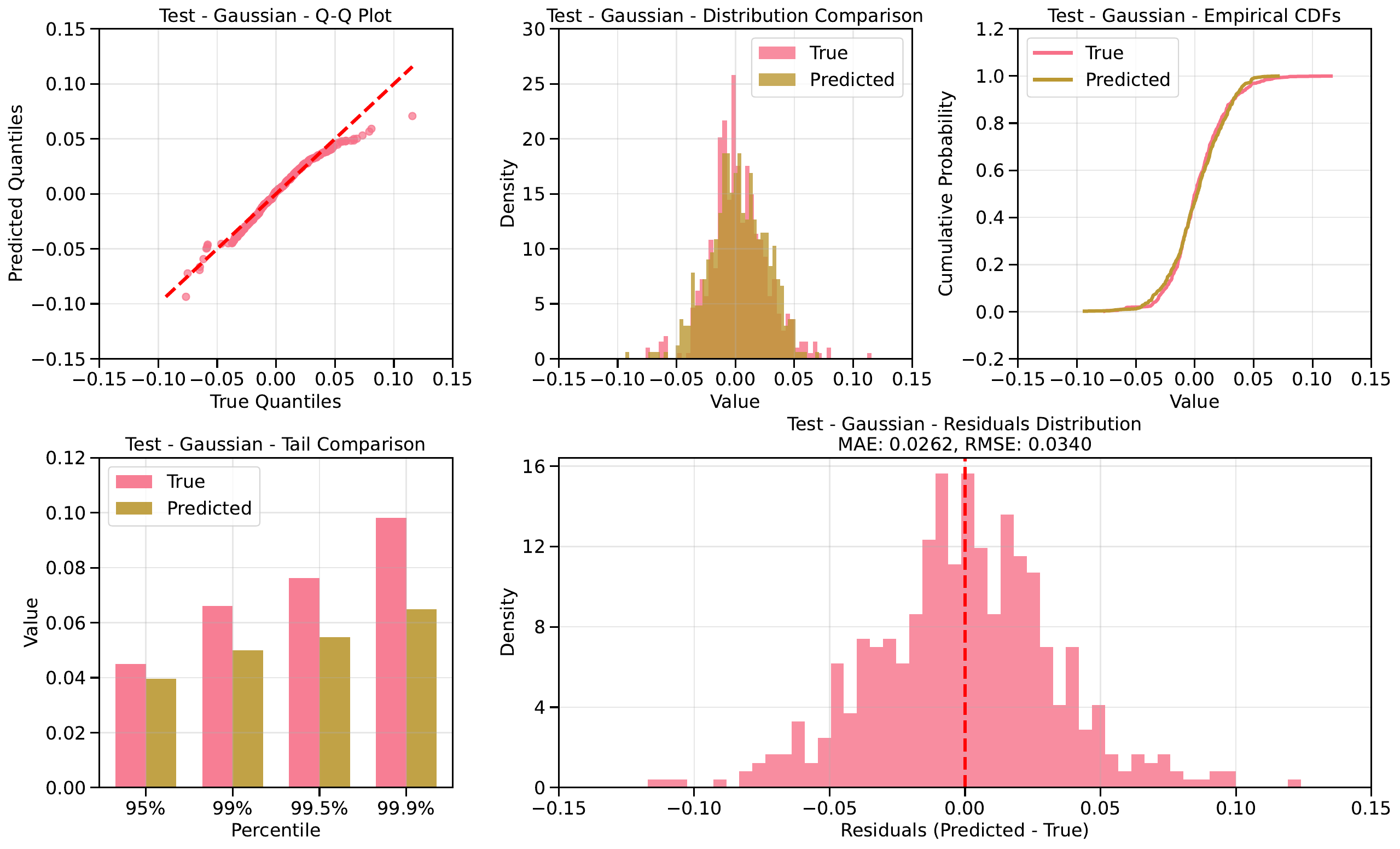}
        \caption{NVDA - Gaussian.}
    \end{subfigure}
    \hfill
    \begin{subfigure}[b]{0.24\textwidth}
        \includegraphics[width=\textwidth, trim=0cm 13cm 28cm 0cm, clip]{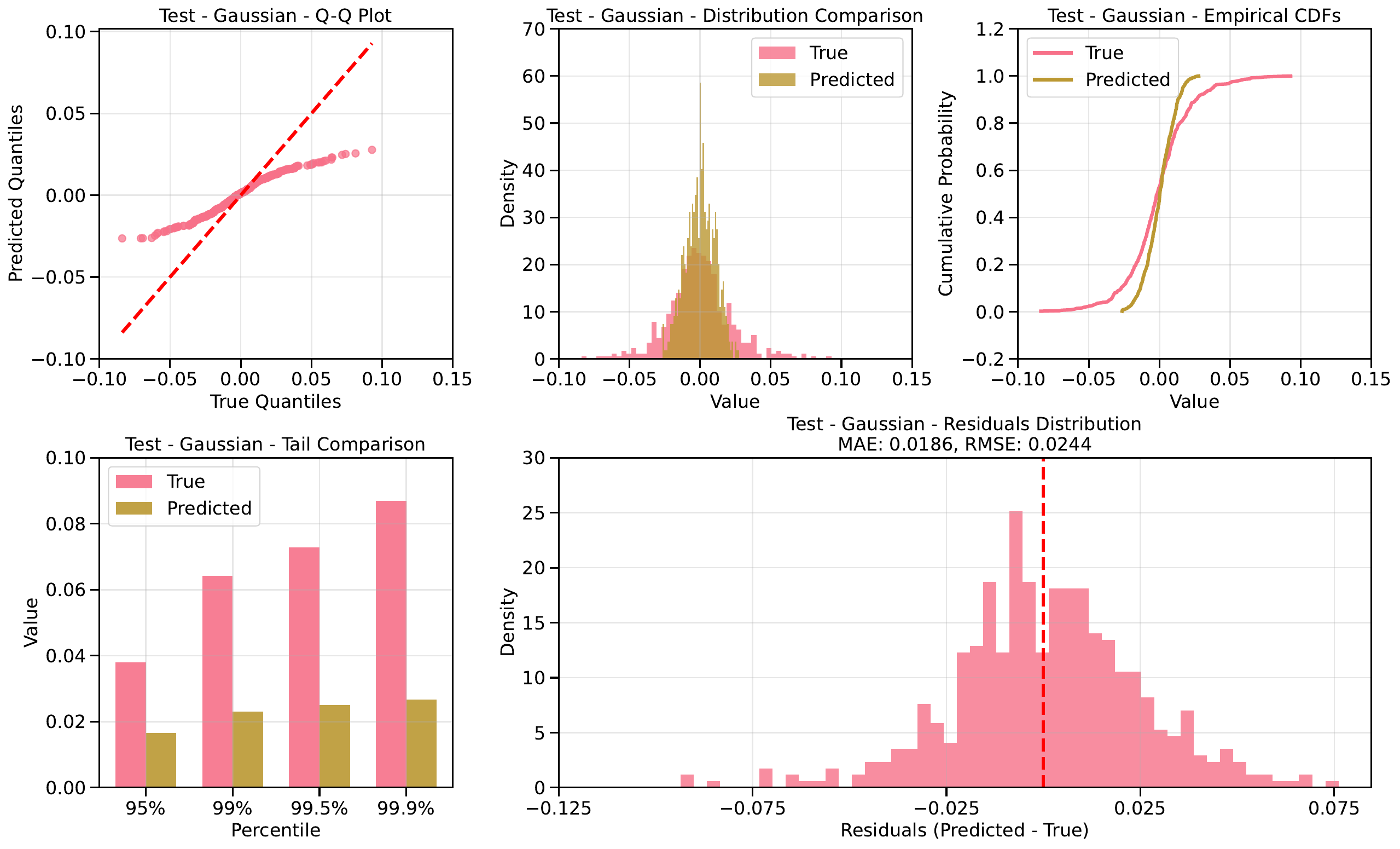}
        \caption{WFC - Gaussian}
    \end{subfigure}
    
    \vspace{0.5em}
    
    \begin{subfigure}[b]{0.24\textwidth}
        \includegraphics[width=\textwidth, trim=0cm 13cm 28cm 0cm, clip]{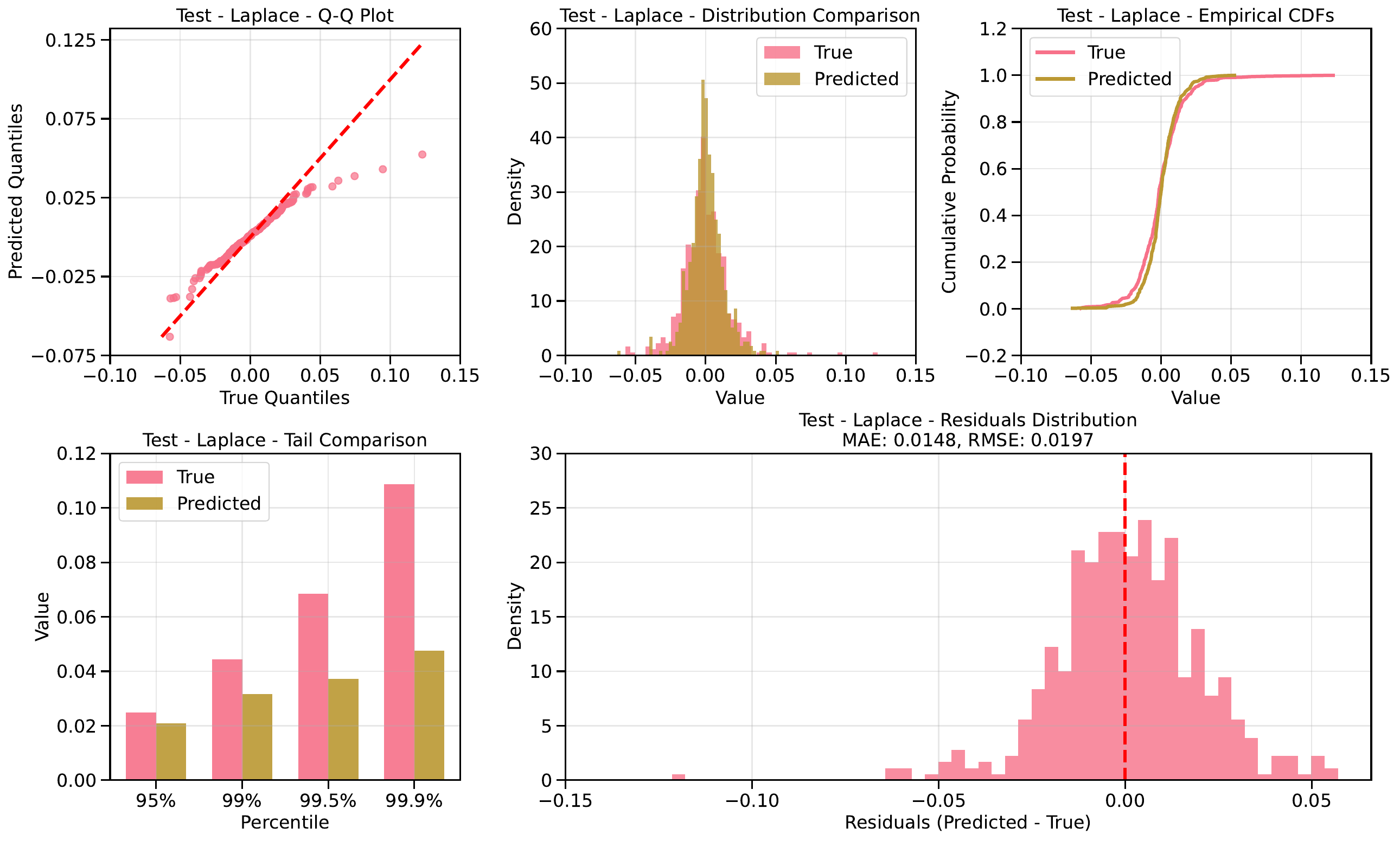}
        \caption{JPM - Laplace}
    \end{subfigure}
    \hfill
    \begin{subfigure}[b]{0.24\textwidth}
        \includegraphics[width=\textwidth, trim=0cm 13cm 28cm 0cm, clip]{figures/stock_data/covid_MSFT_run_1/laplace_test_performance.pdf}
        \caption{MSFT - Laplace}
    \end{subfigure}
    \hfill
    \begin{subfigure}[b]{0.24\textwidth}
        \includegraphics[width=\textwidth, trim=0cm 13cm 28cm 0cm, clip]{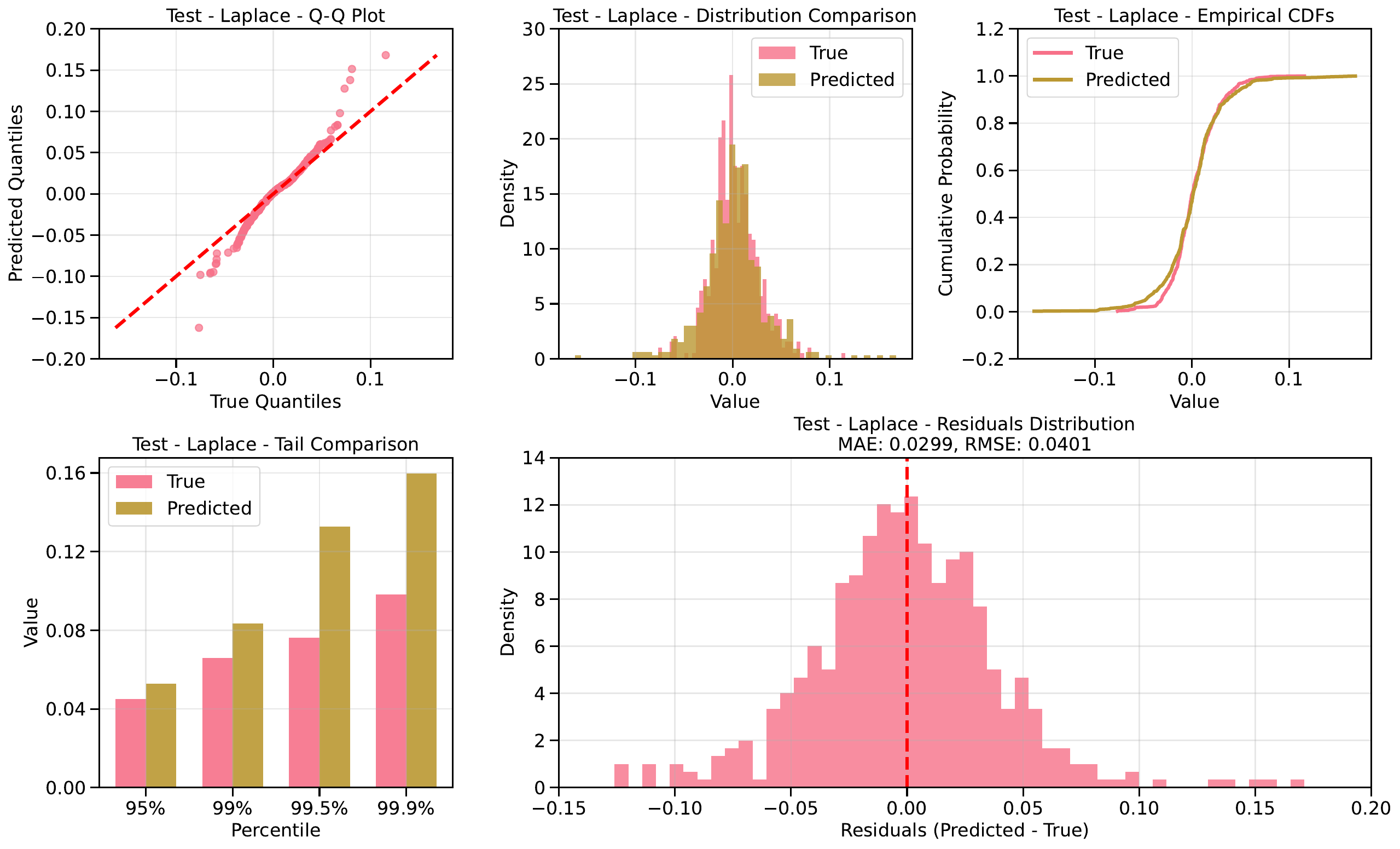}
        \caption{NVDA - Laplace.}
    \end{subfigure}
    \hfill
    \begin{subfigure}[b]{0.24\textwidth}
        \includegraphics[width=\textwidth, trim=0cm 13cm 28cm 0cm, clip]{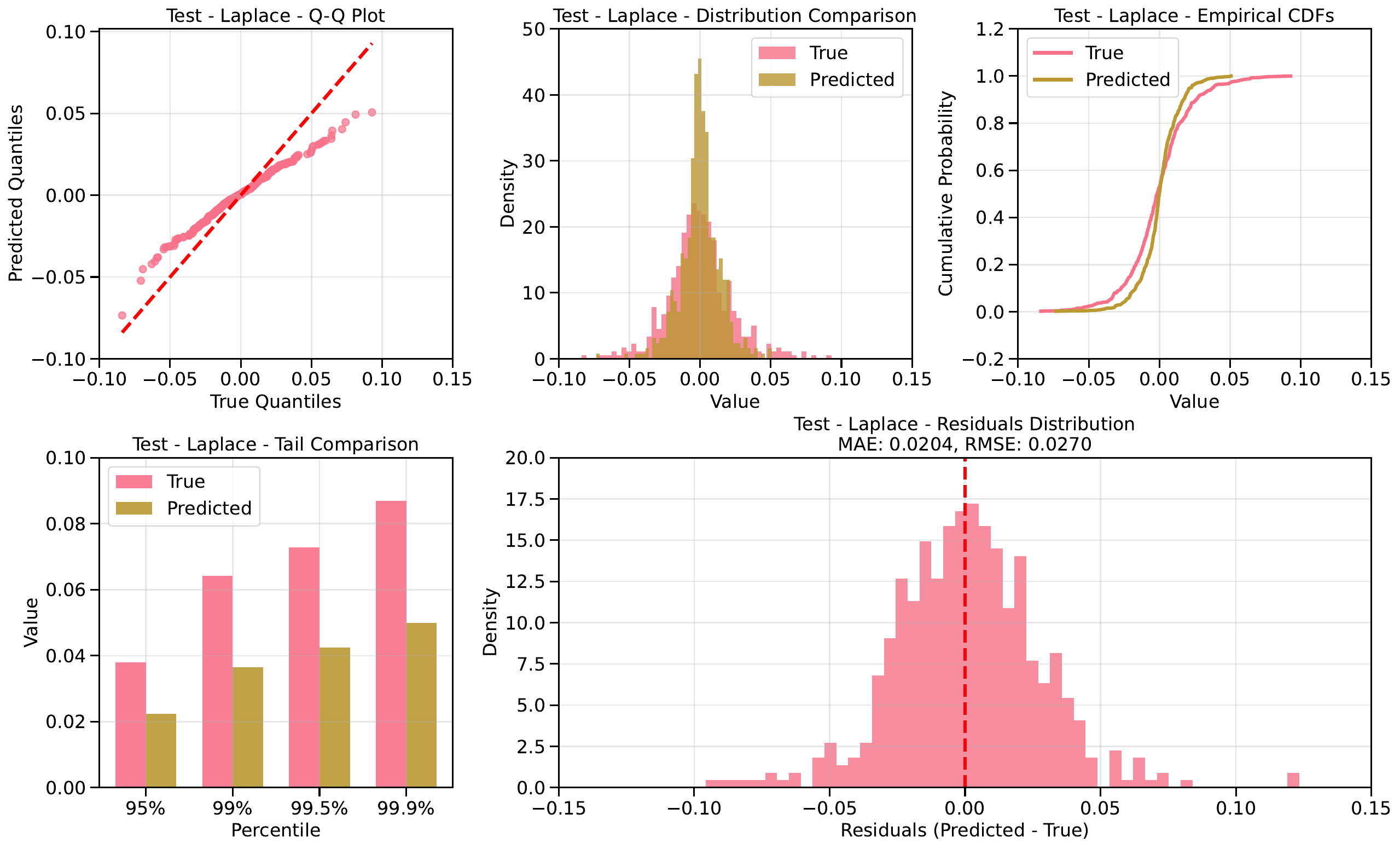}
        \caption{WFC - Laplace}
    \end{subfigure}
    \caption{QQ plots on testing datasets for the COVID period across all stocks. When comparing the use of a Gaussian base distribution to a Laplace base distribution, we observe that the Gaussian model significantly underestimates the tail behavior, particularly evident in technology stocks like JPM (i), MSFT (j), and WFC (l) where predicted quantiles fall below the diagonal reference line at extremes. We hypothesize that this underdispersion reflects the Gaussian distribution's inability to capture the heightened market volatility characteristic of the COVID crisis period. Another notable observation is that the Laplace base distribution provides a markedly better fit to the tail behavior for most stocks, especially visible in AAPL (e), GOOGL (g), and JPM (m), though it still exhibits some deviations from perfect calibration. This pattern aligns with theoretical expectations, as the COVID period featured extreme market movements that are better approximated by distributions with heavier tails, making the Laplace distribution's inherent properties more suitable for modeling the fat-tailed nature of returns during this market stress periods.}
    \label{fig: qq_plots_covid_testing}
\end{figure}

\subsubsection{Scatter Plots of Asset Returns vs. VIX Level (Conditional Evaluation)}
The use of QQ plots makes sense for evaluation of the calibration of the marginal distribution of returns (where we generate samples considering all conditions in the ground truth training and testing datasets); however, it does not provide insight into the performance of the conditional, as we vary the condition to extreme values. In the case of VIX, we are interested in the right-tail of the condition; when the VIX level grows to large positive values (around 40-80).  To evaluate the conditional performance, we use a scatter plot of the returns and the VIX level, and compare that the empirical percentiles of the conditional diffusion model for both the Gaussian and Laplace base distributions. We show these scatter plots for each ticker in Figure \ref{fig: conditional_scatter_gfc} and \ref{fig: conditional_scatter_covid} for the GFC and COVID periods, respectively. For both periods, we can observe that the use of Gaussian base leads to underestimation of the tails across almost all conditions, while the use of a Laplace is much closer.

\begin{figure}[htbp]
    \centering
    \begin{subfigure}[b]{0.48\textwidth}
        \includegraphics[width=\textwidth]{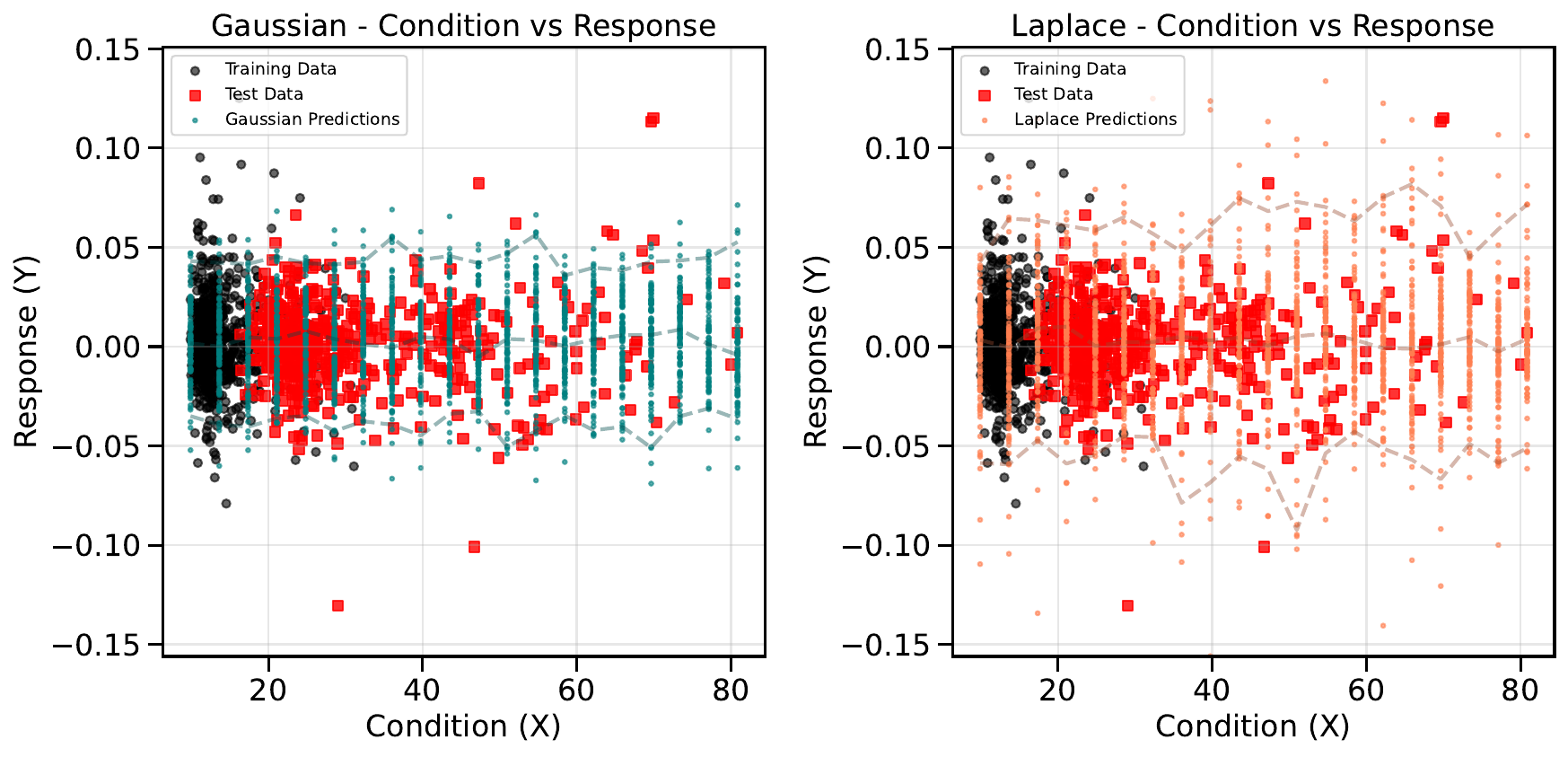}
        \caption{AAPL}
    \end{subfigure}
    \hfill
    \begin{subfigure}[b]{0.48\textwidth}
        \includegraphics[width=\textwidth]{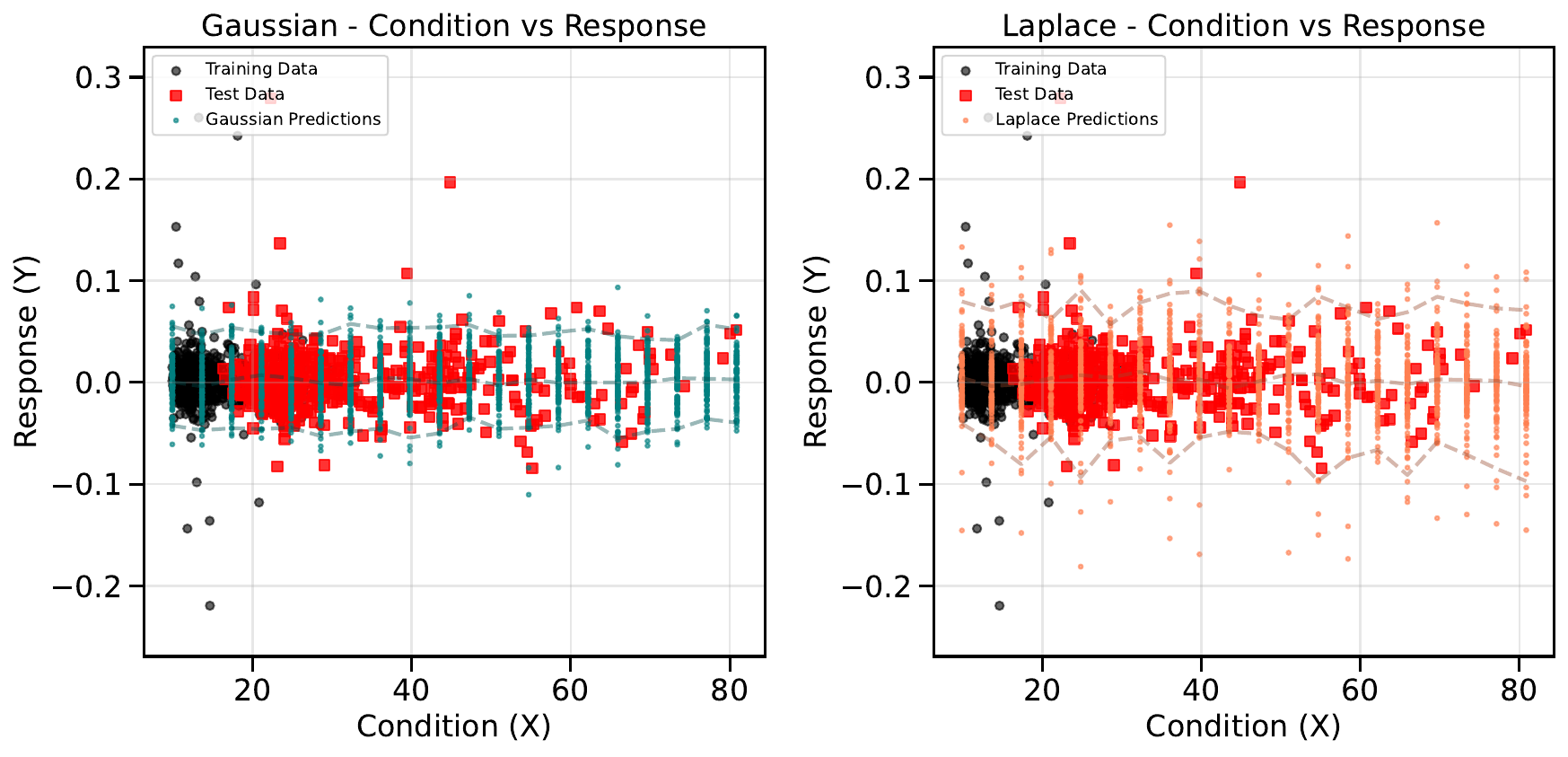}
        \caption{AMZN}
    \end{subfigure}
    \vspace{0.5em}
    \begin{subfigure}[b]{0.48\textwidth}
        \includegraphics[width=\textwidth]{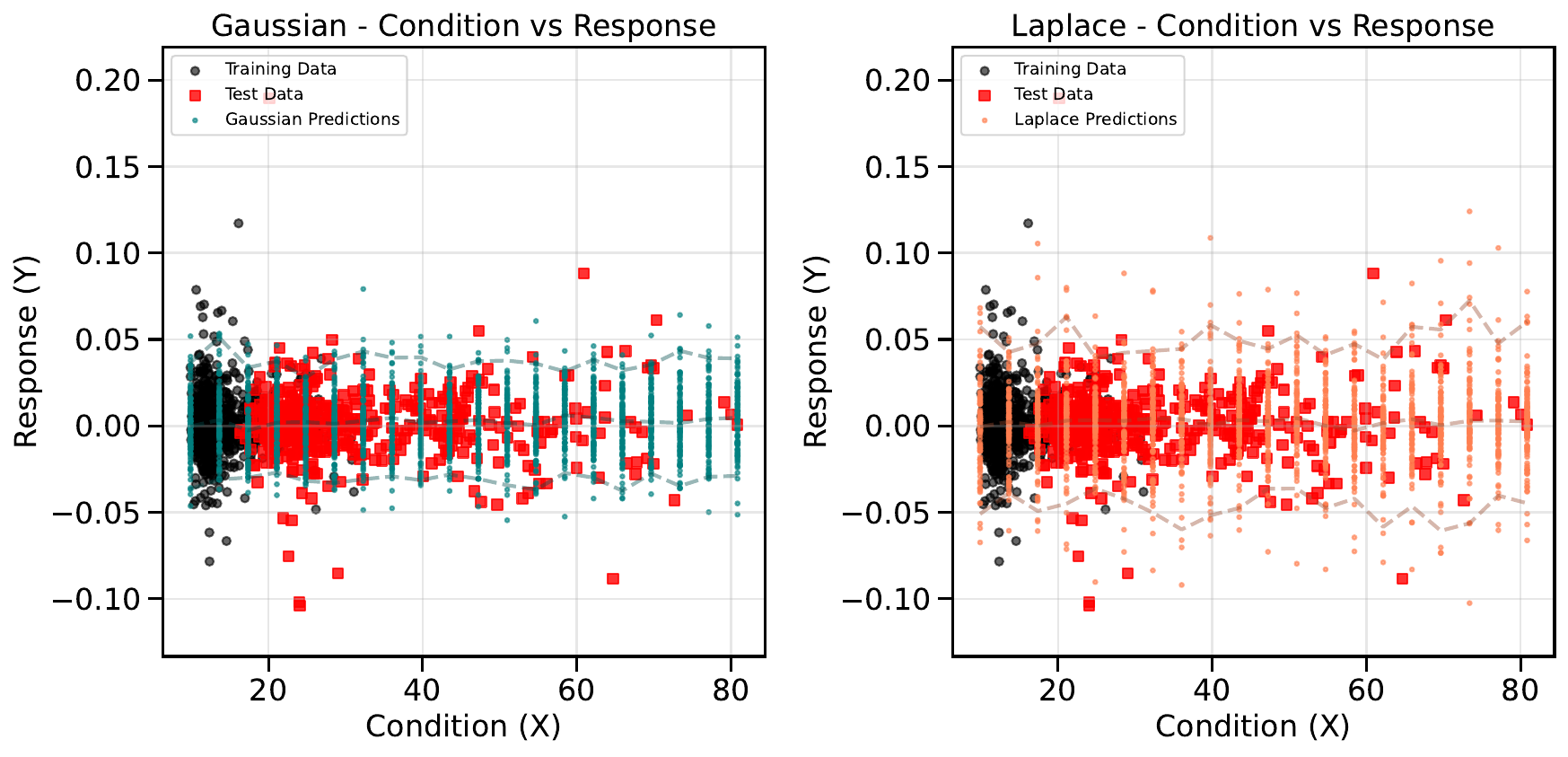}
        \caption{GOOGL}
    \end{subfigure}
    \hfill
    \begin{subfigure}[b]{0.48\textwidth}
        \includegraphics[width=\textwidth]{figures/stock_data/gfc_GS_run_1/condition_response_scatter.pdf}
        \caption{GS}
    \end{subfigure}
    \vspace{0.5em}
    \begin{subfigure}[b]{0.48\textwidth}
        \includegraphics[width=\textwidth]{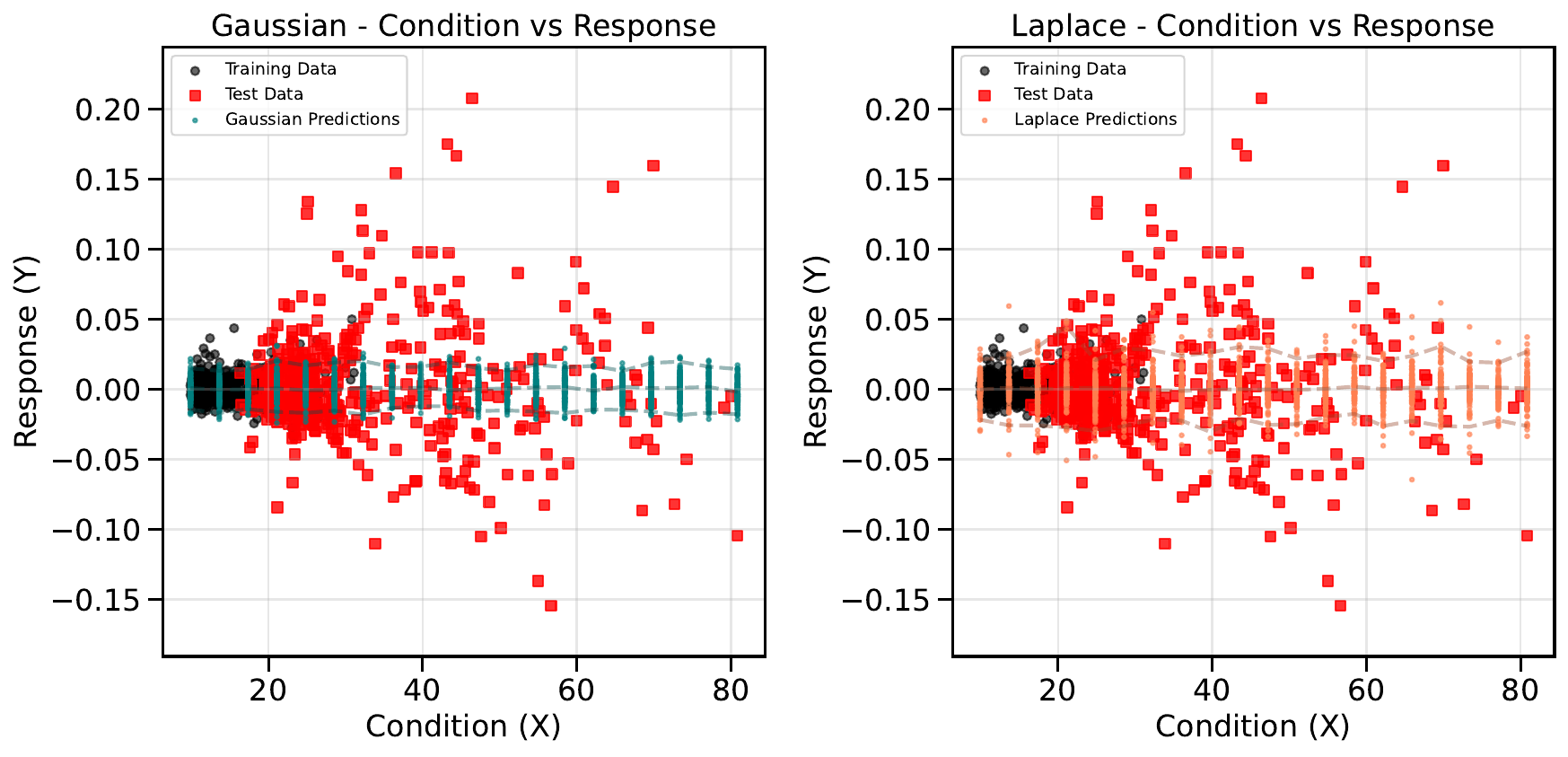}
        \caption{JPM}
    \end{subfigure}
    \hfill
    \begin{subfigure}[b]{0.48\textwidth}
        \includegraphics[width=\textwidth]{figures/stock_data/gfc_MSFT_run_1/condition_response_scatter.pdf}
        \caption{MSFT}
    \end{subfigure}
    \vspace{0.5em}
        \begin{subfigure}[b]{0.48\textwidth}
        \includegraphics[width=\textwidth]{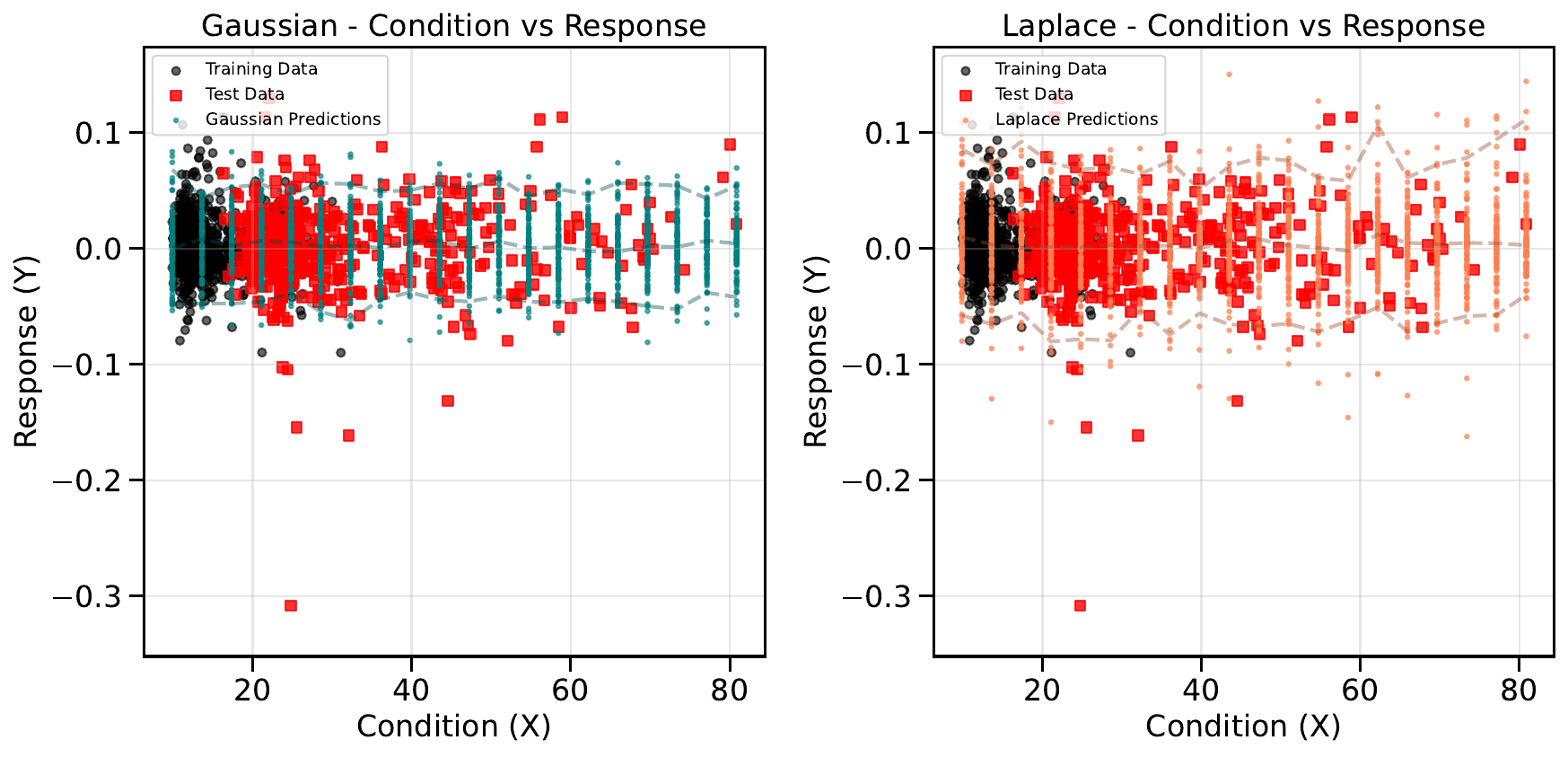}
        \caption{NVDA}
    \end{subfigure}
    \hfill
    \begin{subfigure}[b]{0.48\textwidth}
        \includegraphics[width=\textwidth]{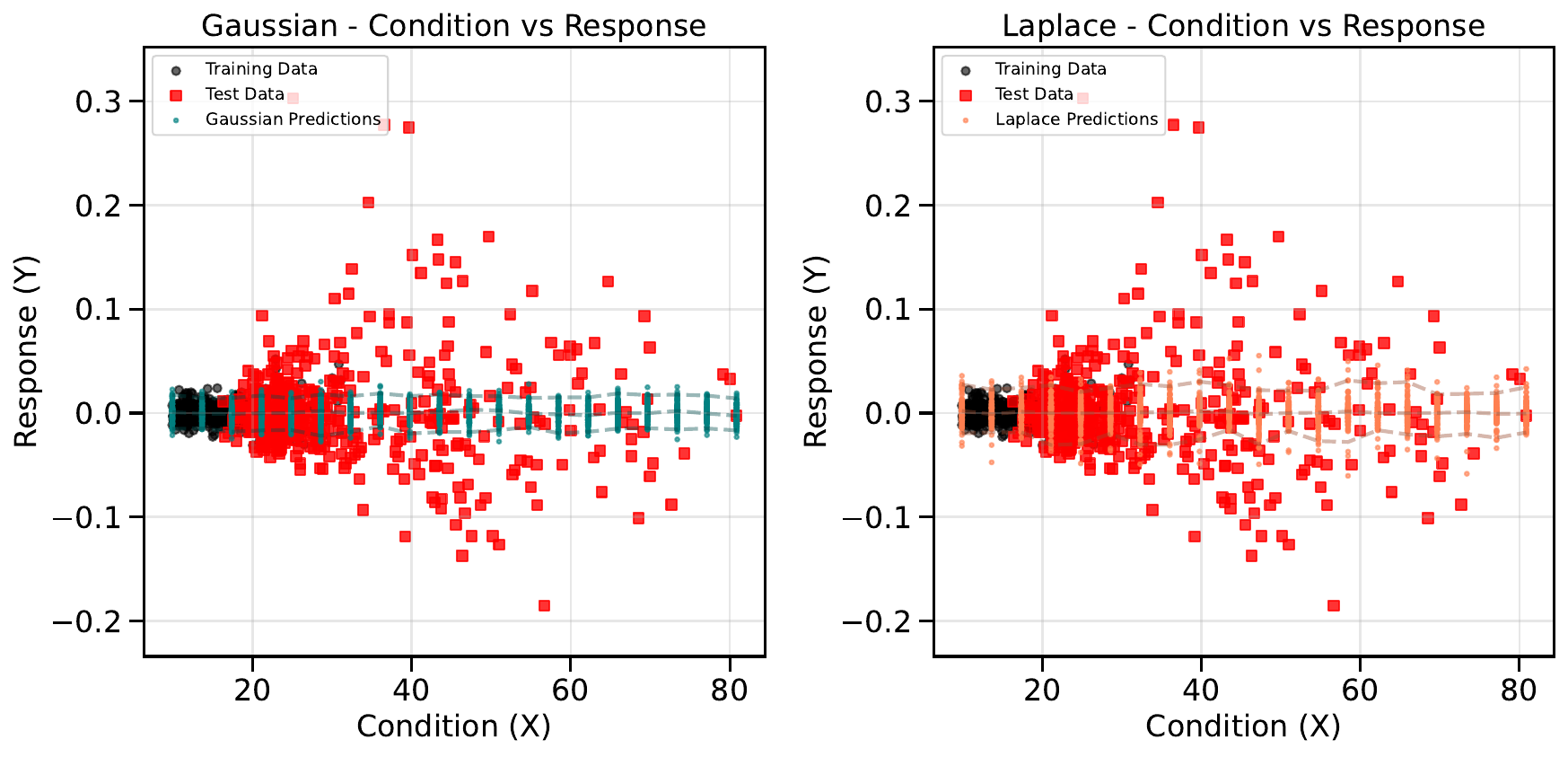}
        \caption{WFC}
    \end{subfigure}
    \caption{Scatter plots for visualization of conditional generation performance for GFC period.}
    \label{fig: conditional_scatter_gfc}
\end{figure}

\begin{figure}[htbp]
    \centering
    \begin{subfigure}[b]{0.48\textwidth}
        \includegraphics[width=\textwidth]{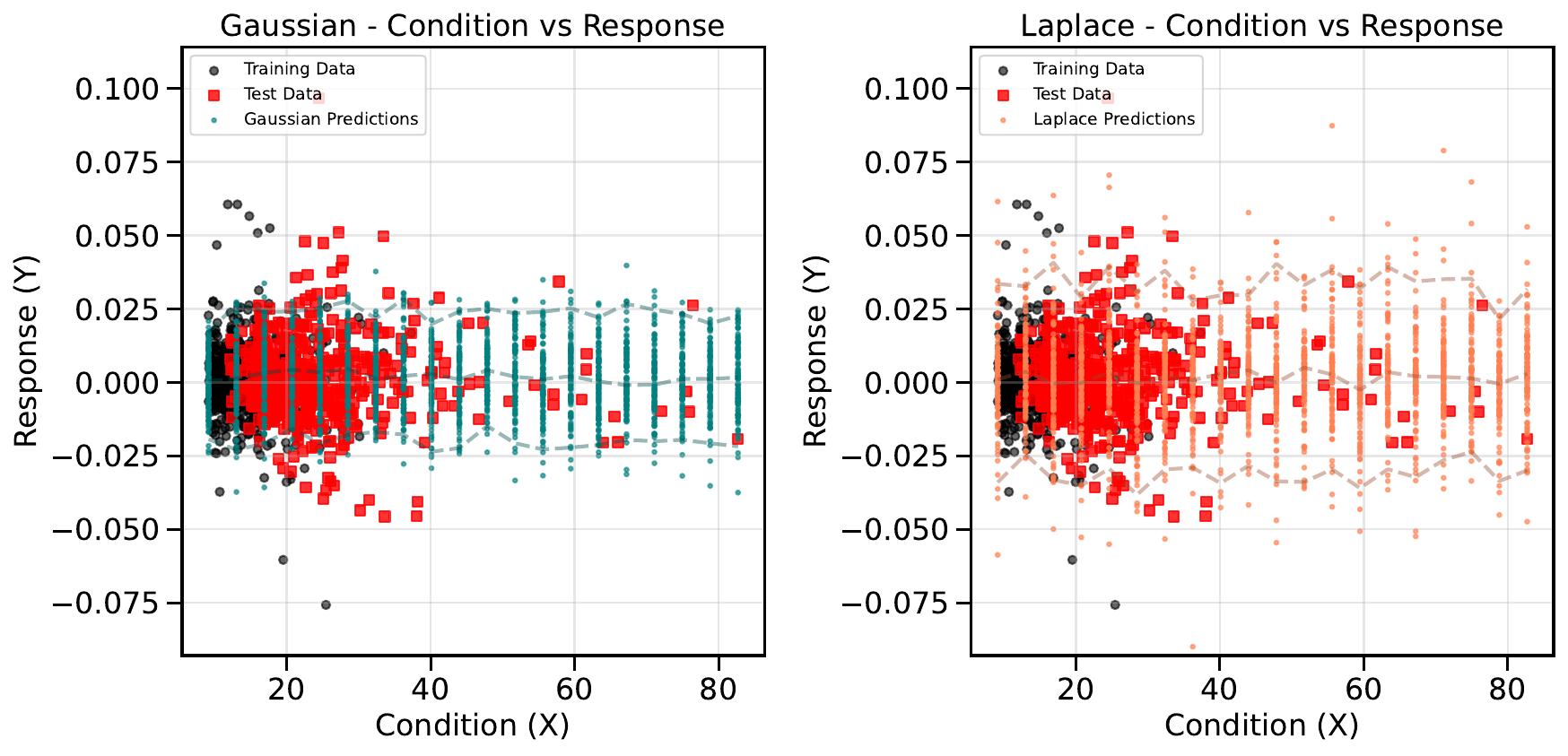}
        \caption{AAPL}
    \end{subfigure}
    \hfill
    \begin{subfigure}[b]{0.48\textwidth}
        \includegraphics[width=\textwidth]{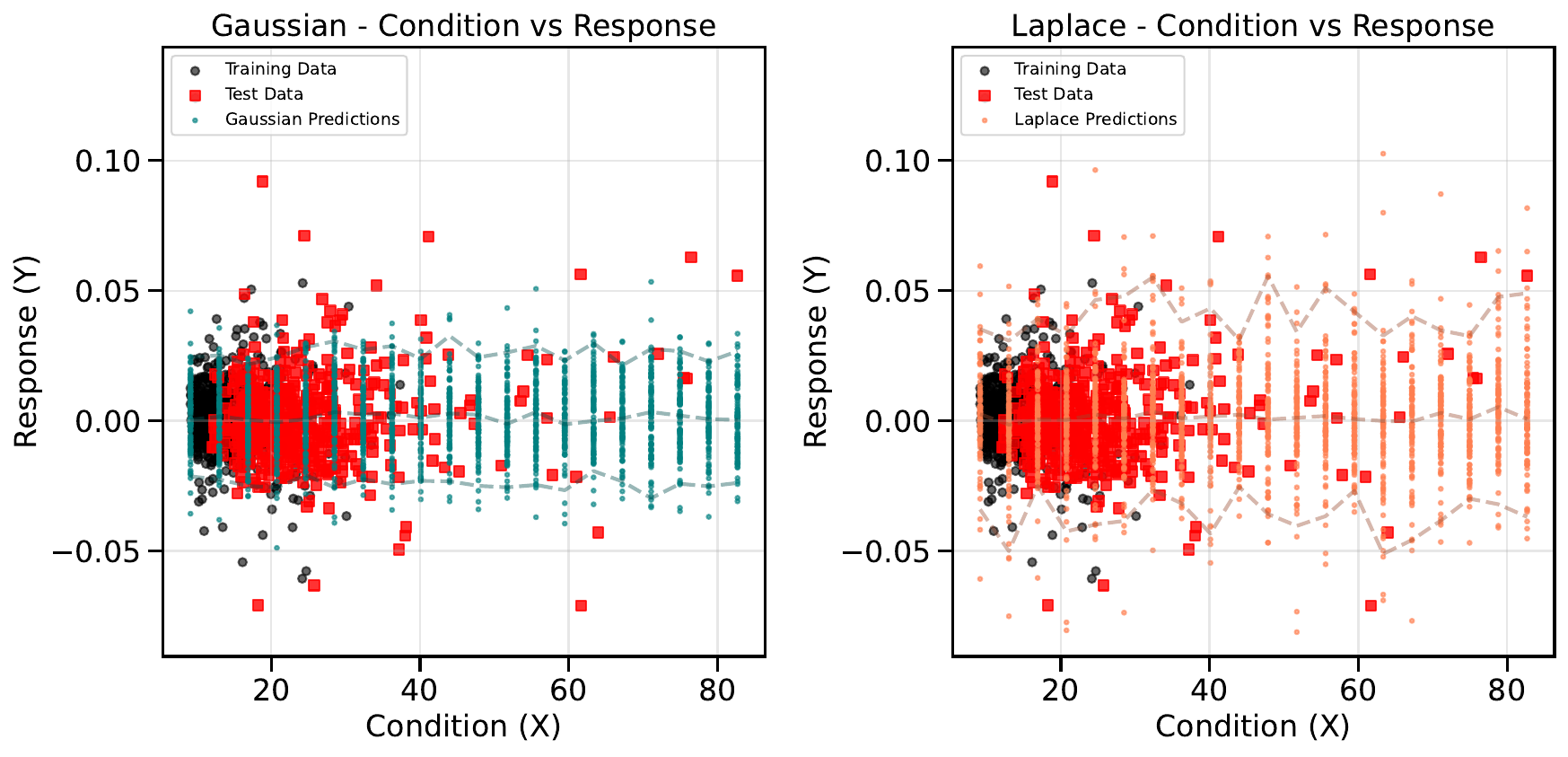}
        \caption{AMZN}
    \end{subfigure}
    \vspace{0.5em}
    \begin{subfigure}[b]{0.48\textwidth}
        \includegraphics[width=\textwidth]{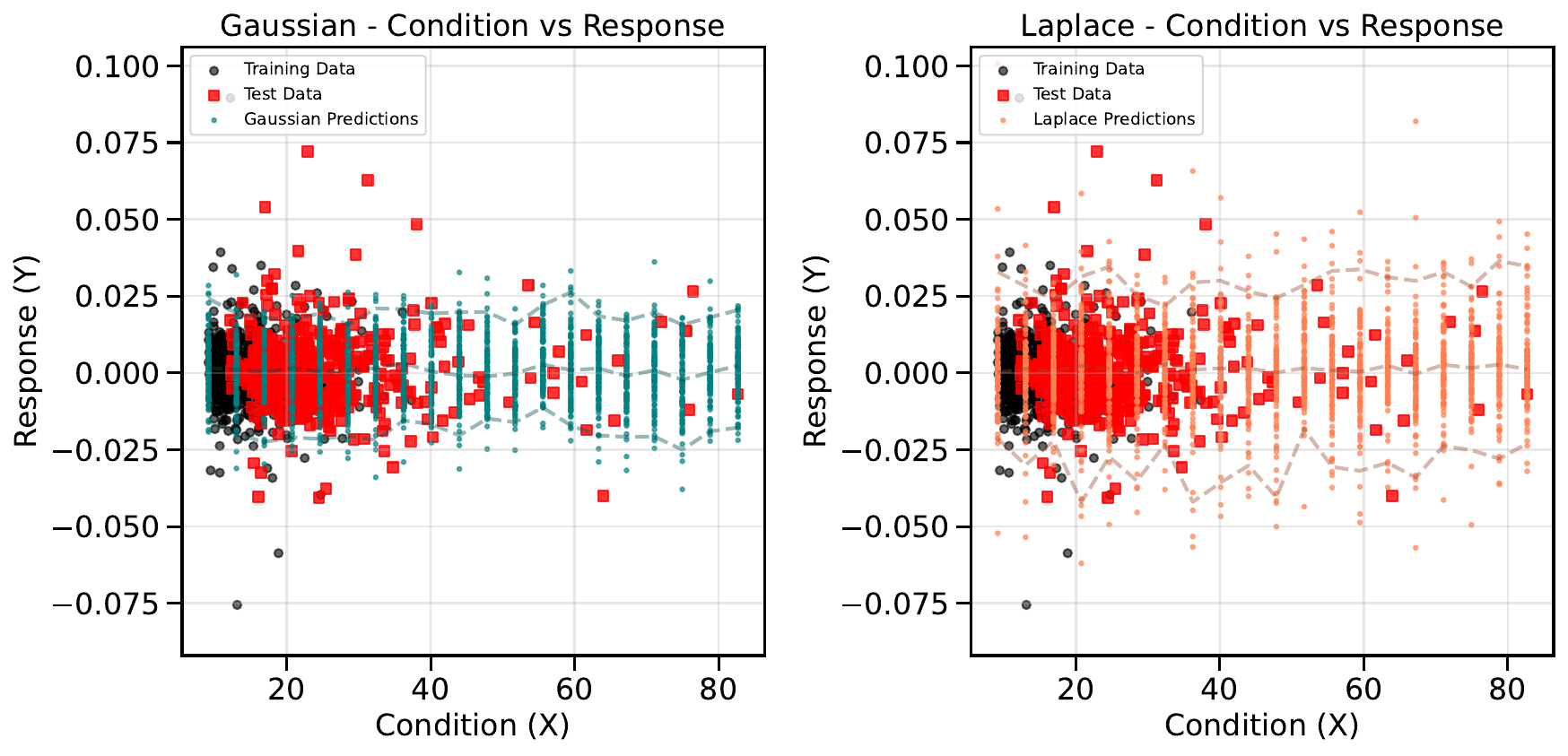}
        \caption{GOOGL}
    \end{subfigure}
    \hfill
    \begin{subfigure}[b]{0.48\textwidth}
        \includegraphics[width=\textwidth]{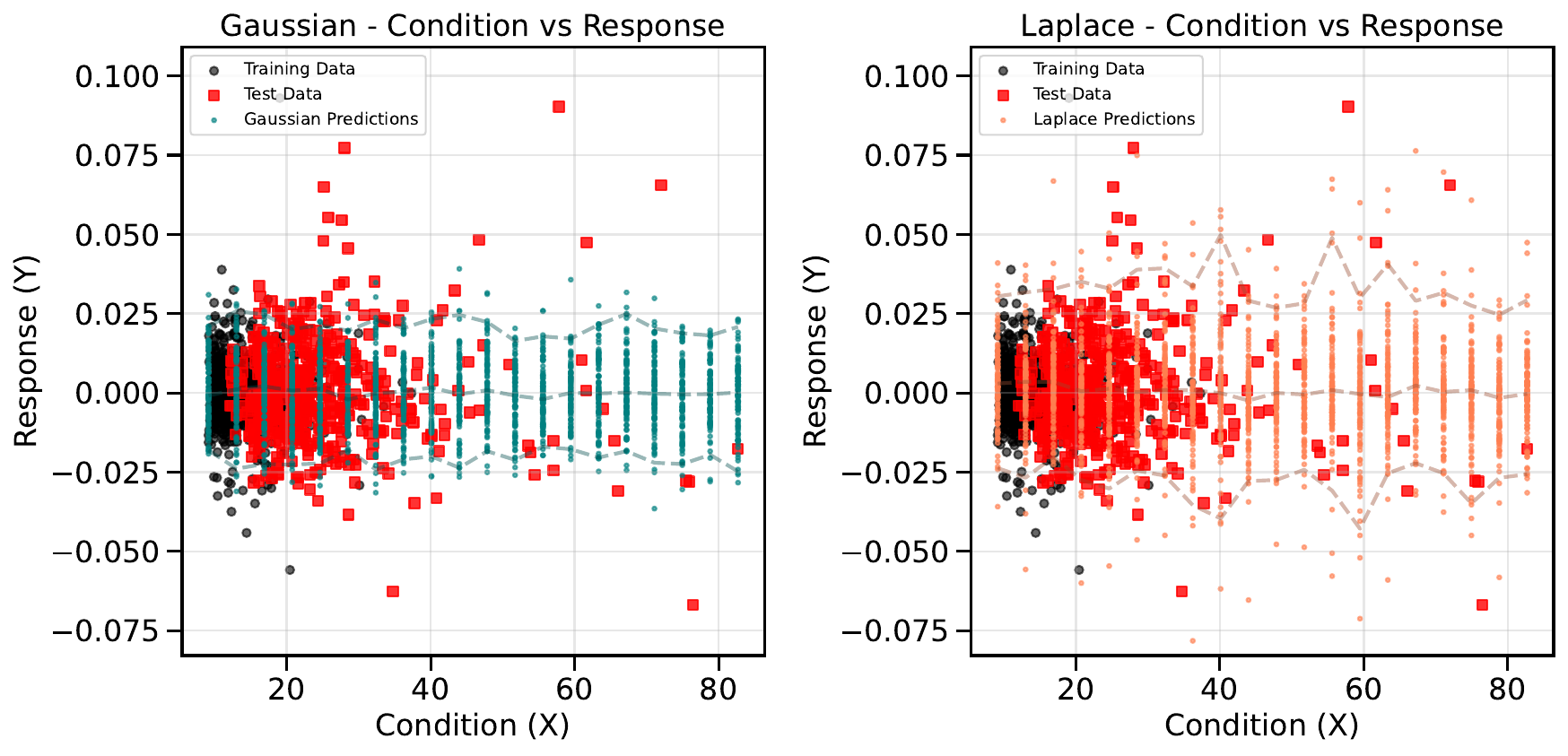}
        \caption{GS}
    \end{subfigure}
    \vspace{0.5em}
    \begin{subfigure}[b]{0.48\textwidth}
        \includegraphics[width=\textwidth]{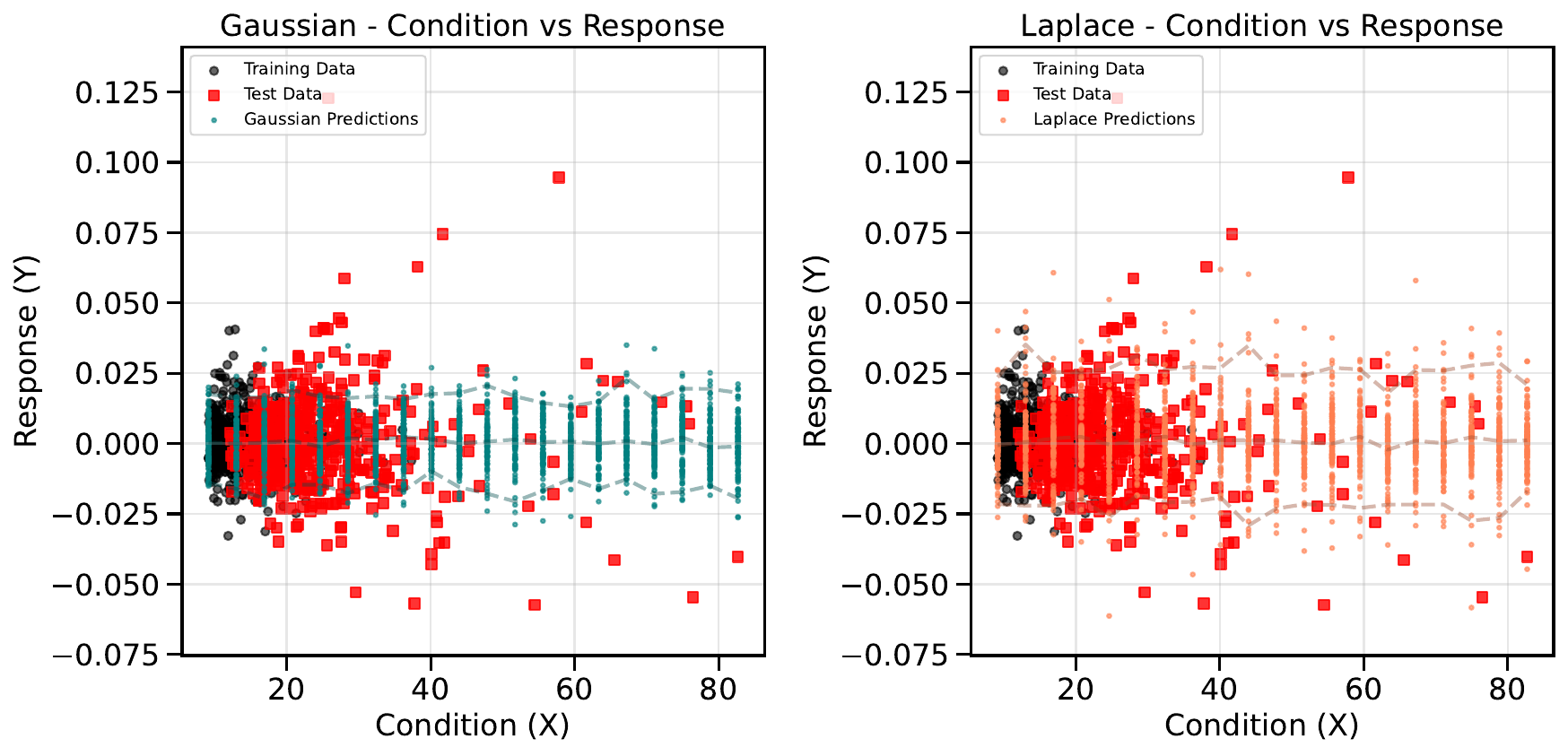}
        \caption{JPM}
    \end{subfigure}
    \hfill
    \begin{subfigure}[b]{0.48\textwidth}
        \includegraphics[width=\textwidth]{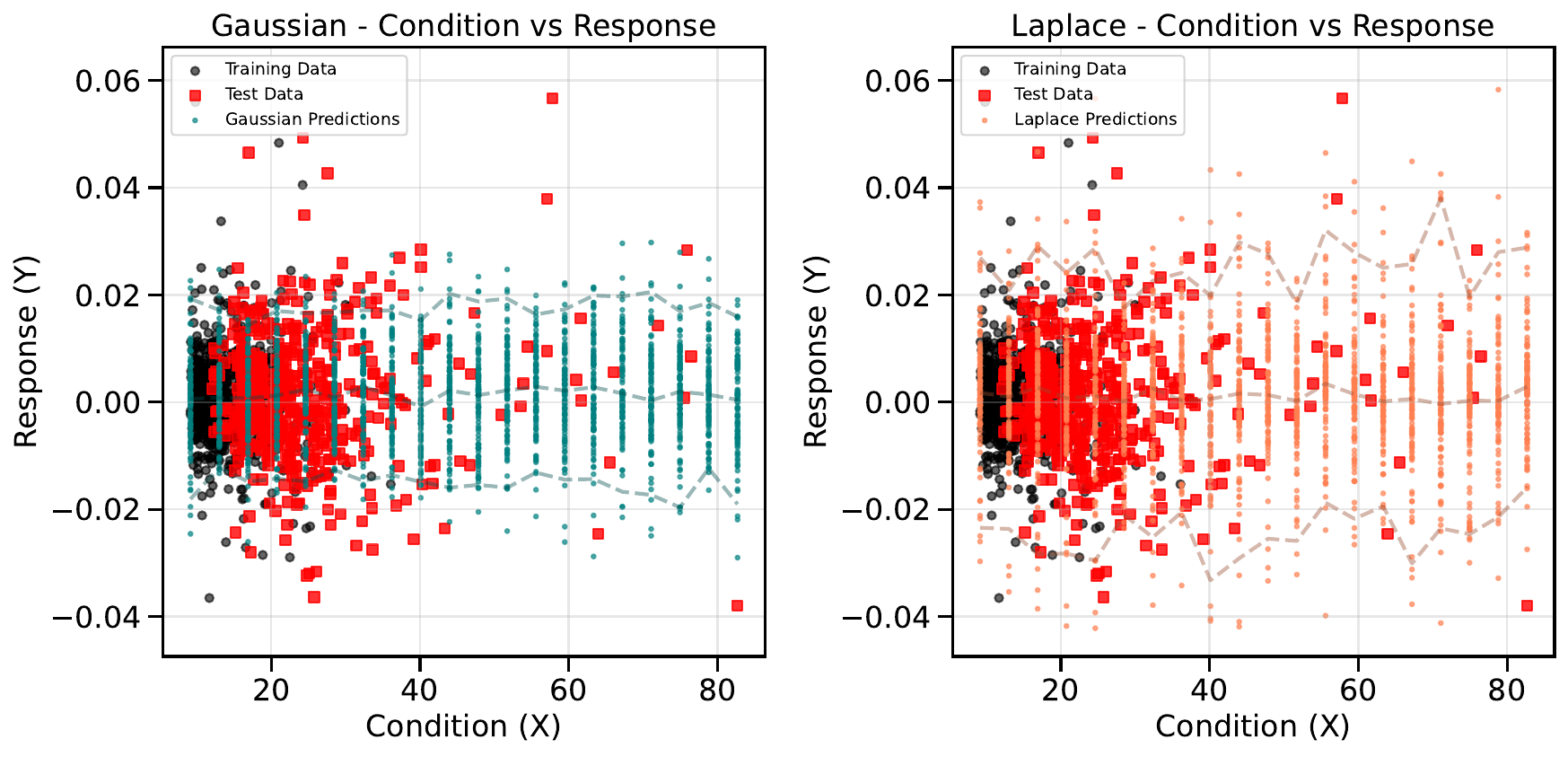}
        \caption{MSFT}
    \end{subfigure}
    \vspace{0.5em}
        \begin{subfigure}[b]{0.48\textwidth}
        \includegraphics[width=\textwidth]{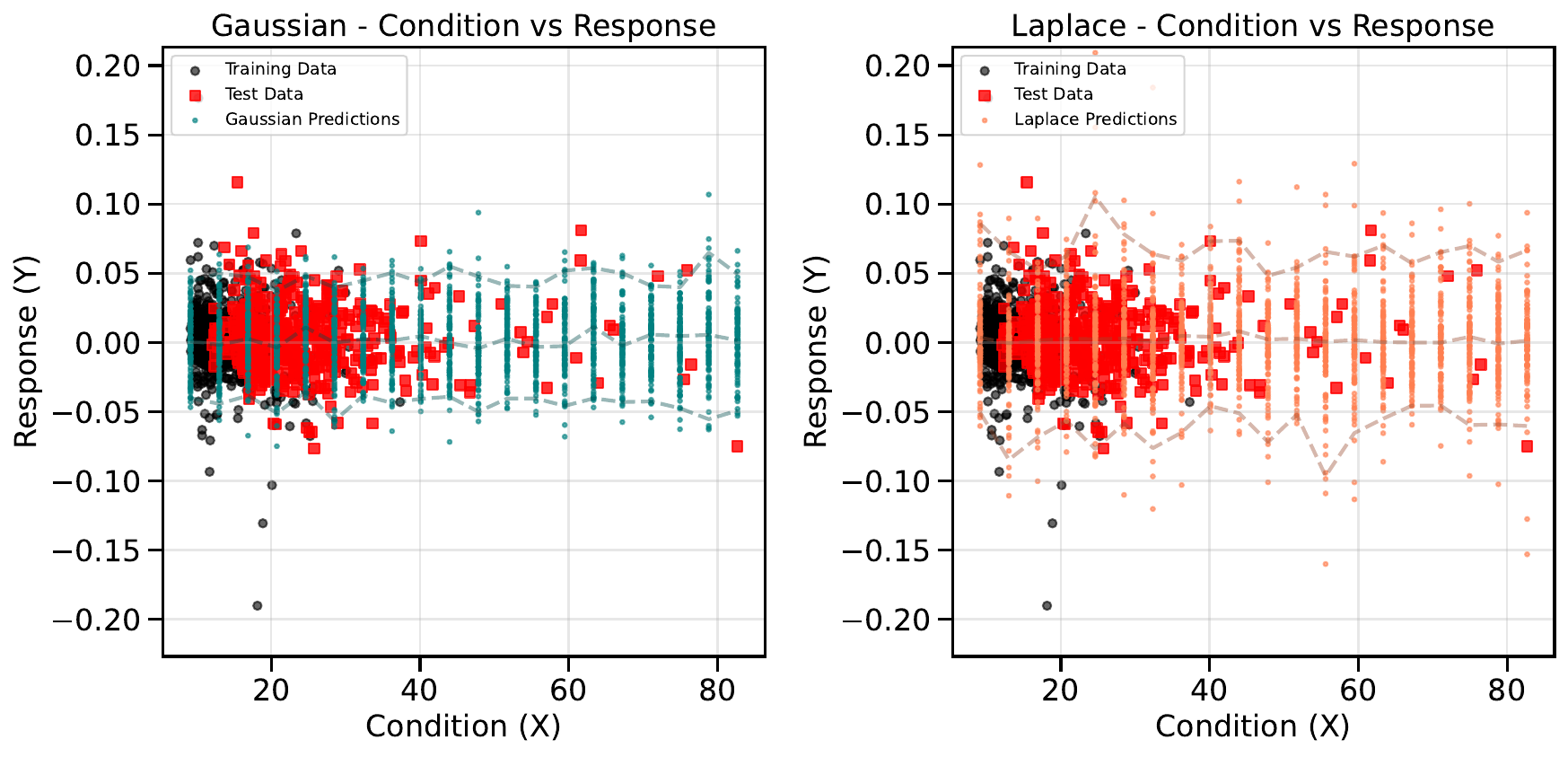}
        \caption{NVDA}
    \end{subfigure}
    \hfill
    \begin{subfigure}[b]{0.48\textwidth}
        \includegraphics[width=\textwidth]{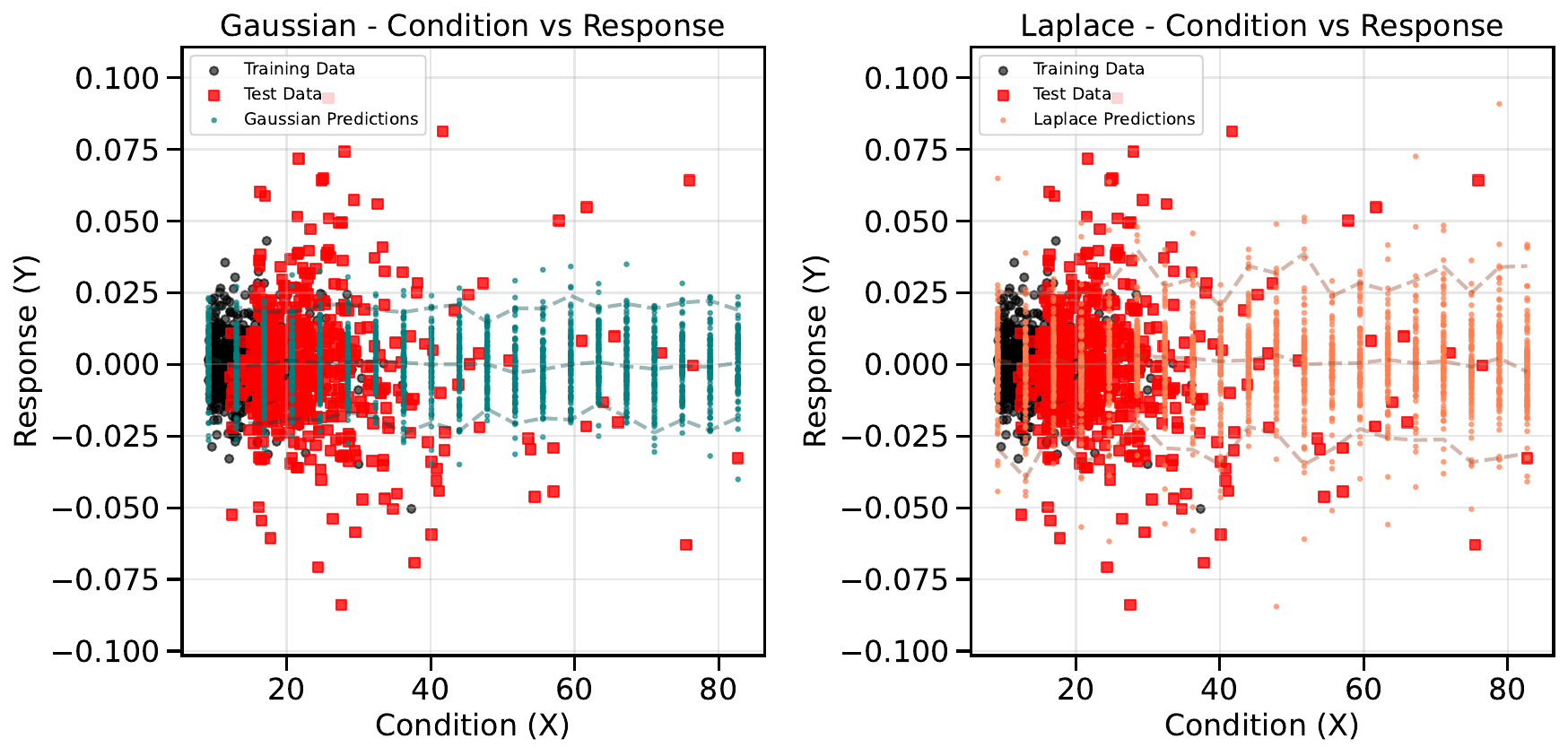}
        \caption{WFC}
    \end{subfigure}
    \caption{Scatter plots for visualization of conditional generation performance for COVID period.}
    \label{fig: conditional_scatter_covid}
\end{figure}

\end{document}